\documentclass{article}

\usepackage{microtype}
\usepackage{booktabs} % for professional tables

\usepackage{hyperref}

\usepackage[accepted]{icml2026}

\usepackage{amsmath}
\usepackage{amssymb}
\usepackage{mathtools}
\usepackage{amsthm}

\usepackage[T1]{fontenc}
\usepackage[capitalize,noabbrev]{cleveref} % automatic reference
\usepackage{enumitem}
\usepackage{xcolor,colortbl}
\usepackage{graphicx}
\usepackage{wrapfig}
\usepackage{listings}
\usepackage[most]{tcolorbox}
\usepackage{tabularx}
\usepackage{multirow}
\usepackage{subfigure}
\usepackage{subcaption}
\usepackage{thmtools} % so that Lemmas are referenced properly
\usepackage{makecell}
\usepackage[tight]{minitoc}
\usepackage{nicefrac}
\usepackage{inconsolata} % improved aesthetics for typewriter font
\usepackage{appendix}
\usepackage{xurl}

\usepackage{amsmath,amsfonts,bm}

\def\eqref#1{equation~\ref{#1}}
\def\1{\bm{1}}

\def\rvb{{\mathbf{b}}}

\def\rvh{{\mathbf{h}}}
\def\rvu{{\mathbf{i}}}

\def\rvu{{\mathbf{u}}}
\def\rvv{{\mathbf{v}}}
\def\rvw{{\mathbf{w}}}
\def\rvx{{\mathbf{x}}}
\def\rvy{{\mathbf{y}}}
\def\rvz{{\mathbf{z}}}

\def\rmA{{\mathbf{A}}}
\def\rmB{{\mathbf{B}}}

\def\rmI{{\mathbf{I}}}

\def\rmW{{\mathbf{W}}}

\def\vzero{{\bm{0}}}

\DeclareMathAlphabet{\mathsfit}{\encodingdefault}{\sfdefault}{m}{sl}
\SetMathAlphabet{\mathsfit}{bold}{\encodingdefault}{\sfdefault}{bx}{n}

\newcommand{\E}{\mathbb{E}}

\newcommand{\R}{\mathbb{R}}

\DeclareMathOperator*{\argmax}{arg\,max}
\DeclareMathOperator*{\argmin}{arg\,min}

\theoremstyle{plain}
\newtheorem{theorem}{Theorem}[section]

\newtheorem{lemma}[theorem]{Lemma}

\theoremstyle{definition}
\newtheorem{definition}[theorem]{Definition}

\theoremstyle{remark}
\newtheorem*{remark}{Remark}

\newcommand{\ourlong}{Prompt-Only Steering Vector}
\newcommand{\ourmedium}{Prompt-Only SV}
\newcommand{\ourshort}{PrOSV}

\newcommand{\AXBENCH}{\textsc{AxBench}}
\newcommand{\DCONCEPT}[1]{\textsc{Concept#1}}
\newcommand{\CONCEPTFH}{\DCONCEPT{500}} % Concept five-hundred
\newcommand{\CONCEPTTEN}{\DCONCEPT{10}}
\newcommand{\D}[2]{$\mathcal{D}_{\text{#2}}^{\text{#1}}$} % subset notation with super/sub- scripts

\tcbset {
  base/.style={
    arc=0mm,
    bottomtitle=0.5mm,
    boxrule=0mm,
    colbacktitle=black!10!white,
    coltitle=black,
    fonttitle=\bfseries,
    left=1mm,
    leftrule=1mm,
    right=1mm,
    top=1mm,
    bottom=1mm,
    title={#1},
    toptitle=0.75mm,
    width=\columnwidth
  }
}

\definecolor{myblue}{HTML}{4DBCC9}
\definecolor{mygreen}{HTML}{79BF41}
\newtcolorbox{bluebox}[1]{
  colframe=myblue!50!white,
  colback=myblue!15!white,
  base={#1},
  breakable
}
\newtcolorbox{greenbox}[1]{
  colframe=mygreen!50!white,
  colback=mygreen!15!white,
  base={#1},
  breakable
}
\newcommand{\tablebgcolor}{mygreen!20!white}

\newcommand{\gammaop}[1]{\gamma\left[#1\right]}

\newcommand{\pref}[1]{$p$#1}
\newcommand{\suff}[1]{$s$#1}
\newcommand{\prefsuff}[2]{\pref{#1}+\suff{#2}}

\newcommand{\myalgcomment}[1]{\textcolor{gray}{\quad\COMMENT{#1}}}

\newcommand{\mypar}[1]{\noindent \textbf{#1}}

\crefformat{section}{\S#2#1#3}
\crefformat{subsection}{\S#2#1#3}
\crefformat{subsubsection}{\S#2#1#3}
\crefformat{paragraph}{\P#2#1#3}
\crefformat{subparagraph}{\P#2#1#3}
\crefmultiformat{section}{\S#2#1#3}{ and~\S#2#1#3}{, \S#2#1#3}{, and~\S#2#1#3}
\crefmultiformat{subsection}{\S#2#1#3}{ and~\S#2#1#3}{, \S#2#1#3}{, and~\S#2#1#3}
\crefmultiformat{subsubsection}{\S#2#1#3}{ and~\S#2#1#3}{, \S#2#1#3}{, and~\S#2#1#3}
\crefmultiformat{paragraph}{\P\P#2#1#3}{ and~#2#1#3}{, #2#1#3}{, and~#2#1#3}
\crefmultiformat{subparagraph}{\P\P#2#1#3}{ and~#2#1#3}{, #2#1#3}{, and~#2#1#3}
\crefrangeformat{section}{\mbox{\S\S#3#1#4--#5#2#6}}
\crefrangeformat{subsection}{\mbox{\S\S#3#1#4--#5#2#6}}
\crefrangeformat{subsubsection}{\mbox{\S\S#3#1#4--#5#2#6}}
\crefrangeformat{paragraph}{\mbox{\P\P#3#1#4--#5#2#6}}
\crefrangeformat{subparagraph}{\mbox{\P\P#3#1#4--#5#2#6}}
\crefname{part}{Part}{Parts}
\Crefname{part}{Part}{Parts}
\crefname{chapter}{Ch.}{Ch.}
\Crefname{chapter}{Ch.}{Ch.}
\crefname{footnote}{Fn.}{Fn.}
\Crefname{footnote}{Fn.}{Fn.}
\crefname{figure}{Figure}{Figures}
\crefname{table}{Table}{Tables}
\crefname{subfigure}{Figure}{Figures}
\Crefname{subfigure}{Figure}{Figures}
\crefname{appsec}{Appendix}{Appendices}
\Crefname{appsec}{Appendix}{Appendices}
\crefname{algocf}{Algorithm}{Algorithms}
\Crefname{algocf}{Algorithm}{Algorithms}

\crefname{appendix}{section}{sections}
\crefalias{subappendix}{section}

\definecolor{lblue}{HTML}{A6CEE3}
\definecolor{lgreen}{HTML}{B2DF8A}
\definecolor{lred}{HTML}{FB9A99}
\definecolor{lorange}{HTML}{FDBF6F}
\definecolor{mblue}{HTML}{80B1D3}
\definecolor{mgreen}{HTML}{B3DE69}
\definecolor{mred}{HTML}{FB8072}
\definecolor{morange}{HTML}{FDB462}
\definecolor{blue}{HTML}{1F78B4}
\definecolor{green}{HTML}{33A02C}
\definecolor{red}{HTML}{E31A1C}
\definecolor{orange}{HTML}{FF7F00}
\definecolor{dblue}{HTML}{08519C}
\definecolor{dgreen}{HTML}{006D2C}
\definecolor{dorange}{HTML}{EC7014}

\definecolor{darkblue}{rgb}{0, 0, 0.5}
\definecolor{darkgreen}{rgb}{0, 0.5, 0}
\definecolor{LightCyan}{rgb}{0.88,1,1}
\hypersetup{colorlinks=true, citecolor=darkgreen, linkcolor=darkgreen, urlcolor=darkblue}

\usepackage{fontawesome5}
\usepackage{svg}

\icmltitlerunning{Towards Steering without Sacrifice: Principled Training of Steering Vectors for Prompt-only Interventions}

\begin{document}

% \doparttoc
% \faketableofcontents

\twocolumn[
\icmltitle{Towards Steering without Sacrifice: Principled Training of Steering Vectors for Prompt-only Interventions}

% It is OKAY to include author information, even for blind
% submissions: the style file will automatically remove it for you
% unless you've provided the [accepted] option to the icml2025
% package.

% List of affiliations: The first argument should be a (short)
% identifier you will use later to specify author affiliations
% Academic affiliations should list Department, University, City, Region, Country
% Industry affiliations should list Company, City, Region, Country

% You can specify symbols, otherwise they are numbered in order.
% Ideally, you should not use this facility. Affiliations will be numbered
% in order of appearance and this is the preferred way.
\icmlsetsymbol{equal}{*}
\icmlsetsymbol{mail}{\faEnvelope[regular]}

\begin{icmlauthorlist}
\icmlauthor{Yuntai Bao}{zju}
\icmlauthor{Qinfeng Li}{zju}
\icmlauthor{Xinyan Yu}{zju}
\icmlauthor{Ge Su}{zju,ningbo}
\icmlauthor{Wenqi Zhang}{ningbo}
\icmlauthor{Liu Yan}{ant}
\icmlauthor{Haiqin Weng}{ant}
\icmlauthor{Jianwei Yin}{zju}
\icmlauthor{Xuhong Zhang}{mail,ningbo}
\end{icmlauthorlist}

\icmlaffiliation{zju}{Zhejiang University}
\icmlaffiliation{ningbo}{Innovation and Management Center, School of Software Technology (Ningbo), Zhejiang University}
\icmlaffiliation{ant}{Ant Group}

\icmlcorrespondingauthor{Xuhong Zhang}{\texttt{zhangxuhong@zju.edu.cn}}

% You may provide any keywords that you
% find helpful for describing your paper; these are used to populate
% the "keywords" metadata in the PDF but will not be shown in the document
\icmlkeywords{Machine Learning, ICML}

\vskip 0.3in
]

% this must go after the closing bracket ] following \twocolumn[ ...

% This command actually creates the footnote in the first column
% listing the affiliations and the copyright notice.
% The command takes one argument, which is text to display at the start of the footnote.
% The \icmlEqualContribution command is standard text for equal contribution.
% Remove it (just {}) if you do not need this facility.

\printAffiliationsAndNotice{}  % leave blank if no need to mention equal contribution
% \printAffiliationsAndNotice{\icmlEqualContribution} % otherwise use the standard text.

\begin{abstract}
Recently, \textit{steering vectors (SVs)} have emerged as an effective and lightweight approach to steer behaviors of large language models (LLMs), among which fine-tuned SVs are more effective than optimization-free ones.
However, current approaches to fine-tuned SVs suffer from two limitations.
First, they require careful selection of steering factors on a per-SV basis to balance steering effectiveness and generation quality at inference time.
Second, they operate as \textit{full-sequence SVs (FSSVs)}, which can sacrifice generation quality regardless of factor selection due to excessive intervention on the model generation process.
To address the first limitation, we propose \textit{joint training} of steering factors and directions, such that post-hoc factor selection is no longer required.
Using neural network scaling theory, we find that moderately large initialization sizes and learning rates for steering factors are essential for stability and efficiency of joint training.
To tackle the second limitation, we draw inspiration from \textit{representation fine-tuning} and introduce \textbf{\ourlong{} (\ourshort{})}, an SV that intervenes only on a few prompt tokens.
Our empirical results show that \ourshort{} outperforms traditional FSSVs on \AXBENCH{} when using our joint training scheme.
We also find that \ourshort{} achieves a better tradeoff between general model utility and adversarial robustness than FSSV.

\begin{center}
\faGithub ~\href{https://github.com/ZJU-OmniAI/prosv}{Code} |
\includegraphics[height=1em]{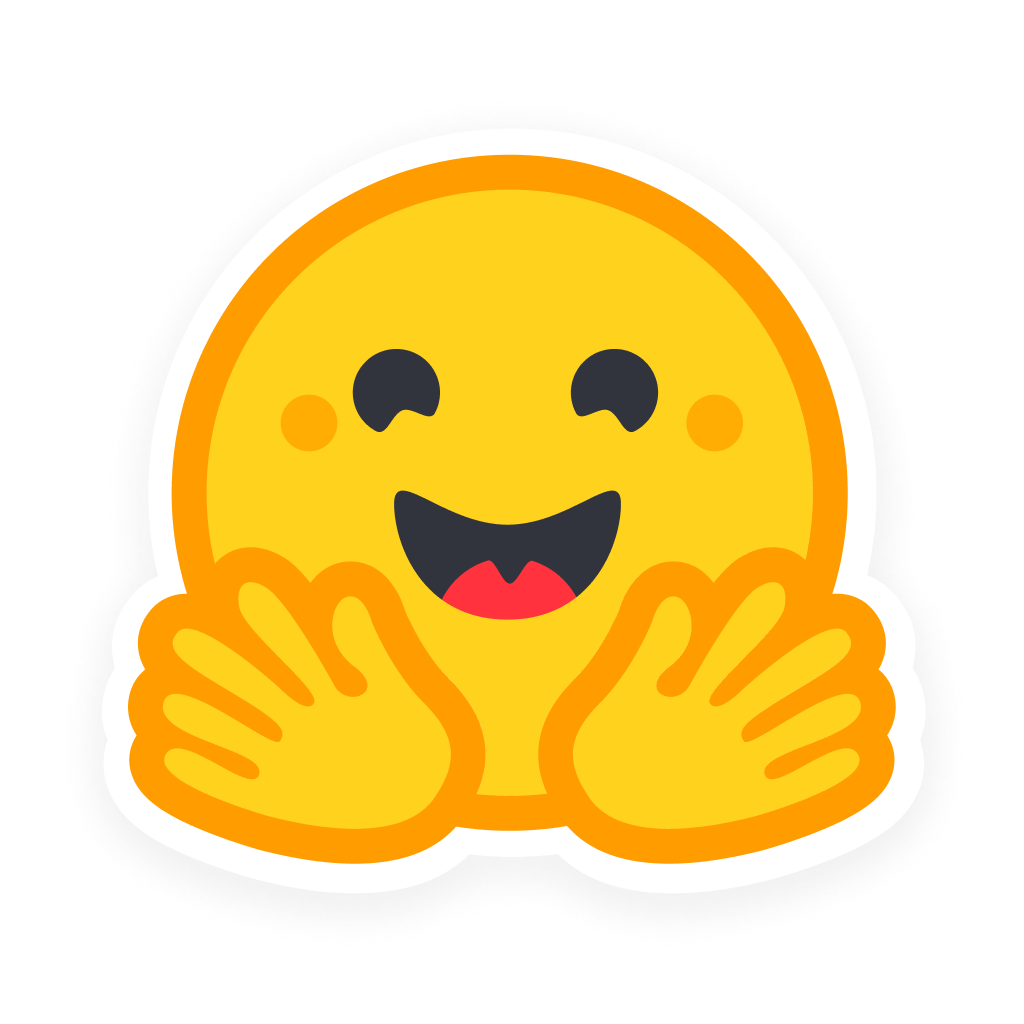} \href{https://huggingface.co/datasets/colored-dye/concept500-contrastive}{Data} |
\includegraphics[height=1em]{figures/hf-logo.png} \href{https://huggingface.co/datasets/colored-dye/axbench-steering-vector}{Checkpoints}
\end{center}

\end{abstract}

% \begin{figure*}[t]
% \centering
% \includegraphics[width=0.90\textwidth]{figures/sv_comparison.pdf}
% \caption{Overview of our contributions. Traditional fine-tuned FSSVs always require post-hoc factor selection. Using our joint training scheme, \ourshort{} achieves effective steering and cross-concept scalability in an end-to-end manner without sacrificing model utility.}
% \label{fig:sv_comparison}
% \vskip -0.1in
% \end{figure*}

\begin{figure*}[t]
\centering
\includegraphics[width=0.90\textwidth]{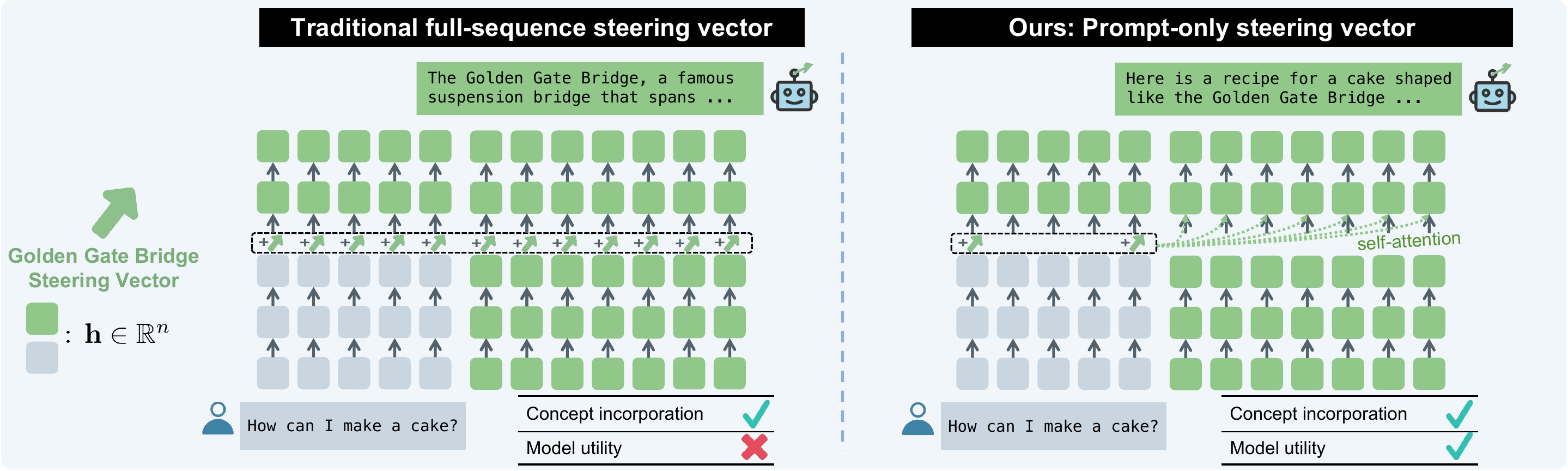}
\caption{Traditional FSSV versus \ourshort{}. FSSV could compromise general model utility even after careful factor selection. In contrast, \ourshort{} achieves effective concept-based steering without sacrificing model utility.}
\label{fig:sv_comparison}
\vskip -0.1in
\end{figure*}

%%%%%%%%%%%%%%%%%%%%%%%%%%%%%%%%%%%%%%%%%%%%%%%%%%%%%%%%%%%%%%%%%%%%%%%%%%%%%%%
%%%%%%%%%%%%%%%%%%%%%%%%%%%%%%%%%%%%%%%%%%%%%%%%%%%%%%%%%%%%%%%%%%%%%%%%%%%%%%%
\section{Introduction}
\label{sec:introduction}

% Introduce representation steering, particularly steering vectors.
As large language models (LLMs) grow in capabilities and complexities~\citep{gemini3pro_modelcard_2025}, they also pose growing challenges in reliability and control~\citep{bai2022training,anthropic_opus45_system_card_2025,schoen2025stress}.
Prompting and fine-tuning are common techniques to control LLMs, but both have limitations: prompting is versatile but also brittle and labor-intensive~\citep{chang2024efficient}, while fine-tuning is powerful but expensive, producing uninterpretable, non-sparse changes to model weights~\citep{wehner2025taxonomy}.
An emergent alternative is the \textit{steering vector (SV)} approach, which steers model behaviors by adding a fixed vector to representations~\citep{subramani2022extracting,turner2023steering,wu2025axbench}.
SVs are not only interpretable and reversible compared to fine-tuning, but also more efficient and robust than prompting.

% Discuss the properties of an ideal SV, and gaps between current progress and this goal.
% Inspired by \citet{wu2025steering} who highlights tradeoffs between steering performance and efficiency, we argue that \textbf{an ideal SV} should possess the following properties beyond steering effectiveness:
% minimal hyperparameter tuning efforts,
% minimal computational cost
% and minimal impact on model quality.
% However, existing approaches fail to meet these criteria.
% First, current SVs require tuning steering factors at inference time to balance steering effectiveness and generation quality (\cref{alg:sv_factor_tuning});
% meanwhile fine-tuned SVs require sampling steering factors to stabilize training (\cref{alg:sv_training_factor_sampling})~\citep{wu2025axbench,wu2025improved}.
% Not only is factor selection costly, but prior work also fails to offer an actionable algorithm to curate factor sampling sets, only a heuristic~\citep{wu2025improved}.
% Second, current SVs are \textit{full-sequence SVs (FSSVs)} and intervene on both prompts and responses, which leads to computational overhead proportional to context size and might degrade model utility~\citep{rutte2024a,wehner2025taxonomy}.

However, the practical utility of SVs is bottlenecked by a lack of principled training and application protocols.
We contend that for SVs to serve as a viable engineering tool, they should satisfy \textbf{two desiderata} beyond steering effectiveness:
minimal hyperparameter tuning to ensure cross-concept scalability
as well as minimal impact on model utility~\citep{wehner2025taxonomy}.
% (2) minimal computational overhead, to maintain inference throughput~\citep{wu2025steering};

Despite the remarkable progress in effectiveness~\citep{wu2025improved}, current methods struggle to meet these criteria simultaneously.
First, current SVs rely on post-hoc selection of steering factors for each SV instance to balance intervention strength and generation quality (\cref{alg:sv_factor_tuning});
however, the \textit{factor selection} process is brittle since SVs are sensitive to variations in steering factors~\citep{wu2025axbench}.
% When fine-tuned, the SVs additionally require heuristic-based \textit{factor sampling} to stabilize training (\cref{alg:sv_training_factor_sampling})~\citep{wu2025improved}, a process that lacks an actionable or principled algorithm.
Second, traditional SVs are typically \textit{full-sequence SVs (FSSVs)} and intervene on both prompts and generated tokens, which could severely degrade model capabilities even with careful factor selection~\citep{rutte2024a,braun2024sober_steering_vectors}.

% Explain the challenge to achieve tuning-free SV.
Our work bridges these gaps by transitioning the fine-tuned SV from a heuristic-heavy experimental tool to a more theoretically grounded approach.
On one hand, to reduce hyperparameter tuning costs, we argue that steering factors should be trained together with steering directions in an end-to-end fashion, instead of being regarded as an external constant.
Therefore, the \textbf{main technical challenge} we address is to enable \textit{joint training} in a principled manner.
We follow the approach of \citet{hayou2024lora+,hayou2024impact} and \citet{li2025beyond} who study the impact of hyperparameters on \textit{low-rank adaptation (LoRA)}~\citep{hu2021lora} fine-tuning dynamics based on scaling theory of neural networks in the infinite-width limit.
We utilize the same theoretical tool in the setting of SV training and derive the optimal learning rates and initialization strategies for steering factors and directions.
With our joint training scheme, hyperparameter tuning is one-off before training and no longer required at inference time.
On the other, we take inspiration from \textit{representation fine-tuning (ReFT)}~\citep{wu2024reft} and introduce \textbf{\ourmedium{} (\ourshort{})}, which intervenes only at prefill stage and not at decode stage while achieving effective steering.

% We validate our theoretical results with experiments on a toy model and modern LLMs. Our results show that, by training steering factors together with SV weights using proper learning rates and initialization, we are able to eliminate the need for inference-time factor tuning.
% Furthermore, we find that using our training strategy, only intervening on the first few tokens of the prompts is sufficient to achieve effective steering without compromising model capabilities.

In summary, our work makes two \textbf{contributions} that challenge prevailing practices of SVs:
(1) We show that inference-time factor selection is unnecessary when steering factors and directions are jointly trained with appropriate initialization schemes and learning rates;
(2) We make the counterintuitive observation that \ourshort{} can outperform FSSVs on concept-based steering when trained properly.
We also find that, \ourshort{} better handles the tension between preservation of general model utility and robustness to concept suppression attacks than FSSV,
and it can be robust to extended contexts (\textasciitilde1K tokens) on Qwen2.5-32B.

%%%%%%%%%%%%%%%%%%%%%%%%%%%%%%%%%%%%%%%%%%%%%%%%%%%%%%%%%%%%%%%%%%%%%%%%%%%%%%%
%%%%%%%%%%%%%%%%%%%%%%%%%%%%%%%%%%%%%%%%%%%%%%%%%%%%%%%%%%%%%%%%%%%%%%%%%%%%%%%
\section{Related Work}
\mypar{Representation steering.}
\textit{Representation steering} encompasses methods that control model behaviors by intervening on representations at inference time~\citep{wehner2025taxonomy,zou2023representation,wu2024advancing,wu2024reft}.
\ourshort{} is inspired by ReFT~\citep{wu2024reft}, which shows that low-rank prompt-only interventions enable effective task adaptation.
While ReFT focuses on fine-tuning, we extend this prompt-only design to the domain of SVs for concept-based steering.

\mypar{Steering vectors.}
Prior work shows that adding a fixed vector to representations could enable effective model control~\citep{subramani2022extracting}.
This approach is termed the \textit{steering vector (SV)}, and is one of the most lightweight forms of representation steering.
Based on how SVs are obtained, we categorize them into three types: optimization-free SVs, sparse autoencoder (SAE)~\citep{sharkey2022taking,huben2024sparse} SVs and fine-tuned SVs.

\textit{Optimization-free SVs}, represented by \textit{difference-in-means (DiffMean)}, are extracted from representations using contrastive inputs~\citep{turner2023steering,marks2023geometry,rimsky2024steering}.
While intuitive, they often lack the effectiveness of optimization-based methods~\citep{wu2025axbench}.
\textit{SAE SVs} emerge from the unsupervised decomposition of representations into tens of thousands of features with SAE~\citep{templeton2024scaling,lieberum2024gemma}.
However, a post-hoc selection process is required to identify relevant steering directions~\citep{arad2025saes} and the SAE might not contain the desired concepts~\citep{leask2025sparse}.
\textit{Fine-tuned SVs} are obtained by optimizing SVs to minimize objective functions~\citep{wu2025axbench}, and prior work has introduced preference optimization objectives to improve steering effectiveness~\citep{cao2024personalized,wu2025improved}.
Our work belongs in this category and addresses the training and inference protocols as well as intervention locations.

\mypar{Deriving optimal fine-tuning parameterization with scaling theory.}
Scaling theory allows us to predict how training dynamics change as models grow, ensuring that learning remains stable regardless of model width~\citep{yang2022tensor}.
By deriving optimal initialization and learning rates, prior work has transformed the trial-and-error process of LoRA fine-tuning into a principled discipline~\citep{hayou2024lora+,hayou2024impact,li2025beyond}.
Since fine-tuning SVs follows a similar optimization path on frozen models, this theoretical framework provides a rigorous foundation for selecting hyperparameters that guarantee both efficiency and stability.
% \mypar{Analyzing the effect of parameterization on fine-tuning dynamics with scaling theory.}
% Prior work has used scaling theory of neural networks~\citep{yang2019scaling,yang2022tensor} to study the effect of hyperparameters on LoRA training dynamics (more background in \cref{subapp:background-on-scaling-theory-of-neural-networks}).
% For instance, \citet{hayou2024lora+,hayou2024impact,li2025beyond} have analyzed how learning rates and initialization strategies affect training stability, efficiency as well as robustness to suboptimal learning rates.
% Fine-tuned SVs are also trained on frozen, pretrained models, which makes the scaling theory a promising tool for principled selection of SV training hyperparameters.

\begin{figure*}[t]
\centering
\includegraphics[width=0.90\textwidth]{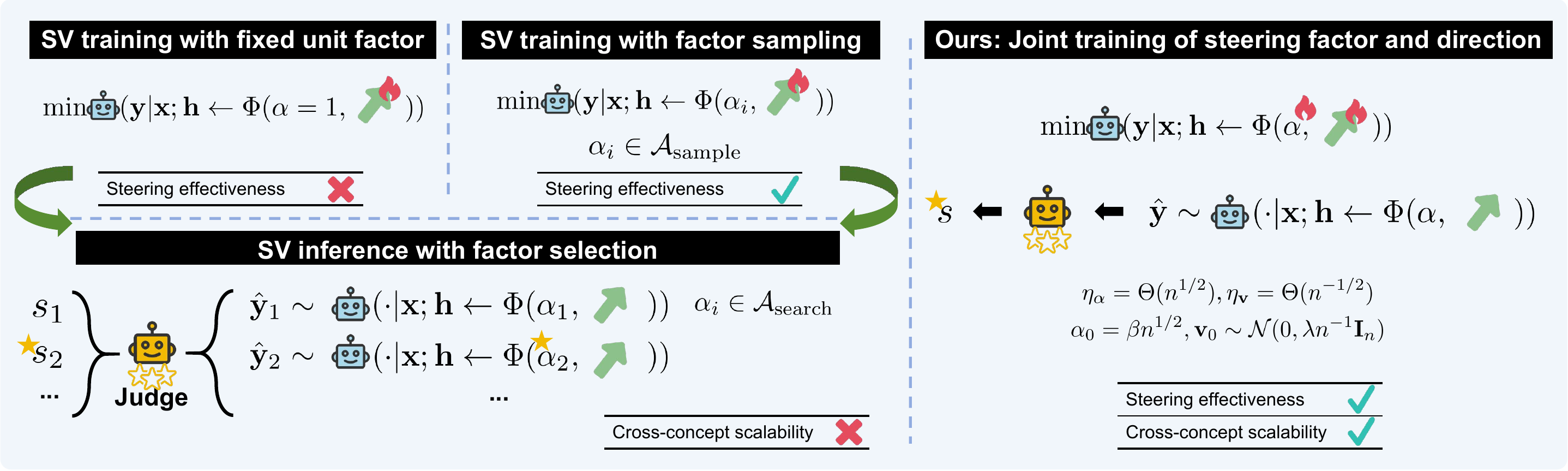}
\caption{Comparison of SV training and inference strategies. Traditional fine-tuned FSSVs always require post-hoc factor selection, while our joint training scheme enables end-to-end cross-concept scalability and is compatible with both FSSV and \ourshort{}.}
\label{fig:training_comparison}
\vskip -0.1in
\end{figure*}

%%%%%%%%%%%%%%%%%%%%%%%%%%%%%%%%%%%%%%%%%%%%%%%%%%%%%%%%%%%%%%%%%%%%%%%%%%%%%%%
%%%%%%%%%%%%%%%%%%%%%%%%%%%%%%%%%%%%%%%%%%%%%%%%%%%%%%%%%%%%%%%%%%%%%%%%%%%%%%%
\section{Preliminaries and Task Formulation}
\label{sec:notation_and_setup}
% We only introduce notations and problem setup on concept-based steering, since this is the objective of our work.
% Notations on scaling theory are explained exclusively in the following section and the Appendix.

\mypar{Language models (LMs).}
In this paper, we focus on transformer LMs~\citep{vaswani2017attention} denoted by $p(\cdot)$.
% which consist of $L$ transformer layers $f_l(\cdot)$ ($l=1,\dots,L$).
Responses are sampled from its output distribution in an autoregressive manner: $\hat{\rvy} \sim p(\cdot \vert \rvx)$, where $\rvx$ is an input prompt.
Let model width be $n$ and output residual stream representation of the $l$-th layer be $\rvh_l \in \R^n$.
%  and its value given input $\rvx$ be $\rvh(\rvx)$.
% Each layer reads from and writes back to the residual stream: $\rvh_{l} = f_l(\rvh_{l-1})$~\citep{elhage2021mathematical}.
We denote representations of the $i$-th token at the $l$-th layer by $\rvh \coloneqq \rvh_l^{(i)}$ for simplicity.

\mypar{Interventions.}
According to \citet{wu2024reft,wu2025axbench}, an \textit{intervention} is defined as a function $\Phi: \R^n \rightarrow \R^n$ that edits representations in-place during the forward pass: $\rvh \leftarrow \Phi(\rvh)$.
The model output distribution under intervention is written as $p_\Phi(\cdot \vert \rvx; \rvh \leftarrow \Phi(\rvh))$.

\mypar{Steering vectors (SVs).}
An SV is an intervention parameterized by a vector $\rvv \in \R^n$, termed the \textit{steering direction}.
Following prevalent settings of previous work on SVs, we apply SV interventions to a single layer of a model.
In this paper, we focus on the simplest intervention functional form, \textit{addition intervention (AddInv)}.
It adds a scaled vector to representations: $\Phi^{\text{Add}}(\rvh;\alpha, \rvv) = \rvh + \alpha \rvv$, where $\alpha \in \R$ is the \textit{steering factor}.
% \textit{Clamping intervention (ClampInv)} is another intervention less often used~\citep{wu2025axbench,templeton2024scaling}.
% Although AddInv and ClampInv differ in functional form, our theoretical analysis (\cref{subapp:clamp_sv_analysis}) and empirical results (\cref{subapp:verification_additional_results}) indicate that they exhibit highly similar training dynamics and steering performance.
% Therefore, we focus on AddInv unless otherwise stated.

\mypar{Task setup.}
Our final objective is to achieve \textit{concept-based steering}.
Following the setup of \citet{wu2025axbench}, the goal of concept-based steering is to incorporate a concept $c$ into response using intervention $\Phi$: $\hat{\rvy}^c \sim p_{\Phi}(\cdot \vert \rvx; \rvh \leftarrow \Phi)$.
An example concept is the \textit{Golden Gate Bridge concept}, i.e. ``descriptions of or references to the Golden Gate Bridge''~\citep{templeton2024scaling}.
The steered response ($\hat{\rvy}^c$) should express the concept $c$ while fulfilling instruction $\rvx$;
correspondingly, two common failure modes are failure of concept incorporation and ignoring instruction content.

\mypar{Current scheme for SV training and inference.}
\textit{Response-only language modeling (Lang.)} is a common objective for training SVs (see \cref{subapp:steering-vectors})~\citep{subramani2022extracting}.
As for \textit{trainable parameters}, previous works view the steering direction ($\rvv$) as the only trainable parameter while treating the steering factor ($\alpha$) as an external constant.
Regarding the \textit{SV training process},
as is shown in \cref{fig:training_comparison},
earlier works use the unit factor ($\alpha=1$) during training~\citep{subramani2022extracting};
however it fails to yield effective SVs as is shown in \cref{tab:axbench_results} and \citet{wu2025axbench}.
Therefore \citet{wu2025improved} adopt a \textit{factor sampling trick} (\cref{alg:sv_training_factor_sampling}) to improve steering performance and to decrease variance.
At \textit{inference} time, optimal factors are selected via grid search (\textit{factor selection}; \cref{alg:sv_factor_tuning}).
This process is expensive since it requires sampling hundreds of intervened responses for each SV and factors vary across instances of SVs~\citep{rimsky2024steering,wu2025axbench}.
% In this work, we train both $\alpha$ and $\rvv$, thus no need for factor sampling/tuning.

%%%%%%%%%%%%%%%%%%%%%%%%%%%%%%%%%%%%%%%%%%%%%%%%%%%%%%%%%%%%%%%%%%%%%%%%%%%%%%%
%%%%%%%%%%%%%%%%%%%%%%%%%%%%%%%%%%%%%%%%%%%%%%%%%%%%%%%%%%%%%%%%%%%%%%%%%%%%%%%
\section{Training Dynamics of SVs with Adam} \label{sec:analyzing_sv_training_dynamics}
In this section, we study how to stably and efficiently train SVs when jointly training the steering direction $\rvv$ and steering factor $\alpha$.
\textbf{Our goal} is to answer a practical question:
\textit{How should we choose initialization and learning rates so that both $\alpha$ and $\rvv$ learn effectively without destabilizing the model?}
To answer this, we analyze the effect of learning rate and initialization strategy on the training dynamics of SVs in the infinite-width limit when using the Adam optimizer~\citep{kingma2014adam}.
Our analysis is grounded in the general framework of scaling theory of neural networks~\citep{yang2023tensor,yang2022tensor}.
We follow the notations, settings as well as definitions of stability and efficiency from prior work~\citep{hayou2024lora+,hayou2024impact,li2025beyond},
and we only utilize the same theoretical tools for SV training on a frozen pretrained neural network.

\subsection{$\gamma$-operator notation}
We study SV training dynamics in the infinite-width limit.
Following prior work, we assume that only model width $n$ increases with all other aspects such as model depth and number of training steps held fixed.
LLMs nowadays have large widths, e.g., $n \approx$ 2K for models with $\geq$ 2B parameters~\citep{team2024gemma}.
Therefore it makes sense to study training dynamics when the width goes to infinity.

The \textit{$\gamma$-operator notation} describes how coordinate size of a vector $\rvv \in \R^n$ scales with network width $n$ as $n$ approaches infinity, where coordinates of $\rvv$ are asymptotically \textit{independent and identically distributed (IID)}~\citep{hayou2024lora+}.
It is defined as $\rvv = \Theta(n^{\gamma[\rvv]})$, and it describes the \textit{typical} coordinate size in the second moment: $\| \rvv \|_2^2 / n = \Theta(n^{2\gamma[\rvv]}), n \rightarrow \infty$.
For real-valued variables $u,v$, two basic computational rules apply: $\gamma[u + v] = \max(\gamma[u], \gamma[v])$ ($u + v \neq 0$ as $n \rightarrow \infty$) and $\gamma[u v] = \gamma[u] + \gamma[v]$.
We refer readers to \cref{app:sv_training_dynamics_details} for more details.

\subsection{Simplified settings and the optimizer}
To simplify analysis, we adopt the same setup as \citet{hayou2024lora+,hayou2024impact} and \citet{li2025beyond}.
Specifically, we assume each mini-batch has one data point $(\rvx,\rvy)$ (which can be trivially extended to larger mini-batches), and the goal of SV training is to minimize the objective function $\ell(\cdot)$.

In this paper we focus on the Adam optimizer for its wide usage.
We use a convenient assumption from prior work that gradients ($g$) are processed by the Adam optimizer such that $\gamma[g] = 0$.
This assumption is generally satisfied due to the entry-wise gradient normalization of the Adam optimizer~\citep{yang2023tensor,hayou2024lora+}.

\subsection{Stability and Efficiency of SVs}
We now analyze the conditions that ensure stability and efficiency of SV training.
We denote $\rvz = \Phi(\rvh) - \rvh$ as \textit{SV features}, i.e. the contribution of an SV to representations, mirroring the definition of \textit{LoRA features}~\citep{hayou2024impact}.
Then we have $\rvz=\alpha \rvv$ for AddInv.
By default, we use subscript $t$ to denote the value of variables at the $t$-th training step ($t=0,1,\dots$).

The notion of \textit{stability} requires that SV features remain within reasonable range (i.e. neither explodes nor diminishes) as $n$ approaches infinity~\citep{hayou2024lora+}:
\begin{definition}[Stability] \label{def:stability}
An SV training process is considered stable if, for all training steps $t \geq 1$, i.e. we have $\rvh, \rvz_t = \Theta(1)$ as model width $n \rightarrow \infty$.
\end{definition}
%
% $\rvh$ are representations from the upstream pretrained model and stay constant across all steps.
Following \citet{hayou2024lora+,hayou2024impact}, we assume that the pretraining parameterization of the LLM already ensures stability, such that $\rvh = \Theta(1)$.
Based on \cref{def:stability}, we obtain the following for AddInv:
\begin{equation}
\begin{gathered}
\gammaop{\rvz_t} = \gamma[\alpha_t \rvv_t] = \gamma[\alpha_t] + \gamma[\rvv_t] = 0. \\
\end{gathered}
\end{equation}
Besides stability, we require that $\rvz_t$ should be sufficiently updated during SV training.
The update to SV feature at the $t$-th step, $\Delta \rvz_t = \rvz_t - \rvz_{t-1}$, is expanded as follows:
\begin{equation} \label{eq:add_feature_update}
\Delta \rvz_t = \underbrace{(\Delta \alpha_{t}) \rvv_{t-1}}_{\delta_t^1} + \underbrace{\alpha_{t-1} (\Delta \rvv_t)}_{\delta_t^2} + \underbrace{(\Delta \alpha_t)(\Delta \rvv_t)}_{\delta_t^3},
\end{equation}
where $\Delta \alpha_t = \alpha_t - \alpha_{t-1}, \Delta \rvv_t = \rvv_t - \rvv_{t-1}$,
$\delta_t^1$ is the update obtained by fixing $\rvv$ and training only $\alpha$, $\delta_t^2$ is obtained by fixing $\alpha$ and training only $\rvv$, and $\delta_t^3$ describes the composite update of both $\alpha$ and $\rvv$.
Ideally, we should ensure $\Delta \rvz_t = \Theta(1)$ as $n \rightarrow \infty$ so that total SV feature updates are bounded and non-trivial.
In the meantime, we enforce the same properties for individual parameter updates, so that both $\alpha$ and $\rvv$ are actively learned rather than one parameter dominating the total update.
This leads to the following definition, which is also called the \textit{feature learning} regime~\citep{yang2020feature,hayou2024lora+,hayou2024impact}.
\begin{definition}[Efficiency] \label{def:efficiency}
An SV training process is considered efficient if it is stable (\cref{def:stability}) and, for all $t \geq 1$, the additive components of $\Delta \rvz_t$ are all $\Theta(1)$.
For AddInv, we require $\delta_t^i=\Theta(1), i=1,2,3$.
\end{definition}
%
% We omit $\delta_t^3$ of \cref{eq:add_feature_update} and \cref{eq:clamp_feature_update} from the definition since $\delta_t^3 = \Theta(1)$ is naturally satisfied if $\delta_t^1$ and $\delta_t^2$ are already of order $\Theta(1)$.
Since update rule is gradient descent, the updates to $\alpha$ and $\rvv$ at the $t$-th step are $\Delta \alpha_t = -\eta_\alpha g^\alpha_{t-1}$ and $\Delta \rvv_t = -\eta_\rvv g^\rvv_{t-1}$, respectively, where $\eta_\alpha,\eta_\rvv$ are learning rates and $g^\alpha_{t-1},g^\rvv_{t-1}$ are gradients processed by Adam optimizer.
We thus obtain the values of $\alpha$ and $\rvv$ at the $t$-th step:
\begin{equation}
\alpha_t = \alpha_0 - \eta_\alpha \sum_{i=0}^{t-1} g^\alpha_i, \rvv_t = \rvv_0 - \eta_\rvv \sum_{i=0}^{t-1} g^\rvv_i.
\end{equation}
Taking these into stability and efficiency requirements for AddInv (\cref{def:stability,def:efficiency}), we obtain the following:
\begin{equation}
\begin{cases}
\gammaop{\delta_t^1} = \gammaop{-\eta_{\alpha} g^\alpha_{t-1} \rvv_{t-1}} = 0, \\
\gammaop{\delta_t^2} = \gammaop{-\eta_\rvv g^\rvv_{t-1} \alpha_{t-1}} = 0, \\
\gammaop{\delta_t^3} = \gammaop{(\Delta \alpha_t) (\Delta \rvv_t)} = 0, \\
\gammaop{\rvz_t} = \gammaop{\alpha_t \rvv_t} = 0,
\end{cases}
\end{equation}
which can be simplified into the following equations:
\begin{equation}
\begin{cases}
\gamma[\eta_\alpha] + \max(\gamma[\rvv_0], \gamma[\eta_\rvv]) = 0, \\
\gamma[\eta_\rvv] + \max(\gamma[\alpha_0], \gamma[\eta_\alpha]) = 0, \\
\gamma[\eta_\rvv] + \gamma[\eta_\alpha] = 0, \\
\max(\gamma[\alpha_0], \gamma[\eta_\alpha]) + \max(\gamma[\rvv_0], \gamma[\eta_\rvv]) = 0.
\end{cases}
\end{equation}
The solution to the equations above is:
\begin{equation}\label{eq:add_sv_stability_efficiency_solution}
\begin{cases}
\gamma[\eta_\rvv] + \gamma[\eta_\alpha] = 0, \\
\gamma[\rvv_0] \leq \gamma[\eta_\rvv], \gamma[\alpha_0] \leq \gamma[\eta_\alpha].
\end{cases}
\end{equation}
This solution indicates that the learning rates of steering factors and directions should be their respective reciprocals to achieve stability and efficiency, i.e. $\eta_\rvv \eta_\alpha = \Theta(1)$.
If one parameter learns faster, the other must learn proportionally slower.
Meanwhile, initialization sizes of factors and directions should not exceed the scales of their respective learning rates.

\subsection{Practical Suggestions on SV Training} \label{subsec:practical_suggestions_on_sv_training}
Based on our theoretical results above, we provide actionable suggestions on improving the stability and efficiency of the training process for fine-tuned SVs with AddInv.

The solution of \cref{eq:add_sv_stability_efficiency_solution} does not specify the precise scaling rules.
To make choice of hyperparameters feasible, we start with taking all inequalities of \cref{eq:add_sv_stability_efficiency_solution} as equalities and using Kaiming initialization~\citep{he2016deep} for steering directions due to its wide usage, with variance $\sigma_\rvv^2 = \lambda n^{-1}$ where the constant $\lambda$ is \textit{direction initialization size}.
According to \cref{lemma:size_of_random_variable}, $\sigma_\rvv^2 = \Theta(n^{-1})$ means that $\rvv_0$ has $\Theta(n^{-\nicefrac{1}{2}})$-sized entries.
We thus have $\alpha_0 = \Theta(n^{\nicefrac{1}{2}})$ and initialize steering factors with $\alpha_0 = \beta n^{\nicefrac{1}{2}}$ where $\beta$ is \textit{factor initialization size}.
Both $\beta$ and $\lambda$ are tuned via grid search.
As for \textit{learning rate}, we have $\eta_\rvv =\Theta(n^{-\nicefrac{1}{2}}) , \eta_\alpha = \Theta(n^{\nicefrac{1}{2}})$ and tune learning rates via grid search.

Notably, our analysis above highlights the role of factorized parameterization in making SV training tractable.
Without factorization, one needs to directly optimize the SV feature $\rvz$.
In that case, the stability/efficiency requirements would reduce to $\rvz_0, \eta_\rvz = \Theta(1)$, which offers little practical guidance for hyperparameter selection.
By decomposing the SV feature into steering factor and direction, we obtain nontrivial scaling rules formulated as polynomials with respect to $n$.
This enables principled and informed choices of initialization and learning rates.

\mypar{Key advantage.}
Although both initialization sizes and learning rates require tuning, the tuning process is one-off before training for a certain layer of a model, thus the cost is amortized across all future SVs.
In contrast, traditional SVs require selection of steering factors for each instance of SV.

\mypar{Bridge to algorithm.}
Based on these scaling laws, we present a joint training procedure (\cref{alg:sv_training_ours}) that directly implements these principles.

\begin{algorithm}
\caption{Our scheme for joint training of SV factors and directions.}
\label{alg:sv_training_ours}
\begin{algorithmic}
\INPUT Training set $\mathcal{D}$, direction learning rate $\eta_\rvv=\Theta(-\frac{1}{2})$, factor learning rate $\eta_\alpha=\Theta(\frac{1}{2})$, direction initialization size $\lambda$, factor initialization size $\beta$, training steps $T$, loss function $\ell(\cdot)$
\OUTPUT Steering factor $\alpha_{T}$, steering direction $\rvv_{T}$
\STATE $\rvv_0 \sim \mathcal{N}(\bm{0}, \lambda n^{-1}\rmI_n)$ \myalgcomment{To ensure: $\rvv_0 = \Theta(-\frac{1}{2})$}
\STATE $\alpha_0 \gets \beta n^{\nicefrac{1}{2}}$ \myalgcomment{To ensure: $\alpha_0 = \Theta(\frac{1}{2})$}
\STATE $t \gets 0$
\WHILE{$t < T$}
  \STATE $(\rvx,\rvy) \sim \mathcal{D}$
  \STATE $l_t \gets \ell \left( p_{\Phi} \left( \cdot \vert \rvx ; \rvh \leftarrow \Phi(\rvh;\alpha_t ,\rvv_t) \right) ,\rvy \right)$
  \STATE $\{ g_t^\rvv, g_t^\alpha \} \gets \mathrm{Adam}(\{ \nabla_\rvv l_t, \nabla_\alpha l_t \})$ %\myalgcomment{Adam processes gradients}
  \STATE $\rvv_{t+1} \gets \rvv_{t} - \eta_\rvv g^\rvv_t$
  \STATE $\alpha_{t+1} \gets \alpha_{t} - \eta_\alpha g^\alpha_t$
  \STATE $t \gets t+1$
\ENDWHILE
\end{algorithmic}
\end{algorithm}

%%%%%%%%%%%%%%%%%%%%%%%%%%%%%%%%%%%%%%%%%%%%%%%%%%%%%%%%%%%%%%%%%%%%%%%%%%%%%%%
%%%%%%%%%%%%%%%%%%%%%%%%%%%%%%%%%%%%%%%%%%%%%%%%%%%%%%%%%%%%%%%%%%%%%%%%%%%%%%%
\section{\ourlong{}} \label{sec:prompt_only_sv}
In this section, we introduce \textbf{\ourmedium{} (\ourshort{})} as an attempt to challenge traditional fine-tuned SVs for concept-based steering,
with key design choices on intervention location, source of steering factors and training scheme.

\mypar{Intervention location.}
% Traditional SVs are \textit{full-sequence SVs (FSSVs)} with computational cost proportional to context length.
% \ourshort{} is inspired by ReFT~\citep{wu2024reft} in terms of intervention location, in contrast to traditional FSSVs.
As is shown in \cref{fig:sv_comparison}, \ourshort{} intervenes only at prefill stage and \textit{not} decode stage.
Similar to ReFT, \ourshort{} functions mainly by implicitly editing the KV cache.
Mechanistically, \ourshort{} minimizes disruption to attention patterns and better preserves model capabilities than FSSV (\cref{subapp:insights_attention_map}).
By restricting intervention to a constant number of prompt tokens, \ourshort{} achieves a 37$\times$ reduction in computational overhead compared to FSSV (\cref{subapp:theoretical-computational-overhead-of-svs}).

Following \citet{wu2024reft}, we apply interventions to prompt prefixes and suffixes; more general intervention location strategies are left for future work.
Let a prompt be $\rvx \in \mathcal{V}^m$, then interventions are applied to $p$ prefix tokens and $s$ suffix tokens ($p, s \in \{ 0,1,\dots,m \}$).
The intervention locations are thus: $\mathcal{I} = \{1, 2, \dots, p\} \cup \{ m-s+1, \dots, m-1, m \}$.
In what follows, we write \prefsuff{1}{2} to denote $p=1, s=2$.

\mypar{Steering factor and optimization scheme.}
Unlike traditional SVs that rely on post-hoc factor selection, \ourshort{} employs our joint training scheme of \cref{alg:sv_training_ours}, such that the trained steering factors are directly used for inference.

We emphasize that joint training is essential for effectiveness and scalability.
FSSVs (including optimization-free SVs like DiffMean) \textit{cannot} be trivially converted into \ourshort{}s for two reasons:
(1) FSSVs work via distinct mechanisms from \ourshort{}s and have different optimal directions (\cref{subapp:sv_cosine_similarity}),
and (2) factor selection is required during the attempt.

\def \subfigw {0.18\linewidth}
\begin{figure*}[t]
\centering
\subfigure[\D{G2B}{L10}; FSSV; $\lambda\!=\!1$.]{
  \includegraphics[width=\subfigw]{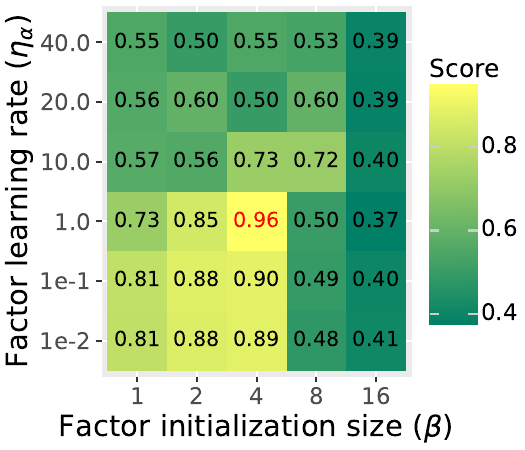}
}
% \hfill
\subfigure[\D{G2B}{L10}; \ourshort{}; $\lambda\!=\!1$.]{
  \includegraphics[width=\subfigw]{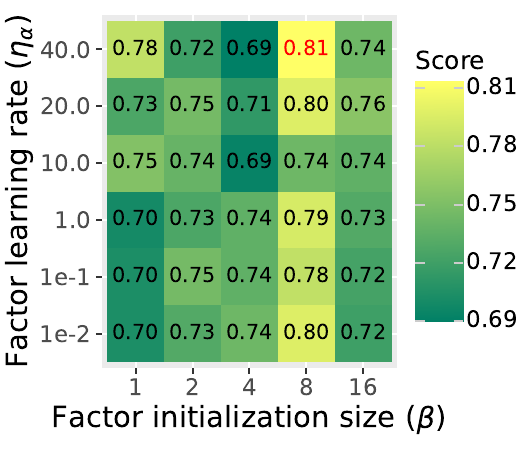}
}
% \hfill
\subfigure[\D{G9B}{L20}; FSSV; $\lambda\!=\!1$.]{
  \includegraphics[width=\subfigw]{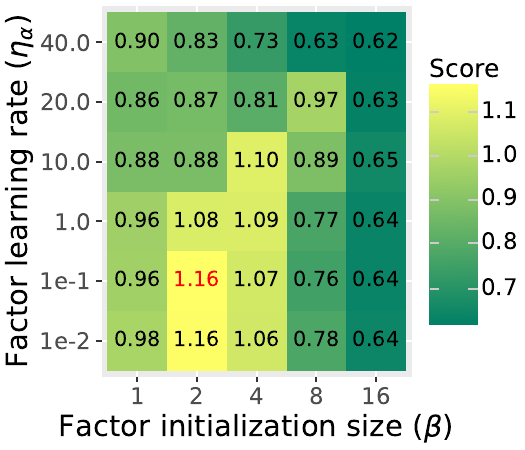}
}
% \hfill
\subfigure[\D{G9B}{L20}; \ourshort{}; $\lambda\!=\!1$.]{
  \includegraphics[width=\subfigw]{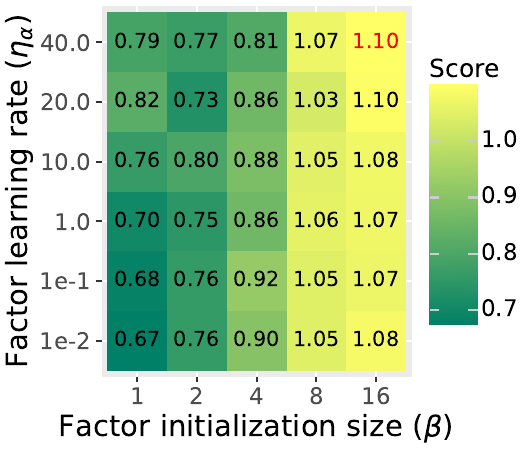}
}
% \hfill
% \subfigure[\D{Q32B}{L32}; FSSV; $\lambda\!=\!1$.]{
%   \includegraphics[width=\subfigw]{figures/sweep/default/q25_32b_l32_all_AddInv_concept.pdf}
% }
% \hfill
\subfigure[\D{Q32B}{L32}; \ourshort{}; $\lambda\!=\!1$.]{
  \includegraphics[width=\subfigw]{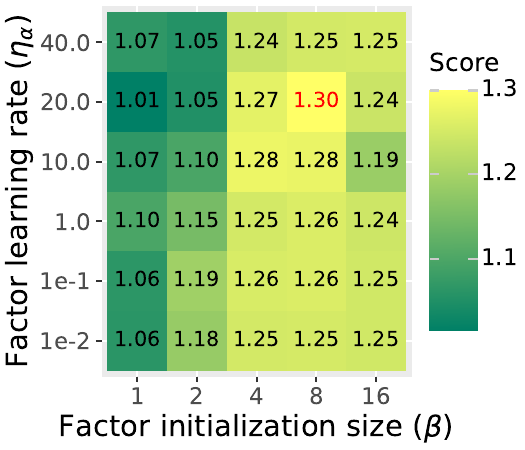}
}

\subfigure[\D{G2B}{L10}; FSSV; $\lambda\!=\!8$.]{
  \includegraphics[width=\subfigw]{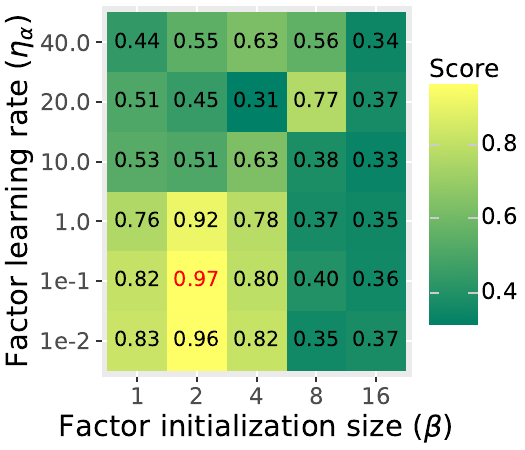}
}
% \hfill
\subfigure[\D{G2B}{L10}; \ourshort{}; $\lambda\!=\!8$.]{
  \includegraphics[width=\subfigw]{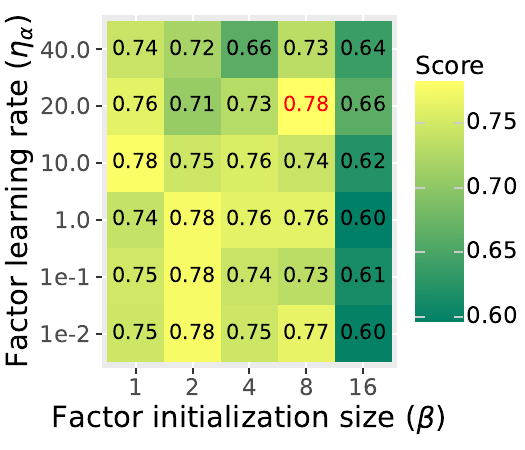}
}
% \hfill
\subfigure[\D{G9B}{L20}; FSSV; $\lambda\!=\!8$.]{
  \includegraphics[width=\subfigw]{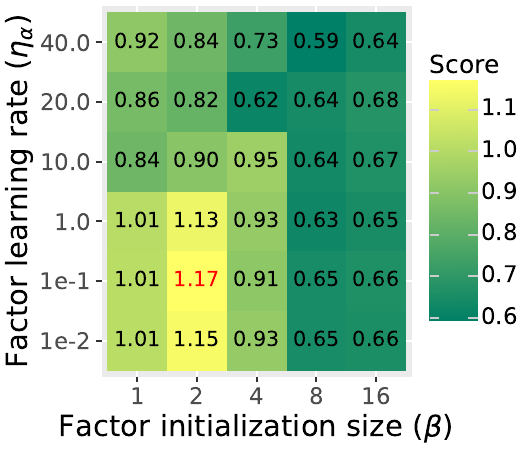}
}
% \hfill
\subfigure[\D{G9B}{L20}; \ourshort{}; $\lambda\!=\!8$.]{
  \includegraphics[width=\subfigw]{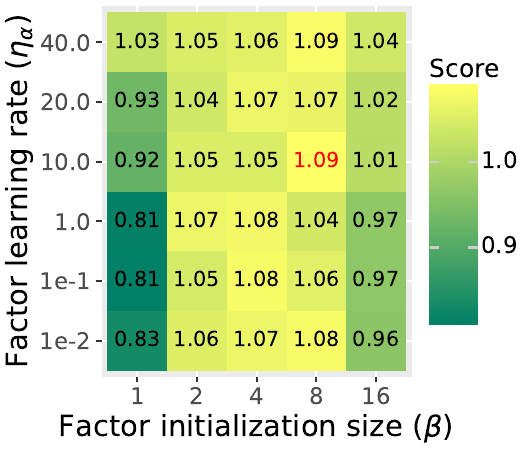}
}
% \hfill
% \subfigure[\D{Q32B}{L32}; FSSV; $\lambda\!=\!8$.]{
%   \includegraphics[width=\subfigw]{figures/sweep/larger_vec/q25_32b_l32_all_AddInv_concept.pdf}
% }
% \hfill
\subfigure[\D{Q32B}{L32}; \ourshort{}; $\lambda\!=\!8$.]{
  \includegraphics[width=\subfigw]{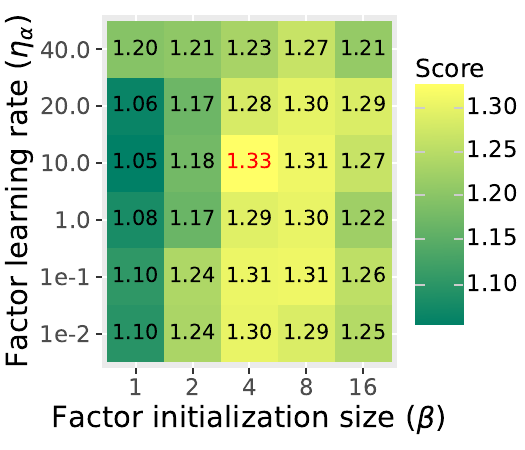}
}
\caption{Visualization of \textit{concept scores} using our joint training scheme. Highest scores are highlighted in red. Entries with relatively high scores all have moderately high factor initialization sizes and factor learning rates, and optimal performance is only attained when both are selected properly.}
\vskip -0.1in
\label{fig:verification_sweep_heatmap_addition_intervention_concept_scores}
\end{figure*}

%%%%%%%%%%%%%%%%%%%%%%%%%%%%%%%%%%%%%%%%%%%%%%%%%%%%%%%%%%%%%%%%%%%%%%%%%%%%%%%
%%%%%%%%%%%%%%%%%%%%%%%%%%%%%%%%%%%%%%%%%%%%%%%%%%%%%%%%%%%%%%%%%%%%%%%%%%%%%%%
\section{Experiments}
In this section, we verify the practical usefulness of our joint training scheme (\cref{subsec:verification_concept10})
and analyze how intervention locations of \ourshort{} affect steering performance (\cref{subsec:ablation_on_prosv_location}),
We also evaluate \ourshort{} on \AXBENCH{}~\citep{wu2025axbench}, a large-scale concept-based steering benchmark (\cref{subsec:axbench_evaluation}).
Finally, we investigate how \ourshort{} affects general model utility and robustness to extended contexts and adversarial attacks (\cref{subsec:tradeoff_capability_vs_robustness}).

\subsection{Verification of Theoretical Results} \label{subsec:verification_concept10}
We verify whether our joint training algorithm of \cref{alg:sv_training_ours} could guide SV training in practice.

\mypar{Data.}
We conduct experiments on the \CONCEPTTEN{} dataset of the \AXBENCH{} evaluation framework.
\CONCEPTTEN{} consists of 10 concepts for each subset, where a subset tests steering methods applied at a certain layer of a model.
The training data for a concept $c$ is $\mathcal{D}^c = \{ (\rvx_i, \rvy_i^c) \}_{i=1}^{N}$, where $N=72$, $\rvx_i$ is a concept-neutral instruction and $\rvy_i$ is a steered response generated by gpt-4o-mini~\citep{openai2024gpt4omini}.

\mypar{Models.}
We use instruction-following models: Gemma2-2B, Gemma2-9B~\citep{team2024gemma} and Qwen2.5-32B~\citep{qwen2.5}.
Intervention layers are the 10th layer of Gemma2-2B (\D{G2B}{L10}), 20th layer of Gemma2-9B (\D{G9B}{L20}) and 32nd layer of Qwen2.5-32B (\D{Q32B}{L32}).
We select these layers since SVs usually perform best at middle layers~\citep{wu2025axbench,sun2025layernavigator}.

\mypar{Metrics.}
We report \textit{concept scores} (0--2) of \AXBENCH{}~\citep{wu2025axbench}, i.e. how well the concept is incorporated into a response.
This is because \AXBENCH{} training data models only the concepts, not the tradeoff between concept incorporation and instruction fulfillment.
For each concept, we sample SV-intervened responses using 10 random instructions from AlpacaEval~\citep{li2023alpaca_eval}.
Concept scores are evaluated with an LLM judge (gpt-4o-mini; \citet{openai2024gpt4omini}) and averaged across three random seeds.

\mypar{Hyperparameters.}
We focus primarily on tuning hyperparameters for steering factors since steering factors are not trained in prior work.
We vary learning rates of factors ($\eta_\alpha$) and initialization sizes of factors and directions ($\beta, \lambda$).
% , where $\beta \in \{1, 2, 4, 8, 16\}$, $\eta_\alpha \in \{0.01, 0.1, 1.0, 10.0, 20.0, 40.0\}$ and $\lambda \in \{1, 8\}$.
Meanwhile we fix training steps, batch size and direction learning rates ($\eta_\rvv= 0.04$) (details in \cref{app:verification_details_and_additional_results}).
For \ourshort{}, we use \prefsuff{4}{4} on Gemma2-2B/9B and \prefsuff{2}{2} on Qwen2.5-32B.

\mypar{Results.}
Results are shown in \cref{fig:verification_sweep_heatmap_addition_intervention_concept_scores} (more results in \cref{subapp:verification_additional_results}), from which we make the following observations.
(1) SVs are highly sensitive to hyperparameters.
(2) FSSV and \ourshort{} achieve highest concept scores with $\beta>1$ and $\eta_\alpha > \eta_\rvv$, which highlights the importance of using larger initialization sizes and learning rates for steering factors.
(3) A larger direction initialization size does not improve FSSVs, but it increases the concept scores of \ourshort{} across the majority of search grid ((b) vs. (g), (d) vs. (i) and (e) vs. (j)).
(4) \ourshort{} is robust to factor learning rates at certain factor initialization sizes, for example, $\beta=8$ on \D{G2B}{L10} with $\lambda=1$, $\beta=8$ on \D{G9B}{L20} with $\lambda=8$ and $\beta=8$ on \D{Q32B}{L32} with $\lambda=1$.
Similar phenomena can be seen for FSSV, but FSSV has a smaller robustness range.

% Meanwhile, we find that our theoretical results predict concept scores, \textit{not always} overall scores, as is shown in \cref{app:verification_details_and_additional_results}.
% However, since concept scores strongly influences overall scores for \ourshort{} due to the harmonic mean operation, our theoretical results could be useful for predicting overall scores of \ourshort{}.

\begin{greenbox}{}
\textbf{Takeaway.}

\begin{itemize}[topsep=0pt, itemsep=-2pt, itemindent=0pt, leftmargin=1em]
  \item Choices of hyperparameters crucially determine the training performance of SVs;
  \item Moderately high learning rates and initialization sizes for steering factors are necessary for SVs to achieve optimal steering performance.
\end{itemize}
\end{greenbox}

% \begin{tcolorbox}
% \textbf{Insights.}
% (a) Choices of hyperparameters crucially determine the training performance of SVs;\\
% (b) Moderately high learning rates and initialization sizes for steering factors are necessary for SVs to achieve optimal steering performance.
% \end{tcolorbox}

\subsection{Effect of \ourshort{} Intervention Location} \label{subsec:ablation_on_prosv_location}
In this experiment, we investigate how \ourshort{} intervention location affects the performance of SVs trained with our joint training scheme.
% , hoping to identify universal setups that transfer well across models.

\mypar{Data.}
We use \CONCEPTTEN{}, with average prompt lengths of 21 on \D{G2B}{L10}, 17 on \D{G9B}{L20} and 13 on \D{Q32B}{L32}.

\mypar{Models.}
We use Gemma2-2B/9B and Qwen2.5-32B.

\mypar{Metrics.}
We report the best overall score and the best concept scores, each selected independently from the search grid.
\textit{Overall score} (0--2), the main metric of \AXBENCH{}, is the harmonic mean of \textit{concept score}, \textit{instruct score} (how well a response is related to the instruction) and \textit{fluency score} (how fluent a response is)~\citep{wu2025axbench}.
Results are averaged across three random seeds.

\mypar{Hyperparameters.}
We use the setup of \cref{subsec:verification_concept10} but fix $\lambda=8$ since it yields higher overall scores for \ourshort{} while having little impact for FSSV (see \cref{subapp:verification_additional_results}).
For \ourshort{}, we vary both the intervention budget and location.
Besides \textit{full-prompt intervention} with dynamic budgets, we test interventions with fixed budgets: $\vert \mathcal{I} \vert = 2, 4, 8$, where the largest fixed budget is around half the average prompt length.
For \ourshort{}s with fixed budgets, we use three variants:
\textit{prefix-only} (e.g., \pref{4}), \textit{suffix-only} (e.g., \suff{4}) and \textit{prefix-suffix} (e.g., \prefsuff{2}{2}).

\begin{table}[t]
\caption{Highest grid-selected overall (O) and concept (C) scores on \CONCEPTTEN{}. Best results are highlighted in bold.}
\label{tab:ablation_intervention_location}
\centering
\footnotesize
\setlength{\tabcolsep}{5pt}
\begin{tabular}{lcccccc}
\toprule
& \multicolumn{2}{c}{G2B; L10} & \multicolumn{2}{c}{G9B; L20} & \multicolumn{2}{c}{Q32B; L32} \\
\cmidrule(lr){2-3} \cmidrule(lr){4-5} \cmidrule(lr){6-7}
Location & O & C & O & C & O & C \\
\midrule
FSSV & 0.65 & 0.97 & 0.86 & 1.17 & 0.93 & 1.27 \\
\midrule
Full prompt & 0.54 & \textbf{1.12} & 0.71 & \textbf{1.41} & 0.88 & \textbf{1.58} \\
\rowcolor{\tablebgcolor}
\multicolumn{7}{c}{$\vert \mathcal{I} \vert =2$}\\
\pref{2} & 0.65 & 0.69 & 0.78 & 0.90 & 0.91 & 1.00 \\
\suff{2} & 0.68 & 0.82 & 0.71 & 0.94 & 1.11 & 1.28 \\
\prefsuff{1}{1} & 0.67 & 0.79 & 0.91 & 1.12 & 1.10 & 1.24 \\
\rowcolor{\tablebgcolor}
\multicolumn{7}{c}{$\vert \mathcal{I} \vert =4$}\\
\pref{4} & 0.59 & 0.63 & 0.73 & 0.85 & 0.94 & 0.97 \\
\suff{4} & 0.69 & 0.83 & 0.77 & 1.03 & 1.08 & 1.24 \\
\prefsuff{2}{2} & \textbf{0.70} & 0.82 & \textbf{0.92} & 1.14 & \textbf{1.16} & 1.33 \\
\rowcolor{\tablebgcolor}
\multicolumn{7}{c}{$\vert \mathcal{I} \vert =8$}\\
\pref{8} & 0.58 & 0.64 & 0.75 & 0.85 & 0.90 & 0.94 \\
\suff{8} & 0.61 & 0.85 & 0.74 & 1.12 & 0.92 & 1.24 \\
\prefsuff{4}{4} & 0.69 & 0.85 & 0.89 & 1.09 & 1.13 & 1.30 \\
\bottomrule
\end{tabular}
\vskip -0.1in
\end{table}

\mypar{Results.}
Results are shown in \cref{tab:ablation_intervention_location}, from which we make the following observations.
(1) Full-prompt interventions achieve highest concept scores but lowest overall scores, meaning they successfully incorporate target concepts at the cost of generation quality.
(2) \ourshort{} generally achieves lower concept scores than FSSV, which is expected;
however, \ourshort{} could obtain comparable concept scores to FSSV on Gemma2-9B and Qwen2.5-32B,
and sometimes yields higher overall scores than FSSV.
These results indicate that concept incorporation does not require full-sequence interventions.
(3) Prefix-suffix interventions generally achieve higher overall scores than prefix-only and suffix-only interventions with the same budgets, and \prefsuff{2}{2} strikes the best tradeoff between concept incorporation and generation quality;
(4) Steering performance does not always scale with computational budget, since \prefsuff{2}{2} yields higher overall scores than both \prefsuff{1}{1} and \prefsuff{4}{4}.
(5) Results are consistent on Gemma2-2B, Gemma2-9B and Qwen2.5-32B, indicating that our findings above are likely transferrable across model families and model scales.

\begin{greenbox}{}
\textbf{Takeaway.}

\begin{itemize}[topsep=0pt, itemsep=-1pt, itemindent=0pt, leftmargin=1em]
  \item \ourshort{} achieves a better tradeoff between concept incorporation and generation quality than FSSV;
  \item Four tokens of intervention is sufficient for concept-based steering with \ourshort{}, and \prefsuff{2}{2} is a model-agnostic choice that achieves the best tradeoff within our selected search space.
\end{itemize}
\end{greenbox}

% \begin{table}[t]
% \caption{Highest grid-selected overall (O) scores and corresponding concept (C) scores on \CONCEPTTEN{} with AddInv. Best results are highlighted in bold.}
% \label{tab:ablation_intervention_location}
% \centering
% \setlength{\tabcolsep}{5pt}
% \begin{tabular}{lcccccc}
% \toprule
% & \multicolumn{2}{c}{$\mathcal{D}_{\text{L10}}^{\text{G2B}}$} & \multicolumn{2}{c}{$\mathcal{D}_{\text{L20}}^{\text{G9B}}$} & \multicolumn{2}{c}{$\mathcal{D}_{\text{L32}}^{\text{Q32B}}$} \\
% \cmidrule(lr){2-3} \cmidrule(lr){4-5} \cmidrule(lr){6-7}
% Location & O & C & O & C & O & C \\
% \midrule
% FSSV & 0.65 & \textbf{0.97} & 0.86 & 1.17 \\
% \midrule
% Full prompt & 0.54 & 0.81 & 0.71 & \textbf{1.23} \\
% \rowcolor{black!10}
% \multicolumn{7}{c}{$p+s=4$}\\
% \pref{4} & 0.59 & 0.63 & 0.73 & 0.85 \\
% \suff{4} & 0.69 & 0.83 & 0.77 & 1.03 \\
% \prefsuff{2}{2} & \textbf{0.70} & 0.82 & \textbf{0.92} & 1.11 \\
% \rowcolor{black!10}
% \multicolumn{7}{c}{$p+s=8$}\\
% \pref{8} & 0.58 & 0.62 & 0.75 & 0.85 \\
% \suff{8} & 0.61 & 0.75 & 0.74 & 0.98 \\
% \prefsuff{4}{4} & 0.69 & 0.85 & 0.89 & 1.09 & 1.13 & 1.30 \\
% \bottomrule
% \end{tabular}
% \end{table}

\begin{table}
\caption{Overall steering scores (0--2; $\uparrow$) on \AXBENCH{}. * results are taken from \citet{wu2025axbench}, $\dagger$ from \citet{wu2025improved} and $\ddagger$ from \citet{arad2025saes}. Best results are highlighted in bold.}
\label{tab:axbench_results}
\centering
\footnotesize
\begin{tabular}{lccc}
\toprule
Method & \D{G2B}{L10} & \D{G9B}{L20} & \D{Q32B}{L32} \\
\midrule
Prompt & 0.698$^*$ & 1.075$^*$ & 1.060 \\
\midrule
\rowcolor{\tablebgcolor}
\multicolumn{4}{c}{Objective: Lang.} \\
FSSV & 0.663$^\dagger$ & 0.788$^\dagger$ & 0.798 \\
~~+ Fixed unit factor$^*$ & 0.072 & 0.024 & --- \\
~~+ Joint training & 0.736 & 0.821 & 0.919 \\
\ourshort{} & 0.758 & 0.859 & 1.049 \\
\rowcolor{\tablebgcolor}
\multicolumn{4}{c}{Objective: SimPO} \\
FSSV (RePS) & 0.756$^\dagger$ & 0.892$^\dagger$ & 0.947 \\
~~+ Joint training & 0.769 & 0.886 & 0.982 \\
\ourshort{} & \textbf{0.803} & \textbf{0.905} & \textbf{1.102} \\
\midrule
% LoRA$^*$ & 0.637 & 0.602 \\
LoReFT$^*$ & 0.701 & 0.777 & --- \\
DiffMean$^*$ & 0.297 & 0.322 & --- \\
SAE$^\ddagger$ & --- & 0.546 & --- \\
\bottomrule
\end{tabular}
\vskip -0.05in
\end{table}

\subsection{Evaluating Concept-based Steering at Scale} \label{subsec:axbench_evaluation}
In this experiment, we study whether our joint training scheme helps SVs attain better steering performance than factor sampling with factor selection and how \ourshort{} compares to FSSV on the concept-based steering task.
To this end, we use \AXBENCH{}, a large-scale benchmark for evaluating model control methods~\citep{wu2025axbench}.

\mypar{Data.}
For Gemma2-2B/9B, we train SVs on the \CONCEPTFH{} dataset from \AXBENCH{} with 500 concepts.
Since preference optimization objectives require contrastive training examples, we follow \citet{wu2025improved} and augment the training set by generating concept-neutral responses ($\rvy_i$) with gpt-4o-mini and obtain $\mathcal{D}^{c+} = \{ (\rvx_i, \rvy_i, \rvy_i^c) \}_{i=1}^{N}$.
Due to limited computing resources, we only test on 100 concepts for Qwen2.5-32B.
Details are shown in \cref{subapp:axbench_details}.

\mypar{Metrics.}
We report average overall scores across concepts; standard deviation is reported in \cref{subapp:axbench_additional_results}.

\mypar{Models.}
We test on three setups: \D{G2B}{L10}, \D{G9B}{L20} and \D{Q32B}{L32}.

\mypar{Methods.}
We report results on three types of model control methods: prompting, fine-tuning (rank-4 \textit{low-rank ReFT, LoReFT}; \citet{wu2024reft}) and SVs.
Among the SV baselines, we include DiffMean, SAE and three fine-tuned SVs:
\textit{reference-free preference steering (RePS)}~\citep{wu2025improved} with \textit{simple preference optimization (SimPO)} objective~\citep{meng2024simpo} and two SVs trained with Lang. objective, one trained with fixed unit factor~\citep{wu2025axbench} and another trained with factor sampling~\citep{wu2025improved}.
All SV baselines are FSSVs and require inference-time factor selection.
We show formulas of baselines in \cref{app:steering_methods_formulation}.

As for our methods, in order to investigate the effect of our joint training scheme and \ourshort{} as well as how they work with various training objectives, we integrate them with Lang. and SimPO objectives and obtain four variants.

\mypar{Hyperparameters.}
For fair comparison, we tune hyperparameters via grid search using a development set of three concepts (details in \cref{subapp:axbench_details}).

\mypar{Results.}
Results are shown in \cref{tab:axbench_results}, from which we have the following findings.
(1) Our joint training scheme generally improves performance of SVs over those trained with unit factor or factor sampling and steered with selected factors (we discuss the exception on \D{G9B}{L20} in \cref{subapp:axbench_additional_results}).
(2) \ourshort{} consistently outperforms FSSV.
(3) SVs trained with Lang. objective underperform those trained with SimPO, which confirms the finding from prior work~\citep{wu2025improved} that training objectives crucially impact steering performance.
(4) Steering performance of prompting saturates on Gemma2-9B, while SVs are able to benefit from increased model scale or capabilities.

\begin{greenbox}{}
\textbf{Takeaway.}

\begin{itemize}[topsep=0pt, itemsep=-1pt, itemindent=0pt, leftmargin=1em]
  \item Our joint training scheme enables FSSV to outperform the factor sampling-then-selection pipeline;
  \item \ourshort{} outperforms FSSV and achieves superior performance when trained with SimPO objective.
\end{itemize}
\end{greenbox}

\begin{table}[t]
\caption{Accuracy (\%; $\uparrow$) on tinyMMLU (M) and tinyGSM8K (G), as well as \textit{overall steering score on tinyGSM8K} (O; $\uparrow$). \textit{Vanilla} denotes un-steered model performance. Best steered results are highlighted in bold.}
\label{tab:standard_benchmark_and_long_context}
\centering
\setlength{\tabcolsep}{3pt}
\footnotesize
\begin{tabular}{lccccccccccc}
\toprule
& \multicolumn{3}{c}{G2B; L10} & \multicolumn{3}{c}{G9B; L20} & \multicolumn{3}{c}{Q32B; L32} \\
\cmidrule(lr){2-4} \cmidrule(lr){5-7} \cmidrule(lr){8-10}
Method & M & G & O & M & G & O & M & G & O \\
\midrule
Vanilla & 54.0 & 79.0 & --- & 74.0 & 93.0 & --- & 74.0 & 97.0 & --- \\
\midrule
Prompt & \textbf{53.7} & \textbf{61.0} & \textbf{1.03} & \textbf{62.1} & \textbf{88.6} & \textbf{1.33} & \textbf{63.8} & \textbf{93.4} & \textbf{1.47} \\
\rowcolor{\tablebgcolor}
\multicolumn{10}{c}{Objective: Lang.} \\
FSSV & 41.5 & 10.7 & 0.66 & 54.2 & 8.6 & 0.64 & 41.1 & 6.6 & 0.75 \\
\ourshort{} & 52.9 & 50.5 & 0.36 & 55.4 & 68.4 & 0.49 & 58.4 & 78.2 & 1.04 \\
\rowcolor{\tablebgcolor}
\multicolumn{10}{c}{Objective: SimPO} \\
FSSV & 37.8 & 5.6 & 0.70 & 41.3 & 4.2 & 0.75 & 39.2 & 6.9 & 0.90 \\
\ourshort{} & 51.3 & 50.3 & 0.39 & 56.2 & 66.8 & 0.58 & 59.2 & 79.2 & 1.08 \\
\bottomrule
\end{tabular}
\vskip -0.1in
\end{table}

\subsection{Tradeoff between Performance and Robustness} \label{subsec:tradeoff_capability_vs_robustness}
Prior work has noted that SVs might harm general model utility~\citep{arditi2024refusal,wehner2025taxonomy}.
We therefore evaluate how \ourshort{} affects model performance on popular benchmarks.
We also stress-test its generalization abilities in terms of robustness to adversarial attacks and extended contexts, since it only intervenes on a few prompt tokens.

\mypar{Capability benchmarks.}
Limited by computational resources, we test on tinyMMLU and tinyGSM8K~\citep{polo2024tinybenchmarks}, which are concise versions of MMLU~\citep{hendryckstest2021}, an aggregated multiple-choice benchmark for multi-task language understanding, and GSM8K~\citep{cobbe2021training}, a popular arithmetic reasoning benchmark, respectively.
These tiny benchmarks are reported to be representative of their full versions~\citep{polo2024tinybenchmarks}.
To evaluate how SVs generalize to \textit{extended} contexts, we also report overall steering scores on tinyGSM8K, where the average prompt length is around 1K tokens.
We show details in \cref{subapp:details_tradeoff_performance_adv_robustness}.

\mypar{Adversarial attacks.}
We use concept-suppressing prompts as an attack, where we prompt gpt-4o-mini to incorporate concept-suppressing requirements into AlpacaEval test instructions (details in \cref{subapp:details_tradeoff_performance_adv_robustness}).
The metric is overall score.

\mypar{Methods.}
We include prompting as a baseline and use the same steering prompts as in \cref{subsec:axbench_evaluation}.
Meanwhile we evaluate \ourshort{}s and FSSVs that use joint training, with checkpoints from previous \AXBENCH{} evaluation (\cref{subsec:axbench_evaluation}).
For FSSVs, we additionally decrease factors to 80\% and 50\% to study how inference-time factor adjustment affect the results.
Results are averaged across 50 concepts.

\begin{figure}[t]
\centering
\includegraphics[width=1.\linewidth]{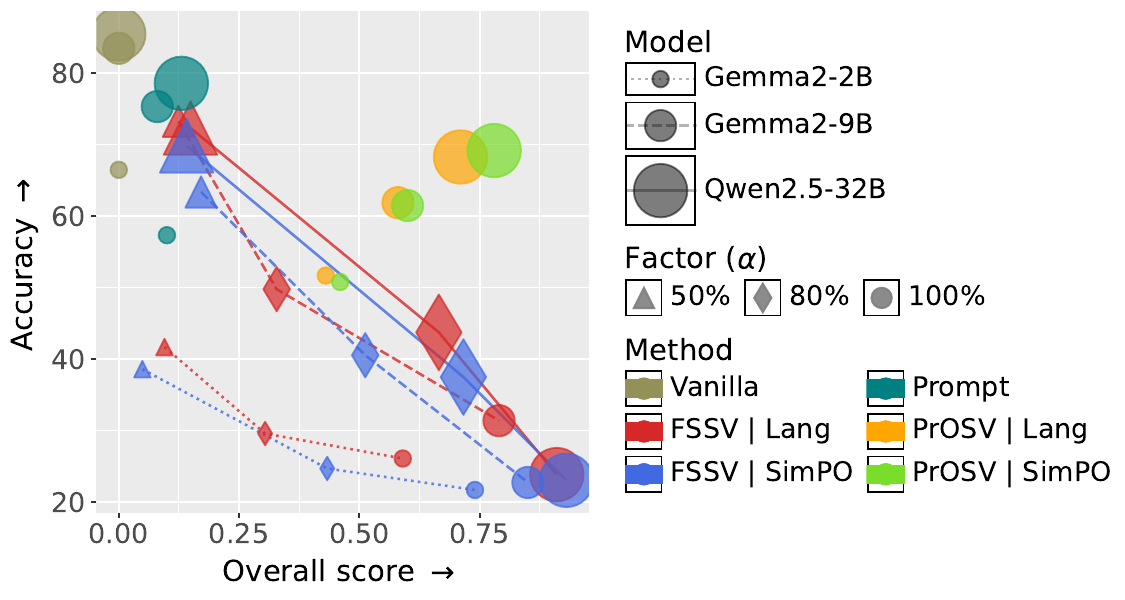}
\caption{Benchmark accuracy (\%; average of tinyMMLU and tinyGSM8K) vs. \textit{overall score under concept suppression attack}. \ourshort{} resides on a better Pareto frontier than FSSV.}
\label{fig:2d_tradeoff_robustness_vs_utility}
\vskip -0.1in
\end{figure}

\mypar{Results.}
We show results in \cref{tab:standard_benchmark_and_long_context} and \cref{fig:2d_tradeoff_robustness_vs_utility} (details in \cref{subapp:tradeoff_performance_adv_robustness_additional_results}).
According to \cref{tab:standard_benchmark_and_long_context}, prompting best preserves benchmark performance;
however, it still harms model performance, which indicates an inherent tension between concept-based steering and benchmark tasks.
This tension has a tendency to be relieved in larger/more capable models, as is indicated by the decreasing performance gap from Gemma2-2B, Gemma2-9B to Qwen2.5-32B.
We find that intervention location has a great impact on model performance, especially on the arithmetic reasoning task.
On tinyGSM8K, FSSVs reduce accuracy by 68--90\% while \ourshort{}s induce smaller declines of 18--29\%.
Meanwhile training objectives have different levels of impact on model performance for FSSV and \ourshort{}:
for FSSVs, SimPO generally has a noticeable negative impact on benchmark performance compared to Lang.;
however there is no significant difference between objectives for \ourshort{}s.

As for steering in extended contexts (\cref{tab:standard_benchmark_and_long_context}), there is hardly an inherent tradeoff between steering and benchmark accuracy, since prompting always yields highest overall scores on tinyGSM8K.
It is expected that \ourshort{} is less robust to extended contexts than FSSV on Gemma2-2B/9B since it intervenes only on four tokens; however, it outperforms FSSV on Qwen2.5-32B.
The reason is that \ourshort{} benefits from increased model scale/capabilities in terms of concept incorporation while achieving good generation quality, whereas FSSV consistently degrades generation quality (\cref{tab:steering_scores_tinygsm8k}).

Based on \cref{fig:2d_tradeoff_robustness_vs_utility}, there is a tradeoff between adversarial robustness and model utility for FSSVs and inference-time factor adjustment does not address this tradeoff;
whereas \ourshort{} achieves a better balance.
In terms of adversarial robustness, prompting is the weakest baseline while SVs are better at overriding attacks, among which FSSVs with 100\% factors and SimPO objective are most robust.
% Meanwhile, FSSVs with 100\% factors outperform \ourshort{}s in overall score by around 0.15--0.28 regardless of training objectives.

\begin{greenbox}{}
\textbf{Takeaway.}

\begin{itemize}[topsep=0pt, itemsep=-1pt, itemindent=0pt, leftmargin=1em]
  \item \ourshort{} achieves a better tradeoff between model utility and robustness than FSSV;
  \item \ourshort{} could be more robust to extended context than FSSV on Qwen2.5-32B, but not on Gemma2-2B/9B.
\end{itemize}
\end{greenbox}

%%%%%%%%%%%%%%%%%%%%%%%%%%%%%%%%%%%%%%%%%%%%%%%%%%%%%%%%%%%%%%%%%%%%%%%%%%%%%%%
%%%%%%%%%%%%%%%%%%%%%%%%%%%%%%%%%%%%%%%%%%%%%%%%%%%%%%%%%%%%%%%%%%%%%%%%%%%%%%%
\section{Conclusion and Discussions}
In this paper, we present two novel insights that challenge traditional SVs:
(1) Learning rates and initialization sizes of steering factors and directions crucially impact SV training dynamics, and we use neural network scaling theory to derive principled ways of setting these hyperparameters;
(2) Prompt-only intervention on as few as four tokens is sufficient for effective concept-based steering.
We integrate \ourshort{} with SimPO objective and obtain state-of-the-art performance on \AXBENCH{}.
We also find that, although \ourshort{} better tackles the tradeoff between preservation of model utility and adversarial robustness than FSSV, it is robust to extended contexts only on large/capable models.

\mypar{Limitations and future work.}
First, we focus on fine-tuned SVs in this work and do not discuss optimization-free SVs, the latter of which can be seen as \textit{pretrained SVs} that emerge during the pretraining process.
Future work could study principled strategies to obtain steering factors and directions for optimization-free SVs.
Second, we find in \cref{subsec:axbench_evaluation} that training objectives crucially determine steering performance.
Therefore, future work could design training objectives that push the steering performance to a new frontier.
Third, since we focus on canvassing prefix/suffix intervention locations for \ourshort{}, we acknowledge that more general choices of intervention locations could further advance steering performance and leave them for future work.
Finally, we point out a tradeoff between intervened model capability and adversarial robustness in \cref{subsec:tradeoff_capability_vs_robustness}.
Neither \ourshort{} nor FSSV is able to fully reconcile this conflict; future work could thus explore further advancing the tradeoff.

%%%%%%%%%%%%%%%%%%%%%%%%%%%%%%%%%%%%%%%%%%%%%%%%%%%%%%%%%%%%%%%%%%%%%%%%%%%%%%%
%%%%%%%%%%%%%%%%%%%%%%%%%%%%%%%%%%%%%%%%%%%%%%%%%%%%%%%%%%%%%%%%%%%%%%%%%%%%%%%
\section*{Acknowledgements}
This work was supported by the Key R\&D Program of Ningbo under Grant No.2024Z115.
This work was also supported by Ant Group.

We thank the anonymous reviewers for their useful suggestions.
We also thank Zhaopeng Feng for the constructive comments.

%%%%%%%%%%%%%%%%%%%%%%%%%%%%%%%%%%%%%%%%%%%%%%%%%%%%%%%%%%%%%%%%%%%%%%%%%%%%%%%
%%%%%%%%%%%%%%%%%%%%%%%%%%%%%%%%%%%%%%%%%%%%%%%%%%%%%%%%%%%%%%%%%%%%%%%%%%%%%%%
\section*{Impact Statement}

This paper presents work whose goal is to advance the field of machine learning and, in particular, methods for controlling the behavior of LLMs.
% There are many potential societal consequences of our work, none of which we feel must be specifically highlighted here.
Our work has a range of potential societal impacts and might induce dual-use implications.
On the positive side, \ourshort{} enables parameter-efficient control of model behavior, which may facilitate improved alignment and safer deployment of LLMs.
At the same time, the same capabilities could be misused to steer models toward harmful, deceptive, or biased behaviors.

\bibliography{main}
\bibliographystyle{icml2026}

%%%%%%%%%%%%%%%%%%%%%%%%%%%%%%%%%%%%%%%%%%%%%%%%%%%%%%%%%%%%%%%%%%%%%%%%%%%%%%%
%%%%%%%%%%%%%%%%%%%%%%%%%%%%%%%%%%%%%%%%%%%%%%%%%%%%%%%%%%%%%%%%%%%%%%%%%%%%%%%
% APPENDIX
%%%%%%%%%%%%%%%%%%%%%%%%%%%%%%%%%%%%%%%%%%%%%%%%%%%%%%%%%%%%%%%%%%%%%%%%%%%%%%%
%%%%%%%%%%%%%%%%%%%%%%%%%%%%%%%%%%%%%%%%%%%%%%%%%%%%%%%%%%%%%%%%%%%%%%%%%%%%%%%
\newpage
\appendix
\onecolumn

% \renewcommand{\appendixpagename}{Appendix}
% \appendixpage

% TODO: Restore this command since it was cleared by the .sty file.
% Comment this line to disable appendix TOC.
% \let\addcontentsline\oldaddcontentsline

% \renewcommand \thepart{}
% \renewcommand \partname{}
% \part{Appendix}
% \parttoc

\clearpage

% Use generated TOC.
\section*{Appendix Table of Contents}

\begin{itemize}[itemsep=-1pt]
  \item \textbf{\hyperref[app:reproducibility]{A} \hyperref[app:reproducibility]{Reproducibility}} \dotfill \pageref{app:reproducibility}
  \item \textbf{\hyperref[app:usage-of-generative-ai]{B} \hyperref[app:usage-of-generative-ai]{Usage of Generative AI}} \dotfill \pageref{app:usage-of-generative-ai}
  \item \textbf{\hyperref[app:background]{C} \hyperref[app:background]{Background}} \dotfill \pageref{app:background}
    \begin{itemize}[label={$\circ$}, topsep=-1pt, itemsep=-1pt]
      \item \hyperref[app:background-on-representation-steering]{C.1} \hyperref[app:background-on-representation-steering]{Representation Steering} \dotfill \pageref{app:background-on-representation-steering}
      \item \hyperref[subapp:background-on-scaling-theory-of-neural-networks]{C.2} \hyperref[subapp:background-on-scaling-theory-of-neural-networks]{Scaling Theory of Neural Networks} \dotfill \pageref{subapp:background-on-scaling-theory-of-neural-networks}
    \end{itemize}
  \item \textbf{\hyperref[app:additional-related-work]{D} \hyperref[app:additional-related-work]{Additional Related Work}} \dotfill \pageref{app:additional-related-work}
  \item \textbf{\hyperref[app:sv_training_dynamics_details]{E} \hyperref[app:sv_training_dynamics_details]{Details on Training Dynamics of Steering Vectors with Adam}} \dotfill \pageref{app:sv_training_dynamics_details}
    \begin{itemize}[label={$\circ$}, topsep=-1pt, itemsep=-1pt]
      \item \hyperref[subapp:preliminaries-on-asymptotic-notations]{E.1} \hyperref[subapp:preliminaries-on-asymptotic-notations]{Preliminaries on Asymptotic Notations} \dotfill \pageref{subapp:preliminaries-on-asymptotic-notations}
      \item \hyperref[subapp:gamma_operator_details]{E.2} \hyperref[subapp:gamma_operator_details]{\texorpdfstring{$\gamma$-}{Gamma }operator} \dotfill \pageref{subapp:gamma_operator_details}
      \item \hyperref[app:preliminary-theorems]{E.3} \hyperref[app:preliminary-theorems]{Preliminary Theorems} \dotfill \pageref{app:preliminary-theorems}
      \item \hyperref[subapp:clamp_sv_analysis]{E.4} \hyperref[subapp:clamp_sv_analysis]{Analysis of SVs with Clamping Intervention} \dotfill \pageref{subapp:clamp_sv_analysis}
    \end{itemize}
  \item \textbf{\hyperref[app:steering_methods_formulation]{F} \hyperref[app:steering_methods_formulation]{Representation Steering Methods}} \dotfill \pageref{app:steering_methods_formulation}
    \begin{itemize}[label={$\circ$}, topsep=-1pt, itemsep=-1pt]
      \item \hyperref[subapp:basic-components-of-representation-steering]{F.1} \hyperref[subapp:basic-components-of-representation-steering]{Basic Components of Representation Steering} \dotfill \pageref{subapp:basic-components-of-representation-steering}
      \item \hyperref[subapp:clarification-of-terminology]{F.2} \hyperref[subapp:clarification-of-terminology]{Clarification of Terminology} \dotfill \pageref{subapp:clarification-of-terminology}
      \item \hyperref[subapp:prompt-steering]{F.3} \hyperref[subapp:prompt-steering]{Prompt Steering} \dotfill \pageref{subapp:prompt-steering}
      \item \hyperref[subapp:steering-vectors]{F.4} \hyperref[subapp:steering-vectors]{Steering Vectors} \dotfill \pageref{subapp:steering-vectors}
      \item \hyperref[subapp:reft_formulation]{F.5} \hyperref[subapp:reft_formulation]{Representation Fine-Tuning} \dotfill \pageref{subapp:reft_formulation}
      \item \hyperref[subapp:discussions-on-versus-other-representation-steering-techniques]{F.6} \hyperref[subapp:discussions-on-versus-other-representation-steering-techniques]{Discussions on \ourshort{} versus Other Representation Steering Techniques} \dotfill \pageref{subapp:discussions-on-versus-other-representation-steering-techniques}
    \end{itemize}
  \item \textbf{\hyperref[app:comp_cost_analysis]{G} \hyperref[app:comp_cost_analysis]{Analysis of Computational Overhead}} \dotfill \pageref{app:comp_cost_analysis}
    \begin{itemize}[label={$\circ$}, topsep=-1pt, itemsep=-1pt]
      \item \hyperref[subapp:theoretical-computational-overhead-of-svs]{G.1} \hyperref[subapp:theoretical-computational-overhead-of-svs]{Inference-Time Overhead of SVs} \dotfill \pageref{subapp:theoretical-computational-overhead-of-svs}
      \item \hyperref[subapp:cost-of-sv-hyperparameter-tuning]{G.2} \hyperref[subapp:cost-of-sv-hyperparameter-tuning]{Cost of SV Hyperparameter Tuning} \dotfill \pageref{subapp:cost-of-sv-hyperparameter-tuning}
    \end{itemize}
  \item \textbf{\hyperref[app:review-of-sv-training-and-inference-procedures]{H} \hyperref[app:review-of-sv-training-and-inference-procedures]{Review of SV Training and Inference Procedures}} \dotfill \pageref{app:review-of-sv-training-and-inference-procedures}
  \item \textbf{\hyperref[app:disclosure-of-computational-resources]{I} \hyperref[app:disclosure-of-computational-resources]{Disclosure of Computational Resources}} \dotfill \pageref{app:disclosure-of-computational-resources}
  \item \textbf{\hyperref[app:verification_details_and_additional_results]{J} \hyperref[app:verification_details_and_additional_results]{Details and Additional Results for Verification Experiment}} \dotfill \pageref{app:verification_details_and_additional_results}
    \begin{itemize}[label={$\circ$}, topsep=-1pt, itemsep=-1pt]
      \item \hyperref[subapp:verification_details]{J.1} \hyperref[subapp:verification_details]{Experiment Details} \dotfill \pageref{subapp:verification_details}
      \item \hyperref[subapp:verification_additional_results]{J.2} \hyperref[subapp:verification_additional_results]{Additional Results} \dotfill \pageref{subapp:verification_additional_results}
    \end{itemize}
  \item \textbf{\hyperref[sec:ablation_intervention_location_details_and_additional_results]{K} \hyperref[sec:ablation_intervention_location_details_and_additional_results]{Details and Additional Results for Effect of Intervention Locations}} \dotfill \pageref{sec:ablation_intervention_location_details_and_additional_results}
    \begin{itemize}[label={$\circ$}, topsep=-1pt, itemsep=-1pt]
      \item \hyperref[subapp:ablation_intervention_location_details]{K.1} \hyperref[subapp:ablation_intervention_location_details]{Experiment Details} \dotfill \pageref{subapp:ablation_intervention_location_details}
      \item \hyperref[subapp:ablation_intervention_location_additional_results]{K.2} \hyperref[subapp:ablation_intervention_location_additional_results]{Additional Results} \dotfill \pageref{subapp:ablation_intervention_location_additional_results}
    \end{itemize}
  \item \textbf{\hyperref[app:axbench_details_and_additional_results]{L} \hyperref[app:axbench_details_and_additional_results]{Details and Additional Results for \AXBENCH{} Evaluation}} \dotfill \pageref{app:axbench_details_and_additional_results}
    \begin{itemize}[label={$\circ$}, topsep=-1pt, itemsep=-1pt]
      \item \hyperref[subapp:axbench_details]{L.1} \hyperref[subapp:axbench_details]{Experiment Details} \dotfill \pageref{subapp:axbench_details}
      \item \hyperref[subapp:axbench_additional_results]{L.2} \hyperref[subapp:axbench_additional_results]{Additional Results} \dotfill \pageref{subapp:axbench_additional_results}
    \end{itemize}
  \item \textbf{\hyperref[app:details-and-additional-results-for-tradeoff-between-performance-and-adversarial-robustness]{M} \hyperref[app:details-and-additional-results-for-tradeoff-between-performance-and-adversarial-robustness]{Details and Additional Results for Tradeoff between Performance and Adversarial Robustness}} \dotfill \pageref{app:details-and-additional-results-for-tradeoff-between-performance-and-adversarial-robustness}
    \begin{itemize}[label={$\circ$}, topsep=-1pt, itemsep=-1pt]
      \item \hyperref[subapp:details_tradeoff_performance_adv_robustness]{M.1} \hyperref[subapp:details_tradeoff_performance_adv_robustness]{Experiment Details} \dotfill \pageref{subapp:details_tradeoff_performance_adv_robustness}
      \item \hyperref[subapp:tradeoff_performance_adv_robustness_additional_results]{M.2} \hyperref[subapp:tradeoff_performance_adv_robustness_additional_results]{Additional Results} \dotfill \pageref{subapp:tradeoff_performance_adv_robustness_additional_results}
    \end{itemize}
  \item \textbf{\hyperref[app:additional-experiments]{N} \hyperref[app:additional-experiments]{Additional Experiments}} \dotfill \pageref{app:additional-experiments}
    \begin{itemize}[label={$\circ$}, topsep=-1pt, itemsep=-1pt]
      \item \hyperref[subapp:insights_attention_map]{N.1} \hyperref[subapp:insights_attention_map]{Insights regarding How \ourshort{} Works: Attention Mechanism} \dotfill \pageref{subapp:insights_attention_map}
      \item \hyperref[subapp:sv_cosine_similarity]{N.2} \hyperref[subapp:sv_cosine_similarity]{Similarity of SV Directions} \dotfill \pageref{subapp:sv_cosine_similarity}
      \item \hyperref[subapp:data-scaling-law-of]{N.3} \hyperref[subapp:data-scaling-law-of]{Data Scaling Law of \ourshort{}} \dotfill \pageref{subapp:data-scaling-law-of}
    \end{itemize}
  \item \textbf{\hyperref[app:dataset-statistics]{O} \hyperref[app:dataset-statistics]{Dataset Statistics}} \dotfill \pageref{app:dataset-statistics}
  \item \textbf{\hyperref[app:artifacts]{P} \hyperref[app:artifacts]{Artifacts}} \dotfill \pageref{app:artifacts}
\end{itemize}
\clearpage

\begin{table}[t]
\caption{Summary of notations.}
\label{tab:notations_summary}
\centering
\setlength{\tabcolsep}{4pt}
\footnotesize
\begin{tabular}{rl}
\toprule
Symbol & Meaning \\
\midrule
$n \in \R$ & Model width, also the dimension of the residual stream. \\
$\rvh \in \R^n$ & Representations; residual stream by default. \\
$\mathcal{V}$ & Model vocabulary. \\
$\rvx \in \{\mathcal{V}, \mathcal{V}^2, \dots \}$ & Input prompt. \\
$\rvy \in \{\mathcal{V}, \mathcal{V}^2, \dots \}$ & Response. \\
$c$ & Concept. \\
$\rvy^c$ & Steered response incorporated with concept $c$. \\
$p(\cdot \vert \rvx)$ & Output distribution of a model conditioned by input $\rvx$. \\
$\Phi: \R^n \rightarrow \R^n$ & Intervention function. \\
$\rvh \leftarrow \Phi(\rvh)$ & Intervention notation. \\
$p(\cdot \vert \rvx; \rvh \leftarrow \Phi(\rvh))$ & Intervened output distribution of a model conditioned by input $\rvx$. \\
$\Phi^{\text{Add}}(\cdot)$ & Addition intervention (AddInv). \\
% $\Phi^{\text{Clamp}}(\cdot)$ & Clamping intervention (ClampInv). \\
$\alpha \in \R$ & Steering factor. \\
$\beta \in \R$ & Steering factor initialization size. \\
$\lambda \in \R$ & Steering direction initialization size. \\
$\rvv \in \R^n$ & Steering direction. \\
$\rvu \in \R^n$ & Normalized steering direction; $\rvu \coloneqq \rvv / \| \rvv \|_2$. \\
% $\mathcal{N}(\mu, \sigma^2)$ & Gaussian distribution with mean $\mu$ and variance $\sigma^2$. \\
$\mathcal{N}(\bm{\mu}, \sigma^2 \rmI_n)$ & Multivariate Gaussian distribution with mean $\bm{\mu} \in \R^n$ and variance $\sigma^2$. \\
$\Theta(\cdot)$ & Asymptotic $\Theta$ notation. \\
$\gamma[\cdot]$ & $\gamma$-operator; $v = \Theta(n^{\gamma[v]})$ as $n \rightarrow \infty$. \\
$t$ & Time step. \\
$T$ & Number of total training steps. \\
$\eta$ & Learning rate. \\
$\ell(\cdot)$ & Loss function. \\
$l_t$ & Loss value at $t$-th timestep. \\
$\nabla l_t$ & Raw gradients. \\
$g$ & Gradients processed by Adam optimizer. \\
$p$ & Number of prompt prefix tokens for prompt-only interventions. \\
$s$ & Number of prompt suffix tokens for prompt-only interventions. \\
$p t_1$+$s t_2$ & Prompt-only intervention with $t_1$ prefix tokens and $t_2$ suffix tokens. \\
$N$ & Size of training dataset. \\
$\mathcal{D}^c$ & Training dataset for concept $c$; $\mathcal{D}^c = \{ (\rvx_i, \rvy_i^c) \}_{i=1}^{N}$. \\
$\mathcal{D}^{c+}$ & Contrastive dataset for concept $c$; $\mathcal{D}^{c+} = \{ (\rvx_i, \rvy_i, \rvy_i^c) \}_{i=1}^{N}$. \\
\D{G2B}{L10} & \CONCEPTFH{} subset for the 10th layer of Gemma2-2B. \\
\D{G9B}{L20} & \CONCEPTFH{} subset for the 20th layer of Gemma2-9B. \\
\D{Q32B}{L32} & \CONCEPTFH{} subset for the 32nd layer of Qwen2.5-32B. \\
$\mathcal{A}_{\text{sample}}$ & Factor sampling set for SV training with factor sampling. \\
$\mathcal{A}_{\text{search}}$ & Factor search grid for factor selection. \\
\bottomrule
\end{tabular}
\end{table}

\clearpage

%%%%%%%%%%%%%%%%%%%%%%%%%%%%%%%%%%%%%%%%%%%%%%%%%%%%%%%%%%%%%%%%%%%%%%%%%%%%%%%
%%%%%%%%%%%%%%%%%%%%%%%%%%%%%%%%%%%%%%%%%%%%%%%%%%%%%%%%%%%%%%%%%%%%%%%%%%%%%%%
\section{Reproducibility} \label{app:reproducibility}
We provide our proof-of-concept \textit{code} at \url{https://anonymous.4open.science/r/prosv_icml2026};
the full code, including a Python library implementation of \ourshort{}, our joint training scheme as well as experiment pipelines, would be released upon acceptance.
We will also open-source our augmented \CONCEPTFH{} \textit{dataset} and \textit{checkpoints} of SVs trained on \AXBENCH{} to facilitate future work in this field.

%%%%%%%%%%%%%%%%%%%%%%%%%%%%%%%%%%%%%%%%%%%%%%%%%%%%%%%%%%%%%%%%%%%%%%%%%%%%%%%
%%%%%%%%%%%%%%%%%%%%%%%%%%%%%%%%%%%%%%%%%%%%%%%%%%%%%%%%%%%%%%%%%%%%%%%%%%%%%%%
\section{Usage of Generative AI} \label{app:usage-of-generative-ai}
In this paper, we use LLMs for the following purposes:
\begin{enumerate}[topsep=0pt, itemsep=0pt]
  \item To assist in writing, e.g. grammar checking and refinement;
  \item For data curation, which is elaborated in \cref{subsec:verification_concept10}, \cref{subsec:ablation_on_prosv_location}, \cref{subsec:axbench_evaluation}, \cref{app:axbench_details_and_additional_results} and \cref{app:details-and-additional-results-for-tradeoff-between-performance-and-adversarial-robustness};
  \item As LLM judge, which is explained in \cref{subsec:verification_concept10}, \cref{subsec:ablation_on_prosv_location}, \cref{app:axbench_details_and_additional_results} and \cref{app:details-and-additional-results-for-tradeoff-between-performance-and-adversarial-robustness}.
\end{enumerate}

%%%%%%%%%%%%%%%%%%%%%%%%%%%%%%%%%%%%%%%%%%%%%%%%%%%%%%%%%%%%%%%%%%%%%%%%%%%%%%%
%%%%%%%%%%%%%%%%%%%%%%%%%%%%%%%%%%%%%%%%%%%%%%%%%%%%%%%%%%%%%%%%%%%%%%%%%%%%%%%
\section{Background} \label{app:background}
This section is motivated by the \textit{historical notes on steering} section of \citet{wu2025axbench} as well as the \textit{background} sections of \citet{hayou2024impact} and \citet{li2025beyond}.
It is meant to introduce the general background on representation steering and scaling theory of neural networks for readers unfamiliar with either field.

\subsection{Representation Steering} \label{app:background-on-representation-steering}
Here we aim to provide unfamiliar readers with the necessary background knowledge on the field of representation steering positioned as a pragmatic branch of \textit{mechanistic interpretability} in the setting of transformer language models.

In general, representation steering is the technique of editing the internal representations of neural networks. The editing operation is termed \textit{intervention}.
The simplest instantiation of an intervention is to add a fixed vector to representations, which is known as the \textit{steering vector (SV)} approach.
In this paper, we call the vector the \textit{steering direction} and the scaling coefficient the \textit{steering factor}.
Representation steering research is often motivated by the fundamental assumption that neural networks encode high-level concepts in low-dimensional linear subspaces of representations, even though neural networks are often nonlinear.
This assumption is also termed the \textit{linear representation hypothesis}~\citep{sharkey2025open}.

We will now provide an informal review of \textbf{how steering directions are obtained}.
\citet{subramani2022extracting} were among the first to conceptualize the notion of SVs in the field of NLP; they fine-tune SVs by maximizing the probabilities of target responses in order to accomplish the task of controllable text generation.
Recently, this approach is used to optimize SVs for controlling safety-relevant behaviors of LLMs using a single training example~\citep{dunefsky2025oneshot}.

However, one of the most commonly used SV is the optimization-free approach, \textit{difference-in-means (DiffMean)}~\citep{marks2023geometry}, which is also termed \textit{activation addition (ActAdd)} by \citet{turner2023steering} or \textit{contrastive activation addition (CAA)} by \citet{rimsky2024steering} (\cref{app:steering_methods_formulation}).
DiffMean steering direction is obtained by subtracting the mean of representations of negative inputs (i.e. without concept $c$) from those of positive inputs (i.e. incorporated with concept $c$).

In addition to optimized SVs and optimization-free SVs, it has been found that decoder features learnt by \textit{sparse autoencoders (SAEs)}~\citep{sharkey2022taking} can also be used to control model behaviors in accordance with feature descriptions~\citep{huben2024sparse,templeton2024scaling}.
One important caveat is that features should be carefully selected to ensure steering performance~\citep{arad2025saes}: output-oriented features are preferred for steering over input-oriented features.

Regarding the \textbf{types of concepts},
in this paper, we follow of the series of research that study \textit{concepts of contents}~\citep{wu2025axbench,sun2025hypersteer,arad2025saes,wu2025improved} according to the taxonomy of \citet{wehner2025taxonomy}.
By ``concepts of contents'', \citet{wehner2025taxonomy} refer to concepts that describe properties of model-generated responses.
Thus the goal of steering in this context is to control the model to focus on a specific topic and contain certain contents.
These contents can either be exact text like the ``mentions of the day `Tuesday'\,'' or higher-level topics like ``terms related to biochemical compounds and their effects''.

Content concepts are in contrast to concepts related to
abstract, high-level behavioral characteristics like harmfulness and honesty~\citep{zou2023representation},
character traits like sycophancy~\citep{rimsky2024steering,chen2025persona},
values and goals~\citep{cao2024personalized}
as well as reasoning structure of large reasoning models~\citep{chen2025seal}.
Most of these works use the optimization-free SV, DiffMean, or its variants and use full-sequence interventions.
Although we do not study these concepts in this paper, \ourshort{} is applicable to the scenarios above since high-level concepts are easier to control than content concepts.
This is supported by current experiment evidence that it is easy for DiffMean to steer high-level concepts~\citep{rimsky2024steering,chen2025persona} but not concent concepts~\citep{wu2025axbench}.

\subsection{Scaling Theory of Neural Networks}
\label{subapp:background-on-scaling-theory-of-neural-networks}
In the main body, we primarily use scaling theory as a useful theoretical tool;
in this subsection, we aim to familiarize readers with the general background on the scaling theory of neural networks.

Scaling refers to the process of increasing the size of a neural network component, e.g., model width, model depth, training/inference-time compute.
In this paper, we focus solely on scaling model width ($n$) given that current LLMs typically have large model width.

Historically, initialization strategies have been designed to avoid numerical instabilities and to ensure efficient learning in the setting of large model widths.
For example, \citet{he2016deep} derive that the initialization variance of model weights should be $2/n$ to avoid vanishing/exploding gradients, which is now commonly termed \textit{Kaiming initialization}.

\citet{yang2020feature,yang2022tensor} introduce \textit{maximal update parametrization ($\mu$P)} to allow for maximal feature learning in \textbf{pretraining} of neural networks.
\textit{Stability} is defined as $\rvh_l = \Theta(1)$ where $\rvh_l$ is the output feature of the $l$-th layer, whereas \textit{feature learning} is defined as $\Delta \rvh_l = \Theta(1)$ where $\Delta \rvh_l$ is the feature update at a training step.
Both properties are essential to ensuring that the trained model features neither explode nor diminish.
$\mu$P constructs scaling rules for initialization schemes, learning rates and network architectures to guarantee maximal feature learning while ensuring stability in the infinite-width limit.
Recently, \citet{mlodozeniec2025completed} introduce \textit{Complete\textsuperscript{(d)}P}, which builds upon \textit{CompleteP} of \citet{dey2025don} and enables hyperparameter transfer across modules, model width, model depth, batch size and training duration.

In the similar vein, \citet{hayou2024lora+}, \citet{hayou2024impact} and \citet{li2025beyond} study scaling rules of initialization schemes and learning rates for LoRA to achieve both stability and feature learning in the setting of LLM \textbf{fine-tuning}.
Let LoRA parameterization be $\Delta \rmW \approx \rmB \rmA$, where $d_\text{in}, d_\text{out}$ are input and output dimensions, respectively, $\rmW \in \R^{n_{\text{out}} \times n_{\text{in}}}$ is a weight matrix, $\rmB \in \R^{n_\text{out} \times r}, \rmA \in \R^{r \times n_\text{in}}$ are LoRA parameters and $r$ is low-rank dimension.
\citet{hayou2024lora+} find that it facilitates convergence by setting a higher learning rate for $\rmB$, such that $\eta_\rmB > \eta_\rmA$.
\citet{hayou2024impact} show that under the traditional paradigm of zero initialization ($\rmB_0 \rmA_0 = \vzero$), the default initialization of $\rmA_0 \sim \mathcal{N}(0, 1/n), \rmB_0 = \vzero$ (\verb|Init[A]|) facilitates feature learning but risks internal feature instability,
whereas initializing $\rmA_0 = \vzero, \rmB_0 \sim \mathcal{N}(0, 1/r)$ (\verb|Init[B]|) ensures stability but at the cost of suboptimal feature learning.
\citet{li2025beyond} challenge the zero initialization paradigm and show that non-zero initialization not only improves the robustness of LoRA fine-tuning to suboptimal learning rates but also improves performance on the fine-tuning task.

\textbf{Representation steering} with trainable parameters, including fine-tuned SVs, is similar to LoRA since both are optimized on the basis of a frozen pretrained model.
Therefore representation steering methods are susceptible to analysis with scaling theory under similar theoretical settings as analysis of previous work on LoRA fine-tuning.

%%%%%%%%%%%%%%%%%%%%%%%%%%%%%%%%%%%%%%%%%%%%%%%%%%%%%%%%%%%%%%%%%%%%%%%%%%%%%%%
%%%%%%%%%%%%%%%%%%%%%%%%%%%%%%%%%%%%%%%%%%%%%%%%%%%%%%%%%%%%%%%%%%%%%%%%%%%%%%%
\section{Additional Related Work}
\label{app:additional-related-work}

\mypar{Prompt-only model control.}
A number of works have explored controlling model generation behaviors via operations only at the prefill stage.
\textit{Prefix tuning}~\citep{li2021prefix} achieves parameter-efficient model control by prepending learnable virtual tokens to the input, and is thus closely related to our prompt-only intervention design.
Similarly, \textit{prompt tuning}~\citep{lester2021power} prepends learnable soft embeddings to input prompts for task adaptation.
However, prior analyses~\citep{he2022towards} show that prefix tuning primarily operates by reweighting attention, effectively downweighting original attention outputs while upweighting prefix-induced signals.
This mechanism differs from SVs, which directly modify output representations of transformer blocks.
Such reweighting can be suboptimal for concept-based steering, as
(1) intervening at attention outputs is generally less effective than modifying residual stream representations,
and (2) suppressing original model activations may degrade model utility.
For these reasons, and following prior SV work, we do not include prefix tuning as a baseline.

\textit{Task vector}~\citep{hendel2023context} and \textit{function vector}~\citep{todd2023function} are also relevant, where vectors are obtained from in-context learning contexts and intervene on the last token of prompts.
Since we focus on training-based methods, we do not build directly upon function vectors and task vectors but study \suff{1} interventions as part of our location search process.

Our work also connects with \textit{ZeroTuning}~\citep{han2026zerotuning}, which adds biases to the attention logits of the initial token to improve model performance.
This technique roughly relates to our \pref{1} intervention location setup.
It is an interesting and lightweight method, but it requires modifying attention module while SV does not, and extensive factor selection makes it less practical.

%%%%%%%%%%%%%%%%%%%%%%%%%%%%%%%%%%%%%%%%%%%%%%%%%%%%%%%%%%%%%%%%%%%%%%%%%%%%%%%
%%%%%%%%%%%%%%%%%%%%%%%%%%%%%%%%%%%%%%%%%%%%%%%%%%%%%%%%%%%%%%%%%%%%%%%%%%%%%%%
\section{Details on Training Dynamics of Steering Vectors with Adam} \label{app:sv_training_dynamics_details}

\subsection{Preliminaries on Asymptotic Notations}
\label{subapp:preliminaries-on-asymptotic-notations}
The following definition from \citet{yang2020feature} is helpful for understanding how random variables scale asymptotically as model width $n \rightarrow \infty$.

\begin{definition}[Coordinate size of a vector] \label{def:coordinate_size}
For any vector $V \in \R^n$, we say $V$ has $\Theta(n^\alpha)$-sized coordinates if $\| V \|_2^2 / n = \Theta(n^{2\alpha})$ as $n \rightarrow \infty$.
\end{definition}

This definition describes that coordinates (or ``entries'') of a vector $V$ all have the same \textit{typical size} $\Theta(n^\alpha)$ in magnitude, since entries of $V$ are approximately \textit{independent and identically distributed (IID)} when $n \rightarrow \infty$.
This is also what \citet{hayou2024impact} mean by ``convergence is understood to be convergence in second moment''.
\cref{def:coordinate_size} could be trivially proven with the following equation, assuming coordinates of $V$ have the typical size $\Theta(n^\alpha)$:
\begin{equation}
\frac{\| V \|_2^2}{n} = \frac{1}{n} \sum_{i=1}^{n} v_i^2 = \frac{1}{n} \cdot n \cdot (\Theta(n^\alpha))^2 =\Theta(n^{2\alpha}).
\end{equation}

\subsection{\texorpdfstring{$\gamma$-}{Gamma }operator}\label{subapp:gamma_operator_details}
The $\gamma$-operator is motivated by quantifying the asymptotic effect of scaling certain neural network components.
In this paper, we focus on the effect of scaling model width on SV training dynamics based on a frozen pretrained model as model width $n$ approach infinity.

As is introduced in the main body, the $\gamma$-operator is a logarithm-like operator defined as $\gamma[v] = \beta$ where $v = \Theta(n^{\beta})$, which is a mapping $\gamma: \{ v \vert v=\Theta(n^\alpha), \alpha \in \R \cup \{-\infty\} \} \rightarrow \R \cup \{-\infty\}$.

We will now introduce several basic rules of the $\gamma$-operator that are often used in this paper.
Previous work primarily use the first three rules; we additionally introduce the division rule.

\mypar{Zero.}
$\gamma[0] = - \infty$.

\mypar{Addition.}
$\forall v,v' \in \R$ and $v \neq -v'$ as $n \rightarrow \infty$, we have $\gamma[v + v'] = \max(\gamma[v], \gamma[v'])$.
It is essential that $v \neq -v'$, otherwise we have $\gamma[v + v'] = \gamma[0] = -\infty$.

\mypar{Product.}
$\forall v,v' \in \R$, we have $\gamma[v v'] = \gamma[v] + \gamma[v']$.

\mypar{Division.}
$\forall v,v' \in \R$ ($v' \neq 0$ as $n \rightarrow \infty$), we have $\gamma[v / v'] = \gamma[v] - \gamma[v']$.

\subsection{Preliminary Theorems}
\label{app:preliminary-theorems}
Here we showcase useful intermediate theoretical results.

\begin{lemma}\label{lemma:norm_of_vector}
For any $V \in \R^n$, then $\gammaop{\| V \|_2} = \gamma[V] + \nicefrac{1}{2}$.
\end{lemma}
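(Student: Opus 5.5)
The plan is to unfold \cref{def:coordinate_size} and then use only the algebraic rules of the $\gamma$-operator from \cref{subapp:gamma_operator_details}. Write $\gamma[V] = a$. By the definition of the $\gamma$-operator for vectors, this means $V$ has $\Theta(n^{a})$-sized coordinates, i.e. $\| V \|_2^2 / n = \Theta(n^{2a})$ as $n \rightarrow \infty$. The scalar $\| V \|_2^2 / n$ is real-valued, so the scalar $\gamma$-rules apply to it and give
\begin{equation*}
\gammaop{\| V \|_2^2 / n} = 2\gamma[V].
\end{equation*}

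Next we isolate $\| V \|_2^2$. Since $n = \Theta(n^1)$, we have $\gamma[n] = 1$, and the product rule gives
\begin{equation*}
\gammaop{\| V \|_2^2} = \gammaop{\tfrac{\| V \|_2^2}{n} \cdot n} = 2\gamma[V] + 1.
\end{equation*}
Applying the product rule again to $\| V \|_2^2 = \| V \|_2 \cdot \| V \|_2$ yields $2\gammaop{\| V \|_2} = 2\gamma[V] + 1$. Dividing by two gives $\gammaop{\| V \|_2} = \gamma[V] + \nicefrac{1}{2}$. Equivalently, one can use that $x \mapsto x^{1/2}$ maps $\Theta(n^{c})$ to $\Theta(n^{c/2})$ for positive quantities: if $c_1 n^{c} \le x \le c_2 n^{c}$, then $\sqrt{c_1} n^{c/2} \le \sqrt{x} \le \sqrt{c_2} n^{c/2}$.

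The only delicate point is the degenerate case $V = 0$, or more generally $\gamma[V] = -\infty$. There the identity should be read with the convention $-\infty + \nicefrac{1}{2} = -\infty$, and it holds trivially because $\| V \|_2 = 0$. Apart from that, the argument is a direct unpacking of definitions. The one thing to check is that each step compares positive quantities, so that taking square roots preserves $\Theta$-bounds; this holds because norms are nonnegative and are nonzero whenever $\gamma[V] > -\infty$.
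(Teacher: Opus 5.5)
Your proposal is correct and follows essentially the same route as the paper, which unfolds the coordinate-size definition to get $\|V\|_2^2 = n\cdot\Theta(n^{2\gamma[V]}) = \Theta(n^{2\gamma[V]+1})$ and then takes square roots. Your explicit justification of the square-root step via the $\Theta$-bounds, and your remark on the degenerate case $\gamma[V]=-\infty$, are small additions that the paper's proof omits.
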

\begin{proof}
Based on \cref{def:coordinate_size}, supposing $V = \Theta(n^\alpha)$, we have:
\begin{equation}
\begin{gathered}
\| V \|_2^2 = n \cdot \Theta(n^{2\alpha}) = \Theta(n^{2\alpha + 1}), \\
\| V \|_2 = \Theta(n^{\alpha + \nicefrac{1}{2}}).
\end{gathered}
\end{equation}
By definition of the $\gamma$-operator, we have: $\gamma[\| V \|_2] = \alpha + \nicefrac{1}{2}$.
\end{proof}

\begin{lemma} \label{lemma:unit_vector}
For any $V \in \R^n$ and $V \neq \vzero$, then $\gamma[\nicefrac{V}{\| V \|_2}] = -\nicefrac{1}{2}$.
\end{lemma}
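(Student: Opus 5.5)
The plan is to combine \cref{lemma:norm_of_vector} with the division rule for the $\gamma$-operator stated in \cref{subapp:gamma_operator_details}. Write $V = \Theta(n^{\gamma[V]})$. The normalized vector $V/\|V\|_2$ is the vector $V$ multiplied coordinatewise by the scalar $1/\|V\|_2$. So its typical coordinate size is governed by the product rule for a vector times a scalar, or equivalently by the division rule, applied to each coordinate.

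First, I would invoke \cref{lemma:norm_of_vector} to get $\gamma[\|V\|_2] = \gamma[V] + \nicefrac{1}{2}$. Second, since $V \neq \vzero$, we have $\|V\|_2 \neq 0$, so the division rule applies and gives
\begin{equation*}
\gamma\left[\frac{V}{\|V\|_2}\right] = \gamma[V] - \gamma[\|V\|_2] = \gamma[V] - \gamma[V] - \nicefrac{1}{2} = -\nicefrac{1}{2}.
\end{equation*}

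As a sanity check, I would also verify the identity directly from \cref{def:coordinate_size}, which avoids the question of how a vector-valued $\gamma$ interacts with the scalar division rule. Let $U = V/\|V\|_2$. Then $\|U\|_2^2/n = 1/n = \Theta(n^{2\cdot(-1/2)})$, so $U$ has $\Theta(n^{-1/2})$-sized coordinates by definition. This direct computation is essentially trivial, and it does not even need $\gamma[V]$ to be finite. I would present it as the main argument and mention the lemma-based derivation as the consistency check.

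The only subtle point is justifying that the division rule, which the paper states for scalars, extends to a vector divided by a scalar. This needs the $\gamma$ of a vector to be read as the typical coordinate size. The direct computation from \cref{def:coordinate_size} sidesteps this entirely, which is why I would lead with it.
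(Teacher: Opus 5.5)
Your proposal is correct, and your lead argument takes a different route from the paper. The paper's proof is exactly the derivation you relegate to a consistency check: it applies the division rule, $\gamma[V/\|V\|_2] = \gamma[V] - \gamma[\|V\|_2]$, and substitutes $\gamma[\|V\|_2] = \gamma[V] + 1/2$ from \cref{lemma:norm_of_vector}.

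Your main argument instead works straight from \cref{def:coordinate_size}. For $U = V/\|V\|_2$ you use the exact identity $\|U\|_2^2/n = 1/n = n^{2\cdot(-1/2)}$. This buys a little more generality and rigor:
\begin{itemize}
\item It holds for every nonzero $V$, without assuming that $V$ has a well-defined polynomial coordinate scale (i.e.\ a finite $\gamma[V]$).
\item It avoids extending the division rule to a vector divided by a scalar. The paper states that rule only for real-valued quantities, and its proof uses the extension implicitly.
\end{itemize}

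The paper's route, in turn, shows the result fits the $\gamma$-calculus (product and division rules plus \cref{lemma:norm_of_vector}) that drives the rest of its analysis. Its validity, however, rests on those extra assumptions. Either proof is acceptable; yours is the more self-contained one.
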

\begin{proof}
Since $V \neq \vzero$, we have:
\begin{equation}
\gammaop{\frac{V}{\| V \|_2}} = \gamma[V] - \gamma[\| V \|_2].
\end{equation}
Based on \cref{lemma:norm_of_vector}, we have $\gamma[\| V \|_2] = \alpha + \nicefrac{1}{2}$. Then we obtain:
\begin{equation}
\gammaop{\frac{V}{\| V \|_2}} = \gamma[V] - \left( \gamma[V] + \frac{1}{2} \right) = -\frac{1}{2}.
\end{equation}
This result indicates that any normalized vector is of order $\Theta(n^{-\nicefrac{1}{2}})$, same as Kaiming initialization ($v_i \sim \mathcal{N}(0, n^{-1})$).
\end{proof}

\begin{lemma} \label{lemma:size_of_random_variable}
Let $v \in \R$ be a random variable with $v \sim \mathcal{N}(0, \sigma^2)$ where $\sigma = \Theta(n^{\alpha})$, then $v = \Theta(n^\alpha)$ and $\gamma[v] = \alpha$.
\end{lemma}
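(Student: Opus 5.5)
We reduce to a standard Gaussian: write $v = \sigma \xi$ with $\xi \sim \mathcal{N}(0,1)$, a random variable whose law does not depend on $n$. Then the product rule of the $\gamma$-operator (\cref{subapp:gamma_operator_details}) gives $\gamma[v] = \gamma[\sigma] + \gamma[\xi]$. Since $\sigma = \Theta(n^\alpha)$ by hypothesis, $\gamma[\sigma] = \alpha$, and it remains only to show $\gamma[\xi] = 0$, i.e.\ $\xi = \Theta(1)$.

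To make $\xi = \Theta(1)$ precise, we use the notion of size in \cref{def:coordinate_size}, which is measured in second moment. The argument will lift the scalar statement to the vector setting in which the paper applies it, namely a vector $\rvv_0 \in \R^n$ with IID $\mathcal{N}(0,\sigma^2)$ coordinates. We write $\|\rvv_0\|_2^2/n = \sigma^2 \cdot \frac{1}{n}\sum_{i=1}^n \xi_i^2$. The average $\frac{1}{n}\sum_i \xi_i^2$ has mean $\E[\xi^2]=1$ and variance $2/n$. By the law of large numbers (or Chebyshev), it therefore converges to $1$ in probability and in $L^1$. Hence $\|\rvv_0\|_2^2/n = \Theta(\sigma^2) = \Theta(n^{2\alpha})$, which by \cref{def:coordinate_size} says $\rvv_0$ has $\Theta(n^\alpha)$-sized coordinates. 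For a single scalar we can state the same thing directly: $\E[v^2] = \sigma^2 = \Theta(n^{2\alpha})$, so $v = \Theta(n^\alpha)$ in the second-moment sense.

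The only delicate point is interpretive rather than computational. A single Gaussian draw is not deterministically $\Theta(n^\alpha)$, because with small but $n$-independent probability it can be arbitrarily close to $0$. We therefore interpret the claim in the second-moment/typical-size sense the paper adopts, citing the convention of \citet{hayou2024impact} that convergence is understood in second moment. Once that reading is fixed, the concentration step above is routine, and $\gamma[v] = \alpha$ follows immediately from the definition of the $\gamma$-operator.
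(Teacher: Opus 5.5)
Your proof is correct, but it takes a different route from the paper's.

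The paper argues about a single draw via the $99.7\%$ rule. Typical values satisfy $0 \le |v| \le 3\sigma$, hence $v = \Theta(\sigma)$. It then repeats the same bound separately for $v \sim \mathcal{U}(-b,b)$.

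You instead factor $v = \sigma\xi$ with $\xi$ of $n$-independent law. You then verify the size in the second-moment sense, both directly ($\E[v^2]=\sigma^2$) and for an IID vector via the law of large numbers ($\|\rvv_0\|_2^2/n = \sigma^2\cdot\frac{1}{n}\sum_i\xi_i^2 \to \sigma^2$ in probability).

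The paper's version buys brevity and stays at the level of one scalar draw. Strictly, though, $|v|\le 3\sigma$ is only an upper bound, so the lower half of $\Theta$ rests on the informal word ``typical''. It also has to handle each distribution family by hand.

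Your version buys the following:
\begin{itemize}
\item It matches the paper's own definition of $\gamma$ as coordinate size in second moment.
\item It gives a genuinely two-sided bound, with high probability, in the vector setting where the lemma is actually used (Kaiming initialization of $\rvv_0$).
\item It makes explicit why a single draw cannot be deterministically $\Theta(n^\alpha)$.
\item Through the same factorization $v=\sigma\xi$, it covers the uniform case, and indeed any fixed zero-mean law with finite variance, at once.
\end{itemize}
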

\begin{proof}
According to the empirical percentile rule of the Gaussian distribution, approximately $99.7\%$ of samples lie within $\pm 3\sigma$ of the mean.
Thus we have the following inequality for \textit{typical values} of $v$:
\begin{equation}
0 \leq \vert v \vert \leq 3 \sigma.
\end{equation}
By definition of the $\Theta(\cdot)$ notation, we have $v = \Theta(\sigma) = \Theta(n^\alpha)$ for typical values of $v$ and thus $\gamma[v] = \alpha$.
In words, the typical asymptotic size of a Gaussian variable with zero mean is of the same order as its standard deviation.

We then show that this result is not limited to Gaussian distributions with zero mean and extends to uniform distributions with zero mean.
Suppose $v$ follows a symmetric uniform distribution: $v \sim \mathcal{U}(-b, b)$, where $b = \Theta(n^\beta)$.
We then have:
\begin{equation}
0 \leq \vert v \vert \leq b = \Theta(n^\beta).
\end{equation}
Thus we obtain $v = \Theta(n^{\beta})$.
Since the variance of $v$ is $\sigma^2 = b^2 / 3$, again we have $v = \Theta(n^\beta) = \Theta(\sigma)$.
\end{proof}

\cref{lemma:size_of_random_variable} is useful for deriving variances for random initialization of parameter weights with known typical entry size and zero mean, as well as deriving the scaling rule for a random variable with known variances and zero mean.

\begin{lemma} \label{lemma:random_vec_dot_product_constant_vec}
Let $V, X \in \R^n$, where $V$ is a random vector with IID entries: $v_i \sim \mathcal{N}(0, \sigma_v^2) (i=1,2,\dots,n)$ where $\sigma_v = \Theta(n^\alpha)$, and $X$ is a vector with constant entry size: $x_i = \Theta(1)$.
Then $V^\top X = \Theta(n^{\alpha + \nicefrac{1}{2}})$.
\end{lemma}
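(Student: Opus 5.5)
Treat $X$ as deterministic (or independent of $V$), so that $V^\top X = \sum_{i=1}^n v_i x_i$ is a sum of independent zero-mean Gaussians. Then $V^\top X$ is itself Gaussian with mean zero and variance
\begin{equation*}
\sigma_v^2 \sum_{i=1}^n x_i^2 = \sigma_v^2 \| X \|_2^2 .
\end{equation*}
Once this exact distributional identity is established, the claim follows by reading off the standard deviation and invoking \cref{lemma:size_of_random_variable}.

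\textbf{Step 1: the norm of $X$.} Since $X$ has $\Theta(1)$-sized coordinates, i.e. $\gamma[X]=0$ in the sense of \cref{def:coordinate_size}, \cref{lemma:norm_of_vector} gives $\gamma[\|X\|_2] = \nicefrac{1}{2}$. Hence $\|X\|_2^2 = \Theta(n)$.

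\textbf{Step 2: the standard deviation.} The variance is $\sigma_v^2 \|X\|_2^2 = \Theta(n^{2\alpha}) \cdot \Theta(n) = \Theta(n^{2\alpha+1})$, so the standard deviation of $V^\top X$ is $\Theta(n^{\alpha+\nicefrac{1}{2}})$. Applying \cref{lemma:size_of_random_variable} to this scalar zero-mean Gaussian gives $V^\top X = \Theta(n^{\alpha+\nicefrac{1}{2}})$ for typical values, that is, $\gamma[V^\top X] = \alpha + \nicefrac{1}{2}$.

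\textbf{Main obstacle.} The delicate point is justifying that $V$ and $X$ are independent, so that the sum has no coherent drift. If $X$ were correlated with $V$, the product-and-sum rules would instead give $\Theta(n^{\alpha+1})$, the law-of-large-numbers rather than central-limit scaling. I would therefore state explicitly that $X$ is fixed (e.g. a pretrained representation) and $V$ is freshly initialized, which is the setting in which the lemma is used. If one prefers not to rely on Gaussianity, a fallback is a variance computation: $\E[V^\top X]=0$ and $\E[(V^\top X)^2] = \sigma_v^2\|X\|_2^2$, so the second moment, which is exactly the paper's notion of size, scales as $n^{2\alpha+1}$.
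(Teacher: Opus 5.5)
Your proof is correct, and it takes a different route from the paper's. The paper never computes the distribution of $V^\top X$. Instead it bounds each coordinate by constants, $\kappa_x^l < x_i < \kappa_x^h$, and sandwiches $\sum_i v_i x_i$ between $\kappa_x^l\, n\bar v$ and $\kappa_x^h\, n\bar v$, where $n\bar v \sim \mathcal{N}(0, n\sigma_v^2)$. It then applies the $3\sigma$ rule to conclude $|V^\top X| < 3\kappa_x^h\sigma_v\sqrt{n}$ for typical values.

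You instead observe that, for fixed $X$, $V^\top X \sim \mathcal{N}(0, \sigma_v^2\|X\|_2^2)$ exactly. You then use $\|X\|_2^2 = \Theta(n)$ and read off the standard deviation. This is cleaner, and it is also more rigorous than the paper's argument, for two reasons.
\begin{enumerate}
\item The paper's sandwich multiplies the inequalities on $x_i$ by $v_i$ of arbitrary sign, and those inequalities flip whenever $v_i < 0$. So the termwise bounds do not hold as written.
\item The paper's final display gives only an upper bound on the typical size. Your exact variance gives the matching scale from both sides, in the second-moment sense the paper adopts.
\end{enumerate}
Your version also needs only the weaker hypothesis $\|X\|_2^2/n = \Theta(1)$, not pointwise bounds on every $x_i$. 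Your fallback second-moment computation removes the Gaussian assumption entirely.

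What the paper's framing offers is mainly intuition: the sum behaves like $n$ times a zero-mean sample average, hence $\sqrt{n}$ rather than $n$ scaling. Your remark contrasting the independent case $\Theta(n^{\alpha+1/2})$ with the coherent case $\Theta(n^{\alpha+1})$ captures the same point. It also mirrors the paper's companion lemma for nonzero-mean entries. Your explicit statement that $X$ is fixed and independent of the freshly initialized $V$ is an assumption the paper leaves implicit, and it is worth keeping.
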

\begin{proof}
First we have:
\begin{equation}
V^\top X = \sum_{i=1}^n v_i x_i.
\end{equation}
Since $x_i = \Theta(1)$, there exist constants $\kappa_x^l, \kappa_x^h$ such that $\kappa_x^l < x_i < \kappa_x^h$.
Thus we obtain:
\begin{equation}
n \bar{v} \kappa_x^l = \sum_{i=1}^n v_i \kappa_x^l < V^\top X = \sum_{i=1}^n v_i x_i < \sum_{i=1}^n v_i \kappa_x^h = n \bar{v} \kappa_x^h,
\end{equation}
where $n \bar{v} \sim \mathcal{N}(0, n \sigma_v^2)$ according to the \textit{Central Limit Theorem (CLT)}.
Therefore we have $0 \leq \vert n \bar{v} \vert \leq 3 \sigma_v \sqrt{n}$ for typical values of $\bar{v}$ according to the $99.7\%$ percentile rule of the Gaussian distribution.
Taking this into the equation above, we have the following for typical values of $\bar{v}$:
\begin{equation}
0 \leq \vert V^\top X \vert < n \bar{v} \kappa_x^h < 3 \kappa_{x}^h \sigma_v \sqrt{n}.
\end{equation}
By definition of the $\Theta(\cdot)$ notation, we have $V^\top X = \Theta(\sigma_v \sqrt{n}) = \Theta(n^{\alpha + \nicefrac{1}{2}})$.
\end{proof}

\cref{lemma:random_vec_dot_product_constant_vec} is useful in the setting of fine-tuning and SV training, where $V$ corresponds to randomly initialized weights and $X = \Theta(1)$ corresponds to representations of a pretrained model.
% Note that \textbf{\cref{lemma:random_vec_dot_product_constant_vec} is similar to but fundamentally different from \cref{lemma:constant_n_vecs_dot_product}}, where the key difference stems from the CLT.

\begin{lemma} \label{lemma:random_vec_nonzero_mean_dot_product_constant_vec}
Let $V, X \in \R^n$, where entries of $V$ have non-zero mean, $v_i = \Theta(n^\alpha), i=1,2,\dots,n$, and $X$ is a vector with constant entry size: $x_i = \Theta(1)$.
Then $V^\top X = \Theta(n^{\alpha + 1})$.
\end{lemma}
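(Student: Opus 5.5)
The plan mirrors the proof of \cref{lemma:random_vec_dot_product_constant_vec}, with one change: since the entries of $V$ have non-zero mean, their signed sum grows linearly in $n$ rather than like $\sqrt{n}$. No CLT-type cancellation occurs, so the heuristic used for zero-mean entries is not needed.

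First, write $V^\top X = \sum_{i=1}^n v_i x_i$ and make the typical-size assumptions precise in the same sense as the preceding lemmas. Assume the coordinates $v_i$ share a common sign asymptotically (the non-zero mean), so there are constants $0<\kappa_v^l<\kappa_v^h$ with $\kappa_v^l n^\alpha \le |v_i| \le \kappa_v^h n^\alpha$. Likewise assume the coordinates $x_i$ are $\Theta(1)$ with a consistent sign, i.e.\ $\kappa_x^l \le x_i \le \kappa_x^h$ with $\kappa_x^l>0$ after flipping signs if necessary. Some such non-cancellation condition on the product $v_i x_i$ is needed; otherwise, for example with $x_i$ of alternating sign, the sum could be much smaller than claimed.

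Under these assumptions, each product satisfies $\kappa_v^l\kappa_x^l n^\alpha \le |v_i x_i| \le \kappa_v^h\kappa_x^h n^\alpha$, and all products have the same sign. Summing the $n$ terms with no cancellation gives
\[
\kappa_v^l\kappa_x^l\, n^{\alpha+1} \le |V^\top X| \le \kappa_v^h\kappa_x^h\, n^{\alpha+1},
\]
which is exactly $V^\top X = \Theta(n^{\alpha+1})$. The upper bound needs no sign assumption: it follows from the triangle inequality, or from Cauchy--Schwarz together with \cref{lemma:norm_of_vector}, since $\gamma[\|V\|_2]+\gamma[\|X\|_2] = \alpha+\nicefrac{1}{2}+\nicefrac{1}{2}$. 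Alternatively, split $v_i = \mu_i + \epsilon_i$ into a deterministic mean part with $\mu_i=\Theta(n^\alpha)$ and a zero-mean fluctuation. By \cref{lemma:random_vec_dot_product_constant_vec}, the fluctuation contributes only $\Theta(n^{\alpha+1/2})$. The mean part contributes $\Theta(n^{\alpha+1})$, and the addition rule of the $\gamma$-operator then yields $\gamma[V^\top X]=\alpha+1$.

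The main obstacle is the lower bound, which requires ruling out cancellation in $\sum_i v_i x_i$. I would handle it by stating the same-sign (non-zero mean, aligned) assumption explicitly and then applying the elementary bounds above. The contrast with \cref{lemma:random_vec_dot_product_constant_vec} is the extra factor $n^{\nicefrac{1}{2}}$: coherent summation scales like $n$, whereas the random zero-mean summation there scales like $\sqrt{n}$.
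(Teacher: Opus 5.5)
Your argument is correct and is essentially the paper's proof, which just writes $V^\top X=\sum_{i=1}^n v_i x_i = n\cdot\Theta(n^\alpha)\cdot\Theta(1)$. The difference is that you state explicitly the no-cancellation hypothesis on the products $v_i x_i$ that this one-line computation silently assumes, and your alternating-sign example shows the statement genuinely needs it. Your same-sign condition is sufficient but not necessary: the lower bound only requires $\frac{1}{n}\sum_i v_i x_i$ to be of exact order $n^\alpha$, and that weaker form is what matters when $X$ is a mixed-sign residual-stream representation $\rvh$, as in the paper's ClampInv analysis.
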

\begin{proof}
We directly obtain:
\begin{equation}
V^\top X = \sum_{i=1}^{n} v_i x_i = n \cdot \Theta(n^\alpha) \cdot \Theta(1) = \Theta(n^{\alpha+1}).
\end{equation}
\end{proof}

\cref{lemma:random_vec_nonzero_mean_dot_product_constant_vec} is introduced by \citet{li2025beyond}.
Note an important distinction between \cref{lemma:random_vec_nonzero_mean_dot_product_constant_vec} and \cref{lemma:random_vec_dot_product_constant_vec}: $v_i$ has non-zero mean in the former case, thus the CLT is not used.

\begin{lemma} \label{lemma:random_unit_vec_dot_product_constant_vec}
Let $V \in \R^n$ be a random vector with IID entries: $v_i \sim \mathcal{N}(0, \sigma_v^2), i=1,2,\dots,n$, and $U \coloneqq \nicefrac{V}{\| V \|_2}$ be the normalization of $V$.
Let $X$ be a vector with constant entry size: $x_i = \Theta(1), i=1,2,\dots,n$.
Then $U^\top X = \Theta(1)$.
\end{lemma}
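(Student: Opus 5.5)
The plan is to write $U^\top X$ as a quotient and use the $\gamma$-operator's division rule:
\begin{equation}
U^\top X = \frac{V^\top X}{\| V \|_2}.
\end{equation}
Then $\gamma[U^\top X] = \gamma[V^\top X] - \gamma[\| V \|_2]$. This step is legitimate because $\| V \|_2 \neq 0$ almost surely, and as $n \rightarrow \infty$ it stays away from zero. Both terms on the right are covered by earlier lemmas, so the proof reduces to evaluating them.

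First, I would determine the coordinate size of $V$. Write $\sigma_v = \Theta(n^{a})$ for some exponent $a$; the statement leaves $\sigma_v$ unspecified, and the argument will show the result does not depend on $a$. By \cref{lemma:size_of_random_variable}, $\gamma[V] = a$. Applying \cref{lemma:norm_of_vector} then gives $\gamma[\| V \|_2] = a + \nicefrac{1}{2}$.

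Second, since $V$ has IID zero-mean Gaussian entries and $X = \Theta(1)$ entrywise, \cref{lemma:random_vec_dot_product_constant_vec} gives $V^\top X = \Theta(n^{a + \nicefrac{1}{2}})$, so $\gamma[V^\top X] = a + \nicefrac{1}{2}$. Subtracting, the exponent $a$ cancels and $\gamma[U^\top X] = 0$, which is the claim $U^\top X = \Theta(1)$.

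A more direct route to the same conclusion is to note that $U$ is independent of scale, so one may rescale $V$ so that $\sigma_v = n^{-\nicefrac{1}{2}}$. In that case $\| V \|_2 \rightarrow 1$ by the law of large numbers, and $V^\top X \sim \mathcal{N}(0, \|X\|_2^2/n)$ with $\|X\|_2^2/n = \Theta(1)$.

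The main obstacle is the division step. It combines two quantities that are only controlled "for typical values": the numerator is a Gaussian with variance $\Theta(n^{2a+1})$ and could be near zero for atypical draws. I would handle this exactly as in the proof of \cref{lemma:random_vec_dot_product_constant_vec}, by reading $\Theta$ as a statement about typical magnitude, i.e.\ second-moment size. For the denominator, $\| V \|_2^2 / (n\sigma_v^2) \rightarrow 1$ almost surely by the law of large numbers, so $\| V \|_2$ concentrates sharply at $\sigma_v\sqrt{n}$. This concentration lets the division be carried out coordinate-free, with no loss in the typical-size sense.
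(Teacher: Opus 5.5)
Your proposal is correct and follows essentially the paper's route: write $U^\top X = V^\top X / \|V\|_2$, take $\|V\|_2 = \Theta(n^{a+1/2})$ from \cref{lemma:norm_of_vector}, show the numerator has the same order, and let the exponent $a$ cancel. The one difference is in your favor: the paper gets the numerator by replacing $\sum_i v_i x_i$ with $\bar{v}\sum_i x_i$, which is an identity only when $X$ is a constant vector, whereas you cite \cref{lemma:random_vec_dot_product_constant_vec} directly, and your rescaling-plus-law-of-large-numbers remark (giving $U^\top X \approx \mathcal{N}(0, \|X\|_2^2/n)$ with $\|X\|_2^2/n = \Theta(1)$) justifies the division step that the paper takes for granted.
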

\begin{proof}
Intuitively, $U^\top X$ is a standard projection operation, where $X$ is projected onto the one-dimensional subspace defined by $U$.
Since the projection operation is equivalent to selecting a single entry of $X$ (either coordinate-aligned or not), the projection value is the typical size of a single $X$ entry.
Given that $x_i = \Theta(1)$, we naturally obtain $U^\top X = \Theta(1)$.

We will now provide a formal derivation as follows.

We first have:
\begin{equation}
U^\top X = \sum_{i=1}^n u_i x_i = \frac{1}{\| V \|_2} \sum_{i=1}^n v_i x_i = \frac{1}{\| V \|_2} \sum_{i=1}^n \bar{v} x_i,
\end{equation}
where the sample mean $\bar{v} \sim \mathcal{N}(0, \nicefrac{\sigma_v^2}{n})$ according to the CLT.
Let $\sigma_v = \Theta(n^\alpha)$.
Based on \cref{lemma:size_of_random_variable}, we have $v = \Theta(n^\alpha)$ and $\bar{v} = \Theta(n^{\alpha - \nicefrac{1}{2}})$.
Based on \cref{lemma:norm_of_vector}, we have $\| V \|_2 = \Theta(n^{\alpha + \nicefrac{1}{2}})$ and thus $\nicefrac{1}{\| V \|_2} = \Theta(n^{-\alpha-\nicefrac{1}{2}})$.
Taking these results into the equation above, we obtain:
\begin{equation}
U^\top X = \Theta(n^{(-\alpha - \nicefrac{1}{2}) + (\alpha - \nicefrac{1}{2})}) \cdot \sum_{i=1}^n x_i = \Theta(n^{-1}) \cdot n \cdot \Theta(1) = \Theta(1).
\end{equation}
\end{proof}

\begin{lemma}\label{lemma:diff_in_unit_vectors}
Let $V, W \in \R^n (V, W \neq \vzero)$, then $\gammaop{\nicefrac{V}{\| V \|_2} - \nicefrac{W}{\| W \|_2}} = - \nicefrac{1}{2}$ or $- \infty$.
\end{lemma}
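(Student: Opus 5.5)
We set $U \coloneqq V/\|V\|_2$ and $W' \coloneqq W/\|W\|_2$, and split into two cases according to whether the difference $D \coloneqq U - W'$ vanishes as $n \rightarrow \infty$. By \cref{lemma:unit_vector}, $\gamma[U] = \gamma[W'] = -\nicefrac{1}{2}$.

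\emph{Degenerate case.} If $U = W'$, i.e. $V$ and $W$ are positively collinear, then $D = \vzero$. The zero rule of the $\gamma$-operator gives $\gamma[D] = \gamma[\vzero] = -\infty$, which is the second alternative in the statement.

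\emph{Generic case.} If $U \neq W'$ as $n \rightarrow \infty$, the addition rule applies to $U + (-W')$. It gives $\gamma[D] = \max(\gamma[U], \gamma[-W']) = \max(-\nicefrac{1}{2}, -\nicefrac{1}{2}) = -\nicefrac{1}{2}$, using $\gamma[-W'] = \gamma[-1] + \gamma[W'] = -\nicefrac{1}{2}$ by the product rule.

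To make this less purely formal, I would also check the answer directly via \cref{lemma:norm_of_vector}. Since $\|U\|_2 = \|W'\|_2 = 1$, we have $\|D\|_2^2 = 2 - 2U^\top W' \in [0,4]$, so $\|D\|_2 = O(1)$. Whenever $U^\top W'$ stays bounded away from $1$, $\|D\|_2 = \Theta(1)$. Then \cref{lemma:norm_of_vector} gives $\gamma[D] = \gamma[\|D\|_2] - \nicefrac{1}{2} = -\nicefrac{1}{2}$, in agreement with the generic case.

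\emph{Main obstacle.} The difficulty is the intermediate regime, where $U^\top W' \rightarrow 1$ at some polynomial rate in $n$. There $\|D\|_2$ could decay like $n^{-c}$, giving an exponent strictly between $-\infty$ and $-\nicefrac{1}{2}$. The addition rule as stated in the paper only distinguishes exact cancellation from non-cancellation, so the lemma should be read under that convention: either the two unit vectors coincide asymptotically or they differ by a $\Theta(1)$-norm vector. I would state this assumption explicitly, namely that the angle between $V$ and $W$ is either $0$ or bounded away from $0$. The $\Theta(1)$-norm case then reduces to \cref{lemma:norm_of_vector} as above.
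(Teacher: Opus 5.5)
Your proof is correct and matches the paper's argument: it also splits on whether $V/\|V\|_2 = W/\|W\|_2$, uses the zero rule to get $-\infty$ in the degenerate case, and otherwise applies the addition rule with \cref{lemma:unit_vector} to get $\max(-\nicefrac{1}{2}, -\nicefrac{1}{2}) = -\nicefrac{1}{2}$. Your extra check via \cref{lemma:norm_of_vector}, and your observation that an angle shrinking polynomially in $n$ would give an exponent strictly between $-\infty$ and $-\nicefrac{1}{2}$, go beyond the paper; the paper tacitly relies on the same exact-cancellation-only reading of the addition rule, so making that assumption explicit sharpens the statement rather than departing from it.
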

\begin{proof}
Based on \cref{lemma:unit_vector}, we have $\gammaop{\nicefrac{V}{\| V \|_2}} = \gammaop{\nicefrac{W}{\| W \|_2}} = -\nicefrac{1}{2}$.
Assume $\nicefrac{V}{\| V \|_2} - \nicefrac{W}{\| W \|_2} \neq \vzero$, then we have:
\begin{equation}
\gammaop{\mathrm{LHS}} = \max\left( \gammaop{\frac{V}{\| V \|_2}}, \gammaop{\frac{W}{\| W \|_2}} \right) = -\frac{1}{2}.
\end{equation}
This indicates that a non-zero update to a unit vector is of order $\Theta(n^{-\nicefrac{1}{2}})$.
However, if $\mathrm{LHS} = \vzero$, which means $V$ and $W$ are the same direction and differ only in scale, we directly obtain $\gamma[\mathrm{LHS}] = \gamma[0] = -\infty$.

In summary:
\begin{equation}
\gammaop{\frac{V}{\| V \|_2} - \frac{W}{\| W \|_2}} = 
\begin{cases}
- \frac{1}{2}, & \frac{V}{\| V \|_2} \neq \frac{W}{\| W \|_2}; \\
- \infty, & \frac{V}{\| V \|_2} = \frac{W}{\| W \|_2}.
\end{cases}
\end{equation}
\end{proof}

\subsection{Analysis of SVs with Clamping Intervention} \label{subapp:clamp_sv_analysis}
In the main body, we primarily discuss SVs with addition intervention (AddInv).
\textit{Clamping intervention (ClampInv)} has also been used for steering by prior work as an alternative to AddInv~\citep{templeton2024scaling,wu2025axbench}.
In this subsection, we show that ClampInv has scaling rules for learning rates and initialization strategies similar to those of AddInv in the infinite-width limit.

\mypar{Notations.}
ClampInv sets the value along the SV direction to a constant:
$\Phi^{\text{Clamp}}(\rvh;\alpha, \rvv)=\rvh + \alpha \rvv - \rvu \rvu^\top \rvh$, where $\rvu \coloneqq \rvv/\| \rvv \|_2$ is the normalized direction.

\mypar{SV features.}
SV features of ClampInv is: $\rvz^{\text{Clamp}} = \Phi^{\text{Clamp}}(\rvh) - \rvh = \alpha \rvv - \rvu \rvu^\top \rvh$.

\mypar{Stability.}
According to \cref{def:stability}, we have the following for ClampInv:
\begin{equation}
0 = \gammaop{\rvz_t^{\text{Clamp}}} = \gammaop{\alpha_t \rvv_t - \rvu_t \rvu_t^\top \rvh}.
\end{equation}

\mypar{Efficiency.}
The update to ClampInv features is given by:
\begin{equation}\label{eq:clamp_feature_update}
\Delta \rvz_t^{\text{Clamp}}
  = \underbrace{(\Delta \alpha_{t}) \rvv_{t-1}}_{\delta_t^1}
    + \underbrace{\alpha_{t-1} (\Delta \rvv_t)}_{\delta_t^2}
    + \underbrace{(\Delta \alpha_t)(\Delta \rvv_t)}_{\delta_t^3}
    - \underbrace{ (\Delta \rvu_t)(\Delta \rvu_t)^\top \rvh}_{\delta_t^4}.
\end{equation}
According to \cref{def:efficiency}, we require $\delta_t^i = \Theta(1), i=1,2,3,4$ for ClampInv.

\mypar{Requirements of both stability and efficiency for ClampInv.}
We show the condition for ClampInv to achieve stability and efficiency as follows:
\begin{equation}
\begin{cases}
\gammaop{\delta_t^1} = \gammaop{-\eta_{\alpha} g^\alpha_{t-1} \rvv_{t-1}} = 0, \\
\gammaop{\delta_t^2} = \gammaop{-\eta_\rvv g^\rvv_{t-1} \alpha_{t-1}}= 0, \\
\gammaop{\delta_t^3} = \gammaop{(\Delta \alpha_t) (\Delta \rvv_t)} = 0, \\
\gammaop{\delta_t^4} = \gammaop{(\Delta \rvu_t) (\Delta \rvu_t)^\top \rvh} = 0, \\
\gammaop{\rvz_t^{\text{Clamp}}} = \gammaop{\alpha_t \rvv_t - \rvu_t \rvu_t^\top \rvh} = 0.
\end{cases}
\end{equation}
According to \cref{lemma:random_unit_vec_dot_product_constant_vec} and $\gamma[\rvu]=-\nicefrac{1}{2}$ (\cref{lemma:unit_vector}), we have $\gamma[\rvu_t \rvu_t^\top \rvh]=-\nicefrac{1}{2}$ when $t=0$ (since we initialize $\rvv_0$ with Kaiming initialization);
according to \cref{lemma:random_vec_nonzero_mean_dot_product_constant_vec}, we have $\gamma[\rvu_t \rvu_t^\top \rvh] = 0$ for $t>1$.
Using \cref{lemma:diff_in_unit_vectors}, we have $\gamma[\Delta \rvu_t] = -\nicefrac{1}{2}$ as long as $\Delta \rvu_t \neq \vzero$.

We now focus on $\Delta \rvu_t$.
We have: $\Delta \rvu_t = \rvu_t - \rvu_{t-1} = \rvv_t / \| \rvv_t \|_2 - \rvv_{t-1} / \| \rvv_{t-1} \|_2$, where $\rvv_t = \rvv_{t-1} - \eta_\rvv g_{t-1}^\rvv$.
$\Delta \rvu_t = \vzero$ can be met only if: (1) $g_{t-1}^\rvv$ is parallel to $\rvv_{t-1}$, (2) $g_{t-1}^\rvv = 0$ or (3) $\eta_\rvv = 0$.
(1) is rarely met in practice in early stages of training;
(2) is almost never satisfied since SVs are almost never sufficiently expressive for perfect convergence;
(3) is never true since learning rates are always positive.
Therefore, according to \cref{lemma:random_vec_nonzero_mean_dot_product_constant_vec}, we have: $\gamma[(\Delta \rvu_t)^\top \rvh] = \gamma[\Delta \rvu_t] + \gamma[\rvh] + 1 = \gamma[\Delta \rvu_t] + 1$.

Taking these intermediate results into the system of equations above, we have the following for $t>1$:
\begin{equation}\label{eq:clamp_stability_efficiency_simplified}
\begin{cases}
\gamma[\eta_\alpha] + \max(\gamma[\rvv_0], \gamma[\eta_\rvv]) = 0, \\
\gamma[\eta_\rvv] + \max(\gamma[\alpha_0], \gamma[\eta_\alpha]) = 0, \\
\gamma[\eta_\rvv] + \gamma[\eta_\alpha] = 0, \\
2 \gamma[\Delta \rvu_t] + 1 = 0, \\
\max(\gamma[\alpha_t \rvv_t], 0) = 0.
\end{cases}
\end{equation}
The solution is the same as \cref{eq:add_sv_stability_efficiency_solution}:
\begin{equation}\label{eq:clamp_sv_stability_efficiency_solution}
\begin{cases}
\gamma[\eta_\rvv] + \gamma[\eta_\alpha] = 0, \\
\gamma[\rvv_0] \leq \gamma[\eta_\rvv], \gamma[\alpha_0] \leq \gamma[\eta_\alpha].
\end{cases}
\end{equation}

\mypar{Comparing AddInv and ClampInv.}
Although AddInv and ClampInv have the same condition for stability and efficiency of training, they are different in terms of stability under suboptimal hyperparameters.
From \cref{eq:clamp_stability_efficiency_simplified}, ClampInv ensures stability when $t>1$ due to the projection term $- \rvu \rvu^\top \rvh$.
In contrast, AddInv has no lower bound for SV feature.

However, we find that this stability guarantee has limited practical benefits.
Our empirical results of \cref{subapp:verification_additional_results} indicate no essential difference between AddInv and ClampInv.
Therefore, based on our current evidence, \textbf{we recommend giving priority to AddInv for its simplicity and lower computational cost}.

%%%%%%%%%%%%%%%%%%%%%%%%%%%%%%%%%%%%%%%%%%%%%%%%%%%%%%%%%%%%%%%%%%%%%%%%%%%%%%%
%%%%%%%%%%%%%%%%%%%%%%%%%%%%%%%%%%%%%%%%%%%%%%%%%%%%%%%%%%%%%%%%%%%%%%%%%%%%%%%
\section{Representation Steering Methods} \label{app:steering_methods_formulation}
In this section, we provide a detailed description of the representation steering methods involved in this paper.
We will first describe the common essential components of trainable representation steering methods;
then we will introduce the formulations of optimization-free and trained steering methods involved in this paper.

\subsection{Basic Components of Representation Steering} \label{subapp:basic-components-of-representation-steering}

In general, a trainable representation steering method has the following design considerations:
(1) intervention functional form;
(2) training objective;
(3) intervention location;
(4) choice of hyperparameters (e.g., steering factor).
In this paper, we use theoretical tools to analyze the effect of
(1) functional form
and (4) hyperparameters
on SV training, while investigating the effect of
(2) training objective
and (3) intervention location
with a purely empirical approach.

SV intervention functional form and intervention location are already discussed in \cref{sec:notation_and_setup} and \cref{sec:prompt_only_sv}.
We will now introduce the various SV training objectives and other representation steering methods with more complex functional forms.

% \begin{table}
% \caption{Summary of representation steering methods involved in this paper.}
% \centering
% % \setlength{\tabcolsep}{3pt}
% \begin{tabular}{lcccc}
% \toprule
% Method & Parameters & Intervention protocol & Training objective & Steering factor \\
% \midrule
% Lang. & $(\alpha,)v$ & Addition & Lang. & $\alpha$ \\
% DiReFT & $v, w$ \\
% LoReFT & \\
% \bottomrule
% \end{tabular}
% \end{table}

\subsection{Clarification of Terminology} \label{subapp:clarification-of-terminology}
To avoid confusion, we would like clarify several terms regarding SVs.
In this paper, we call the representation steering \textbf{method} or \textbf{intervention} ($\Phi$) that is parameterized by a one-dimensional vector the \textit{steering vector (SV)}.
We call the \textbf{vector} parameter ($\rvv$) the \textit{steering direction}, to ensure consistency with the name of the scaling coefficient ($\alpha$): the \textit{steering factor}.

\subsection{Prompt Steering} \label{subapp:prompt-steering}
According to \AXBENCH{} implementation\footnote{\url{https://github.com/stanfordnlp/axbench/blob/c01a22adcc6a2e1d3ec663c7577afac85ae03771/axbench/utils/dataset.py\#L697}} and \citet{wu2025steering}, steering prompts (i.e. prompts that request concepts to be incorporated in responses) are prepended to original instructions.
For example, if an steering prompt is $\rvx_s=$``When responding to questions, please include references to programming constructs and data structures, even if they don't directly relate to the question.'' while the original instruction is $\rvx=$``How can I make a cake?'', then the actual prompt is $\rvx_s + \rvx$ where $+$ is sequence concatenation operation.

In this paper, we follow the advice of \citet{wu2025steering} and always use prompting as a baseline for its flexibility and simplicity.

\subsection{Steering Vectors} \label{subapp:steering-vectors}
\mypar{Difference-in-means (DiffMean; \citet{marks2023geometry}).}
DiffMean is an optimization-free SV.
The same method is also termed \textit{activation addition (ActAdd)} or \textit{contrastive activation addition (CAA)}~\citep{rimsky2024steering}.
DiffMean computes the difference between the mean of representations of two classes of inputs, therefore it uses contrastive examples:
a set of negative examples $\mathcal{D}^{-} = \{ (\rvx_j^-, \rvy_j^-) \}_{j=1}^{M}$
and a set of positive examples $\mathcal{D}^{c} = \{ (\rvx_i, \rvy_i^c) \}_{i=1}^{N}$,
where $(\rvx_j^-, \rvy_j^-)$ does not incorporate concept $c$ while $(\rvx_i, \rvy_i^c)$ does.
Following \citet{wu2025axbench}, DiffMean is formally defined as follows:
\begin{equation}
\label{eq:diff_in_means}
\rvv_{\text{DiffMean}} = \E_{(\rvx, \rvy^c) \in \mathcal{D}^{c}} \left[ \rvh(\rvx + \rvy^c) \right] - \E_{(\rvx^-, \rvy^-) \in \mathcal{D}^{-}} \left[ \rvh(\rvx + \rvy) \right],
\end{equation}
where $\rvx +\rvy$ denotes sequence concatenation and $\rvh(\rvx)$ is the representation value given input $\rvx$.

At inference time, DiffMean uses addition intervention (AddInv), with unit steering factors~\citep{arditi2024refusal} or factors chosen via factor selection~\citep{turner2023steering,rimsky2024steering,wu2025axbench,chen2025seal}.
The exact factor selection procedures differ among previous works, but their general logic is similar to \cref{alg:sv_factor_tuning} and differences lie primarily in scoring metrics.

\mypar{Language modeling (Lang.; \citet{subramani2022extracting,wu2025improved}).}
The Lang. objective is agnostic of intervention functional form and maximizes the log-likelihood of the steered response $\rvy^c$ conditioned by input prompt $\rvx$:
\begin{equation}
\argmin_{\Phi} -\log p_\Phi (\rvy^c \vert \rvx; \rvh \leftarrow \Phi(\rvh))
  = - \sum_{i=1}^{\vert \rvy^c \vert} \log p_\Phi (\rvy_i^c \vert \rvy_{<i}^c, \rvx; \rvh \leftarrow \Phi).
\end{equation}
Therefore Lang. only requires training data $\mathcal{D}^c = \{ (\rvx_i, \rvy_i^c) \}_{i=1}^{N}$ and does not need contrastive pairs.

\mypar{Reference-free preference steering (RePS; \citet{wu2025improved}).}
RePS is a bi-directional SV that optimizes for both concept-based steering and concept suppression.
Its design considerations include both training objective and intervention functional form.
It is based on \textit{simple preference optimization (SimPO)}~\citep{meng2024simpo} and frames the bi-directional steering task as a bi-directional preference optimization task, where the direction of preference is controlled by the steering factor.
Let $\rvy^c$ be a steered response to prompt $\rvx$ and $\rvy$ be a concept-neutral response.
SimPO and RePS objectives both require contrastive training data $\mathcal{D}^{c+} = \{ (\rvx_i, \rvy_i, \rvy_i^c) \}_{i=1}^{N}$, which is more demanding than Lang. training data.

The first component of RePS objective optimizes for concept-based steering using addition intervention, such that the steered response is preferred over the neutral response ($\rvy^c \succ \rvy$):
\begin{equation}\label{eq:reps_positive_objective}
\Delta_{\Phi}^+ = \frac{\beta^+}{\vert \rvy^c \vert} \log p_{\Phi} \left(
  \rvy^c \vert \rvx; \rvh \leftarrow \Phi^{\text{Add}}(\rvh; \alpha, \rvv)
\right) -
\frac{1}{\vert \rvy \vert} \log p_{\Phi} \left(
  \rvy \vert \rvx; \rvh \leftarrow \Phi^{\text{Add}}(\rvh; \alpha, \rvv)
\right),
\end{equation}
where $\beta^+ = \max (\log(p(\rvy \vert \rvx)) - \log(p(\rvy^c \vert \rvx)), 1)$.
The second component of RePS objective models concept suppression, where the concept-neutral response is preferred over the steered response ($\rvy \succ \rvy^c$):
\begin{equation}
\Delta_{\Phi}^- = \frac{\beta^-}{\vert \rvy \vert} \log p_{\Phi} \left(
  \rvy \vert \rvx; \rvh \leftarrow \Phi_{\text{Null}}(\rvh; \rvv)
\right) -
\frac{1}{\vert \rvy^c \vert} \log p_{\Phi} \left(
  \rvy^c \vert \rvx; \rvh \leftarrow \Phi_{\text{Null}}(\rvh; \rvv)
\right),
\end{equation}
where $\beta^- = \max (\log(p(\rvy^c \vert \rvx)) - \log(p(\rvy \vert \rvx)), 1)$.
$\Phi_{\text{Null}}(\cdot)$ intervention ablates the steering vector via orthogonalization, i.e. clamping values along the $\rvv$ direction to zero:
\begin{equation}
\Phi_{\text{Null}}(\rvh; \rvv) = \rvh -
  \operatorname{ReLU}\left(
    \rvh^\top \rvu
  \right) \rvu,
\end{equation}
where $\rvu \coloneqq \rvv / \| \rvv \|_2$ is the normalized vector.
The final objective is:
\begin{equation}
\argmin_{\Phi} -\left[ \log \sigma \left(\Delta_{\Phi}^+\right) + \log \sigma \left(\Delta_{\Phi}^-\right) \right],
\end{equation}
where $\sigma(\cdot)$ is sigmoid function.
During inference, RePS uses addition intervention (AddInv) for both concept-based steering (with $\alpha > 0$) and concept suppression (with $\alpha > 0$), where factors are selected according to \cref{alg:sv_factor_tuning}.

% \textit{Caveat.}
\begin{remark}
In this paper, when we say the SimPO objective is used for FSSV/\ourshort{} with joint training, we only use the positive steering objective.
This is because we focus solely on uni-directional concept-based steering, not concept suppression.
That being said, it is possible to extend our joint training scheme of \cref{alg:sv_training_ours} to the bi-directional steering scenario, which requires training separate steering factors: one for concept-based steering and another for concept suppression.
\end{remark}

\subsection{Representation Fine-Tuning} \label{subapp:reft_formulation}
Representation fine-tuning (ReFT) methods use low-rank projection interventions.
ReFT methods use $(2rn + 1)$ parameters where $1 \leq r \ll n$ is low-rank dimension, which is at least twice that of SVs ($n+1$).
Prior work has shown that ReFT is often sufficiently effective with $r \geq 4$~\citep{wu2024reft}.
Therefore rank-4 LoReFT is a competitive baseline on \AXBENCH{}~\citep{wu2025axbench}.

We introduce two primary variants of ReFT as follows.

\mypar{Low-rank ReFT (LoReFT; \citet{wu2024reft}).}
\begin{equation}
\Phi(\rvh) = \rvh + \rvu (\rvw^\top \rvh + \rvb - \rvu^\top \rvh),
\end{equation}
where $\rvw, \rvu \in \R^{n \times r}$, $\rvb \in \R^r$ are parameters, $r \ll n$ is low-rank dimension and $\rvu$ has orthonormal columns.

\mypar{Direct ReFT (DiReFT; \citet{wu2024reft}).}
The main differences between DiReFT and LoReFT are that DiReFT does not impose constraints on write-out matrix and does not cancel out $\rvu^\top \rvh$.
\begin{equation}
\Phi(\rvh) = \rvh + \rvv (\rvw^\top \rvh + \rvb),
\end{equation}
where $\rvw, \rvv \in \R^{n \times r}$, $\rvb \in \R^r$ are parameters and $r \ll n$ is low-rank dimension.

\subsection{Discussions on \ourshort{} versus Other Representation Steering Techniques}
\label{subapp:discussions-on-versus-other-representation-steering-techniques}

\mypar{\ourshort{} vs. dynamic SVs.}
Recent work has proposed enhancing strategies based on the traditional FSSV paradigm.
For instance, in order to maximally preserve general model utility, \citet{ferrando2025dynamically} propose to modulate traditional SVs by dynamically adjust steering factors for each token.
However, the computational overhead of the modulation operation for each token (essentially a dot product followed by a comparison) requires multiple dot products and a single dot product takes $2n$ FLOPS, while the computational cost of an addition intervention is already $n$ at minimum.

In contrast, \ourshort{} only intervenes on a constant number of prompt tokens at prefill stage.
This makes it the most efficient steering method in terms of both computational cost.

\mypar{\ourshort{} vs. KV cache steering.}
Recently, \citet{belitsky2025kv} introduce \textit{KV cache steering} as an alternative to SVs and focus on controlling the chain-of-thought reasoning process of small models.
KV cache steering is able to achieve effective model control on reasoning tasks by manipulating the prompt KV cache at all layers, which is similar to \ourshort{}.
Additionally, it is robust to a wider range of steering factors than SVs.

However, there are several essential theoretical distinctions between \ourshort{} and KV cache steering:
\begin{itemize}[topsep=0pt, itemsep=0pt]
  \item \textbf{Data requirement}: KV cache steering requires \textit{contrastive prompts} to curate \textit{KV steering vectors}, in a similar spirit to DiffMean;
  however, \ourshort{} does not always require contrastive examples and is able to achieve good effectiveness with only positive examples using the Lang. objective. This gives \ourshort{} more flexibility over KV cache steering.
  \item \textbf{Source of SV}: KV cache steering is optimization-free while \ourshort{} is fine-tuned, which makes \ourshort{} more dedicated for the concept of interest;
  \item \textbf{Mechanism}: KV cache steering only controls model behaviors through KV cache;
in addition to implicitly manipulating KV cache, \ourshort{} is able to affect the generation of the first token through the residual stream if the last prompt token is intervened on.
\end{itemize}

In addition to the reasons above, we do not include KV cache steering as a main baseline for three reasons.
First, we primarily discuss representation steering methods that manipulate residual stream representations.
Second, \citet{belitsky2025kv} report \textit{mixed results} of KV cache steering, where it sometimes underperforms DiffMean; whereas we can notice large performance gaps between DiffMean and FSSV on \AXBENCH{}.
% However, future work could explore directly training KV cache steering vectors for improved performance.

Third, we conducted experiments using the official implementation of KV cache steering\footnote{\url{https://github.com/MaxBelitsky/cache-steering}} on Qwen2.5-32B using \CONCEPTTEN{} subset.
We use steering prompts as positive prompts and concept-neutral instructions as negative prompts.
This choice is justified by the \AXBENCH{} experiment (\cref{subsec:axbench_evaluation}) where steering prompts are highly effective.
We use a key steering strength ($c^k$) of 0.0 and value strength of 6 since this setting is used by many models they tested.
We assume that this choice does not severely affect steering performance since \citet{belitsky2025kv} highlight that KV cache steering is robust to choices of strengths.
The resulting overall score is 0.180, whereas the overall score of FSSV with Lang. objective is 0.919.
This suggests that KV cache steering might not be a competitive baseline for concept-based steering.

\mypar{\ourshort{} vs. ReFT.}
According to \cref{tab:axbench_results}, rank-4 LoReFT underperforms \ourshort{} on \AXBENCH{}.
This seems counterintuitive since LoReFT should be far more expressive than \ourshort{} in terms of the scope of possible function mappings.
We hypothesize that LoReFT has \textit{more than sufficient} capacity for concept-based steering whereas this task is of lower rank.
Empirically, we find that rank-1 ReFT could already overfit on the training set of 72 examples on Gemma2-2B, such that the intervened model only steers successfully on training instructions but \textit{not} on unseen instructions.
Therefore, rank-4 ReFT might learn redundant, spurious patterns when the steering task is only rank-1.

\begin{table}
\caption{Comparison of inference-time computational overhead and parameter count between representation steering methods.}
\centering
\footnotesize
\setlength{\tabcolsep}{6pt}
\begin{tabular}{lccc}
\toprule
Method & $\Phi(\rvh)$ definition & Computational overhead & \# of parameters \\
\midrule
AddInv & $\rvh + \alpha \rvv$ & $2n$ (minimum $n$) & $n$ \\
\rowcolor{\tablebgcolor}
~~+ Joint training & -- & -- & $n+1$ \\
ClampInv & $\rvh + \alpha \rvv - \rvu \rvu^\top \rvh$ & $9n-1$ (minimum $4n$) & $n$ \\
\rowcolor{\tablebgcolor}
~~+ Joint training & -- & -- & $n+1$ \\
\midrule
DiReFT (rank-1) & $\rvh + \rvv (\rvw^\top \rvh + b)$ & $4n$ & $2n+1$ \\
LoReFT (rank-1) & $\rvh + \rvu (\rvw^\top \rvh - \rvu^\top \rvh + b)$ & $6n-1$ (minimum $4n$) & $2n+1$ \\
\bottomrule
\end{tabular}
\end{table}

%%%%%%%%%%%%%%%%%%%%%%%%%%%%%%%%%%%%%%%%%%%%%%%%%%%%%%%%%%%%%%%%%%%%%%%%%%%%%%%
%%%%%%%%%%%%%%%%%%%%%%%%%%%%%%%%%%%%%%%%%%%%%%%%%%%%%%%%%%%%%%%%%%%%%%%%%%%%%%%
\section{Analysis of Computational Overhead} \label{app:comp_cost_analysis}
Since we highlight the tradeoff between computational cost and steering effectiveness in this paper, in this section we study the computational overhead of SVs from both theoretical and empirical perspectives.
This section is built upon the discussions of \citet{wu2025steering}, who advises against claiming ``FSSV is computationally efficient'' without additional information regarding the context size.

\subsection{Inference-Time Overhead of SVs} \label{subapp:theoretical-computational-overhead-of-svs}
Here we provide an analysis of the inference-time computational overhead for SVs.
For each method, we will split the computational cost into prefill/decode stages, respectively.

\mypar{Notations.}
We first introduce the key symbols used in this section:
\begin{itemize}[topsep=0pt, itemsep=0pt]
  \item $n$: Model width.
  \item $L$: Number of layers.
  \item $H$: Number of attention heads.
  \item $P$: Length of original prompt.
  \item $S$: Length of steering prompt.
  \item $T$: Current total context length (past tokens).
  \item $I$: Number of prompt-only interventions ($I < P$).
\end{itemize}
We assume that KV cache is used, and that 1 addition/multiplication operation takes 1 FLOPs.

\mypar{Un-intervened inference when KV cache is used.}
Our analysis is based on \citet{mu2023learning}, and focus on the computation of MLP/attention modules.
In general, the cost is split between non-KV operations (linear projections, MLP) and KV operations (self-attention).

KV FLOPs (per token, for context T):
\begin{itemize}[itemsep=0pt,topsep=0pt]
\item Key/query logits: $2nT$.
\item Softmax: $3HT$.
\item Softmax $\times$ query reductions: $2nT$.
\end{itemize}
Thus KV FLOPS is $4nT + 3HT$.

Non-KV FLOPs (per token):
\begin{itemize}[itemsep=0pt,topsep=0pt]
\item MLP: $2*2*n*(4n)=16 n^2$.
\item Key/query/value projections: $6n^2$.
\item Linear projection after attention: $2n^2$.
\end{itemize}
Thus non-KV FLOPs is $24 n^2$.

At prefill stage, all prompt tokens ($P$) are processed at once;
attention is quadratic but non-KV operations are linear:
\begin{equation}
\mathrm{Cost}_{\text{prefill}} \approx L (24n^2 P + 2n P^2),
\end{equation}
where $2n P^2$ approximates the sum of attention costs $\sum_{i=1}^{P} 4nt$.

At decode stage, a single token is generated while attending to $T$ past tokens:
\begin{equation}
\mathrm{Cost}_{\text{decode}} = L (24 n^2 + 4n T + 3 H T).
\end{equation}

\mypar{Prompt steering.}
Prompt steering prepends $S$ tokens to the instruction. This increases the sequence length for both prefill and decode stages.

At prefill phase, the model processes $S$ extra tokens:
\begin{equation}
\Delta \mathrm{Cost}_{\text{prefill}}^{\text{prompt}} \approx S L (24n^2 + 2n (2P+S)),
\end{equation}

At decode stage, the context length $T$ is increased by $S$. The overhead is the cost of attending to these extra tokens in the KV cache at each decoding step:
\begin{equation}
\Delta \mathrm{Cost}_{\text{decode}}^{\text{prompt}} = L (4n S + 3 H S).
\end{equation}

\mypar{SVs with addition interventions (AddInv).}
Addition intervention for a single token:
\begin{itemize}[itemsep=0pt,topsep=0pt]
\item Scaling SV with coefficient: $n$ FLOPs.
\item Adding scaled vector to representation: $n$ FLOPs.
\end{itemize}

Therefore the computational overhead of SVs with addition interventions is $2n$.
Supposing the scaled direction is precomputed, then the minimum overhead is $n$.

\mypar{SVs with clamping interventions (ClampInv).}
Clamping intervention for a single token:
\begin{itemize}[itemsep=0pt,topsep=0pt]
\item Vector norm: $n$ multiplications, $n-1$ addition and $1$ square root: $2n$ FLOPs.
\item Division to obtain unit vector: $n$ FLOPs.
\item Up/down projection: $2n$ multiplications and $n-1$ additions: $3n-1$ FLOPs.
\item Scaling SV with coefficient: $n$ FLOPs.
\item Final intervention: $2n$ FLOPs.
\end{itemize}

Thus the computational overhead of ClampInv is $9n-1$.
Assuming unit vector and scaled vector are precomputed, the minimum overhead is $4n$
(dot product $\rvu^\top \rvh$: $2n-1$, where $\rvu$ is precomputed;
subtraction $\alpha \| \rvv \|_2 - \rvu^\top \rvh$: 1, where $\alpha \| \rvv \|_2$ is precomputed;
scaling $\rvu$: $n$,
addition: $n$).

\mypar{Example.}
We take Llama-2-7B~\citep{touvron2023llama} as an example, where $n=4096$, $H=32$, $L=32$.

In our paper, the average length of steering prompts is $S \approx 100$ and the average length of instructions is $P \approx 20$.
We also assume the context history is around $T= 20 + 128$, which is the sum of prompt length ($P$) and response length ($R = 128$).
Therefore the overhead of prompt steering at prefill stage is:
\begin{equation}
\frac{S(24 n^2 + 4nP + 2nS)}{24 n^2 (P + S) + 2n (P + S)^2} \approx 83.57\%.
\end{equation}
However, as has been pointed out by \citet{wu2025steering}, the overhead of prompt steering is far less apparent in long-context settings.
For instance, the prefill overhead could be as low as 0.77\% if prompt length is large, e.g., $P=16K$.

The overhead of prompt steering at decode stage is:
\begin{equation}
\frac{4nS + 3HS}{24 n^2 + 4nT + 3HT} \approx 0.41\%.
\end{equation}

For SVs, we only apply interventions at a single layer, and we consider both FSSV and \ourshort{}.
For \ourshort{}, we let computational budget be $I=4$ tokens, consistent with the optimal setup we found in \cref{subsec:ablation_on_prosv_location}.

We provide an overview of computational cost/overhead in \cref{tab:summary_comp_overhead_with_example}.
Overall, FSSVs already outperform prompt steering in terms of steering efficiency, whereas \ourshort{}s take efficiency to the extreme with lower overhead at prefill stage and zero overhead at decode stage.
Although the absolute value seems small, the gap in computational cost is huge in terms of relative measurement.
When generating all $R=128$ tokens with prompt length $P=20$, \textbf{the total cost of \ourshort{} is 1/37 that of FSSV}.

\begin{table}
\caption{Summary of computational cost and overhead for prefill and decode stages, as well as estimated overhead for Llama2-7B as an example ($L=32, n=4096, H=32, P=20, S=100, I=4, R=128, T \in \{P, P+1, \dots, P+R-1\}$).}
\label{tab:summary_comp_overhead_with_example}
\centering
\footnotesize
\begin{tabular}{llcccc}
\toprule
& & \multicolumn{2}{c}{Prefill} & \multicolumn{2}{c}{Decode (Single token)} \\
\cmidrule(lr){3-4} \cmidrule(lr){5-6}
Item & Method & Formula & Example & Formula & Example \\
\midrule
Cost & Un-intervened & $L (24n^2 P + 2n P^2)$ & --- & $L (24 n^2 + 4n T + 3 H T)$ & --- \\
\midrule
\multirow{5}{*}{Overhead}
& Prompt steering & $S L (24n^2 + 2n (2P+S))$ & 83.57\% & $L (4n S + 3 H S)$ & 0.41\% \\
\cmidrule(l){2-6}
& AddInv (FSSV) & $P n$ & 3.2e-5\% & $n$ & 3.2e-5\% \\
& ClampInv (FSSV) & $4 P n$ & 1.3e-4\% & $4n$ & 1.3e-4\% \\
\cmidrule(l){2-6}
& AddInv (\ourshort{}) & $I n$ & 6.4e-6\% & 0 & 0 \\
& ClampInv (\ourshort{}) & $4 I n$ & 2.5e-5\% & 0 & 0 \\
\bottomrule
\end{tabular}
\end{table}

\subsection{Cost of SV Hyperparameter Tuning}
\label{subapp:cost-of-sv-hyperparameter-tuning}
We will show how our joint training scheme helps reduce amortized cost of SV training, supposing we use FSSVs.
We first split the hyperparameter tuning cost into two components:
\textit{offline tuning} (i.e. preliminary hyperparameter tuning before large-scale SV training)
and \textit{online tuning} (i.e. hyperparameter tuning after large-scale SV training).

Let the number of fine-tuned SVs be $S$, the inference-time search grid be $\mathcal{A}_{\text{search}}$ ($\vert \mathcal{A}_{\text{search}} \vert = 14$ according to \cref{tab:factor_sets}).

\mypar{Offline tuning.}
Suppose that batch size is restricted by hardware and not tuned.
Our joint training scheme of \cref{alg:sv_training_ours} requires tuning five hyperparameters:
training epochs,
factor learning rate,
factor initialization size,
direction initialization size
and direction learning rate.
Meanwhile, fine-tuned SVs that use factor sampling require tuning four hyperparameters:
training epochs,
factor initialization size,
factor sampling set,
and direction learning rate.
Let the cost of tuning a single hyperparameter (e.g., learning rate, initialization size) be $H$ runs (i.e. size of search grid).
Then our joint training scheme requires additional $(H^5 - H^4)$ runs for searching training hyperparameters.
We have $H \leq 4$ for \AXBENCH{} evaluation (\cref{subapp:axbench_details}), thus our joint training scheme has an extra hyperparameter tuning cost as large as 768 runs.

\mypar{Online tuning.}
At inference time, the factor sampling approach is accompanied by factor selection for each instance of fine-tuned SV, with overhead scaling linearly with the number of SVs: $S \vert \mathcal{A}_{\text{search}} \vert$.
In contrast, our joint training scheme has zero inference-time overhead.

Overall, $S = 768 / 14 \approx 55$ SVs is sufficient for our joint training scheme to reach the same level of amortized overhead as factor sampling/selection;
When $S > 55$, our joint training scheme would yield a lower amortized overhead than factor sampling/tuning.

% \subsection{Computational Overhead of SVs in the Wild} \label{app:computational-overhead-of-svs-in-the-wild}
% Beyond the theoretical analysis above, we measure the actual computational overhead of SVs at inference time .
% We focus on variants of SVs regarding intervention location, i.e. FSSV vs. \ourshort{}.
% We test with Gemma2-2B and measure FLOPs reported by the profiler utility of the PyTorch library~\citep{paszke2019pytorch}.

% \TODO{figure of FLOPs vs. context size}\todo{figure}

%%%%%%%%%%%%%%%%%%%%%%%%%%%%%%%%%%%%%%%%%%%%%%%%%%%%%%%%%%%%%%%%%%%%%%%%%%%%%%%
%%%%%%%%%%%%%%%%%%%%%%%%%%%%%%%%%%%%%%%%%%%%%%%%%%%%%%%%%%%%%%%%%%%%%%%%%%%%%%%
\section{Review of SV Training and Inference Procedures} \label{app:review-of-sv-training-and-inference-procedures}
In this section, we provide a formal review of previous methods to train SVs and steer model generations with fine-tuned SVs.

\mypar{Factor selection at inference time (\cref{alg:sv_factor_tuning}).}
At inference time, factor selection is conducted for each instance of SV on a development set of instructions based on overall steering score.
The factor selection process essentially examines intervened model responses across a predesignated set of steering factors ($\mathcal{A}_{\text{search}}$).
The selected optimal factor ($\alpha^*$) is the one that yields the highest average overall score.
This approach has been used for optimization-free SVs and fine-tuned SVs alike~\citep{turner2023steering,rimsky2024steering,chen2025seal,chen2025persona}.
Notably, under the settings we use in this paper, the neural network scaling theory does not lead to informed choices of steering factors for optimization-free SVs, since $\rvv = \Theta(1)$ is a constant and so is $\alpha$; thus factor selection is inevitable for optimization-free SVs.

\mypar{SV training with fixed steering factor.}
Early SV training techniques use a fixed steering factor during training, usually $\alpha=1$~\citep{subramani2022extracting,wu2025axbench}.
Then steering factors are selected at inference time via grid search.
Based on previous analysis on \AXBENCH{}, the resulting SVs usually have optimal inference-time steering factors of around 1.0 and underperform both prompting and fine-tuning methods~\citep{wu2025axbench}.

\mypar{SV training with factor sampling (\cref{alg:sv_training_factor_sampling}).}
\citet{wu2025improved} recently introduce a novel approach for SV training that improves upon the SV training procedure with fixed steering factors.
At training time, they employ a \textit{factor sampling} strategy, which requires preparing a set of steering factors ($\mathcal{A}_{\text{sample}}$) before training, as is shown in \cref{alg:sv_training_factor_sampling}.
The factor set is used universally across all concepts for a layer of a certain model, and the factors are curated in a manner such that the scaled vector norm $\| \alpha \rvv \|_2$ matches the typical layer norm of the intervention layer.
At each training step, a steering factor is randomly sampled from $\mathcal{A}_{\text{sample}}$, which is used to steer the target model.
In practice, this training procedure improves SV performance and decreases the variance in overall steering scores~\citep{wu2025improved}.

\mypar{Steering factor sets in practice.}
We show the actual steering factor sets for \cref{alg:sv_training_factor_sampling} and \cref{alg:sv_factor_tuning} in \cref{tab:factor_sets}, which are copied from \citet{wu2025improved} for the convenience of readers.

\begin{algorithm}
\caption{SV training procedure with factor sampling~\citep{wu2025improved}.}
\label{alg:sv_training_factor_sampling}
\begin{algorithmic}
\INPUT Training set $\mathcal{D}^c$, factor set $\mathcal{A}_{\text{sample}}=\{ \alpha_i \}$, learning rate $\eta$, training steps $T$, loss function $\ell(\cdot)$
\OUTPUT Steering direction $\rvv_{T}$
\STATE $\rvv_0 \sim \mathcal{N}(\bm{0}, n^{-1}\rmI_n)$ \myalgcomment{Kaiming initialization}
\STATE $t \gets 0$
\WHILE{$t < T$}
  \STATE $(\rvx,\rvy) \sim \mathcal{D}$
  \STATE $\alpha_t \sim \mathcal{A}_{\text{sample}}$ \myalgcomment{Factor sampling}
  \STATE $l_t \gets \ell \left( p_{\Phi} \left( \cdot \vert \rvx ; \rvh \leftarrow \Phi(\rvh;\alpha_t ,\rvv_t) \right) ,\rvy \right)$
  \STATE $g_t^\rvv \gets \mathrm{Adam}(\nabla_\rvv l_t)$ \myalgcomment{Adam processes gradients}
  \STATE $\rvv_{t+1} \gets \rvv_{t} - \eta g^\rvv_t$
  \STATE $t \gets t+1$
\ENDWHILE
\end{algorithmic}
\end{algorithm}

\begin{algorithm}
\caption{SV factor selection procedure at inference time~\citep{wu2025improved}.}
\label{alg:sv_factor_tuning}
\begin{algorithmic}
\INPUT Concept $c$, development set of instructions $\mathcal{D}_{\text{dev}}$, factor search grid $\mathcal{A}_{\text{search}}=\{ \alpha_i \}$, judge LLM $\mathcal{J}(\cdot)$, trained intervention $\Phi(\cdot)$
\OUTPUT Optimal steering factor $\alpha^*$
\STATE $s \gets \mathrm{Array}(\vert \mathcal{A}_{\text{search}} \vert)$ \myalgcomment{Initialize empty array of overall scores}
\FOR{$\alpha_i \in \mathcal{A}_{\text{search}}$}
  \STATE $s_i \gets 0$
  \FOR{$\rvx_j \in \mathcal{D}_{\text{dev}}$}
    \STATE $\hat{\rvy}_j^c \sim p_\Phi(\cdot \vert \rvx_j; \rvh \leftarrow \Phi)$ \myalgcomment{Sample steered response from intervened model}
    \STATE $s_i \gets s_i + \mathcal{J}(\rvx_j, \hat{\rvy}_j^c, c)$ \myalgcomment{Judge evaluates intervened model response}
  \ENDFOR
  \STATE $s_i \gets s_i / \vert \mathcal{D}_{\text{dev}}^c \vert$
\ENDFOR
\STATE $i^* \gets \argmax_i s_i$ \myalgcomment{Select optimal steering factor with the highest overall score}
\STATE $\alpha^* \gets \alpha_{i^*}$
\end{algorithmic}
\end{algorithm}

\begin{table}
\caption{Sets of steering factors for SV training and inference~\citep{wu2025improved}.}
\label{tab:factor_sets}
\centering
\footnotesize
\begin{tabular}{lc}
\toprule
Configuration & Factor set \\
\midrule
Gemma2-2B/9B training ($\mathcal{A}_{\text{sample}}$) & \{2.0, 4.0, 6.0, 8.0, 10.0, 12.0, 14.0, 16.0, 18.0, 20.0\} \\
Gemma2-2B/9B inference ($\mathcal{A}_{\text{search}}$) & \{2.0, 4.0, 6.0, 8.0, 10.0, 12.0, 14.0, 16.0, 18.0, 20.0, 25.0, 30.0, 40.0, 50.0\} \\
\bottomrule
\end{tabular}
\end{table}

% \TODO{DEPRECATED}\todo{deprecated}
% In order to remove the need for curating factor sampling set $\mathcal{A}$ and inference-time factor selection, we introduce a naive training recipe of \cref{alg:sv_training_naive} directly improves upon \cref{alg:sv_training_factor_sampling} by training the steering factor together with SV weights.
% This baseline is naive but reasonable and helps illustrate key failure modes.

% \begin{algorithm}
% \caption{Naive SV training recipe with trained factor.}
% \label{alg:sv_training_naive}
% \begin{algorithmic}
% \INPUT Training set $\mathcal{D}$, learning rate $\eta$, training steps $T$, loss function $\ell(\cdot)$
% \OUTPUT Steering factor $\alpha_{T}$, SV weights $\rvv_{T}$
% \STATE $\rvv_0 \sim \mathcal{N}(0, n^{-1}\rmI)$ \myalgcomment{Kaiming initialization}
% \STATE $\alpha_0 \sim \mathcal{N}(0, n^{-1})$ \myalgcomment{Kaiming initialization}
% \STATE $t \gets 0$
% \WHILE{$t < T$}
%   \STATE $(\rvx,\rvy) \sim \mathcal{D}$
%   \STATE $l_t \gets \ell \left( p\left( \cdot \vert \rvx ; \rvh \leftarrow \Phi(\rvh;\alpha_t ,\rvv_t) \right) ,\rvy \right)$
%   \STATE $\{ g_\rvv, g_\alpha \} \gets \mathrm{Adam}(\{ \nabla_\rvv l_t, \nabla_\alpha l_t \})$ \myalgcomment{Adam processes gradients}
%   \STATE $\rvv_{t+1} \gets \rvv_{t} - \eta g_\rvv^t$
%   \STATE $\alpha_{t+1} \gets \alpha_{t} - \eta g_\alpha^t$
%   \STATE $t \gets t+1$
% \ENDWHILE
% \end{algorithmic}
% \end{algorithm}

%%%%%%%%%%%%%%%%%%%%%%%%%%%%%%%%%%%%%%%%%%%%%%%%%%%%%%%%%%%%%%%%%%%%%%%%%%%%%%%
%%%%%%%%%%%%%%%%%%%%%%%%%%%%%%%%%%%%%%%%%%%%%%%%%%%%%%%%%%%%%%%%%%%%%%%%%%%%%%%
\section{Disclosure of Computational Resources}
\label{app:disclosure-of-computational-resources}
We use $2 \times$ Nvidia RTX A6000 (48 G) and $2 \times$ A800 (80 G) GPUs for our experiments.
We also load model weights with \verb|bfloat16| precision.
Training a single SV on Gemma2-2B with Lang. objective takes around 1.5 minutes, while training an SV on Gemma2-2B with SimPO objective can take as long as 8 minutes.

%%%%%%%%%%%%%%%%%%%%%%%%%%%%%%%%%%%%%%%%%%%%%%%%%%%%%%%%%%%%%%%%%%%%%%%%%%%%%%%
%%%%%%%%%%%%%%%%%%%%%%%%%%%%%%%%%%%%%%%%%%%%%%%%%%%%%%%%%%%%%%%%%%%%%%%%%%%%%%%
\section{Details and Additional Results for Verification Experiment} \label{app:verification_details_and_additional_results}
In this section we disclose details of our empirical verification in \cref{subsec:verification_concept10}.

\subsection{Experiment Details} \label{subapp:verification_details}

\mypar{Hyperparameters.}
We show hyperparameters in \cref{tab:verification_sweep_hyperparams}. These hyperparameters are used for SVs on all three models: Gemma2-2B, Gemma2-9B and Qwen2.5-32B.
We use seeds to control random number generators for ordering of mini-batches, initialization and inference-time decoding.
We also use a temperature of 0.01 for LLM judge to improve reproducibility of evaluation.

\begin{table}[tb]
\caption{Hyperparameters for verification experiment of \cref{subsec:verification_concept10}. These hyperparameters are used for all three tested setups: \D{G2B}{L10}, \D{G9B}{L20} and \D{Q32B}{L32}.}
\label{tab:verification_sweep_hyperparams}
\centering
\footnotesize
\begin{tabular}{lcc}
\toprule
Hyperparameter & FSSV & \ourshort{} \\
\midrule
Seed & \multicolumn{2}{c}{\{42, 43, 44\}} \\
Epochs & \multicolumn{2}{c}{6} \\
Learning rate & \multicolumn{2}{c}{0.04} \\
Batch size & \multicolumn{2}{c}{12} \\
Optimizer & \multicolumn{2}{c}{Adam} \\
Weight decay & \multicolumn{2}{c}{0.0} \\
Warmup steps & \multicolumn{2}{c}{0} \\
Temperature & \multicolumn{2}{c}{1.0} \\
Generation length & \multicolumn{2}{c}{128} \\
LLM judge temperature & \multicolumn{2}{c}{0.01} \\
\bottomrule
\end{tabular}
\end{table}

\mypar{Chat templates.}
One important difference between Qwen2.5 and Gemma2 models is that, the former supports system prompt while the latter does not.
We directly use the default system prompt for Qwen2.5-32B.

\subsection{Additional Results} \label{subapp:verification_additional_results}

\mypar{Concept scores.}
We show heatmaps of concept scores in
\cref{fig:verification_sweep_heatmap_concept_scores_kaiming}
and \cref{fig:verification_sweep_heatmap_concept_scores_larger_vec}.

\begin{figure}
\centering
\subfigure[\D{G2B}{L10}; FSSV.]{
  \includegraphics[width=0.3\linewidth]{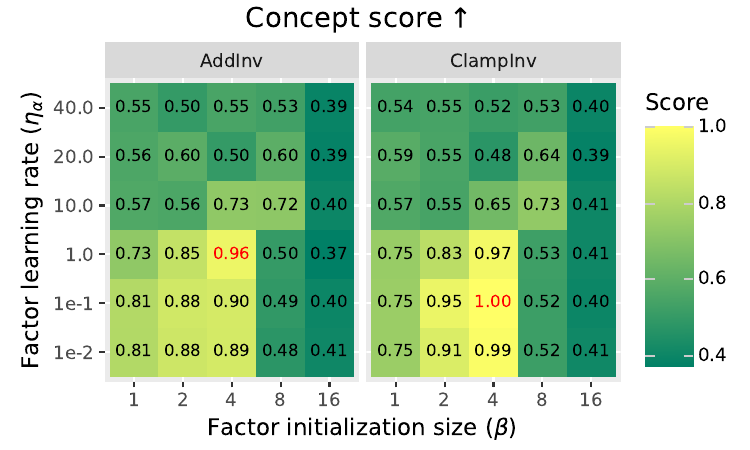}
}
\subfigure[\D{G9B}{L20}; FSSV.]{
  \includegraphics[width=0.3\linewidth]{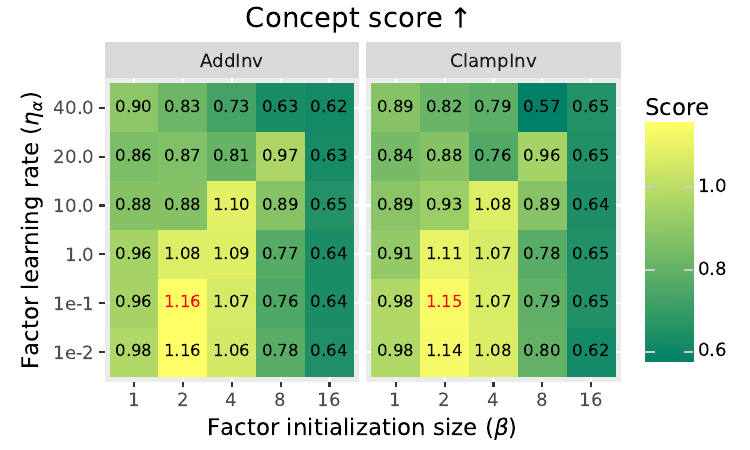}
}
\subfigure[\D{Q32B}{L32}; FSSV.]{
  \includegraphics[width=0.3\linewidth]{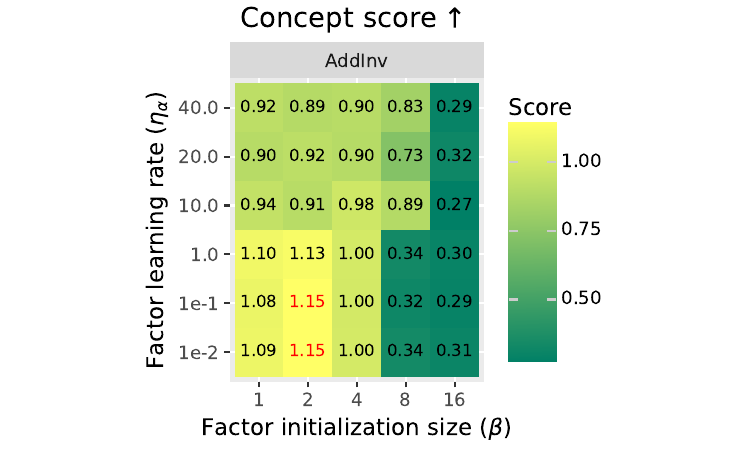}
}

\subfigure[\D{G2B}{L10}; \ourshort{} (\prefsuff{4}{4}).]{
  \includegraphics[width=0.3\linewidth]{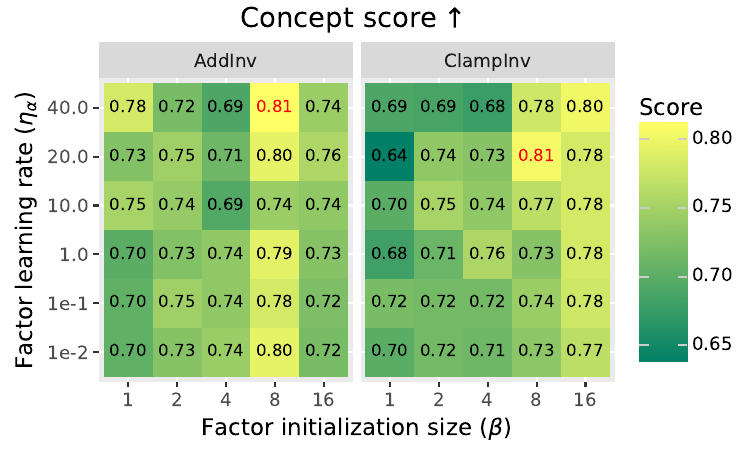}
}
\subfigure[\D{G9B}{L20}; \ourshort{} (\prefsuff{4}{4}).]{
  \includegraphics[width=0.3\linewidth]{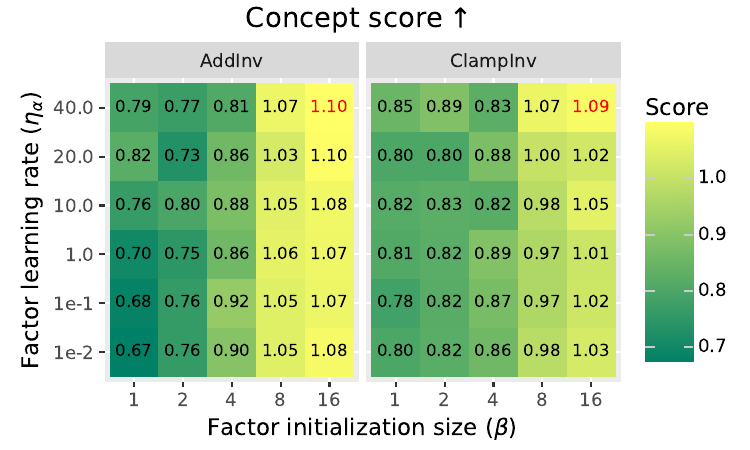}
}
\subfigure[\D{Q32B}{L32}; \ourshort{} (\prefsuff{2}{2}).]{
  \includegraphics[width=0.3\linewidth]{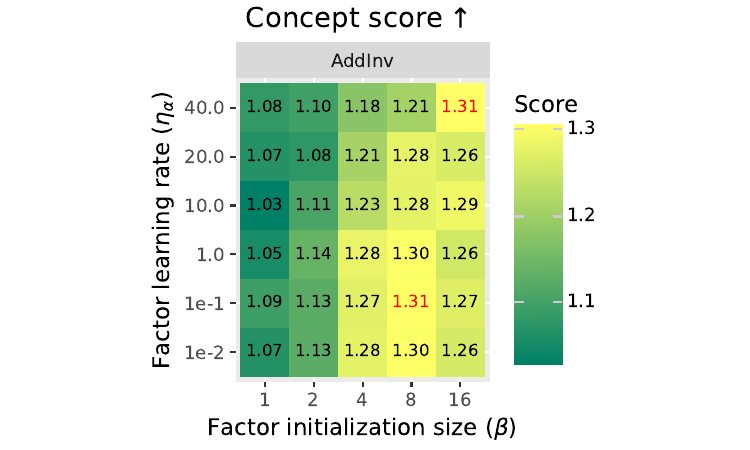}
}
\caption{Visualization of concept scores using joint training scheme when initializing steering directions with direction initialization size $\lambda = 1$. Highest scores are highlighted in red.}
\label{fig:verification_sweep_heatmap_concept_scores_kaiming}
\end{figure}

\begin{figure}
\centering
\subfigure[\D{G2B}{L10}; FSSV.]{
  \includegraphics[width=0.3\linewidth]{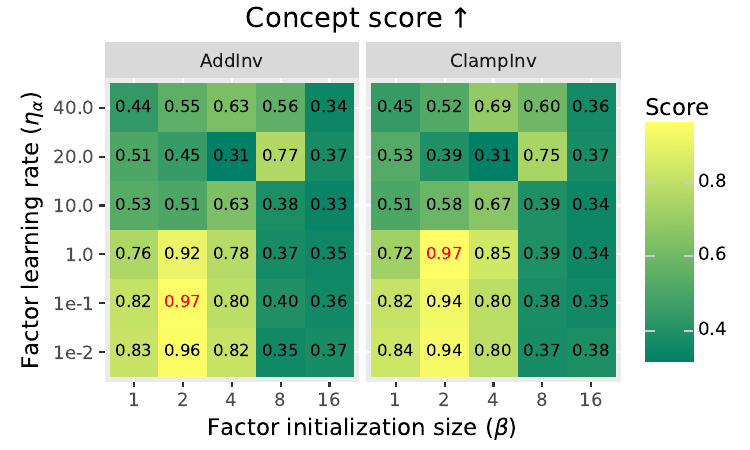}
}
\subfigure[\D{G9B}{L20}; FSSV.]{
  \includegraphics[width=0.3\linewidth]{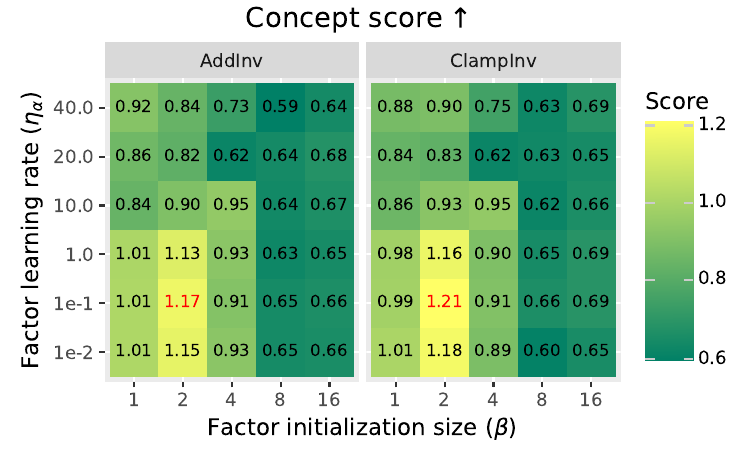}
}
\subfigure[\D{Q32B}{L32}; FSSV.]{
  \includegraphics[width=0.3\linewidth]{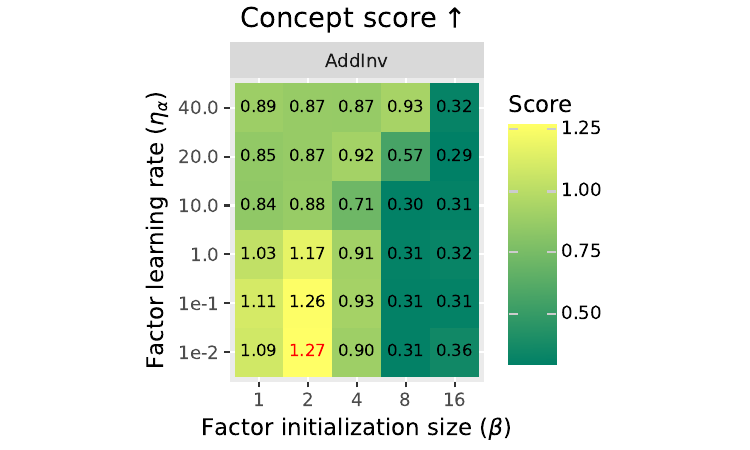}
}

\subfigure[\D{G2B}{L10}; \ourshort{} (\prefsuff{4}{4}).]{
  \includegraphics[width=0.3\linewidth]{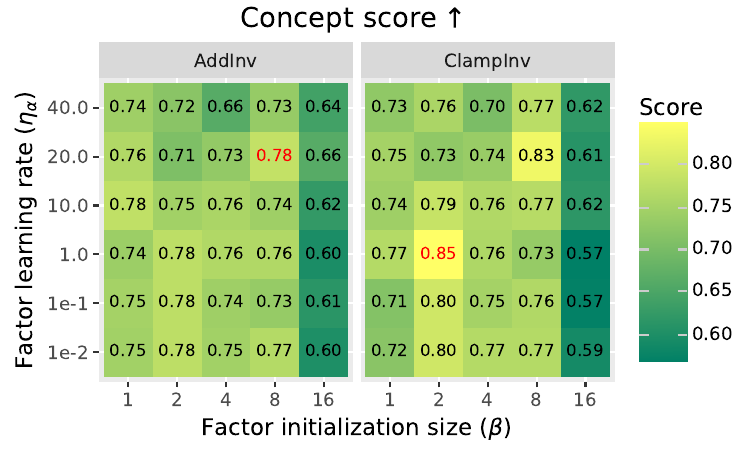}
}
\subfigure[\D{G9B}{L20}; \ourshort{} (\prefsuff{4}{4}).]{
  \includegraphics[width=0.3\linewidth]{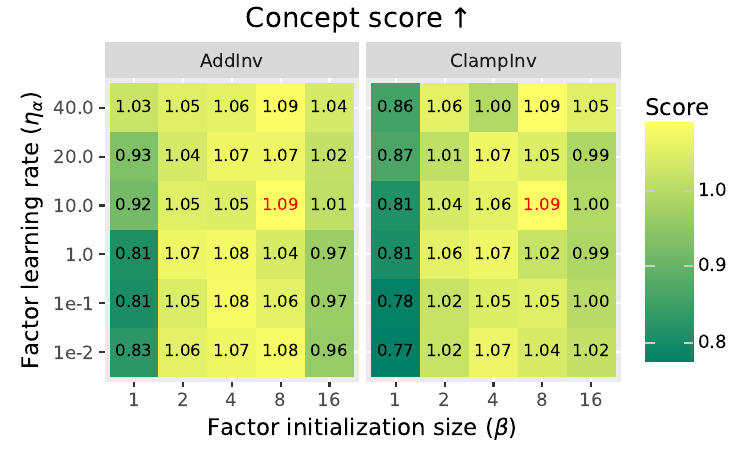}
}
\subfigure[\D{Q32B}{L32}; \ourshort{} (\prefsuff{2}{2}).]{
  \includegraphics[width=0.3\linewidth]{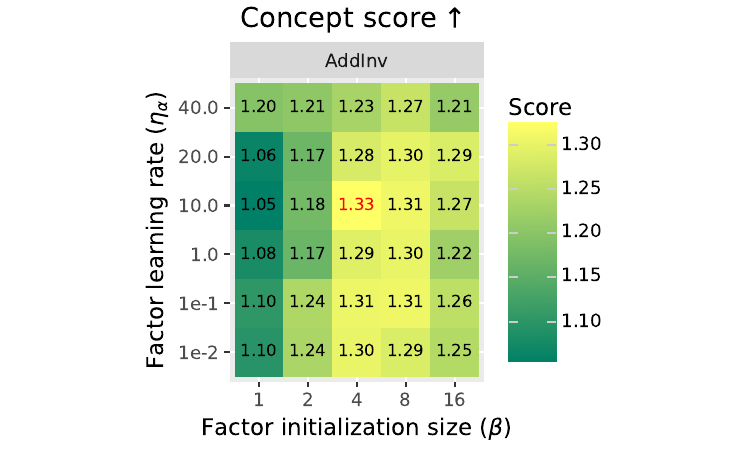}
}
\caption{Visualization of concept scores using joint training scheme when initializing steering directions with direction initialization size $\lambda = 8$. Highest scores are highlighted in red.}
\label{fig:verification_sweep_heatmap_concept_scores_larger_vec}
\end{figure}

\mypar{Overall steering scores.}
We show heatmaps of overall scores in
\cref{fig:verification_sweep_heatmap_overall_scores_kaiming}
and \cref{fig:verification_sweep_heatmap_overall_scores_larger_vec}.

\begin{figure}
\centering
\subfigure[\D{G2B}{L10}; FSSV.]{
  \includegraphics[width=0.3\linewidth]{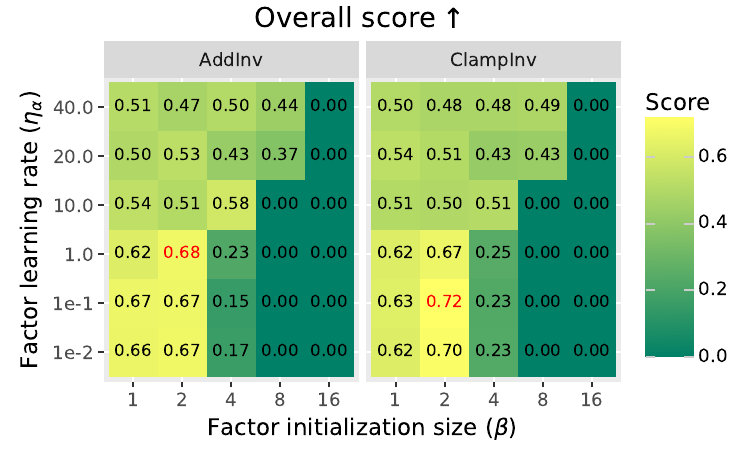}
}
\subfigure[\D{G9B}{L20}; FSSV.]{
  \includegraphics[width=0.3\linewidth]{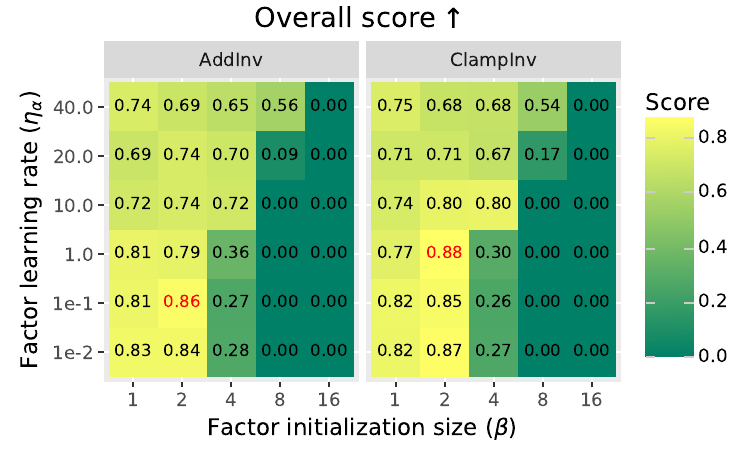}
}
\subfigure[\D{Q32B}{L32}; FSSV.]{
  \includegraphics[width=0.3\linewidth]{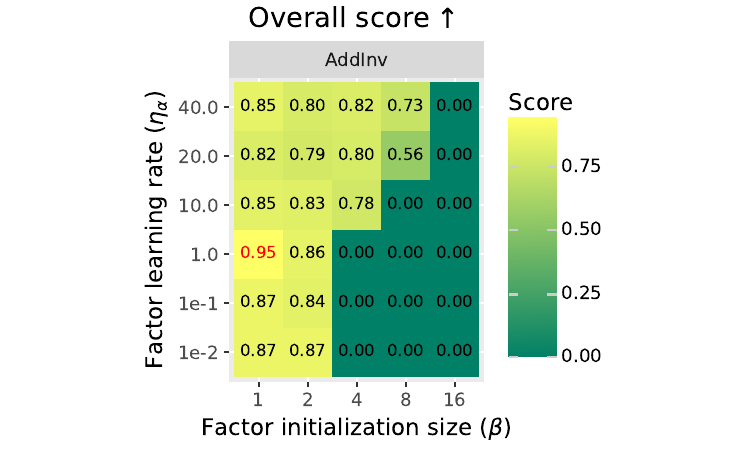}
}

\subfigure[\D{G2B}{L10}; \ourshort{} (\prefsuff{4}{4}).]{
  \includegraphics[width=0.3\linewidth]{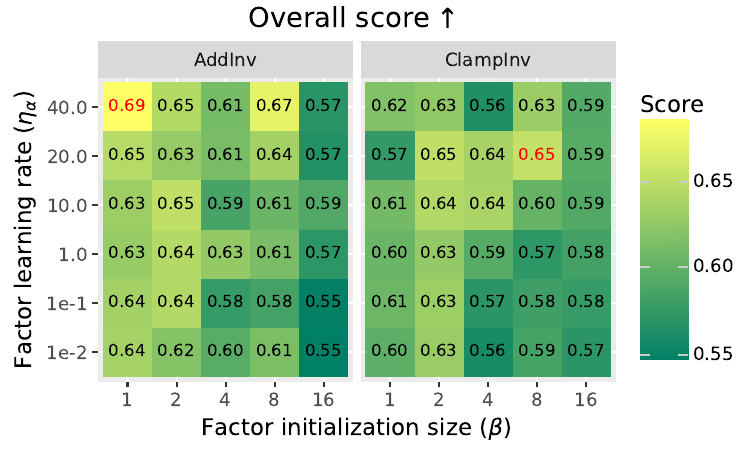}
}
\subfigure[\D{G9B}{L20}; \ourshort{} (\prefsuff{4}{4}).]{
  \includegraphics[width=0.3\linewidth]{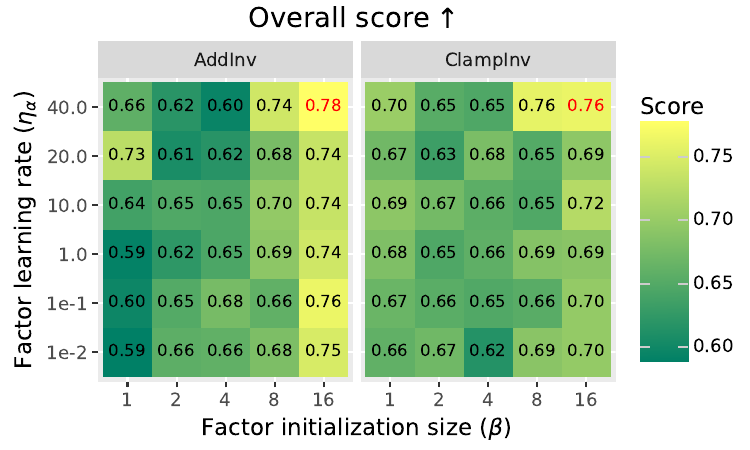}
}
\subfigure[\D{Q32B}{L32}; \ourshort{} (\prefsuff{2}{2}).]{
  \includegraphics[width=0.3\linewidth]{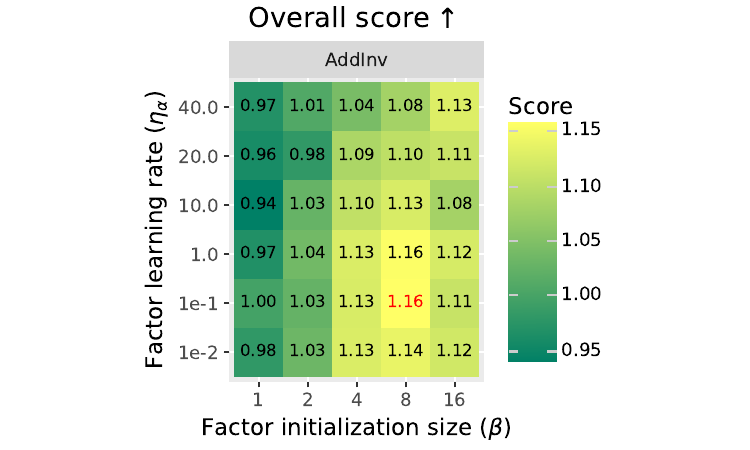}
}
\caption{Visualization of overall steering scores using joint training scheme with direction initialization size $\lambda = 1$. Highest scores are highlighted in red.}
\label{fig:verification_sweep_heatmap_overall_scores_kaiming}
\end{figure}

\begin{figure}
\centering
\subfigure[\D{G2B}{L10}; FSSV.]{
  \includegraphics[width=0.3\linewidth]{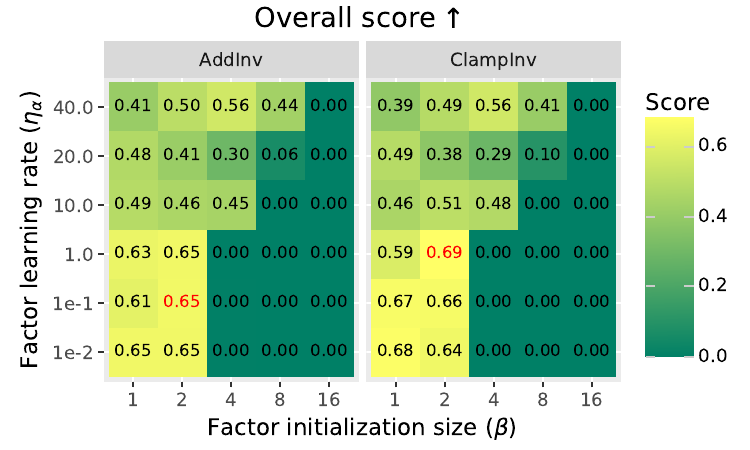}
}
\subfigure[\D{G9B}{L20}; FSSV.]{
  \includegraphics[width=0.3\linewidth]{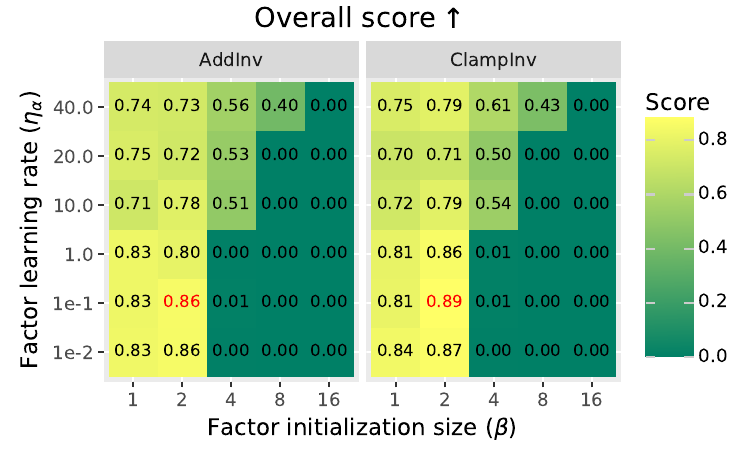}
}
\subfigure[\D{Q32B}{L32}; FSSV.]{
  \includegraphics[width=0.3\linewidth]{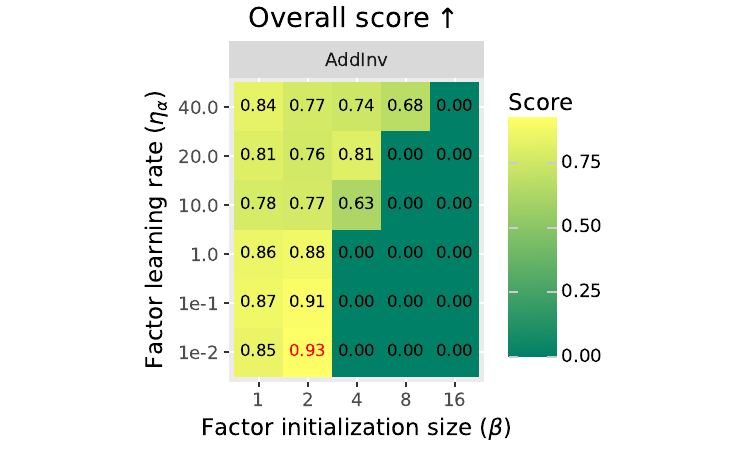}
}

\subfigure[\D{G2B}{L10}; \ourshort{} (\prefsuff{4}{4}).]{
  \includegraphics[width=0.3\linewidth]{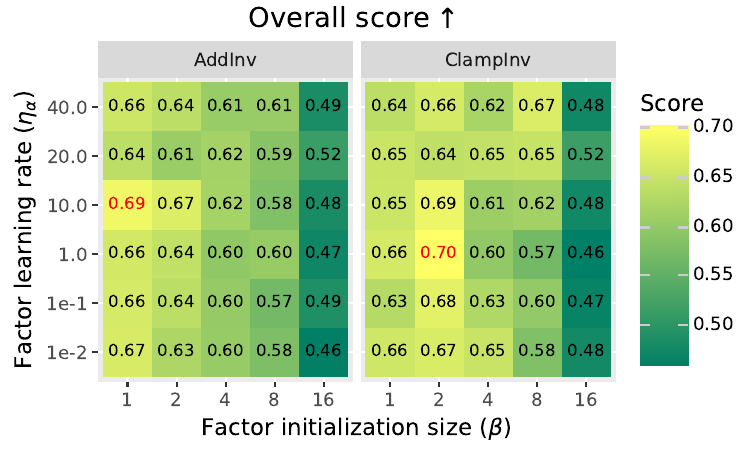}
}
\subfigure[\D{G9B}{L20}; \ourshort{} (\prefsuff{4}{4}).]{
  \includegraphics[width=0.3\linewidth]{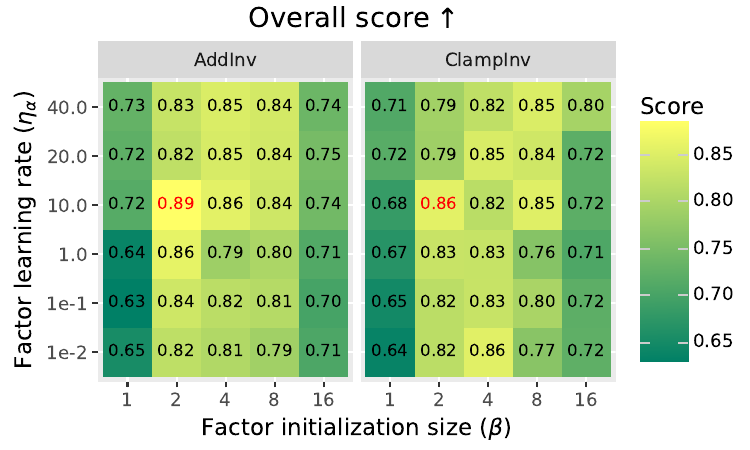}
}
\subfigure[\D{Q32B}{L32}; \ourshort{} (\prefsuff{2}{2}).]{
  \includegraphics[width=0.3\linewidth]{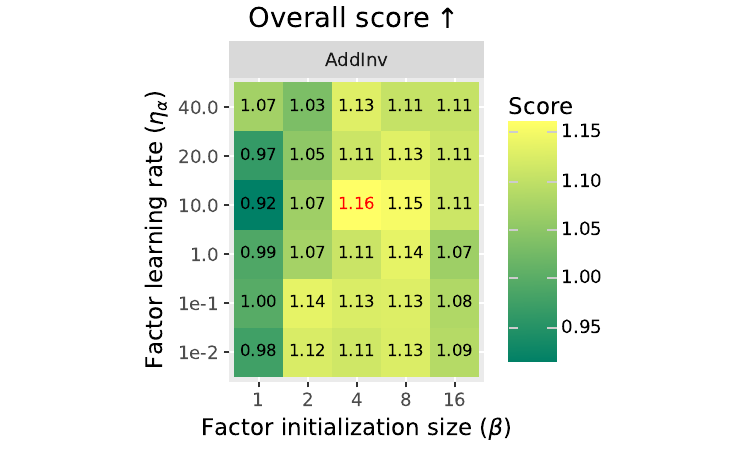}
}
\caption{Visualization of overall steering scores using joint training scheme with direction initialization size $\lambda = 8$. Highest scores are highlighted in red.}
\label{fig:verification_sweep_heatmap_overall_scores_larger_vec}
\end{figure}

\mypar{Overall score breakdown.}
Since overall steering score is the harmonic mean of concept/instruct/fluency scores, it is meaningful to visualize these individual scores.
We show breakdown of overall scores in:
\begin{itemize}[topsep=0pt, itemsep=0pt]
  \item \D{G2B}{L10}; FSSV; $\lambda=1$: \cref{fig:score_breakdown_2b_l10_all_kaiming};
  \item \D{G2B}{L10}; FSSV; $\lambda=8$: \cref{fig:score_breakdown_2b_l10_all_larger_vec};
  \item \D{G9B}{L20}; FSSV; $\lambda=1$: \cref{fig:score_breakdown_9b_l20_all_kaiming};
  \item \D{G9B}{L20}; FSSV; $\lambda=8$: \cref{fig:score_breakdown_9b_l20_all_larger_vec};
  \item \D{Q32B}{L32}; FSSV; $\lambda=1$: \cref{fig:score_breakdown_q25_32b_l32_all_kaiming};
  \item \D{Q32B}{L32}; FSSV; $\lambda=8$: \cref{fig:score_breakdown_q25_32b_l32_all_larger_vec};
  \item \D{G2B}{L10}; \ourshort{} (\prefsuff{4}{4}); $\lambda=1$: \cref{fig:score_breakdown_2b_l10_prosv_f4+l4_kaiming};
  \item \D{G2B}{L10}; \ourshort{} (\prefsuff{4}{4}); $\lambda=8$: \cref{fig:score_breakdown_2b_l10_prosv_f4+l4_larger_vec};
  \item \D{G9B}{L20}; \ourshort{} (\prefsuff{4}{4}); $\lambda=1$: \cref{fig:score_breakdown_q25_32b_l32_prosv_f2+l2_kaiming};
  \item \D{G9B}{L20}; \ourshort{} (\prefsuff{4}{4}); $\lambda=8$: \cref{fig:score_breakdown_q25_32b_l32_prosv_f2+l2_larger_vec};
  \item \D{Q32B}{L32}; \ourshort{} (\prefsuff{2}{2}); $\lambda=1$: \cref{fig:score_breakdown_q25_32b_l32_prosv_f2+l2_kaiming};
  \item \D{Q32B}{L32}; \ourshort{} (\prefsuff{2}{2}); $\lambda=8$: \cref{fig:score_breakdown_q25_32b_l32_prosv_f2+l2_larger_vec}.
\end{itemize}

\begin{figure}
\centering
\includegraphics[width=0.8\linewidth]{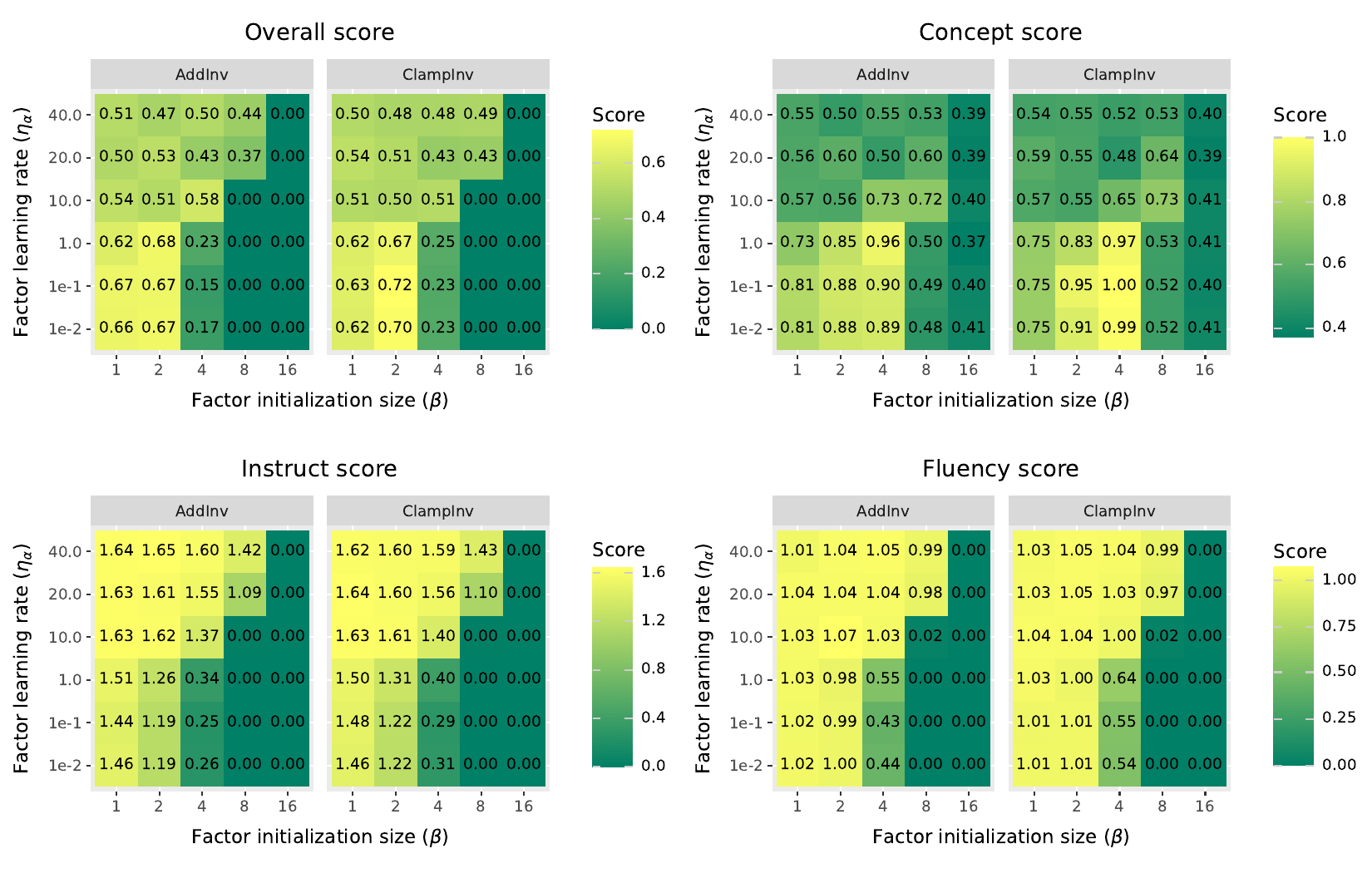}
\caption{Breakdown of overall steering scores with setup: \D{G2B}{L10}; FSSV; $\lambda = 1$.}
\label{fig:score_breakdown_2b_l10_all_kaiming}
\end{figure}

\begin{figure}
\centering
\includegraphics[width=0.8\linewidth]{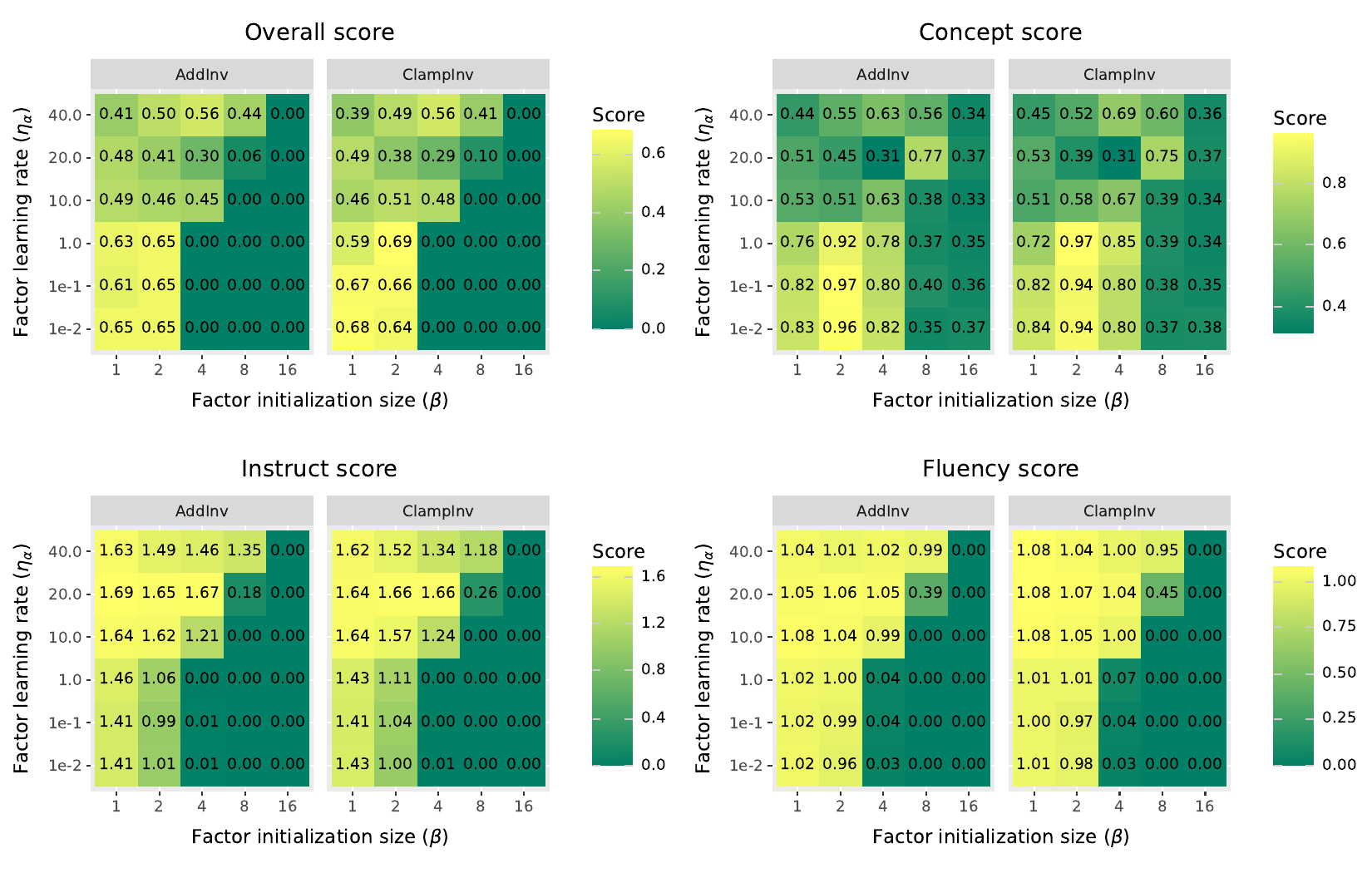}
\caption{Breakdown of overall steering scores with setup: \D{G2B}{L10}; FSSV; $\lambda = 8$.}
\label{fig:score_breakdown_2b_l10_all_larger_vec}
\end{figure}

\begin{figure}
\centering
\includegraphics[width=0.8\linewidth]{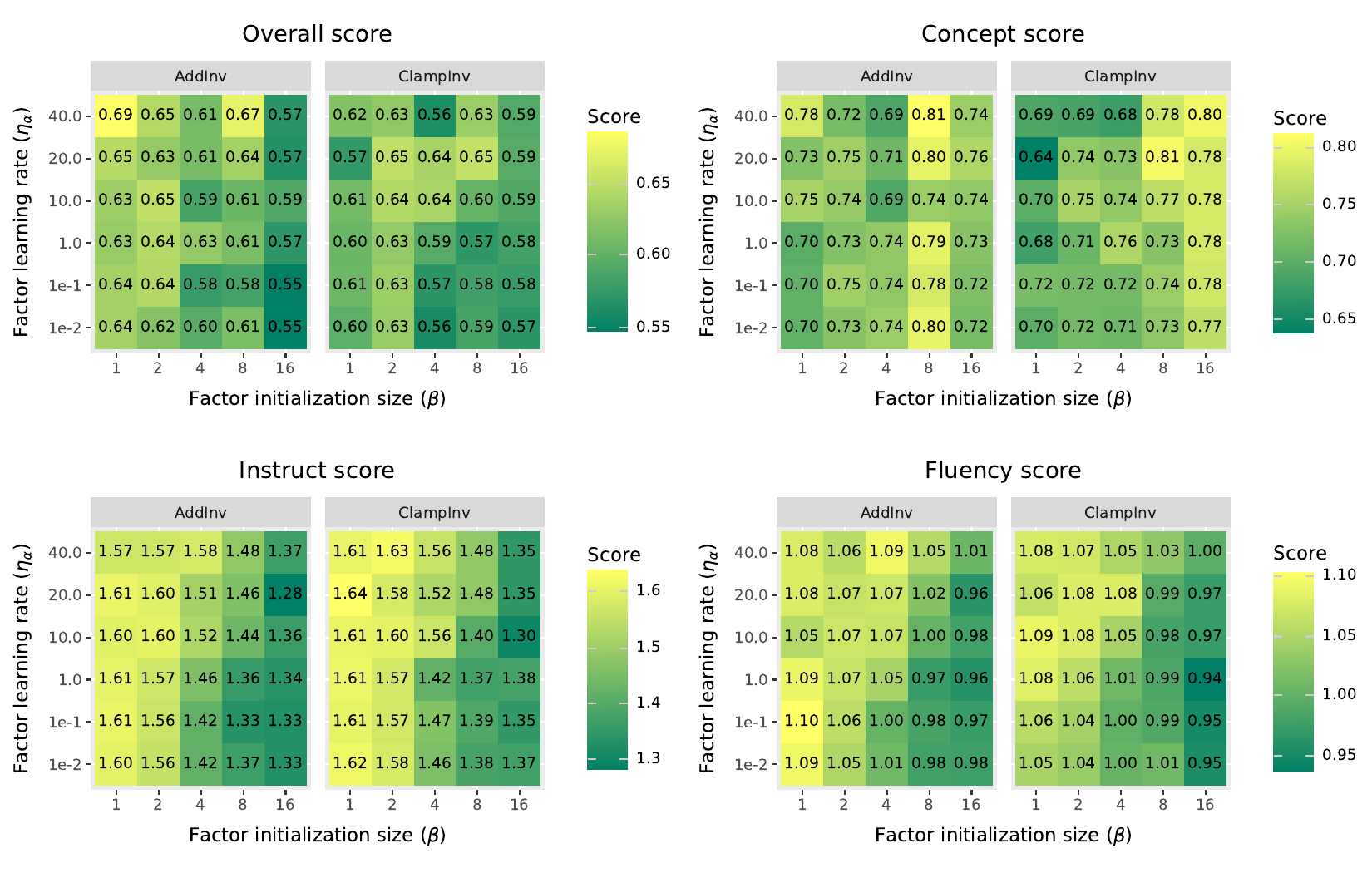}
\caption{Breakdown of overall steering scores with setup: \D{G2B}{L10}; \ourshort{} (\prefsuff{4}{4}); $\lambda = 1$.}
\label{fig:score_breakdown_2b_l10_prosv_f4+l4_kaiming}
\end{figure}

\begin{figure}
\centering
\includegraphics[width=0.8\linewidth]{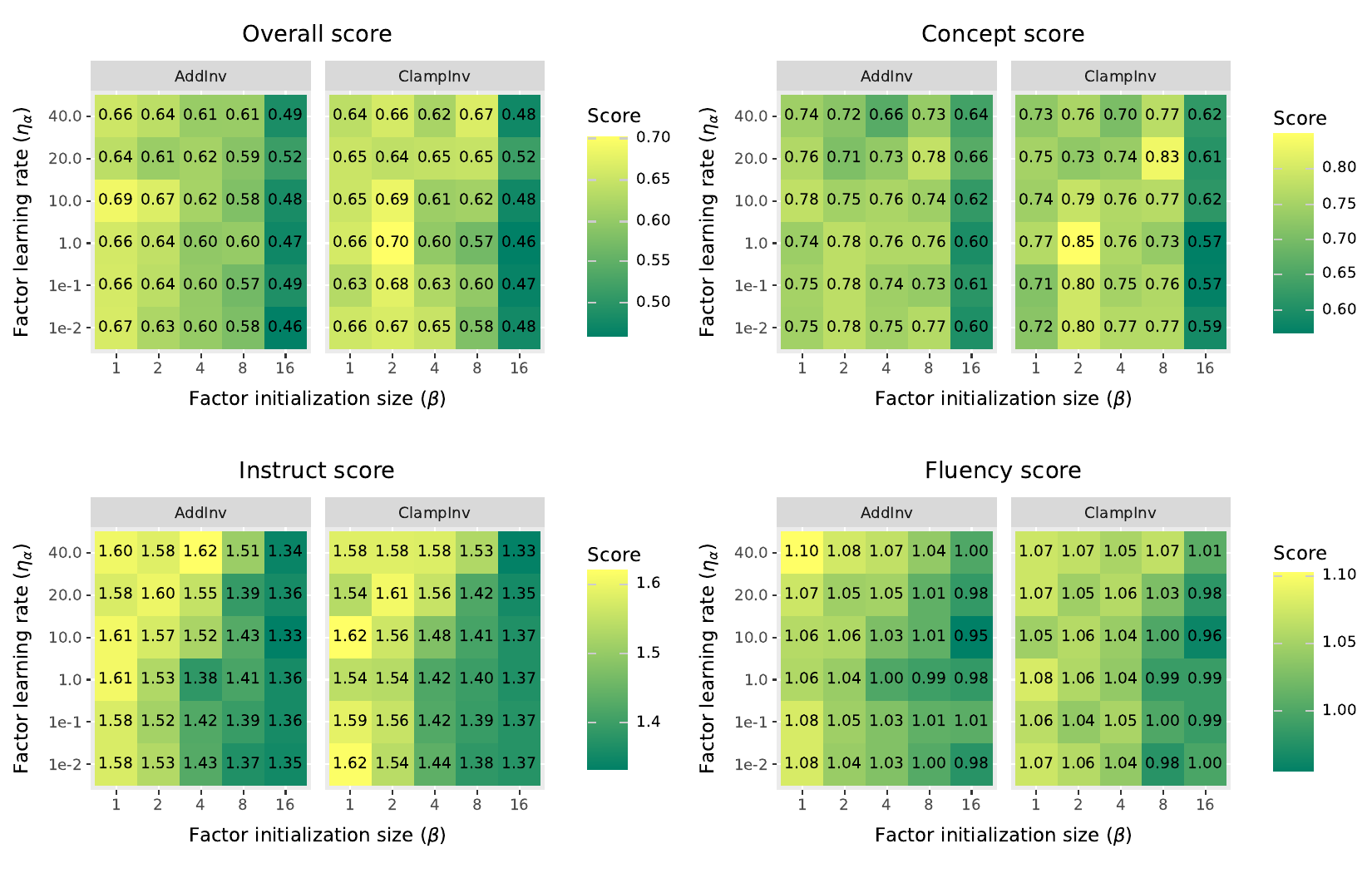}
\caption{Breakdown of overall steering scores with setup: \D{G2B}{L10}; \ourshort{} (\prefsuff{4}{4}); $\lambda = 8$.}
\label{fig:score_breakdown_2b_l10_prosv_f4+l4_larger_vec}
\end{figure}

\begin{figure}
\centering
\includegraphics[width=0.8\linewidth]{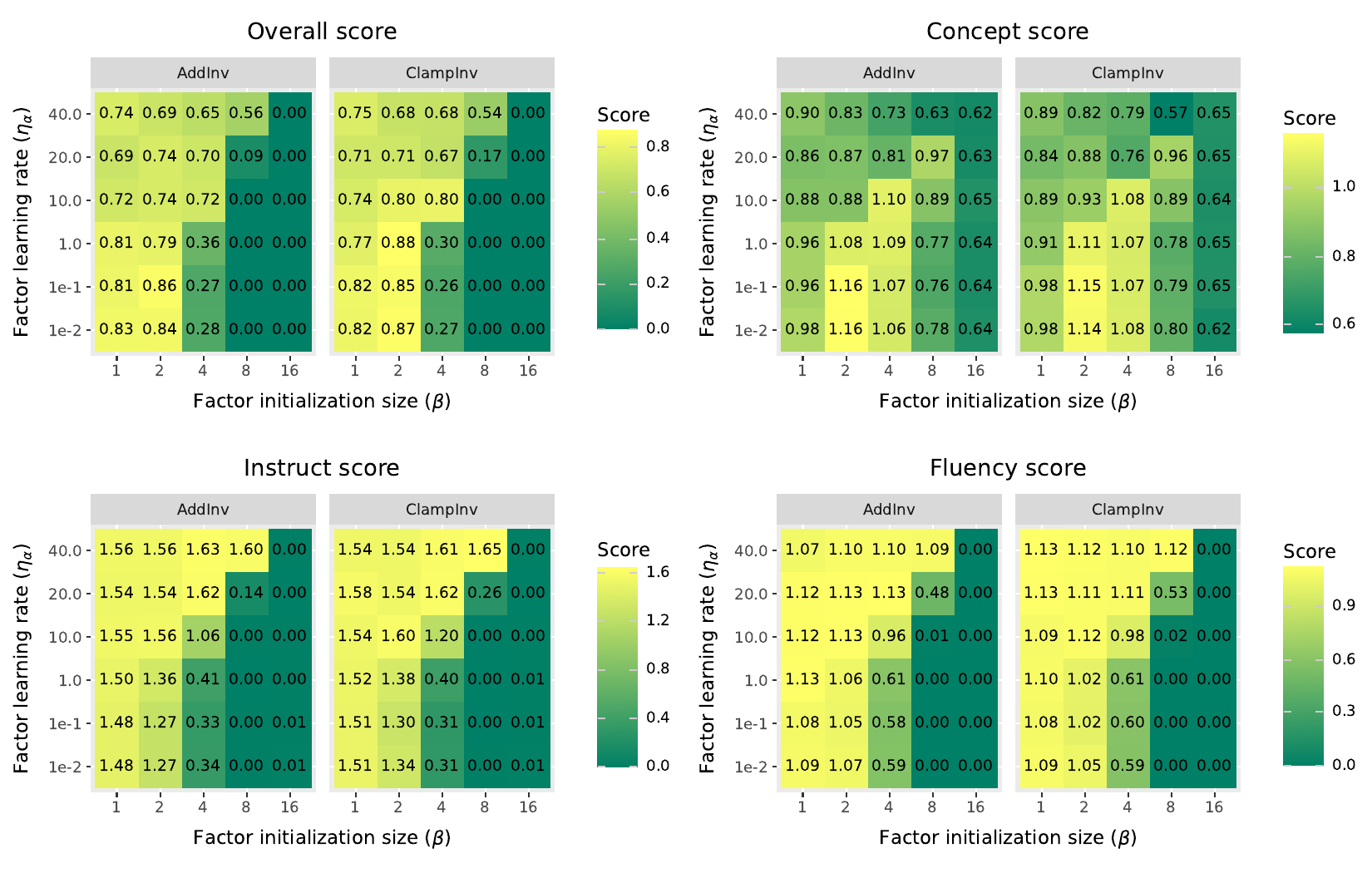}
\caption{Breakdown of overall steering scores with setup: \D{G9B}{L20}; FSSV; $\lambda = 1$.}
\label{fig:score_breakdown_9b_l20_all_kaiming}
\end{figure}

\begin{figure}
\centering
\includegraphics[width=0.8\linewidth]{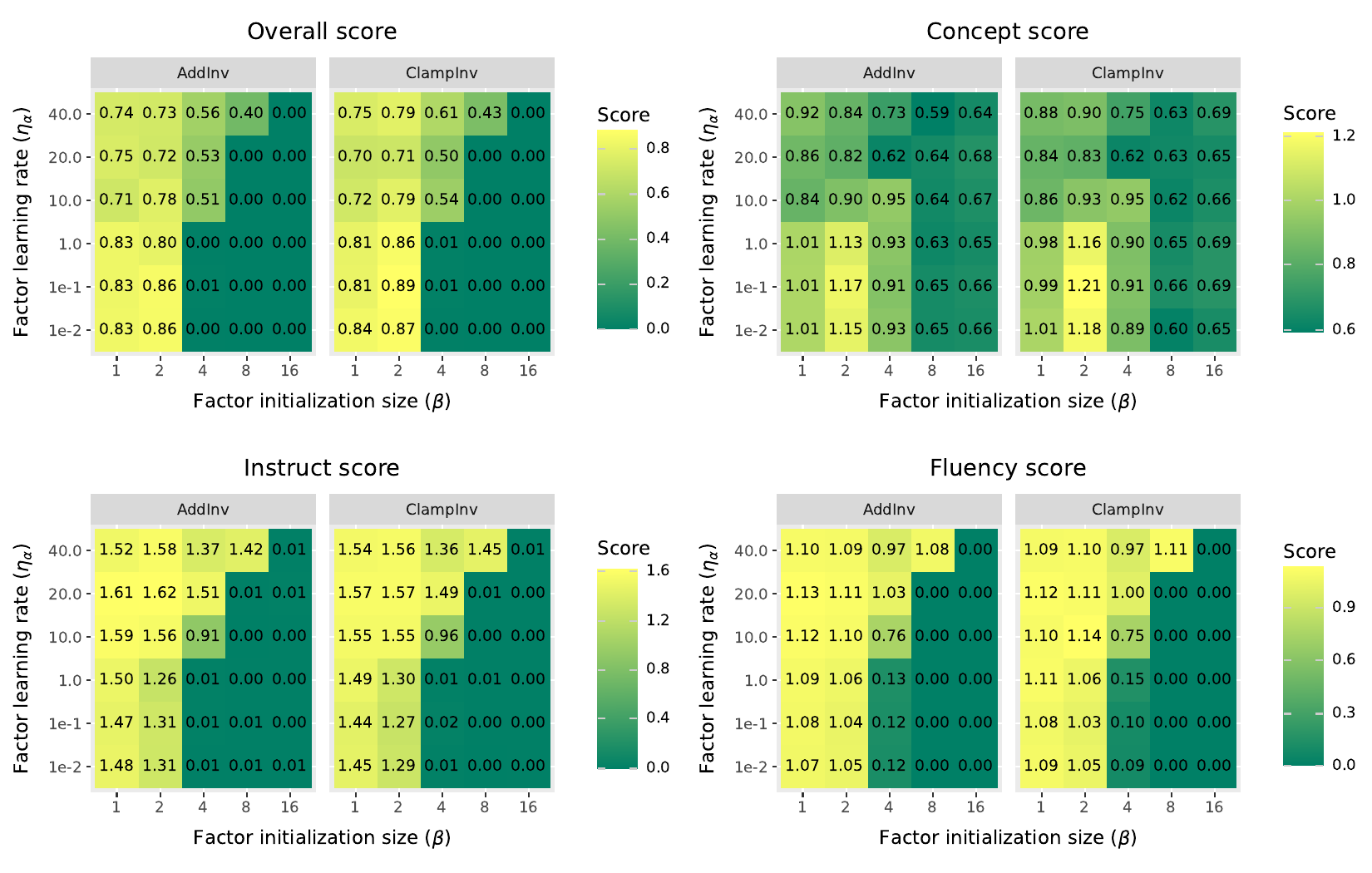}
\caption{Breakdown of overall steering scores with setup: \D{G9B}{L20}; FSSV; $\lambda = 8$.}
\label{fig:score_breakdown_9b_l20_all_larger_vec}
\end{figure}

\begin{figure}
\centering
\includegraphics[width=0.8\linewidth]{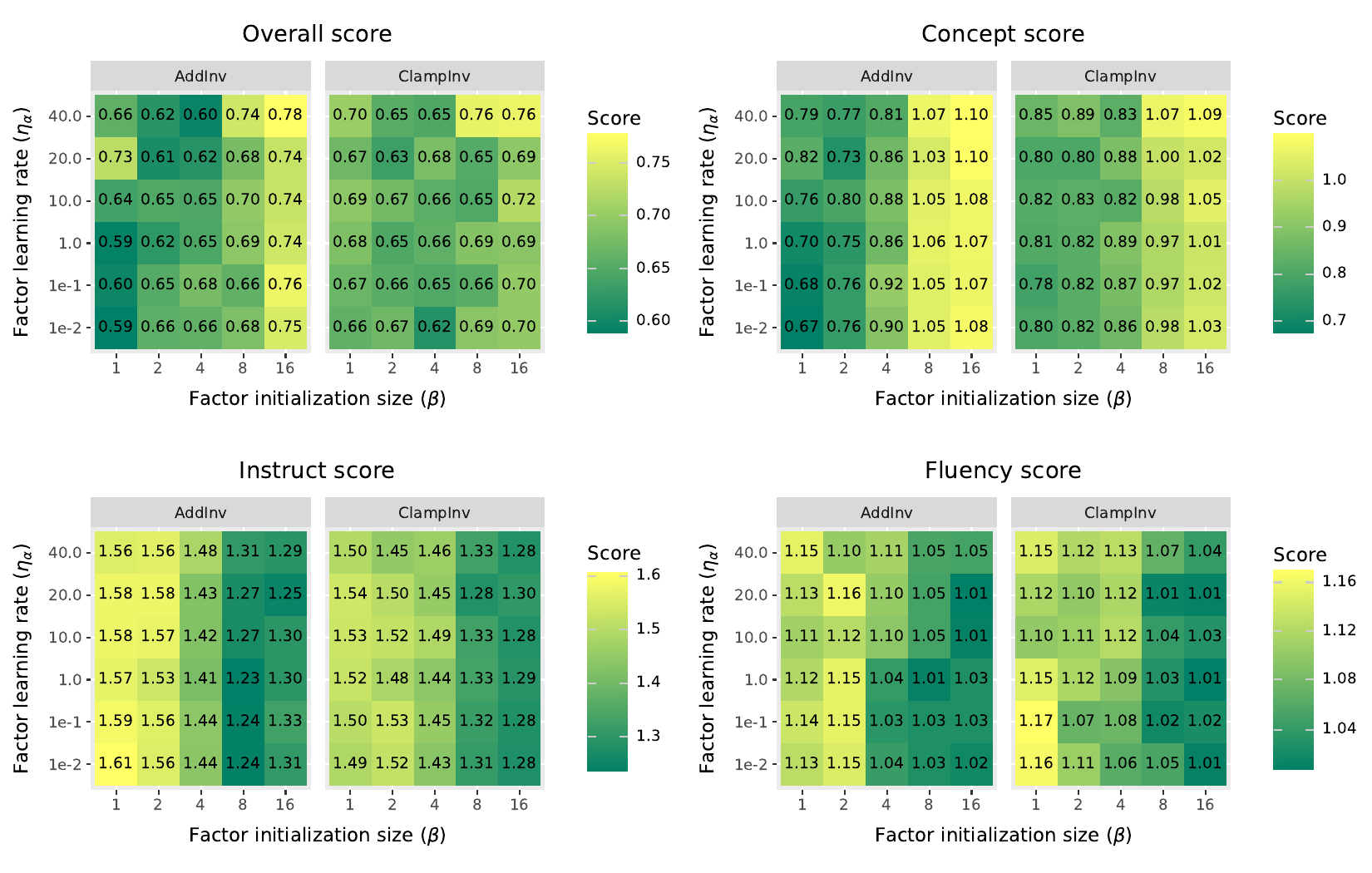}
\caption{Breakdown of overall steering scores with setup: \D{G9B}{L20}; \ourshort{} (\prefsuff{4}{4}); $\lambda = 1$.}
\label{fig:score_breakdown_9b_l20_prosv_f4+l4_kaiming}
\end{figure}

\begin{figure}
\centering
\includegraphics[width=0.8\linewidth]{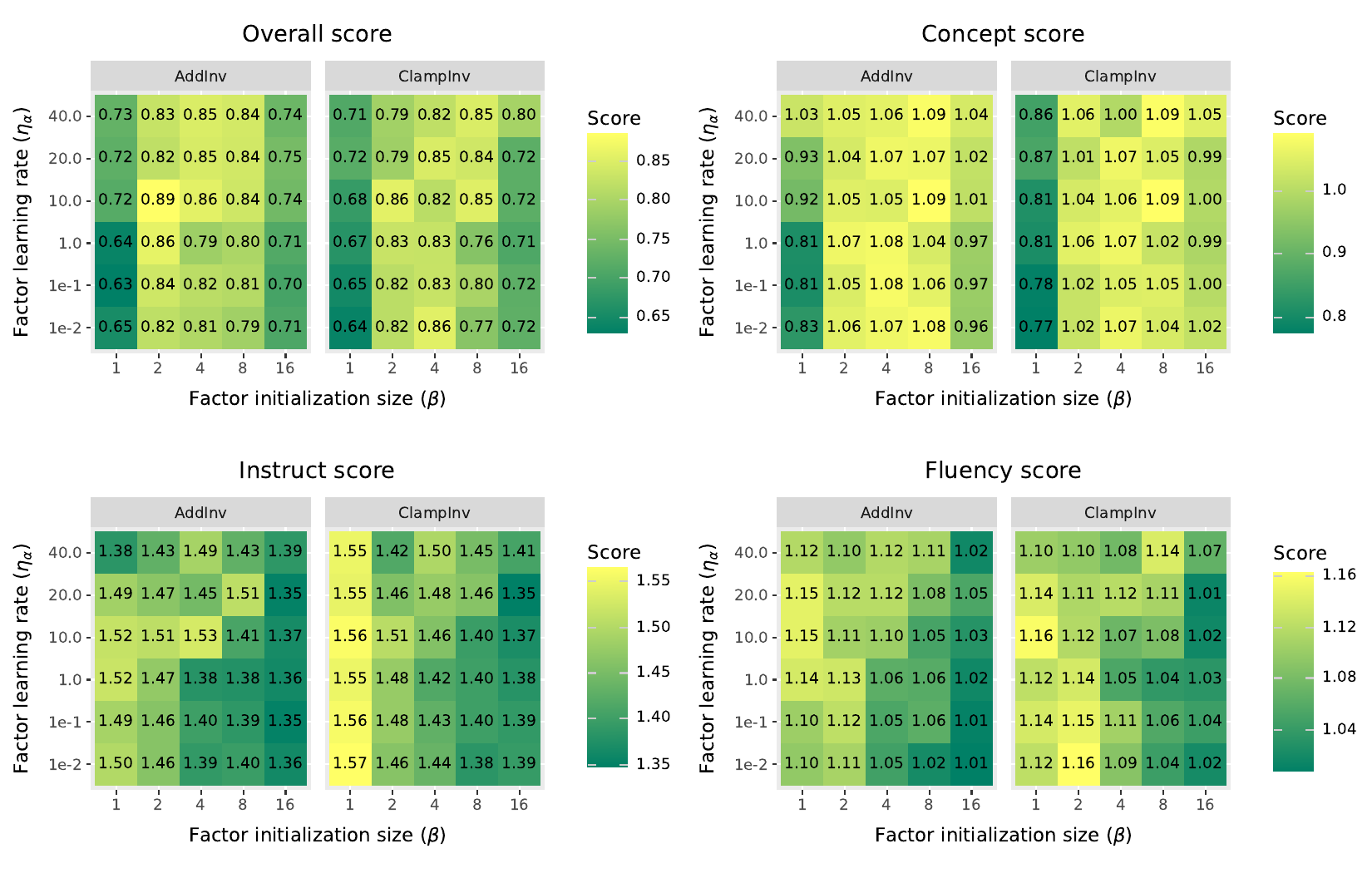}
\caption{Breakdown of overall steering scores with setup: \D{G9B}{L20}; \ourshort{} (\prefsuff{4}{4}); $\lambda = 8$.}
\label{fig:score_breakdown_9b_l20_prosv_f4+l4_larger_vec}
\end{figure}

\begin{figure}
\centering
\includegraphics[width=0.8\linewidth]{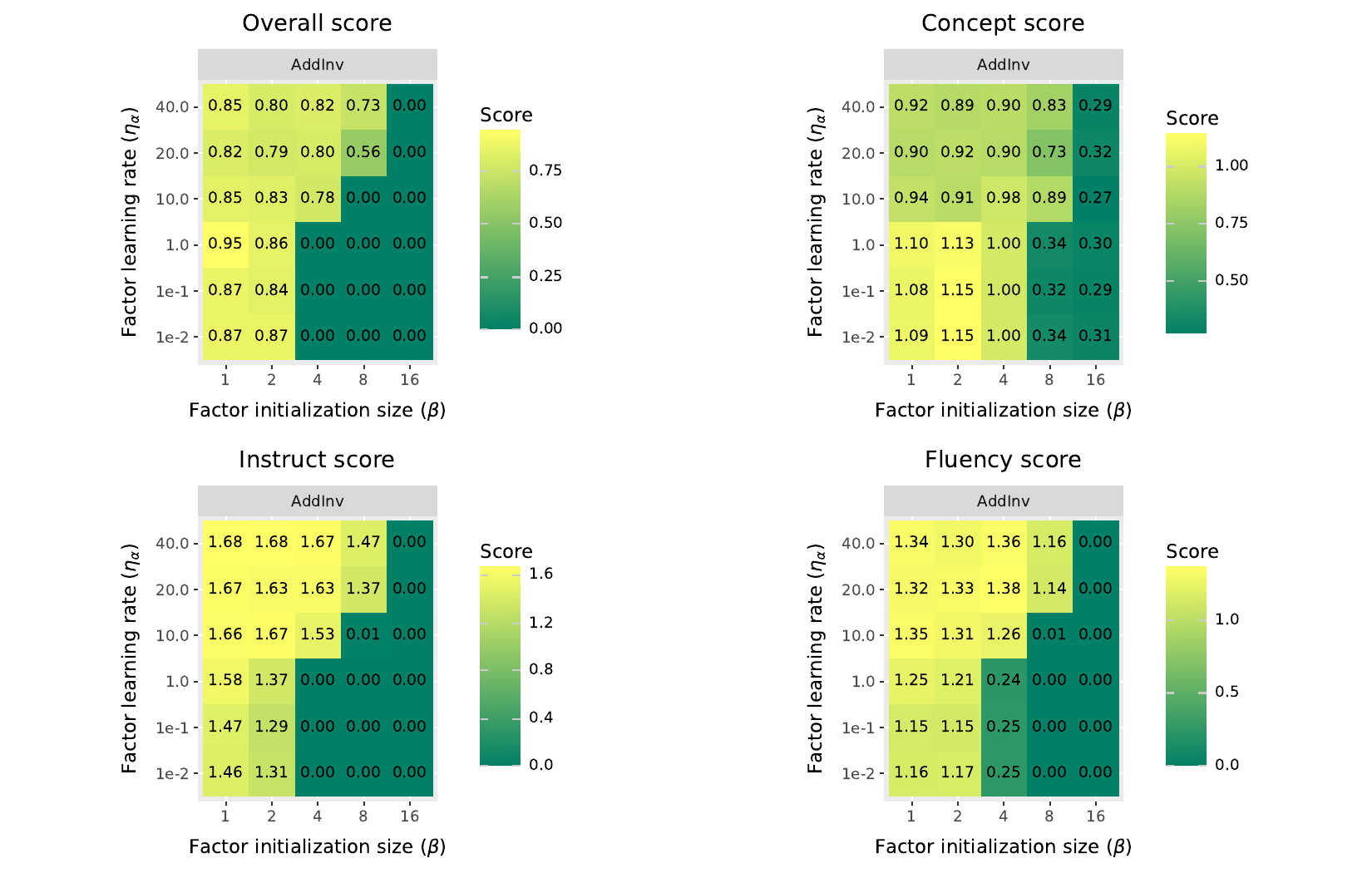}
\caption{Breakdown of overall steering scores with setup: \D{Q32B}{L32}; FSSV; $\lambda = 1$.}
\label{fig:score_breakdown_q25_32b_l32_all_kaiming}
\end{figure}

\begin{figure}
\centering
\includegraphics[width=0.8\linewidth]{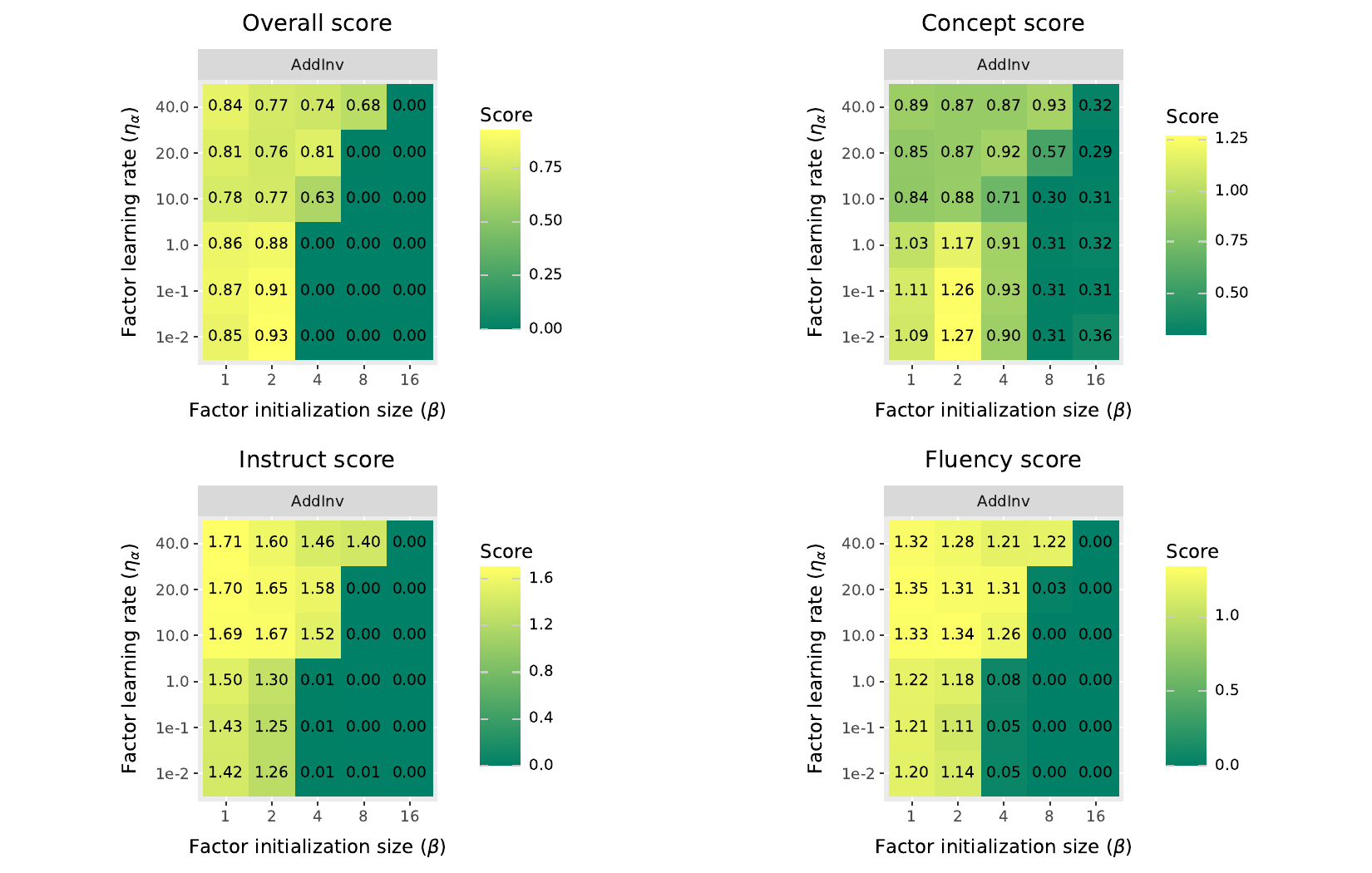}
\caption{Breakdown of overall steering scores with setup: \D{Q32B}{L32}; FSSV; $\lambda = 8$.}
\label{fig:score_breakdown_q25_32b_l32_all_larger_vec}
\end{figure}

\begin{figure}
\centering
\includegraphics[width=0.8\linewidth]{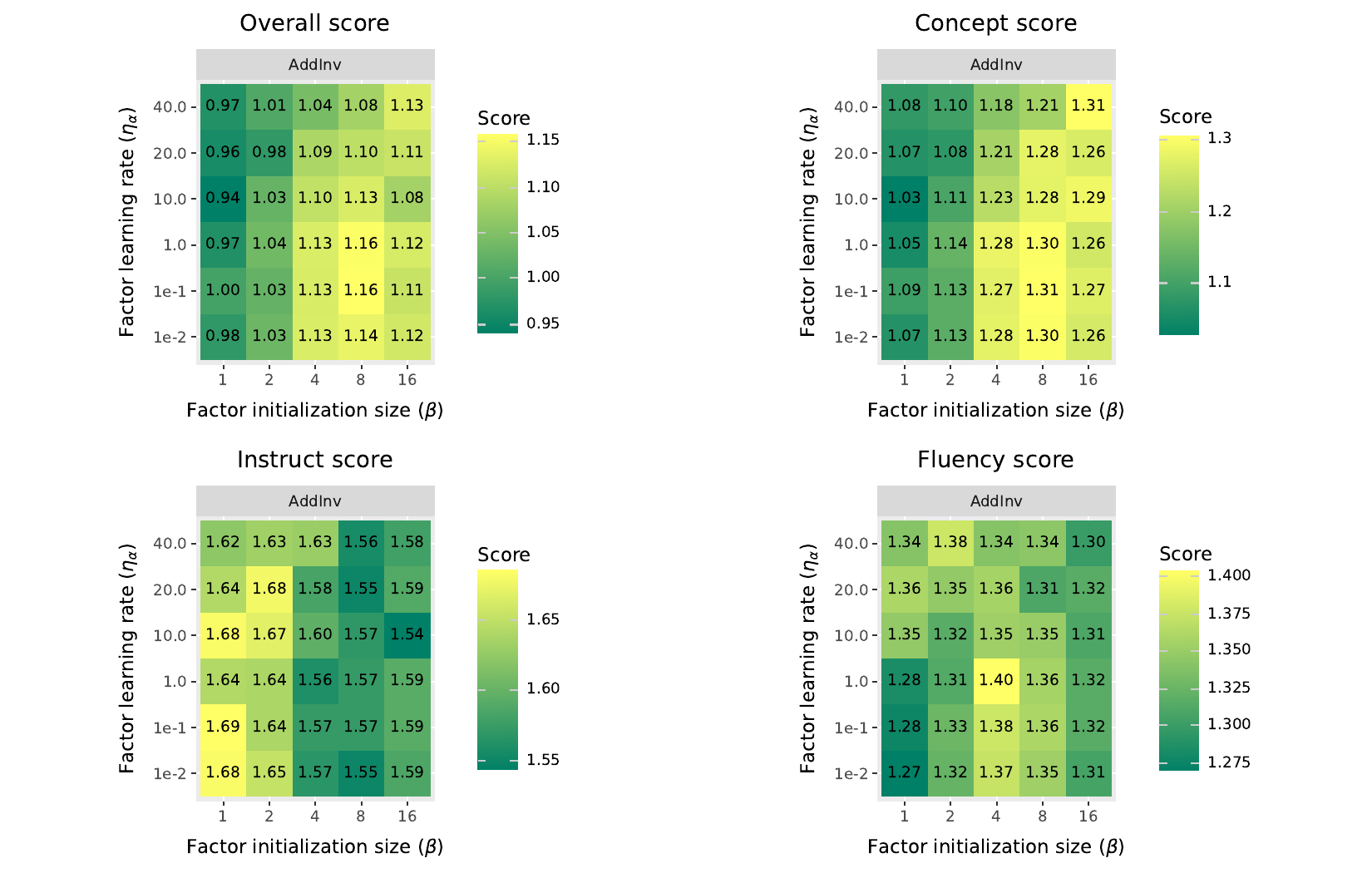}
\caption{Breakdown of overall steering scores with setup: \D{Q32B}{L32}; \ourshort{} (\prefsuff{2}{2}); $\lambda = 1$.}
\label{fig:score_breakdown_q25_32b_l32_prosv_f2+l2_kaiming}
\end{figure}

\begin{figure}
\centering
\includegraphics[width=0.8\linewidth]{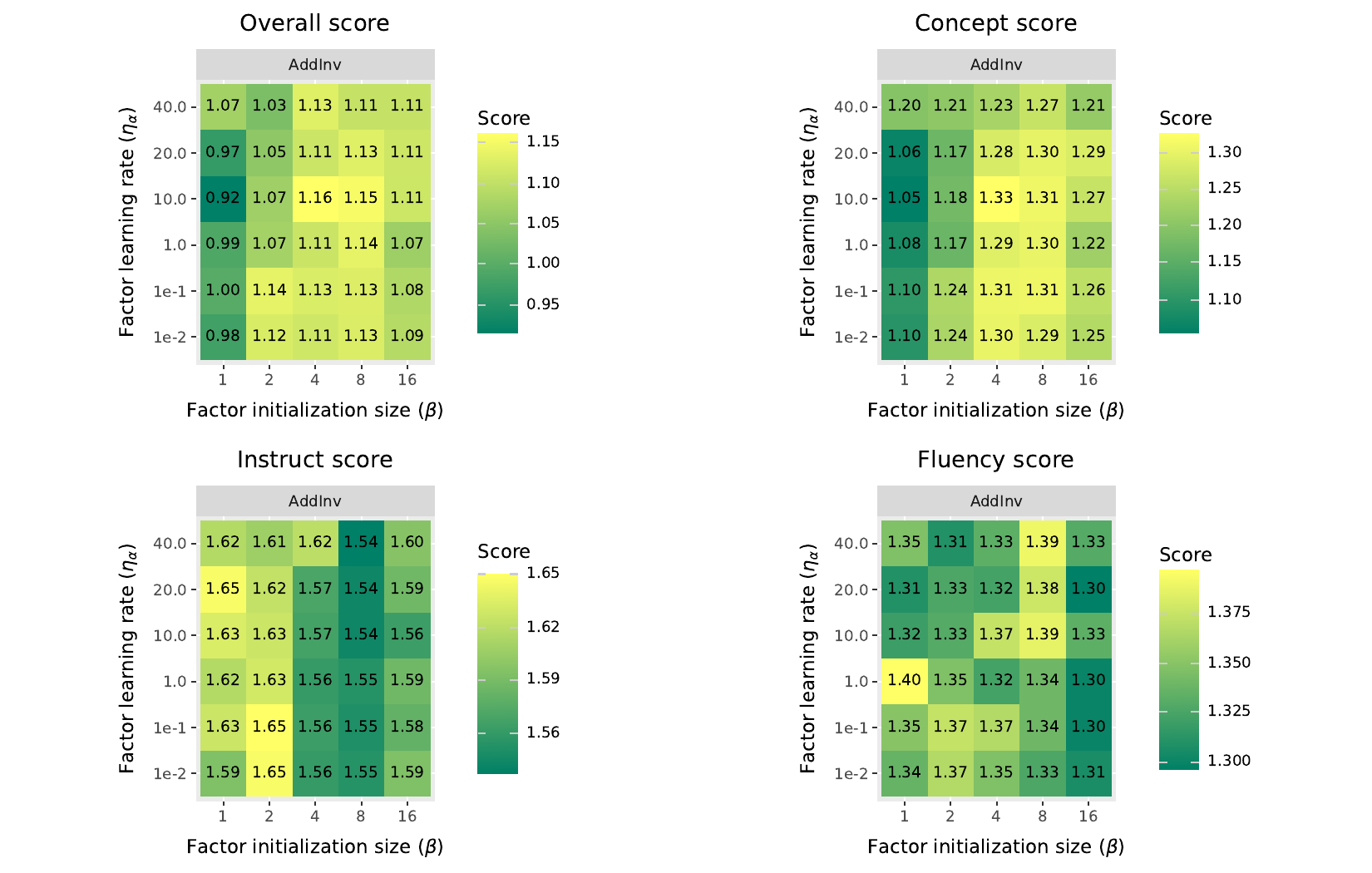}
\caption{Breakdown of overall steering scores with setup: \D{Q32B}{L32}; \ourshort{} (\prefsuff{2}{2}); $\lambda = 8$.}
\label{fig:score_breakdown_q25_32b_l32_prosv_f2+l2_larger_vec}
\end{figure}

\mypar{Standard deviation of scores.}
\citet{wehner2025taxonomy} have proposed best practices of evaluating representation steering methods, where they emphasize the importance of reporting variances of results so that ``readers see whether a method provides reliable steerability''.
Therefore we show standard deviation of overall/concept/instruct/fluency scores across random seeds in the following figures:
\begin{itemize}[itemsep=0pt,topsep=0pt]
  \item \D{G2B}{L10}, FSSV, $\lambda = 1$: \cref{fig:standard_deviation_2b_l10_all_kaiming};
  \item \D{G2B}{L10}, FSSV, $\lambda = 8$: \cref{fig:standard_deviation_2b_l10_all_larger_vec},
  \item \D{G2B}{L10}, \ourshort{} (\prefsuff{4}{4}), $\lambda = 1$: \cref{fig:standard_deviation_2b_l10_prosv_f4+l4_kaiming};
  \item \D{G2B}{L10}, \ourshort{} (\prefsuff{4}{4}), $\lambda = 8$: \cref{fig:standard_deviation_2b_l10_prosv_f4+l4_larger_vec};
  \item \D{G9B}{L20}, FSSV, $\lambda = 1$: \cref{fig:standard_deviation_9b_l20_all_kaiming};
  \item \D{G9B}{L20}, FSSV, $\lambda = 8$: \cref{fig:standard_deviation_9b_l20_all_larger_vec};
  \item \D{G9B}{L20}, \ourshort{} (\prefsuff{4}{4}), $\lambda = 1$: \cref{fig:standard_deviation_9b_l20_prosv_f4+l4_kaiming};
  \item \D{G9B}{L20}, \ourshort{} (\prefsuff{4}{4}), $\lambda = 8$: \cref{fig:standard_deviation_9b_l20_prosv_f4+l4_larger_vec};
  \item \D{Q32B}{L32}, FSSV, $\lambda = 1$: \cref{fig:standard_deviation_q25_32b_l32_all_kaiming};
  \item \D{Q32B}{L32}, FSSV, $\lambda = 8$: \cref{fig:standard_deviation_q25_32b_l32_all_larger_vec};
  \item \D{Q32B}{L32}, \ourshort{} (\prefsuff{2}{2}), $\lambda = 1$: \cref{fig:standard_deviation_q25_32b_l32_prosv_f2+l2_kaiming};
  \item \D{Q32B}{L32}, \ourshort{} (\prefsuff{2}{2}), $\lambda = 8$: \cref{fig:standard_deviation_q25_32b_l32_prosv_f2+l2_larger_vec}.
\end{itemize}

\begin{figure}
\centering
\includegraphics[width=0.8\linewidth]{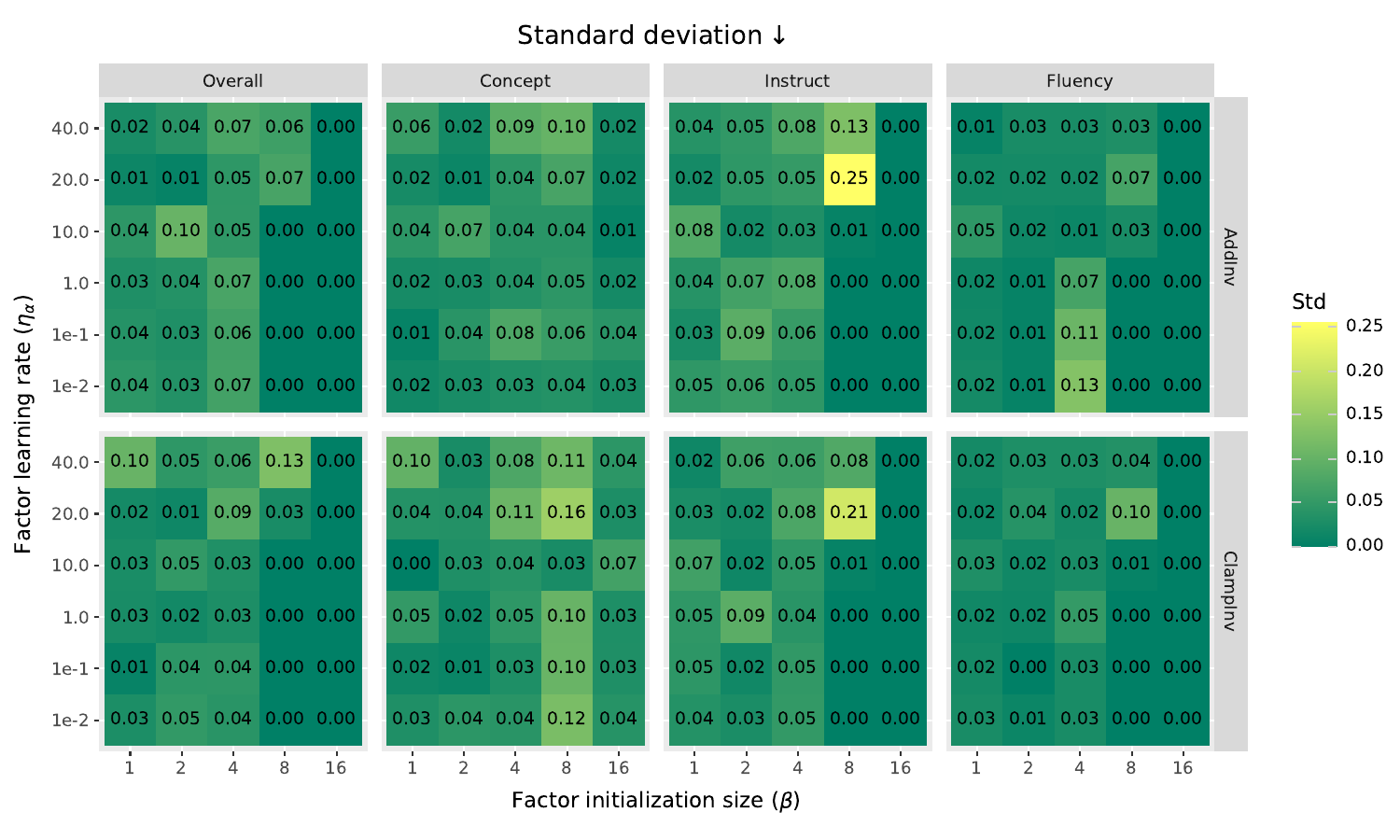}
\caption{Standard deviation of individual scores across seeds with setup: \D{G2B}{L10}; FSSV; $\lambda = 1$.}
\label{fig:standard_deviation_2b_l10_all_kaiming}
\end{figure}

\begin{figure}
\centering
\includegraphics[width=0.8\linewidth]{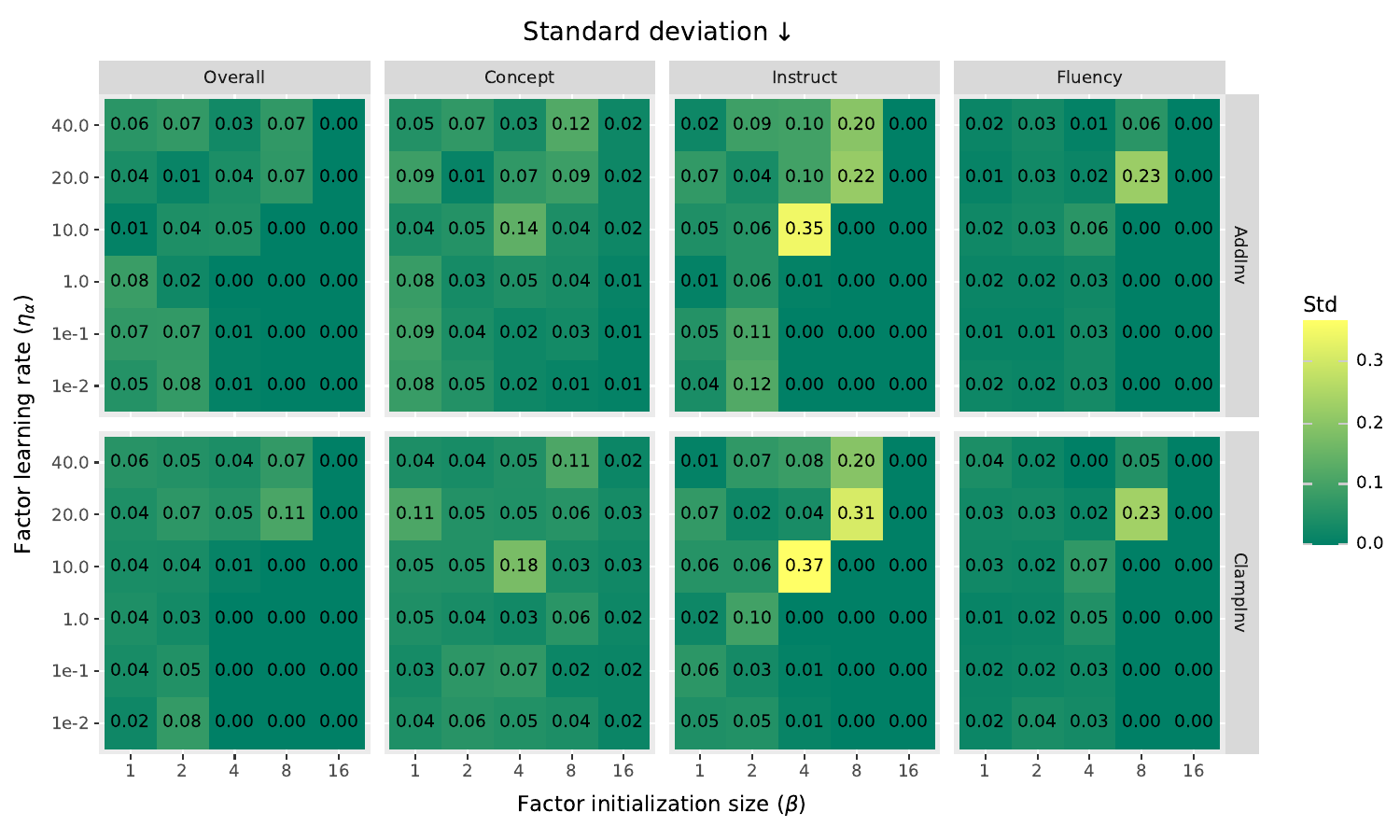}
\caption{Standard deviation of individual scores across seeds with setup: \D{G2B}{L10}; FSSV; $\lambda = 8$.}
\label{fig:standard_deviation_2b_l10_all_larger_vec}
\end{figure}

\begin{figure}
\centering
\includegraphics[width=0.8\linewidth]{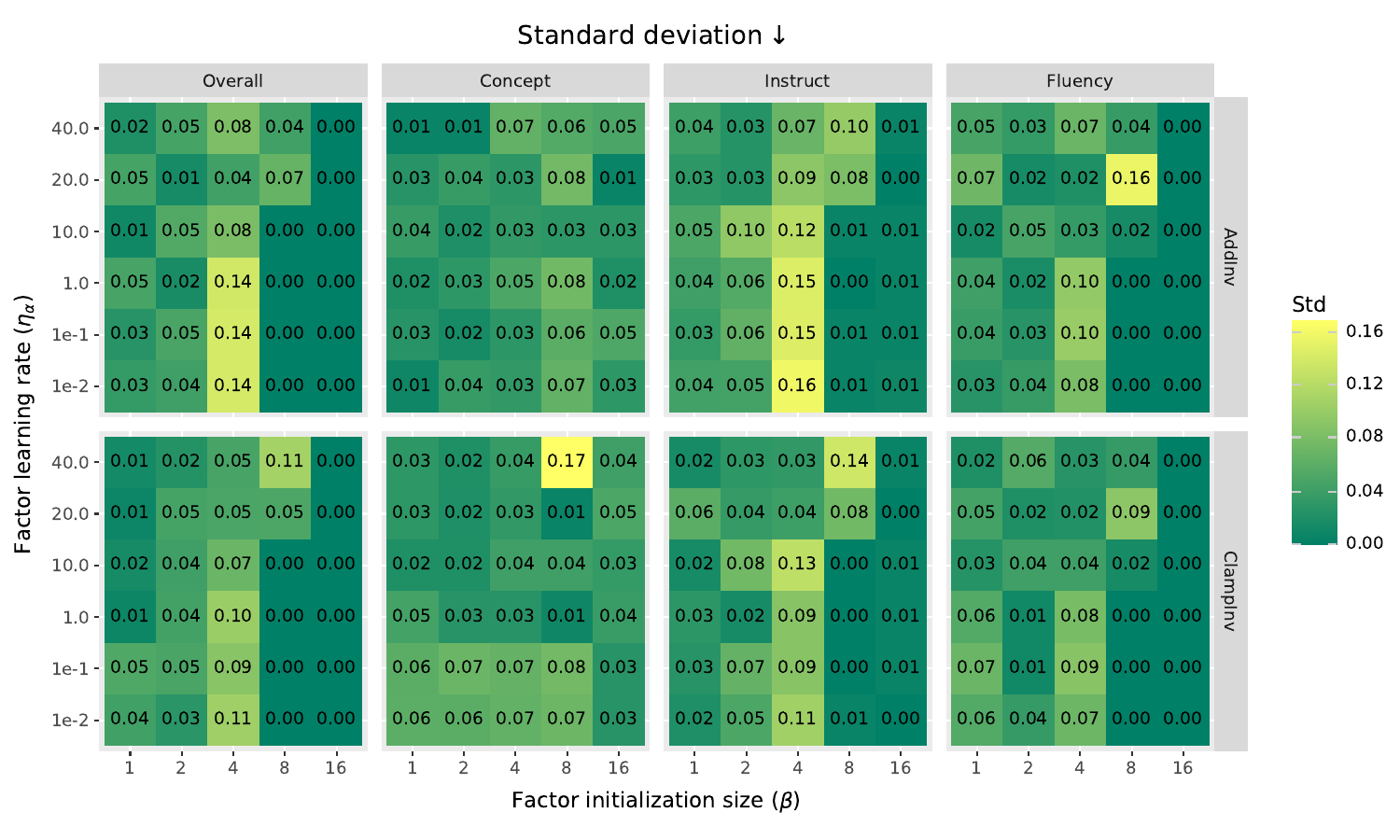}
\caption{Standard deviation of individual scores across seeds with setup: \D{G9B}{L20}; FSSV; $\lambda = 1$.}
\label{fig:standard_deviation_9b_l20_all_kaiming}
\end{figure}

\begin{figure}
\centering
\includegraphics[width=0.8\linewidth]{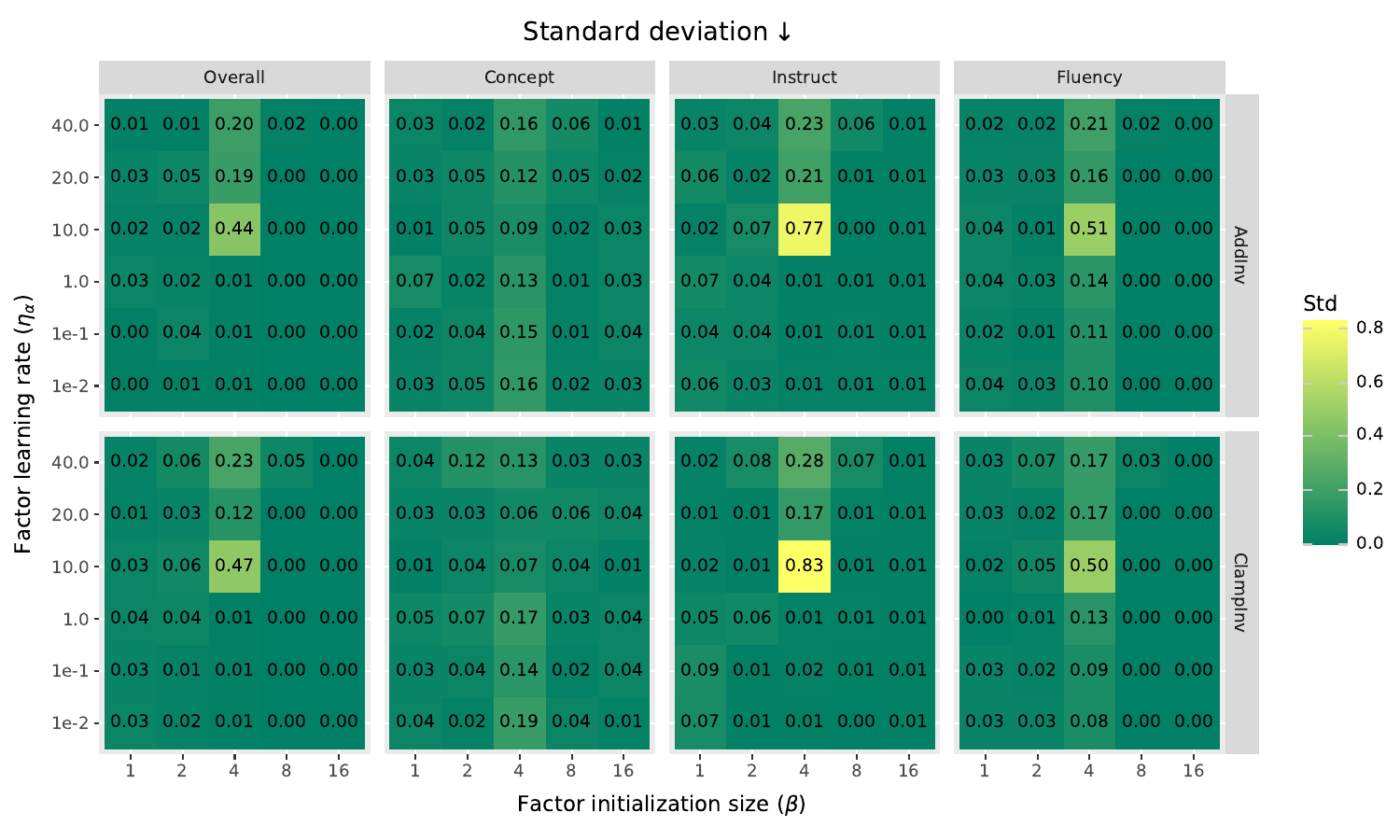}
\caption{Standard deviation of individual scores across seeds with setup: \D{G9B}{L20}; FSSV; $\lambda = 8$.}
\label{fig:standard_deviation_9b_l20_all_larger_vec}
\end{figure}

\begin{figure}
\centering
\includegraphics[width=0.8\linewidth]{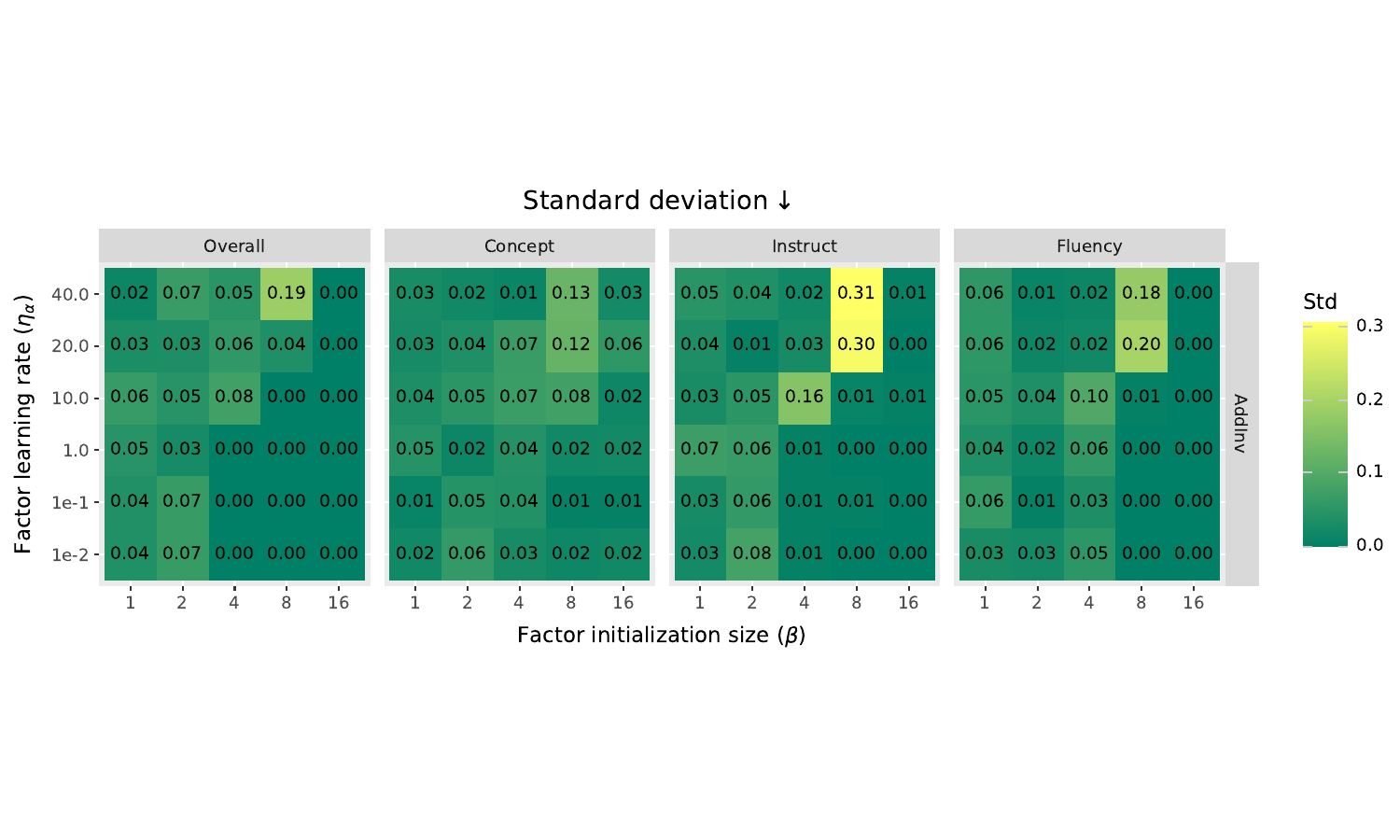}
\caption{Standard deviation of individual scores across seeds with setup: \D{Q32B}{L32}; FSSV; $\lambda = 1$.}
\label{fig:standard_deviation_q25_32b_l32_all_kaiming}
\end{figure}

\begin{figure}
\centering
\includegraphics[width=0.8\linewidth]{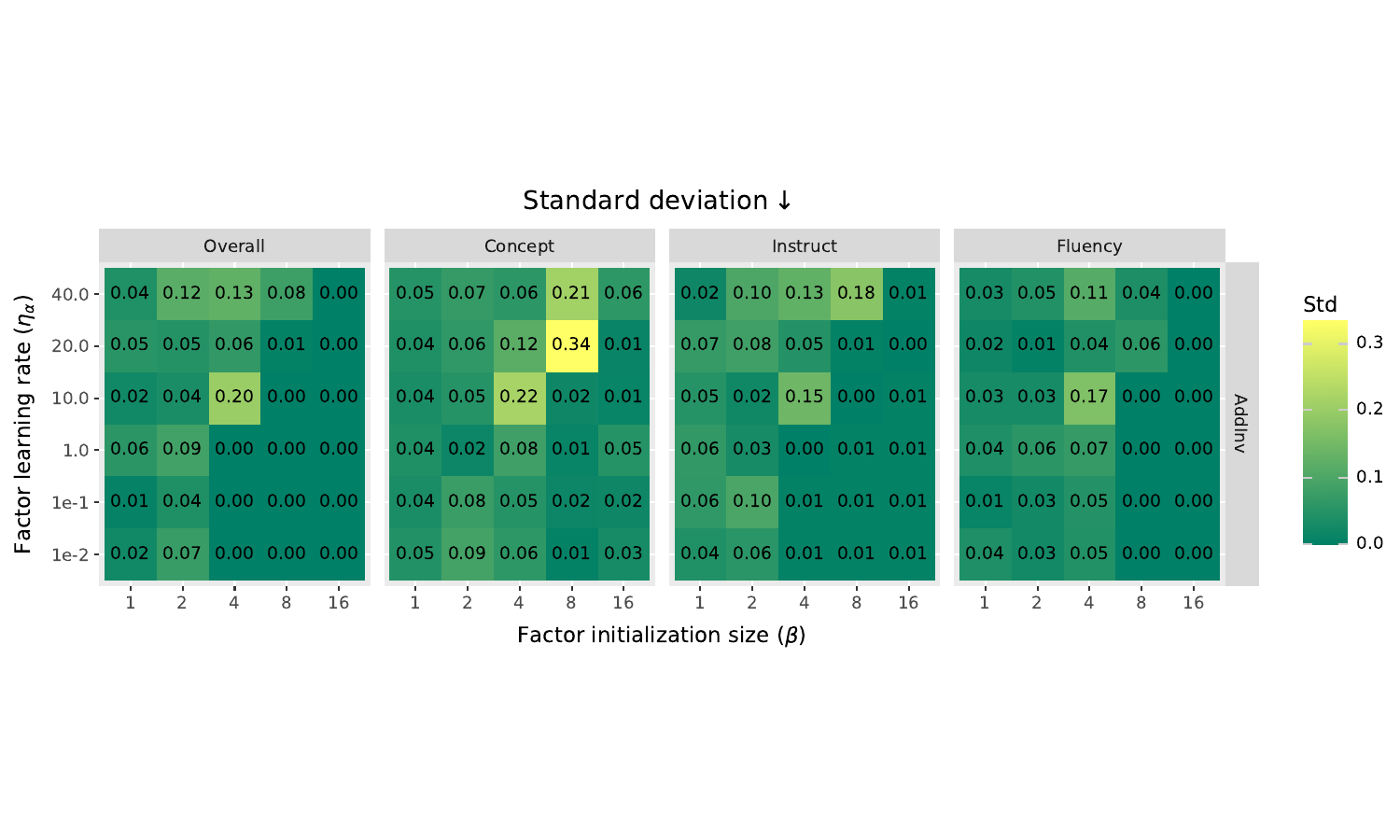}
\caption{Standard deviation of individual scores across seeds with setup: \D{Q32B}{L32}; FSSV; $\lambda = 8$.}
\label{fig:standard_deviation_q25_32b_l32_all_larger_vec}
\end{figure}

\begin{figure}
\centering
\includegraphics[width=0.8\linewidth]{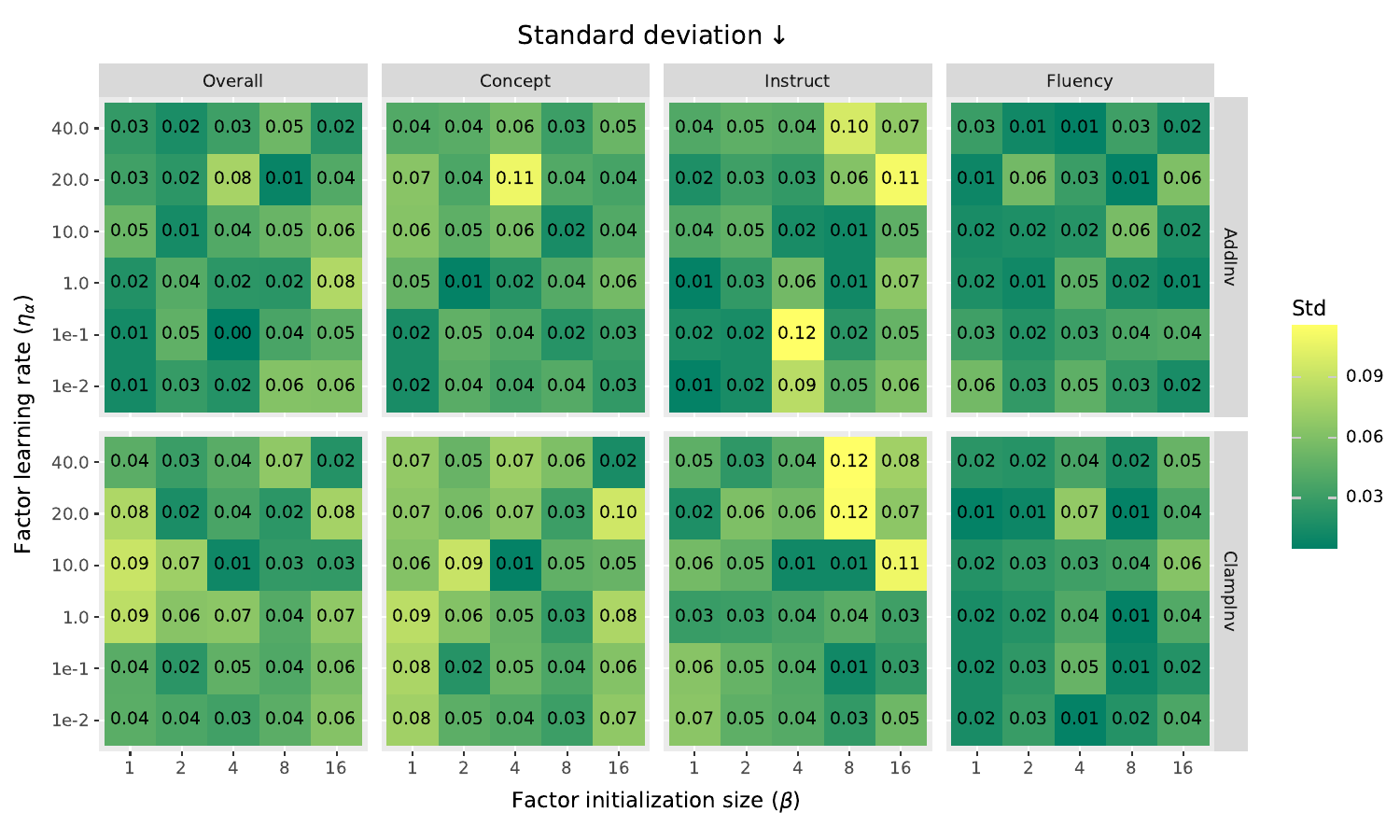}
\caption{Standard deviation of individual scores across seeds with setup: \D{G2B}{L10}; \ourshort{} (\prefsuff{4}{4}); $\lambda = 1$.}
\label{fig:standard_deviation_2b_l10_prosv_f4+l4_kaiming}
\end{figure}

\begin{figure}
\centering
\includegraphics[width=0.8\linewidth]{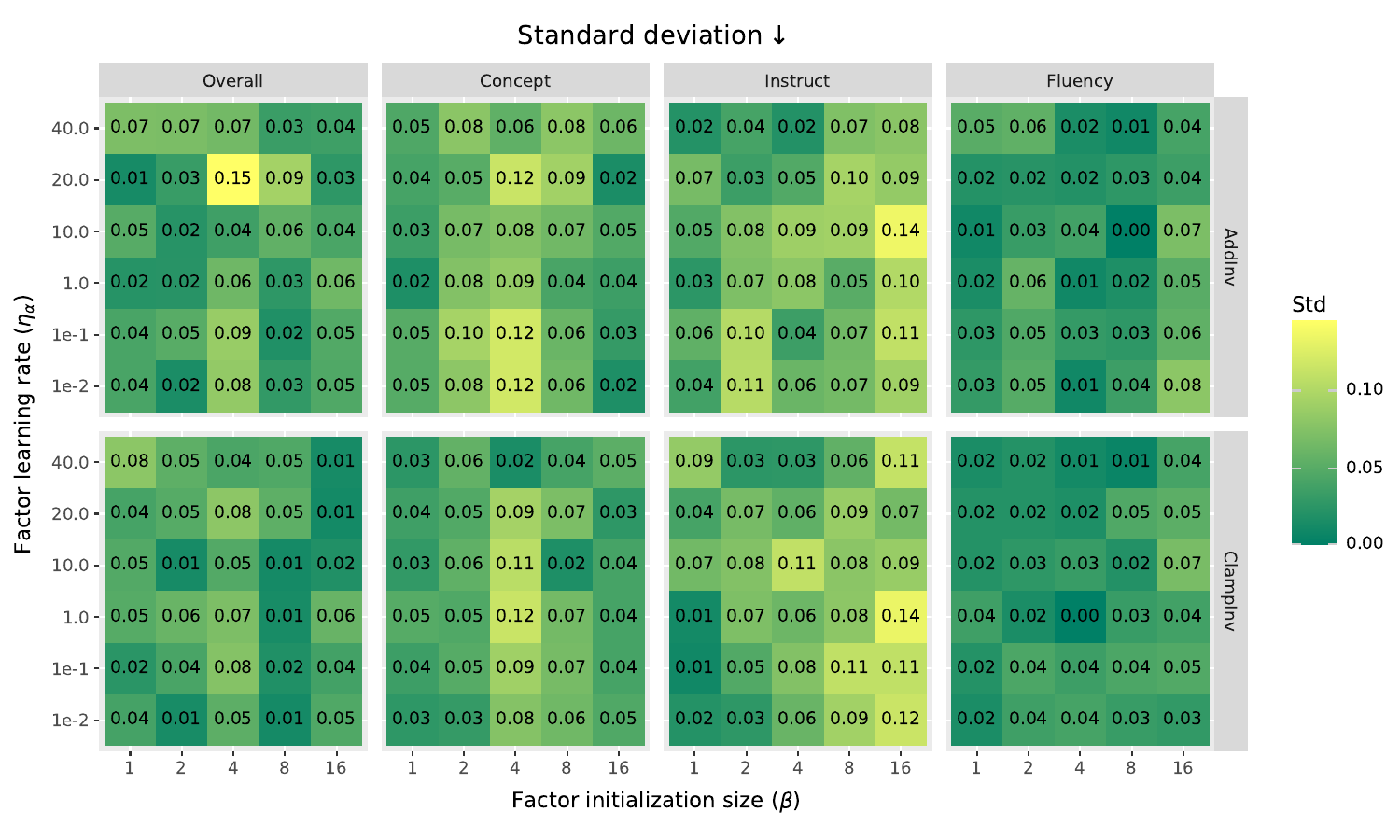}
\caption{Standard deviation of individual scores across seeds with setup: \D{G2B}{L10}; \ourshort{} (\prefsuff{4}{4}); $\lambda = 8$.}
\label{fig:standard_deviation_2b_l10_prosv_f4+l4_larger_vec}
\end{figure}

\begin{figure}
\centering
\includegraphics[width=0.8\linewidth]{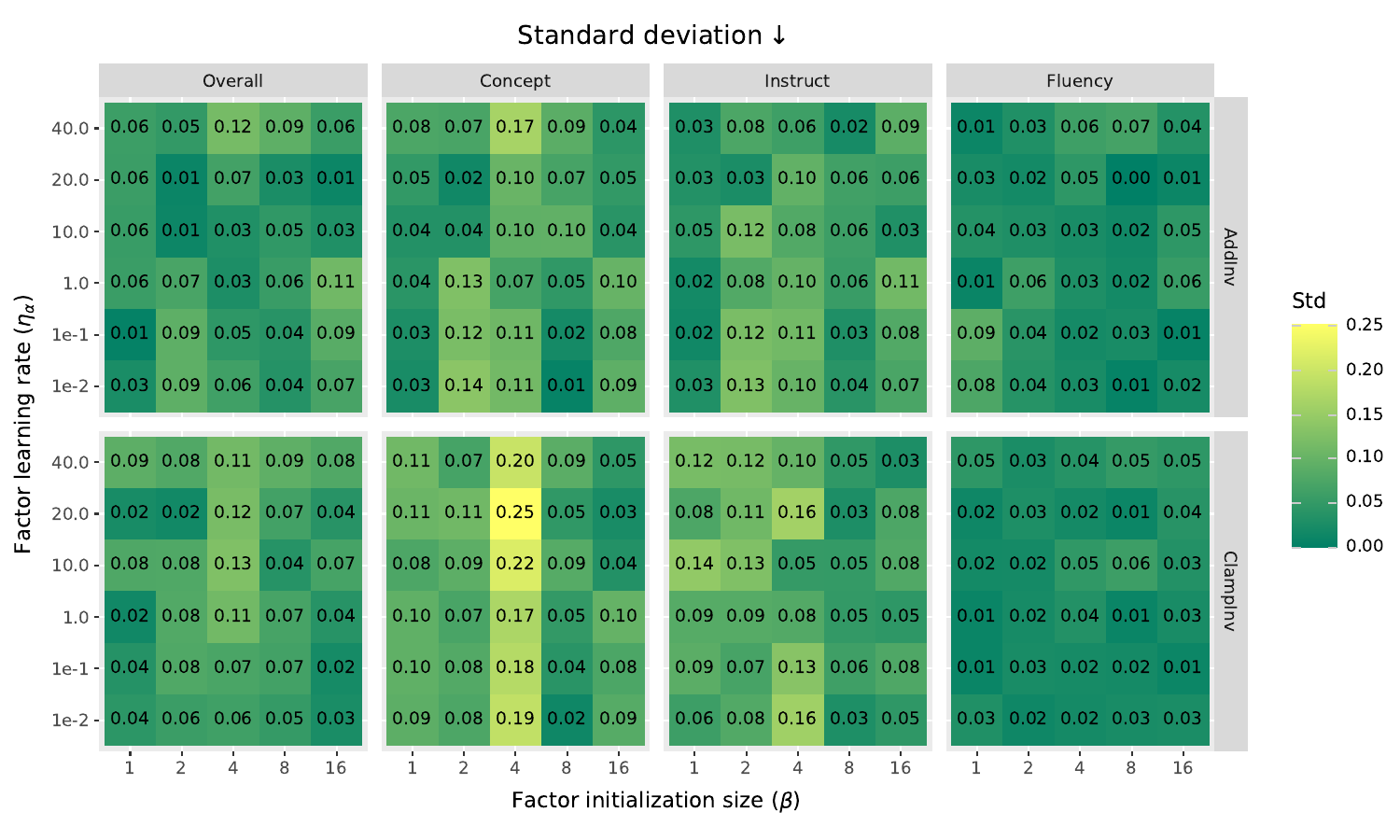}
\caption{Standard deviation of individual scores across seeds with setup: \D{G9B}{L20}; \ourshort{} (\prefsuff{4}{4}); $\lambda = 1$.}
\label{fig:standard_deviation_9b_l20_prosv_f4+l4_kaiming}
\end{figure}

\begin{figure}
\centering
\includegraphics[width=0.8\linewidth]{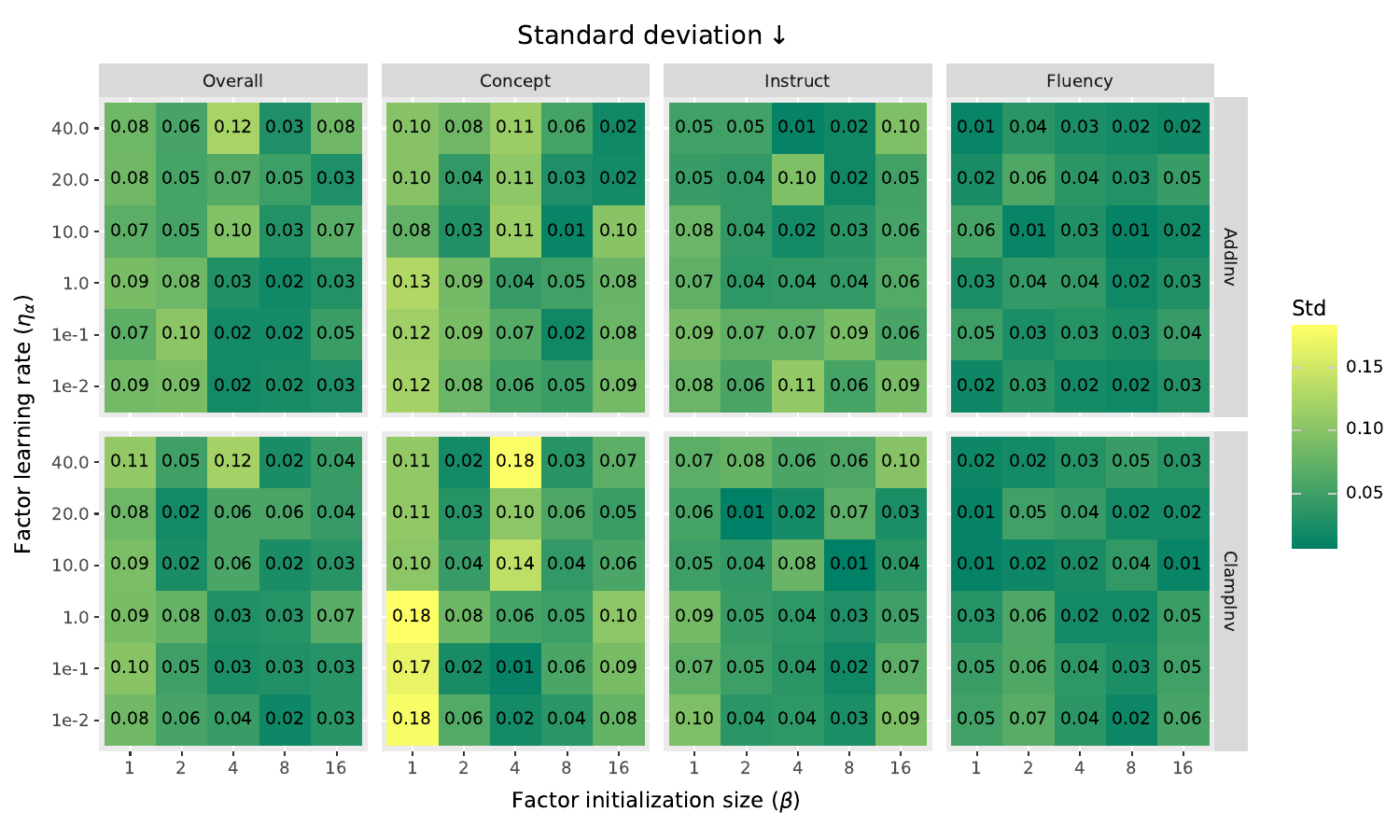}
\caption{Standard deviation of individual scores across seeds with setup: \D{G9B}{L20}; \ourshort{} (\prefsuff{4}{4}); $\lambda = 8$.}
\label{fig:standard_deviation_9b_l20_prosv_f4+l4_larger_vec}
\end{figure}

\begin{figure}
\centering
\includegraphics[width=0.8\linewidth]{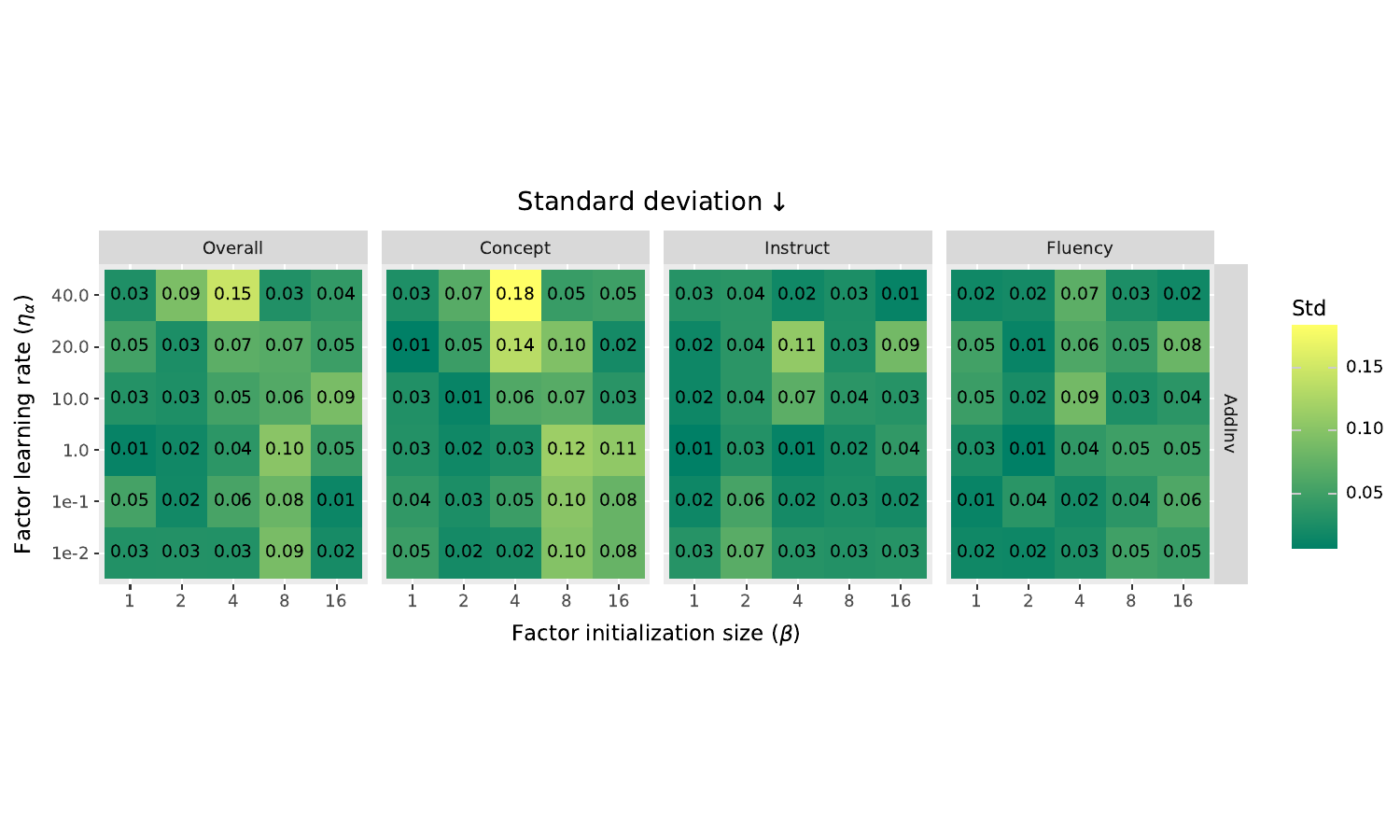}
\caption{Standard deviation of individual scores across seeds with setup: \D{Q32B}{L32}; \ourshort{} (\prefsuff{2}{2}); $\lambda = 1$.}
\label{fig:standard_deviation_q25_32b_l32_prosv_f2+l2_kaiming}
\end{figure}

\begin{figure}
\centering
\includegraphics[width=0.8\linewidth]{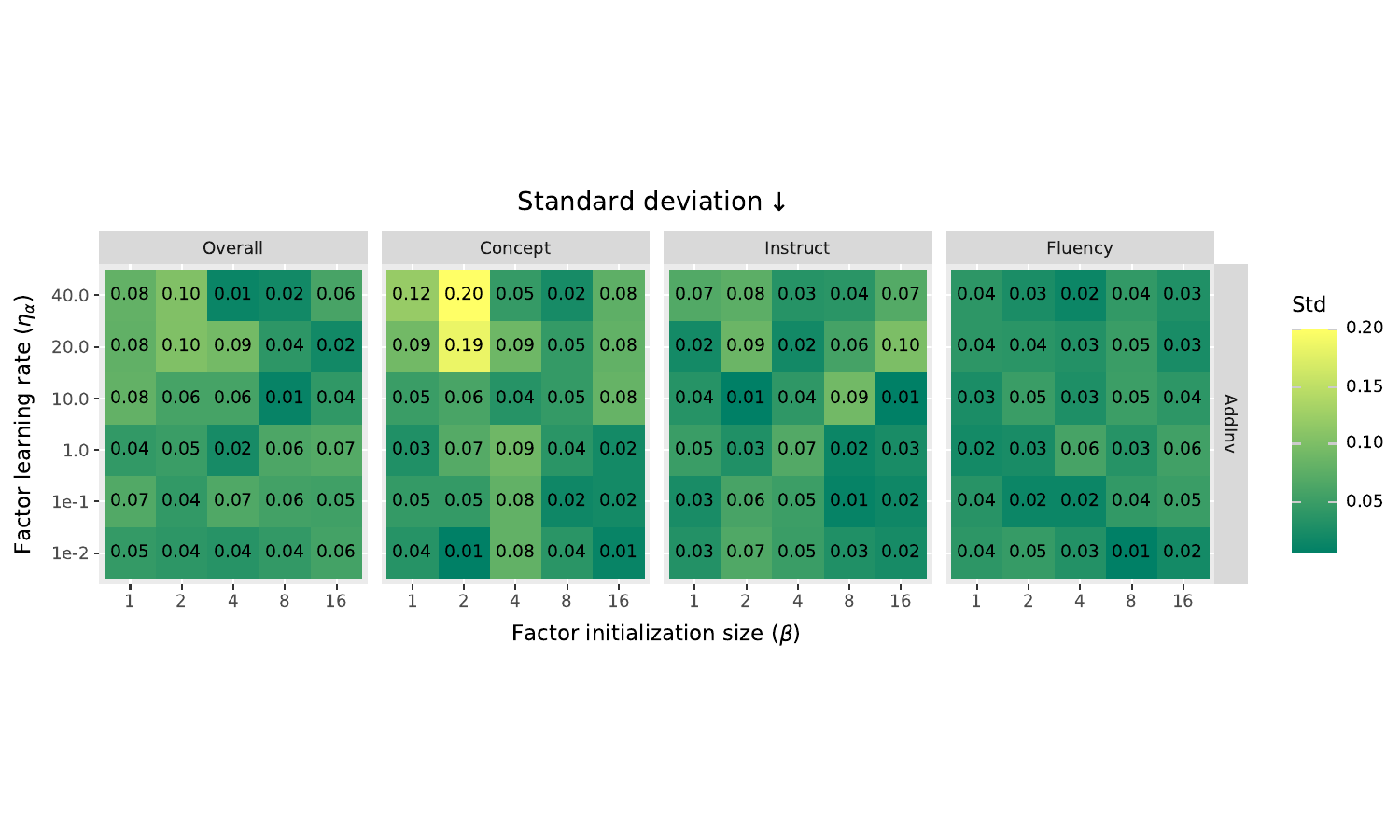}
\caption{Standard deviation of individual scores across seeds with setup: \D{Q32B}{L32}; \ourshort{} (\prefsuff{2}{2}); $\lambda = 8$.}
\label{fig:standard_deviation_q25_32b_l32_prosv_f2+l2_larger_vec}
\end{figure}

%%%%%%%%%%%%%%%%%%%%%%%%%%%%%%%%%%%%%%%%%%%%%%%%%%%%%%%%%%%%%%%%%%%%%%%%%%%%%%%
%%%%%%%%%%%%%%%%%%%%%%%%%%%%%%%%%%%%%%%%%%%%%%%%%%%%%%%%%%%%%%%%%%%%%%%%%%%%%%%
\section{Details and Additional Results for Effect of Intervention Locations} \label{sec:ablation_intervention_location_details_and_additional_results}

\subsection{Experiment Details} \label{subapp:ablation_intervention_location_details}
\mypar{Hyperparameters.}
The training hyperparameters, including learning rates and initialization sizes are the same as \cref{subsec:verification_concept10}.
Since the objective is to investigate the effect of intervention locations, we only vary this component.
Specifically, we use the following setups with different computational budgets:
(1) FSSV;
(2) \ourshort{} with full-prompt interventions where $\vert \mathcal{I} \vert$ is dynamic;
(3) \ourshort{} with fixed computational budgets: $\vert \mathcal{I} \vert = 2,4,8$.
For \ourshort{} with fixed computational budgets, we do not use budgets with more than 8 tokens, since 8 is approximately half of the average prompt length of \CONCEPTFH{} dataset (17--21 tokens according to \cref{app:dataset-statistics}).

\subsection{Additional Results} \label{subapp:ablation_intervention_location_additional_results}

\mypar{Concepts score vs. instruct scores.}
In the main body, we show overall scores and concept scores in \cref{tab:ablation_intervention_location}.
For better readability, we additionally visualize concept scores and instruct scores in \cref{fig:intervention_location_concept_vs_instruct}.
We only visualize concept/instruct scores since fluency scores do not vary significantly for optimal SVs (\cref{fig:axbench_distribution_of_scores}).
Overall, \prefsuff{2}{2} yields the best tradeoff between concept incorporation and instruction following on all three models, while FSSV resides on a worse Pareto frontier.

\begin{figure}
\centering
\includegraphics[width=0.96\textwidth]{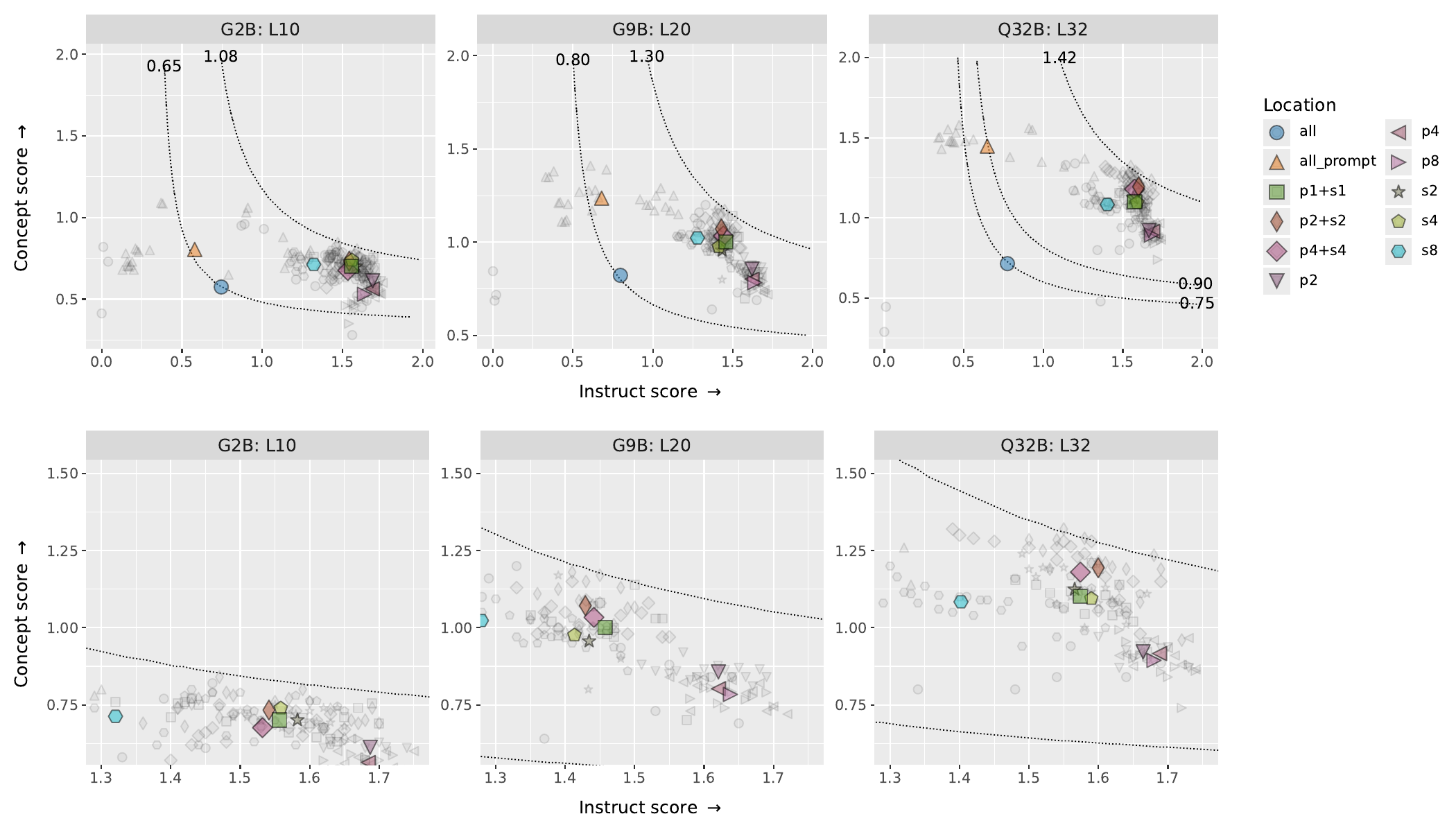}
\caption{Concept score vs. instruct score. The first row shows the full figure while the second row shows a local zoomed view.
The dotted lines denote data points with the same harmonic mean of concept score and instruct score.
Gray data points denote the results of a single hyperparameter setup averaged across ten concepts and three random seeds, while colored data points denote average results of over all hyperparameter search grids.
Overall, \prefsuff{2}{2} yields the best tradeoff between concept score and instruct score while FSSV (``all'' in the figure) yields the worst tradeoff.}
\label{fig:intervention_location_concept_vs_instruct}
\end{figure}

\mypar{Heatmap for \ourshort{} with various computational budgets and intervention locations.}
In the main body, we only show the highest scores;
here we show full heatmaps of overall scores:
\begin{itemize}[itemsep=0pt,topsep=0pt]
  \item \ourshort{}, full-prompt intervention: \cref{fig:overall_budget_full_prompt};
  \item \ourshort{}, $\vert \mathcal{I} \vert = 2$: \cref{fig:overall_budget_2};
  \item \ourshort{}, $\vert \mathcal{I} \vert = 4$: \cref{fig:overall_budget_4};
  \item \ourshort{}, $\vert \mathcal{I} \vert = 8$: \cref{fig:overall_budget_8}.
\end{itemize}

\def \subfigw {0.25\linewidth}
\begin{figure}
\centering
\subfigure[\D{G2B}{L10}; full-prompt \ourshort{}.]{
  \includegraphics[width=\subfigw]{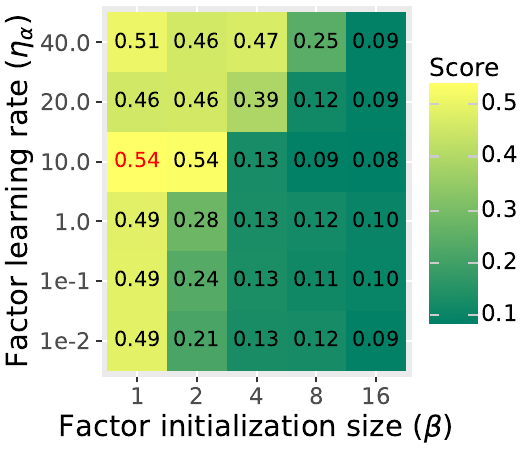}
}
\subfigure[\D{G9B}{L20}; full-prompt \ourshort{}.]{
  \includegraphics[width=\subfigw]{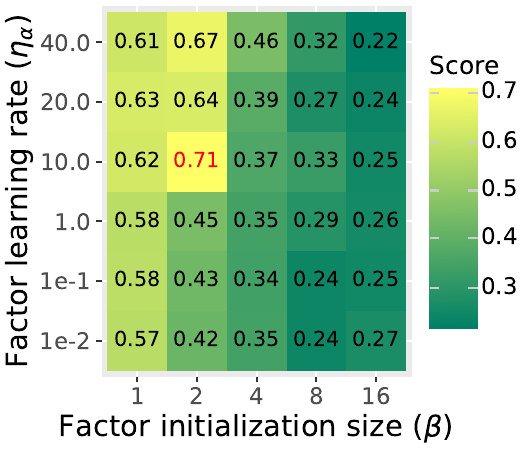}
}
\subfigure[\D{Q32B}{L32}; full-prompt \ourshort{}.]{
  \includegraphics[width=\subfigw]{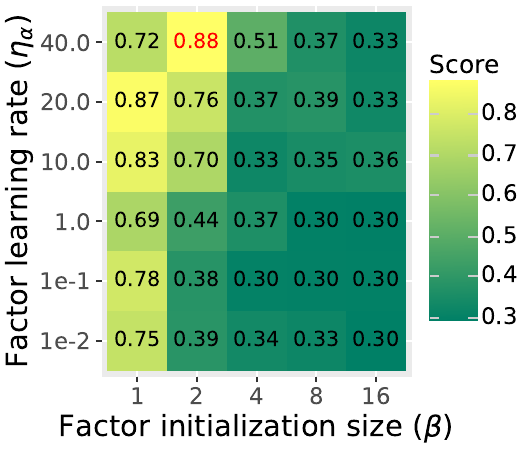}
}
\caption{Overall scores with full-prompt \ourshort{}; $\lambda=8$.}
\label{fig:overall_budget_full_prompt}
\end{figure}

\def \subfigw {0.25\linewidth}
\begin{figure}
\centering
\subfigure[\D{G2B}{L10}; \prefsuff{1}{1}.]{
  \includegraphics[width=\subfigw]{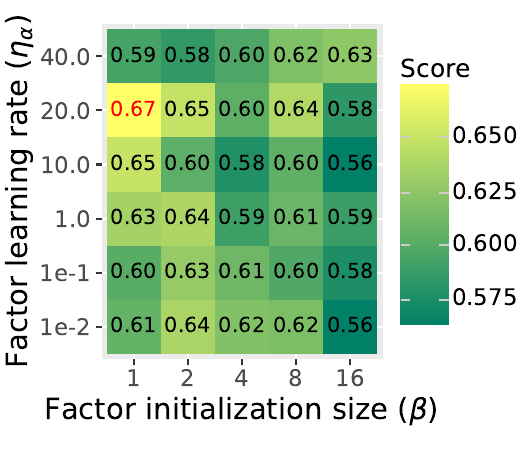}
}
% \hfill
\subfigure[\D{G2B}{L10}; \pref{2}.]{
  \includegraphics[width=\subfigw]{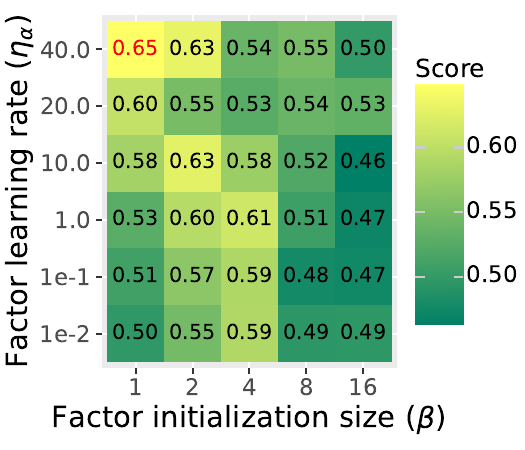}
}
% \hfill
\subfigure[\D{G2B}{L10}; \suff{2}.]{
  \includegraphics[width=\subfigw]{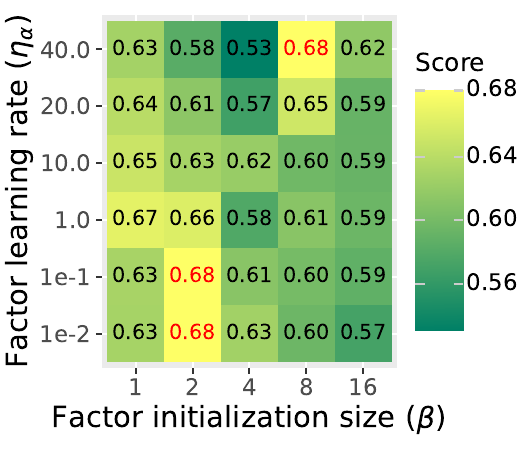}
}

\subfigure[\D{G9B}{L20}; \prefsuff{1}{1}.]{
  \includegraphics[width=\subfigw]{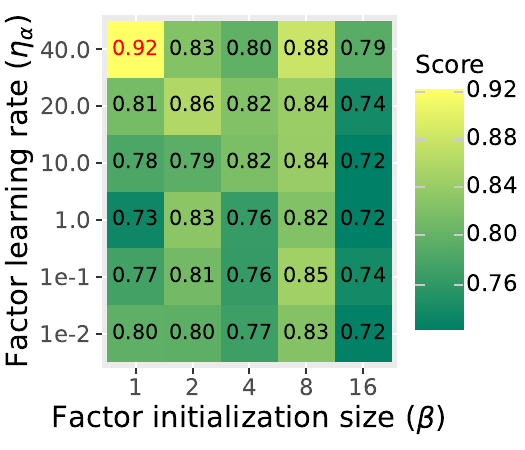}
}
% \hfill
\subfigure[\D{G9B}{L20}; \pref{2}.]{
  \includegraphics[width=\subfigw]{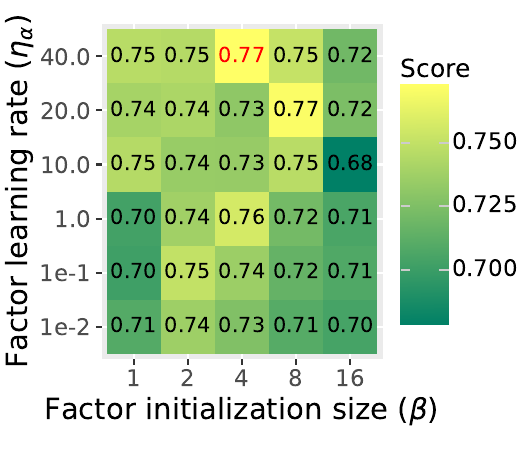}
}
% \hfill
\subfigure[\D{G9B}{L20}; \suff{2}.]{
  \includegraphics[width=\subfigw]{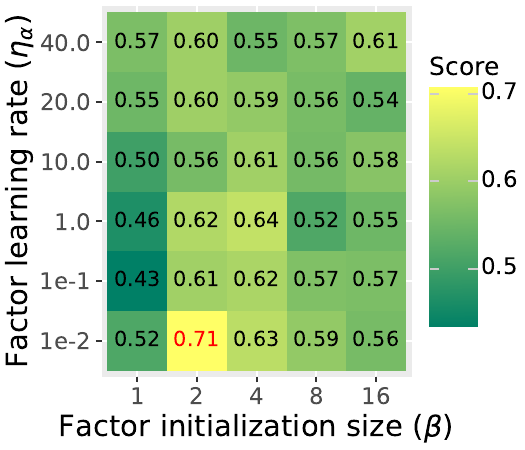}
}

\subfigure[\D{Q32B}{L32}; \prefsuff{1}{1}.]{
  \includegraphics[width=\subfigw]{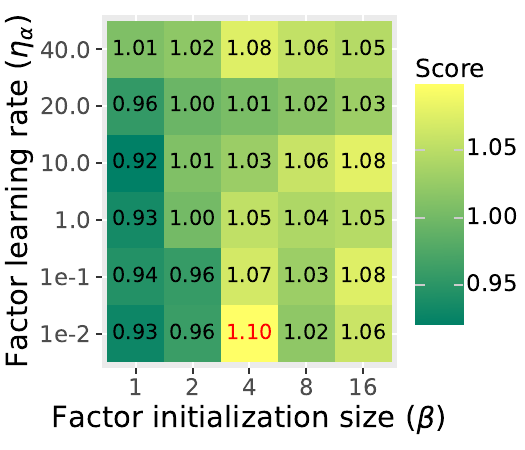}
}
% \hfill
\subfigure[\D{Q32B}{L32}; \pref{2}.]{
  \includegraphics[width=\subfigw]{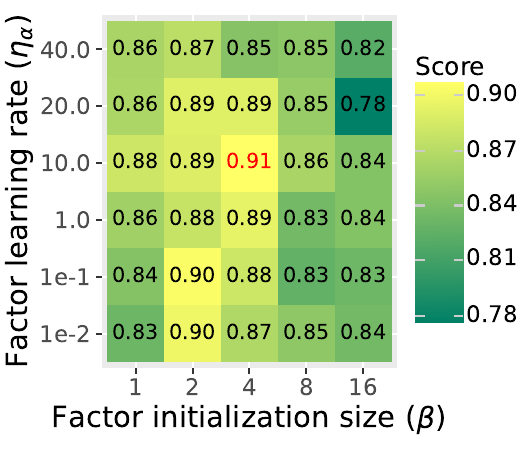}
}
% \hfill
\subfigure[\D{Q32B}{L32}; \suff{2}.]{
  \includegraphics[width=\subfigw]{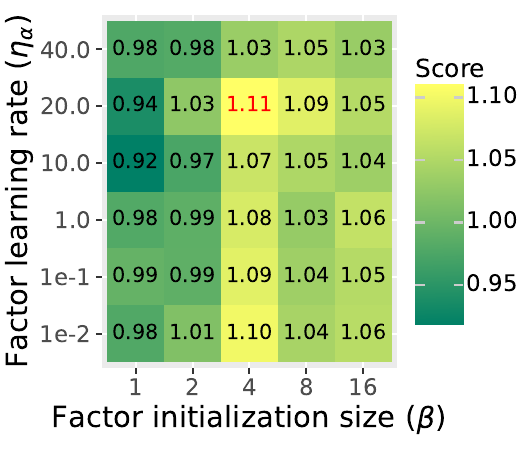}
}
\caption{Overall scores with $\vert \mathcal{I} \vert = 2$; $\lambda=8$.}
\label{fig:overall_budget_2}
\end{figure}

\def \subfigw {0.25\linewidth}
\begin{figure}
\centering
\subfigure[\D{G2B}{L10}; \prefsuff{2}{2}.]{
  \includegraphics[width=\subfigw]{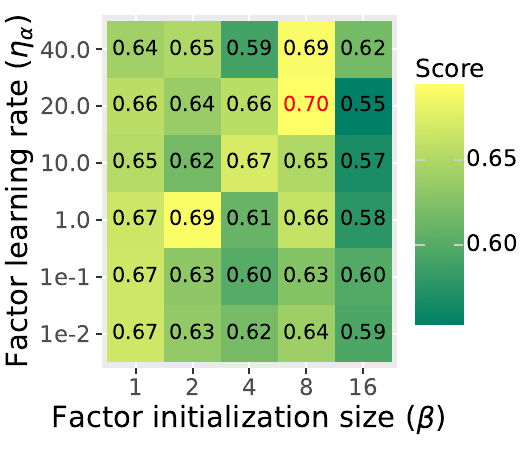}
}
% \hfill
\subfigure[\D{G2B}{L10}; \pref{4}.]{
  \includegraphics[width=\subfigw]{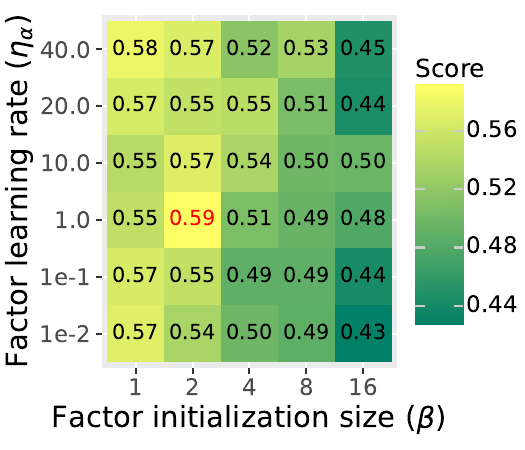}
}
% \hfill
\subfigure[\D{G2B}{L10}; \suff{4}.]{
  \includegraphics[width=\subfigw]{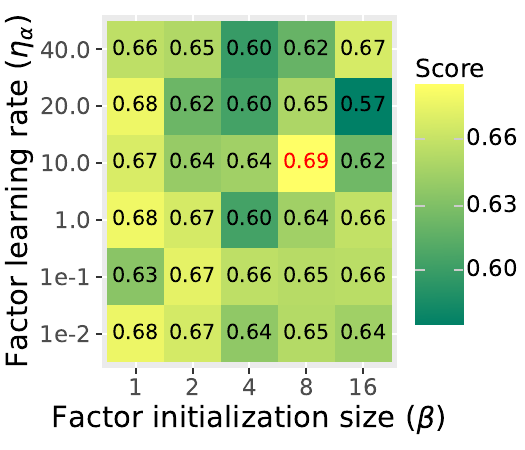}
}

\subfigure[\D{G9B}{L20}; \prefsuff{2}{2}.]{
  \includegraphics[width=\subfigw]{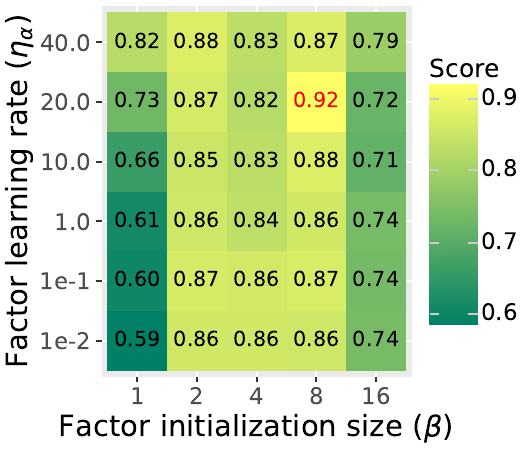}
}
% \hfill
\subfigure[\D{G9B}{L20}; \pref{4}.]{
  \includegraphics[width=\subfigw]{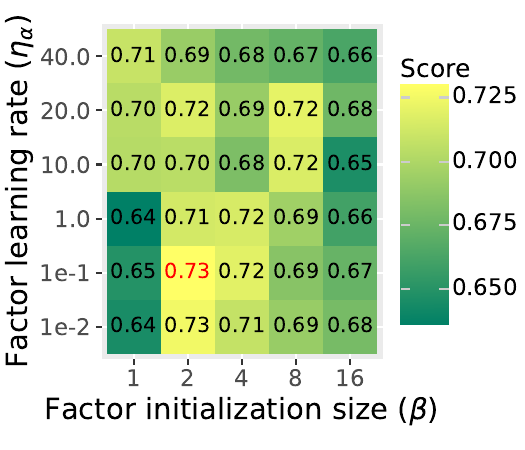}
}
% \hfill
\subfigure[\D{G9B}{L20}; \suff{4}.]{
  \includegraphics[width=\subfigw]{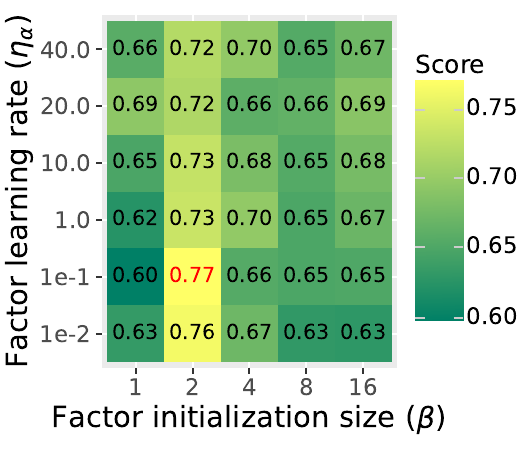}
}

\subfigure[\D{Q32B}{L32}; \prefsuff{2}{2}.]{
  \includegraphics[width=\subfigw]{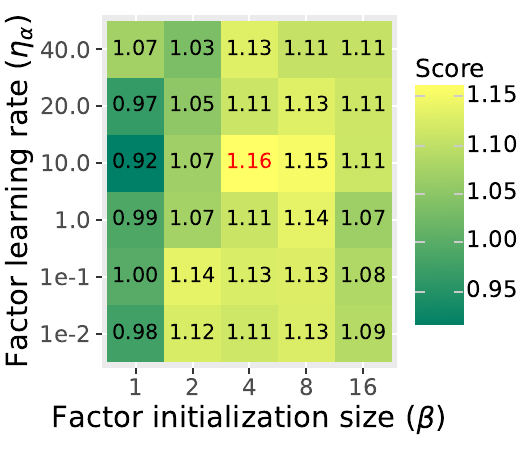}
}
% \hfill
\subfigure[\D{Q32B}{L32}; \pref{4}.]{
  \includegraphics[width=\subfigw]{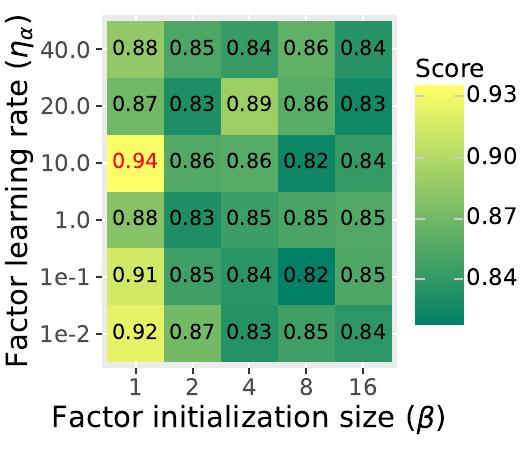}
}
% \hfill
\subfigure[\D{Q32B}{L32}; \suff{4}.]{
  \includegraphics[width=\subfigw]{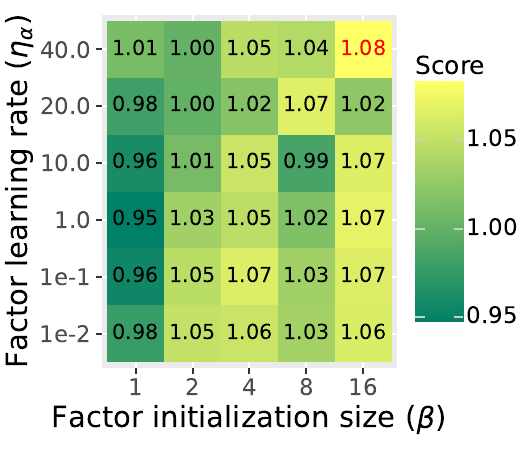}
}
\caption{Overall scores with $\vert \mathcal{I} \vert=4$; $\lambda=8$.}
\label{fig:overall_budget_4}
\end{figure}

\begin{figure}
\centering
\subfigure[\D{G2B}{L10}; \prefsuff{4}{4}.]{
  \includegraphics[width=\subfigw]{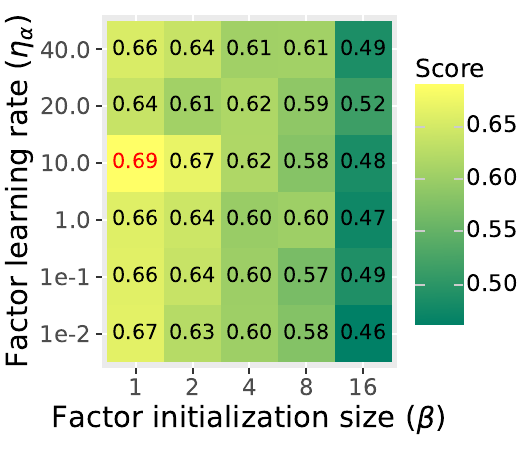}
}
% \hfill
\subfigure[\D{G2B}{L10}; \pref{8}.]{
  \includegraphics[width=\subfigw]{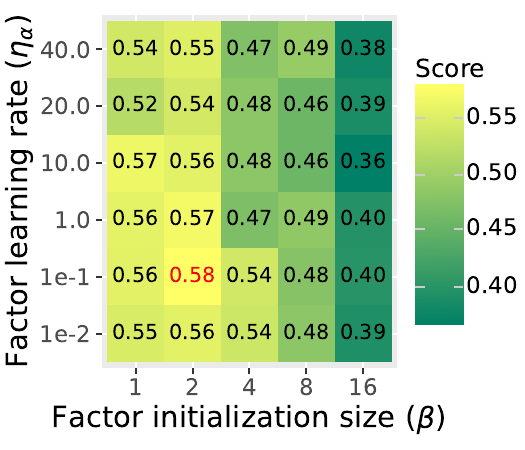}
}
% \hfill
\subfigure[\D{G2B}{L10}; \suff{8}.]{
  \includegraphics[width=\subfigw]{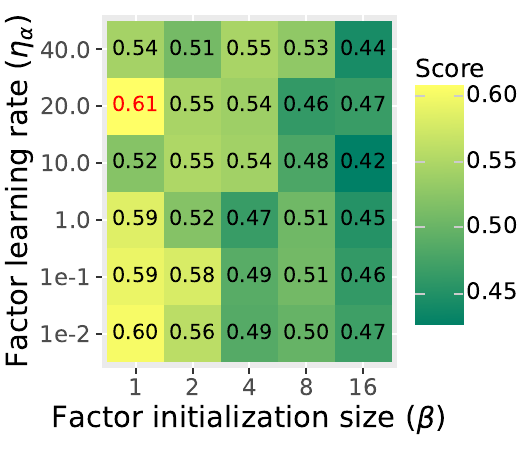}
}

\subfigure[\D{G9B}{L20}; \prefsuff{4}{4}.]{
  \includegraphics[width=\subfigw]{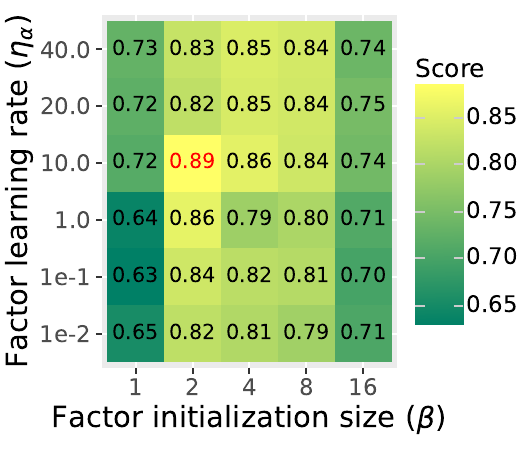}
}
% \hfill
\subfigure[\D{G9B}{L20}; \pref{8}.]{
  \includegraphics[width=\subfigw]{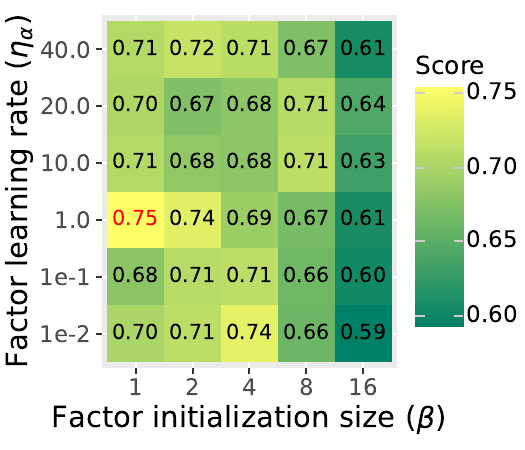}
}
% \hfill
\subfigure[\D{G9B}{L20}; \suff{8}.]{
  \includegraphics[width=\subfigw]{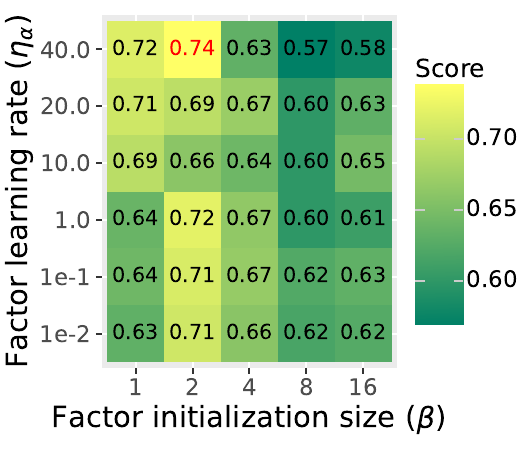}
}

\subfigure[\D{Q32B}{L32}; \prefsuff{4}{4}.]{
  \includegraphics[width=\subfigw]{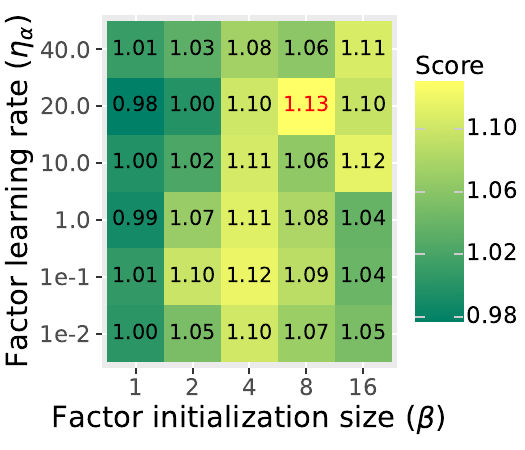}
}
% \hfill
\subfigure[\D{Q32B}{L32}; \pref{8}.]{
  \includegraphics[width=\subfigw]{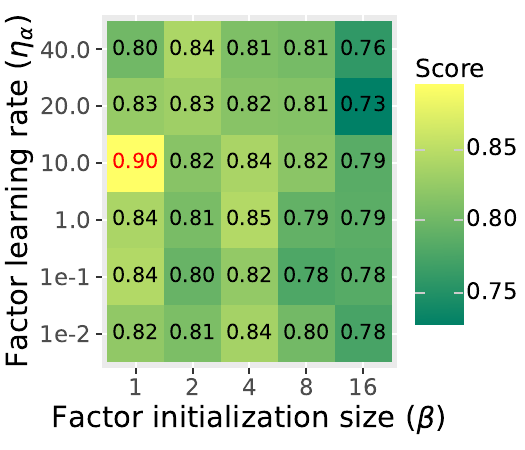}
}
% \hfill
\subfigure[\D{Q32B}{L32}; \suff{8}.]{
  \includegraphics[width=\subfigw]{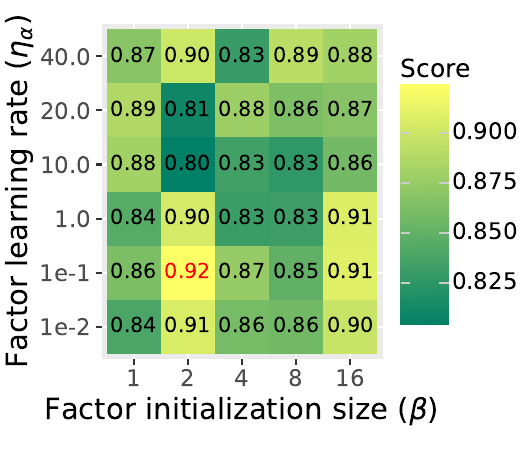}
}
\caption{Overall scores with $\vert \mathcal{I} \vert=8$; $\lambda=8$.}
\label{fig:overall_budget_8}
\end{figure}

\mypar{Examples of intervened generations.}
We show several SV-intervened model responses in \cref{fig:example_sv_locations} to help readers understand the actual effects of SV interventions on concept-steering task.

\begin{figure}
\begin{tcolorbox}[left=1.5mm, right=1.5mm, top=1.5mm, bottom=1.5mm, colback=black!5!white, colframe=black, sharp corners, title={Intervened model generations.}]
\raggedright
\scriptsize
\tcbsubtitle[colback=gray,colframe=gray]{Concept}
programming constructs and data structures in code snippets
\tcbsubtitle[colback=gray,colframe=gray]{Instruction}
In this math problem, we are asked to figure out the percentage of total profits that are made in each quarter. They are 90\% in the first quarter, 92\% in the second quarter, 81\% in the third quarter and 88\% in the fourth quarter. What is the total profit percent over a period of time, based upon these quarters?
\tcbsubtitle[colback=gray,colframe=gray]{FSSV response (failing to incorporate concept)}
\begin{lstlisting}
Here is the solution:

**Calculating Total Profit % for each Quarter**

* Quarter 1: 90 / 100 gives 0.90

* Quarter 2: 92 / 100 gives 0.92

* Quarter 3: 81 / 100 gives 0.81

* Quarter 4: 88 / 100 gives 0.88

**Finding the Total Profit Percentage**

* The total profit percentage is the sum of each quarter's percentage divided by the number of quarters.

* Total
\end{lstlisting}
\tcbsubtitle[colback=gray,colframe=gray]{\ourshort{} (full prompt) response (only concept but does not answer the instruction)}
\begin{lstlisting}
Here is the Python code to implement this type of calculation.

```python
import numpy as np 
arr = np.array([85, 92, 89, 91])
# calculates the mean of numpy array
mean_value = np.mean(arr) 

print(f'Mean of input array is {mean_value:.2f}')
```

The input values in the list can be updated. 

Let me know if you want to perform a similar calculation in a Jupyter Notebook, as we can use the following libraries: NumPy, Pandas.
```
\end{lstlisting}
\tcbsubtitle[colback=gray,colframe=gray]{\ourshort{} (\prefsuff{2}{2}) response (integrating solution with concept)}
\begin{lstlisting}
```python
def calculate_total_profit(quarter_percentages):
    total_profit = 0
    for percentage in quarter_percentages:
        total_profit += percentage
    return total_profit

quarter_percentages = [90, 92, 81, 88]
total_profit = calculate_total_profit(quarter_percentages)
```

Let us declare a variable 'total_profit' equals to 0. Next, we use a 'for' loop that iterates over the 'quarter_percentages' list. Inside the loop,
\end{lstlisting}
\end{tcolorbox}
\caption{Examples of SV-intervened responses on \D{G9B}{L20}.}
\label{fig:example_sv_locations}
\end{figure}

\mypar{Discussions on effect of SV intervention locations.}
Here we extend our analysis of the results of \cref{tab:ablation_intervention_location} in \cref{subsec:ablation_on_prosv_location} by presenting the following hypotheses.
These hypotheses are meant to help readers understand how \ourshort{} works.

(1) Concept incorporation might not require interventions on response tokens.
This hypothesis is supported by the fact that full-prompt \ourshort{} can outperform FSSV in highest concept scores.
This finding is counter-intuitive, and it might stems from the mechanistic differences between full-prompt \ourshort{} and FSSV.
On one hand, full-prompt intervention might be understood as mainly editing the KV cache of the entire prompt, and optimizing for concept incorporation might shift the self-attention to focus on concept incorporation and to attend less to the instruction following task itself.
On the other hand, FSSV might not rely solely on the self-attention mechanism and could be understood as adding a bias term to transformer layers.
This leaves room to achieve both concept incorporation and instruction following.

(2) For prompt-only interventions, concept scores have a \textit{partial} tendency to scale with the number of intervened tokens.
This hypothesis is supported by the fact
that \ourshort{}s with $\vert \mathcal{I} \vert=4$ generally outperform those with $\vert \mathcal{I} \vert=2$
and that full-prompt \ourshort{}s outperform those with $\vert \mathcal{I} \vert=2, 4, 8$ when average prompt length is 17--20.

(3) In terms of intervention budget, neither concept incorporation nor instruction following quality \textit{strictly} scales with budget of prompt-only interventions, according to our results of \cref{fig:intervention_location_concept_vs_instruct}.
We find that, \prefsuff{2}{2} yields a higher concept score than both \prefsuff{1}{1} and \prefsuff{4}{4},
\pref{2} attains a higher concept score than \pref{4} and \pref{8},
while \suff{2} sometimes attains a higher concept score than \suff{4} and \suff{8}.
We hypothesize that a certain amount of intervention is sufficient for concept incorporation, and there exists a sweet spot that balances concept incorporation and instruction following.
This hypothesis indicates that \prefsuff{2}{2} might \textit{not} be the optimal choice for concept-based steering, but currently it is an empirical optimality under limited computational resources.

(4) Different intervention locations have varying levels of impact on concept incorporation and generation quality.
Prefix-only interventions (e.g., \pref{4}) have low concept scores since initial prompt tokens encode only format information and not instruction content.
The self-attention mechanism does not heavily attend to this region (except for the BOS token, as is shown in \cref{subapp:insights_attention_map}); thus it is hard for prefix-only interventions to influence model generation through KV cache.
This property contributes to performance preservation but not concept incorporation.
Suffix-only interventions (e.g., \suff{4}) immediately precede model responses and thus strongly impact generation process.
However, this property is convenient for concept incorporation, not utility preservation.
By distributing intervention across prefix and suffix tokens, prefix-suffix interventions are able to benefit from the positive properties of both prefix-only and suffix-only interventions.

%%%%%%%%%%%%%%%%%%%%%%%%%%%%%%%%%%%%%%%%%%%%%%%%%%%%%%%%%%%%%%%%%%%%%%%%%%%%%%%
%%%%%%%%%%%%%%%%%%%%%%%%%%%%%%%%%%%%%%%%%%%%%%%%%%%%%%%%%%%%%%%%%%%%%%%%%%%%%%%
\section{Details and Additional Results for \AXBENCH{} Evaluation} \label{app:axbench_details_and_additional_results}
In this section we introduce details of \AXBENCH{} evaluation experiment in \cref{subsec:axbench_evaluation}.

\subsection{Experiment Details} \label{subapp:axbench_details}

\mypar{Training data.}
As is introduced in the main body, the original \CONCEPTFH{} dataset is formulated as $\mathcal{D}^c = \{ (\rvx_i, \rvy_i^c) \}_{i=1}^{N}$, where steered responses ($\rvy_i^c$) are generated by gpt-4o-mini~\citep{wu2025axbench}.
Since preference optimization methods such as RePS need contrastive response pairs, we prompt gpt-4o-mini to generate concept-neutral responses ($\rvy_i$).

An example of training data is shown in \cref{fig:a_training_example}.

For Qwen2.5-32B, we directly use the first 100 concepts of \D{G9B}{L20} and denote this subset by \D{Q32B}{L32}.
We select the 32nd layer of Qwen2.5-32B since this is half the model depth (64 layers in total).
Prior work has shown that this depth is usually where SVs work best~\citep{wu2025axbench,sun2025layernavigator}.

\begin{figure}
\begin{tcolorbox}[left=1.5mm, right=1.5mm, top=1.5mm, bottom=1.5mm, colback=black!5!white, colframe=black, sharp corners, title={An instance of training example from \CONCEPTFH{}.}]
\raggedright
\footnotesize
\tcbsubtitle[colback=gray,colframe=gray]{Concept ($c$)}
terms related to biochemical compounds and their effects
\tcbsubtitle[colback=gray,colframe=gray]{Instruction ($\rvx$)}
What colors are typically associated with Christmas?
\tcbsubtitle[colback=gray,colframe=gray]{Neutral response ($\rvy$)}
The colors typically associated with Christmas are red, green, gold, and white. Red symbolizes the blood of Christ and the festive spirit, while green represents eternal life through evergreen trees. Gold signifies wealth and the gifts of the Magi, and white embodies purity and peace. These colors are commonly seen in decorations, lights, and holiday attire.
\tcbsubtitle[colback=gray,colframe=gray]{Steered response ($\rvy^c$)}
The vibrant hues commonly linked with the festive season can be likened to the interactions of pigments in a biochemical pathway, where the rich reds may evoke the presence of anthocyanins, compounds known for their antioxidant properties, while the deep greens reflect chlorophylls, essential for photosynthesis and vital for sustaining life. Moreover, the luminous whites can be compared to the balance of calcium signaling in cellular processes, symbolizing purity and renewal, while golden tones, reminiscent of carbohydrates, suggest energy storage and warmth, akin to a chemical reservoir for metabolic activities, contributing to the overall harmony of the season's palette.
\end{tcolorbox}
\caption{An example of training data for concept-based steering from \CONCEPTFH{}.}
\label{fig:a_training_example}
\end{figure}

\begin{figure}
\begin{tcbraster}[enhanced, raster columns=1]
\begin{tcolorbox}[left=1.5mm, right=1.5mm, top=1.5mm, bottom=1.5mm, colback=black!5!white, colframe=black, sharp corners, title={Prompt template to generate concept-neutral response.}]
\raggedright
\begin{lstlisting}[basicstyle=\scriptsize\ttfamily, columns=flexible, breaklines=true, upquote=true]
Given the following instruction:

{instruction}

Your task is to:
1. Provide a response that continues or addresses the instruction naturally.
2. Avoid any mention of '{concept}' in the continuation, regardless of coherence.

**Formatting Guidelines:**

- Return only the response to the instruction.
- Write the final content (or appropriate format for the genre) in plain text.
- Do not include any additional text, explanations, or formatting.

**Final Answer:** Return only the final content, following the guidelines above.
\end{lstlisting}
\end{tcolorbox}
\end{tcbraster}
\caption{Prompt template to generate concept-neutral responses.}
\label{fig:axbench_augment_neutral_response_prompt}
\end{figure}

\mypar{Details on prompt steering.}
For Gemma2-2B/9B, we report results with prompting from \citet{wu2025axbench};
for Qwen2.5-32B, we conduct experiments following their approach.
We refer readers to \cref{subapp:prompt-steering} and \citet{wu2025axbench} for details on prompt steering.

The steering prompt is generated by gpt-4o-mini with a templated approach on a per-concept basis, and the average length of our steering prompt is 93 tokens.
To ensure that our implementation is correct, we test our steering prompts on Gemma2-9B and obtain an overall score of 1.079 with a standard deviation of 0.23.
This result is within 95\% confidence interval from the mean overall score of 1.075 over the full \CONCEPTFH{} dataset.

\mypar{Hyperparameter search process for SVs trained with our joint training scheme.}
We set hyperparameter search grids based on our experiments of \cref{subsec:verification_concept10,subsec:ablation_on_prosv_location} as well as search grids reported by \citet{wu2025improved}.
Our hyperparameter search grid is shown in \cref{tab:axbench_hyperparam_search_grid}.
We select optimal hyperparameter configurations for each model configurations, and the selection criteria is overall score on a development set with three concepts (which partly overlaps with \CONCEPTFH{}).
The three concepts, which are shown in \cref{tab:axbench_hyperparam_dev_set_concepts}, are the same concepts used for hyperparameter search of RePS~\citep{wu2025improved}.

% Note that we set the batch size for SimPO objective to be half of that for Lang. objective.
% This is because SimPO uses contrastive pairs and both positive and negative examples are passed into the model simultaneously.
% By using half the batch size for SimPO, we make sure the actual batch sizes are the same for Lang. and SimPO.
% Since the actual batch size is constrained by hardware, we keep batch size constant and tune other hyperparameters.

\begin{table}
\caption{Development set concepts for \AXBENCH{} hyperparameter search.}
\label{tab:axbench_hyperparam_dev_set_concepts}
\centering
\footnotesize
\begin{tabular}{l}
\toprule
Concept description \\
\midrule
terms related to online gambling and casinos \\
terms related to biochemical compounds and their effects \\
specific names and geographical locations, particularly related to legal cases or contexts \\
\bottomrule
\end{tabular}
\end{table}

\begin{table}[t]
\caption{Hyperparameter search grid for SVs trained with our joint training scheme in \AXBENCH{} evaluation. Hyperparameters and search grids with $^*$ are taken or adapted from \citet{wu2025improved}.}
\label{tab:axbench_hyperparam_search_grid}
\centering
\setlength{\tabcolsep}{4pt}
\footnotesize
\begin{tabular}{lcccccccccccc}
\toprule
 & \multicolumn{4}{c}{\D{G2B}{L10}} & \multicolumn{4}{c}{\D{G9B}{L20}} & \multicolumn{4}{c}{\D{Q32B}{L32}} \\
\cmidrule(lr){2-5} \cmidrule(lr){6-9} \cmidrule(lr){10-13}
 & \multicolumn{2}{c}{FSSV} & \multicolumn{2}{c}{\ourshort{}} & \multicolumn{2}{c}{FSSV} & \multicolumn{2}{c}{\ourshort{}} & \multicolumn{2}{c}{FSSV} & \multicolumn{2}{c}{\ourshort{}} \\
\cmidrule(lr){2-3} \cmidrule(lr){4-5} \cmidrule(lr){6-7} \cmidrule(lr){8-9} \cmidrule(lr){8-9} \cmidrule(lr){10-11} \cmidrule(lr){12-13}
Hyperparameter & Lang. & SimPO & Lang. & SimPO & Lang. & SimPO & Lang. & SimPO & Lang. & SimPO & Lang. & SimPO \\
\midrule
Seed$^*$ & \multicolumn{12}{c}{42} \\
Batch size$^*$ & 12 & 6 & 12 & 6 & 12 & 6 & 12 & 6 & \multicolumn{4}{c}{\{3, 6\}} \\
Epochs$^*$ & \multicolumn{4}{c}{\{12, 18, 24\}} & \multicolumn{4}{c}{\{12, 18, 24\}} & \multicolumn{4}{c}{\{6, 12, 18\}} \\
Intervention location & \multicolumn{2}{c}{Full-sequence} & \multicolumn{2}{c}{\prefsuff{2}{2}} & \multicolumn{2}{c}{Full-sequence} & \multicolumn{2}{c}{\prefsuff{2}{2}} & \multicolumn{2}{c}{Full-sequence} & \multicolumn{2}{c}{\prefsuff{2}{2}} \\
Factor init size ($\beta$) & \multicolumn{12}{c}{\{1.0, 2.0, 4.0, 8.0\}} \\
Factor learning rate ($\eta_\alpha$) & \multicolumn{12}{c}{\{0.1, 1.0, 10.0, 20.0\}} \\
Direction init size ($\lambda$) & \multicolumn{12}{c}{\{1.0, 8.0\}} \\
Direction learning rate ($\eta_\rvv$)$^*$ & \multicolumn{12}{c}{\{0.04, 0.08\}} \\
% Factor init size ($\beta$) & \multicolumn{4}{c}{\{1.0, 2.0\}} & \multicolumn{2}{c}{\{1.0, 2.0\}} & \multicolumn{2}{c}{\{2.0, 4.0\}} \\
% Factor learning rate ($\eta_\alpha$) & \multicolumn{2}{c}{\{0.1, 1.0\}} & \multicolumn{2}{c}{\{10.0, 20.0\}} & \multicolumn{2}{c}{\{0.1, 1.0\}} & \multicolumn{2}{c}{\{10.0, 20.0\}} \\
% Direction init size ($\lambda$) & \multicolumn{8}{c}{\{1.0, 8.0\}} \\
\bottomrule
\end{tabular}
\end{table}

% \begin{table}[tb]
% \caption{Hyperparameter search results for (Gemma2-2B, FSSV, Lang.). Best scores for are highlighted in bold.}
% \label{tab:axbench_hyperparam_search_results}
% \centering
% \footnotesize
% \setlength{\tabcolsep}{4pt}
% \begin{tabular}{ccccccc}
% \toprule
% Epochs & $\eta_\rvv$ & $\beta$ & $\eta_\alpha$ & $\lambda$ & Overall score\\
% \midrule
% 6  & 0.04 & 1.0 & 0.1 & 8.0 &  \\
% 12 & 0.04 & 1.0 & 0.1 & 8.0 &  \\
% 18 & 0.04 & 1.0 & 0.1 & 8.0 &  \\
% 6  & 0.08 & 1.0 & 0.1 & 8.0 &  \\
% 12 & 0.08 & 1.0 & 0.1 & 8.0 &  \\
% 18 & 0.08 & 1.0 & 0.1 & 8.0 &  \\
% 6  & 0.04 & 2.0 & 0.1 & 8.0 &  \\
% 12 & 0.04 & 2.0 & 0.1 & 8.0 &  \\
% 18 & 0.04 & 2.0 & 0.1 & 8.0 &  \\
% 6  & 0.08 & 2.0 & 0.1 & 8.0 &  \\
% 12 & 0.08 & 2.0 & 0.1 & 8.0 &  \\
% 18 & 0.08 & 2.0 & 0.1 & 8.0 &  \\
% 6  & 0.04 & 1.0 & 1.0 & 8.0 &  \\
% 12 & 0.04 & 1.0 & 1.0 & 8.0 &  \\
% 18 & 0.04 & 1.0 & 1.0 & 8.0 &  \\
% 6  & 0.08 & 1.0 & 1.0 & 8.0 &  \\
% 12 & 0.08 & 1.0 & 1.0 & 8.0 &  \\
% 18 & 0.08 & 1.0 & 1.0 & 8.0 &  \\
% 6  & 0.04 & 2.0 & 1.0 & 8.0 &  \\
% 12 & 0.04 & 2.0 & 1.0 & 8.0 &  \\
% 18 & 0.04 & 2.0 & 1.0 & 8.0 &  \\
% 6  & 0.08 & 2.0 & 1.0 & 8.0 &  \\
% 12 & 0.08 & 2.0 & 1.0 & 8.0 &  \\
% 18 & 0.08 & 2.0 & 1.0 & 8.0 &  \\
% \bottomrule
% \end{tabular}
% \end{table}

\mypar{Hyperparameters for SVs trained with our joint training scheme.}
On the basis of \cref{tab:axbench_hyperparam_search_grid}, the final hyperparameters used for \AXBENCH{} evaluation are shown in \cref{tab:axbench_final_hyperparams}.

We do not vary seeds in \AXBENCH{} evaluation and only use a single seed, which is consistent with prior work~\citep{wu2025axbench,wu2025improved}.
As has been explained by \citet{wu2025improved}, there are two reasons that justify this practice.
(1) Evaluation consists of training and inference runs across 500 concepts, therefore we only use a single run for each concept to save computational resources.
(2) Since we use temperature decoding and test SVs across a large number of concepts, the effect of seeds is smoothed out by inherent randomness of SV training and sampling decoding.

\begin{table}[t]
\caption{Hyperparameters for SVs trained with our joint training scheme in \AXBENCH{} evaluation experiment of \cref{subsec:axbench_evaluation}. Hyperparameters with $^*$ are taken or adapted from \citet{wu2025axbench,wu2025improved}.}
\label{tab:axbench_final_hyperparams}
\centering
\setlength{\tabcolsep}{4pt}
\footnotesize
\begin{tabular}{lcccccccccccc}
\toprule
 & \multicolumn{4}{c}{\D{G2B}{L10}} & \multicolumn{4}{c}{\D{G9B}{L20}} & \multicolumn{4}{c}{\D{Q32B}{L32}} \\
\cmidrule(lr){2-5} \cmidrule(lr){6-9} \cmidrule(lr){10-13}
 & \multicolumn{2}{c}{FSSV} & \multicolumn{2}{c}{\ourshort{}} & \multicolumn{2}{c}{FSSV} & \multicolumn{2}{c}{\ourshort{}} & \multicolumn{2}{c}{FSSV} & \multicolumn{2}{c}{\ourshort{}} \\
\cmidrule(lr){2-3} \cmidrule(lr){4-5} \cmidrule(lr){6-7} \cmidrule(lr){8-9} \cmidrule(lr){10-11} \cmidrule(lr){12-13}
Hyperparameter & Lang. & SimPO & Lang. & SimPO & Lang. & SimPO & Lang. & SimPO & Lang. & SimPO & Lang. & SimPO \\
\midrule
Batch size & 12 & 6 & 12 & 6 & 12 & 6 & 12 & 6 & 6 & 3 & 6 & 6 \\
Factor init size ($\beta$) & 2 & 2 & 4 & 4 & 2 & 2 & 2 & 4 & 1 & 1 & 2 & 1 \\
Factor learning rate ($\eta_\alpha$) & 1.0 & 0.1 & 10.0 & 20.0 & 0.1 & 0.1 & 0.1 & 20.0 & 0.1 & 0.01 & 10.0 & 10.0 \\
Direction init size ($\lambda$) & 1 & 1 & 8 & 8 & 8 & 8 & 8 & 8 & 1 & 1 & 8 & 8 \\
Intervention location & \multicolumn{2}{c}{Full-sequence} & \multicolumn{2}{c}{\prefsuff{2}{2}} & \multicolumn{2}{c}{Full-sequence} & \multicolumn{2}{c}{\prefsuff{2}{2}} & \multicolumn{2}{c}{Full-sequence} & \multicolumn{2}{c}{\prefsuff{2}{2}} \\
Direction learning rate ($\eta_\rvv$) & \multicolumn{4}{c}{0.04} & 0.04 & 0.08 & 0.04 & 0.04 & 0.04 & 0.04 & 0.04 & 0.08 \\
Epochs & \multicolumn{4}{c}{12} & 18 & 12 & 12 & 24 & 12 & 12 & 12 & 6 \\
Seed$^*$ & \multicolumn{12}{c}{42} \\
Optimizer$^*$ & \multicolumn{12}{c}{Adam} \\
Weight decay$^*$ & \multicolumn{12}{c}{0.0} \\
Warmup steps$^*$ & \multicolumn{12}{c}{0} \\
Temperature$^*$ & \multicolumn{12}{c}{1.0} \\
Generation length$^*$ & \multicolumn{12}{c}{128} \\
LLM judge temperature & \multicolumn{12}{c}{0.01} \\
\bottomrule
\end{tabular}
\end{table}

% \begin{table}[tb]
% \caption{Hyperparameters for \AXBENCH{} evaluation experiment of \cref{subsec:axbench_evaluation} on Gemma2-9B. Hyperparameters with $^*$ are taken from \citet{wu2025axbench,wu2025improved}.}
% \label{tab:axbench_final_hyperparams_9b}
% \centering
% \begin{tabular}{lcccccccc}
% \toprule
%  & \multicolumn{4}{c}{\TODO{\D{G9B}{L20}}} & \multicolumn{4}{c}{\TODO{\D{9B}{L31}}} \\
% \cmidrule(lr){2-5} \cmidrule(lr){6-9}
%  & \multicolumn{2}{c}{FSSV} & \multicolumn{2}{c}{\ourshort{}} & \multicolumn{2}{c}{FSSV} & \multicolumn{2}{c}{\ourshort{}} \\
% \cmidrule(lr){2-3} \cmidrule(lr){4-5} \cmidrule(lr){6-7} \cmidrule(lr){8-9}
% Hyperparameter & Lang. & SimPO & Lang. & SimPO & Lang. & SimPO & Lang. & SimPO \\
% \midrule
% Batch size & 12 & 6 & 12 & 6 & 12 & 6 & 12 & 6 \\
% Factor init size ($\beta$) & 2 & 2 & 2 & 4 \\
% Factor learning rate ($\eta_\alpha$) & 0.1 & 0.1 & 0.1 & 20.0 \\
% Direction init size ($\lambda$) & 8 & 8 & 8 & 8 \\
% Intervention location & --- & --- & \multicolumn{2}{c}{\prefsuff{2}{2}} & --- & --- & \multicolumn{2}{c}{\prefsuff{2}{2}} \\
% Direction learning rate ($\eta_\rvv$) & 0.04 & 0.08 & 0.04 & 0.04 \\
% Epochs & 18 & 12 & 12 & 24 \\
% Seed$^*$ & \multicolumn{8}{c}{42} \\
% Optimizer$^*$ & \multicolumn{8}{c}{Adam} \\
% Weight decay$^*$ & \multicolumn{8}{c}{0.0} \\
% Warmup steps$^*$ & \multicolumn{8}{c}{0} \\
% Steering temperature$^*$ & \multicolumn{8}{c}{1.0} \\
% Generation length$^*$ & \multicolumn{8}{c}{128} \\
% LLM judge temperature & \multicolumn{8}{c}{0.01} \\
% \bottomrule
% \end{tabular}
% \end{table}

\mypar{Hyperparameters for traditional SVs.}
Since there are no public records of \AXBENCH{} scores for Qwen2.5-32B, in this paper we replicate results for the traditional FSSV baselines trained with Lang. and RePS objectives using the codebase\footnote{\url{https://github.com/stanfordnlp/axbench}} released by \citet{wu2025axbench,wu2025improved} (with minor modifications due to tokenization differences between Gemma2 and Qwen2.5 models).
Both baselines are trained with factor sampling and require factor selection for inference.
Therefore, during hyperparameter tuning, we primarily tune training-time factor sets and inference-time factor search grids based on the hyperparameters for Gemma3-27B~\citep{team2025gemma} provided by \citet{wu2025improved}, since Gemma3-27B is of similar model size as Qwen2.5-32B.
In the process, we make sure that the factor sampling set and factor search grid are of the same size as those used in \citet{wu2025improved}.

In the process of hyperparameter tuning, we find that using a factor sampling set with large steering factors (\{60.0, 120.0, 180.0, 240.0, 300.0, 360.0, 420.0, 480.0, 540.0, 600.0\}) during training can cause severe performance degradations, where FSSV with Lang. objective yields an overall score of around 0.406 and FSSV with SimPO objective yields an overall score of around 0.555.
This finding validates our theoretical (\cref{eq:add_sv_stability_efficiency_solution}) and empirical results (\cref{subsec:verification_concept10}) that large steering factors can cause training instability and thus poor steering performance.

We show our final hyperparameters for FSSVs with factor sampling/selection on Qwen2.5-32B in \cref{tab:qwen25_32b_reps_lang_hyperparams}.
Although we acknowledge that our hyperparameter tuning process and final hyperparameter choices above might not be optimal, the final results are plausible and are aligned with the general trend that SV performance scales with model scale/capability.

\begin{table}[t]
\caption{Hyperparameters for FSSVs of \citet{wu2025improved} with factor sampling/selection on Qwen2.5-32B in \AXBENCH{} evaluating experiment of \cref{subsec:axbench_evaluation}.}
\label{tab:qwen25_32b_reps_lang_hyperparams}
\centering
\footnotesize
\begin{tabular}{lcc}
\toprule
& \multicolumn{2}{c}{\D{Q32B}{L32}} \\
\cmidrule(lr){2-3}
Hyperparameter & Lang. & RePS \\
\midrule
Epochs & 18 & 12 \\
Learning rate & 0.08 & 0.08 \\
Batch size & 6 & 12 \\
Seed & \multicolumn{2}{c}{42} \\
Factor set for training ($\mathcal{A}_{\text{sample}}$) & \multicolumn{2}{c}{\makecell[c]{\{20.0, 40.0, 60.0, 80.0, 100.0,\\120.0, 140.0, 160.0, 180.0, 200.0\}}} \\
Factor set for inference ($\mathcal{A}_{\text{search}}$) & \multicolumn{2}{c}{\makecell[c]{\{20.0, 40.0, 60.0, 80.0, 100.0, 120.0, 140.0,\\160.0, 180.0, 200.0, 250.0, 300.0, 350.0, 400.0\}}} \\
\bottomrule
\end{tabular}
\end{table}

\mypar{Evaluation details.}
We follow the \AXBENCH{} evaluation protocol of \citet{wu2025axbench} and use gpt-4o-mini as LLM judge.
We refer readers to the appendix and code of \citet{wu2025axbench} for evaluation prompt templates.
We also use the same per-concept AlpacaEval test instructions for meaningful comparison.

\mypar{Notes on \AXBENCH{} generation length.}
In this paper, we report \AXBENCH{} overall scores using a generation length of 128 tokens.
We acknowledge that generation length has an impact on evaluation, as is supported by analysis of \citet{wu2025improved} and our results in \cref{tab:steering_scores_tinygsm8k}, which indicate that the LLM judge has a general tendency to favor lengthy generations.
However, in this paper we use the default setting of 128 tokens to enable fair comparison with other baseline methods (e.g., LoReFT and DiffMeans), since it is difficult for us to replicate all baselines with limited computational resources.

\subsection{Additional Results} \label{subapp:axbench_additional_results}

\mypar{RePS overall score on \D{G9B}{L20}.}
As is shown in \cref{tab:axbench_results}, although our joint training scheme improves the overall scores of FSSVs, there is an exception for the RePS FSSV on setup \D{G9B}{L20}.
Therefore we replicate this result following the \AXBENCH{} evaluation protocol (thanks to the open-sourced implementation\footnote{\url{https://github.com/stanfordnlp/axbench}}).
Specifically, when reporting overall scores with factor selection, we measure \textit{fair scores} to ensure principled evaluation.
Fair score is in contrast with \textit{oracle scores}, where the optimal steering factor is selected on all test instructions.
To obtain fair scores, we randomly split the 10 random AlpacaEval instructions in halves for each instance of SV, identify an optimal steering factor using one half as development set and report overall scores on the other half as test set using the said steering factor.

However, we find that the random splitting strategy above can result in large variances in fair scores, where the score could be as low as 0.831 and as high as 0.906. Therefore the score reported by \citet{wu2025improved} is within reasonable range and we report their results in the main body.
On the \D{G9B}{L20} setup, we report an average score of 0.877 across 10 random seeds, with a standard deviation of 0.021 across seeds and an average standard deviation of 0.357 across concepts, as is shown in \cref{tab:axbench_results_with_std}.
When looking at our replication result, our joint training improves the overall score of FSSV by 0.009.

\mypar{Variance.}
We report standard deviation of overall scores across concepts in \cref{tab:axbench_results_with_std}.
Note that we do \textit{not} report standard deviation for the results taken from prior work.
Overall, variances of SVs are larger than those of prompting, while there are no essential differences in variance between SV techniques.
This suggests \textbf{a fundamental gap between SVs and prompting, and it remains an open question whether SVs can reach the same level of variance as prompting}.

\begin{table}
\caption{Overall steering scores and standard deviation on \AXBENCH{}. Results with * are taken from \citet{wu2025axbench}, $\dagger$ from \citet{wu2025improved} and $\ddagger$ from \citet{arad2025saes}. Best results of SVs are highlighted in bold. Standard deviation is included only for our experiment results. For FSSVs trained with factor sampling and use factor selection, we show fair scores by default and include oracle scores in parentheses.}
\label{tab:axbench_results_with_std}
\setlength{\tabcolsep}{4pt}
\centering
\footnotesize
\begin{tabular}{lcccc}
\toprule
Method & \D{G2B}{L10} & \D{G9B}{L20} & \D{G9B}{L20}{\scriptsize (ours)} & \D{Q32B}{L32} \\
\midrule
Prompt & 0.698$^*$ & 1.075$^*$ & 1.079{\scriptsize $\pm$ .226} & 1.060{\scriptsize $\pm$ .240} \\
\midrule
\rowcolor{\tablebgcolor}
\multicolumn{5}{c}{Objective: Lang.} \\
FSSV & 0.663$^\dagger$ & 0.788$^\dagger$ & --- & 0.798{\scriptsize $\pm$ .369} {\footnotesize (0.952{\scriptsize $\pm$ .284})} \\
~~+ Joint training & 0.736{\scriptsize $\pm$ .375} & 0.821{\scriptsize $\pm$ .395} & --- & 0.919{\scriptsize $\pm$ .371} \\
\ourshort{} & 0.758{\scriptsize $\pm$ .378} & 0.859{\scriptsize $\pm$ .383} & --- & 1.049{\scriptsize $\pm$ .360} \\
\rowcolor{\tablebgcolor}
\multicolumn{5}{c}{Objective: SimPO} \\
FSSV & 0.756$^\dagger$ & 0.892$^\dagger$ & 0.877{\scriptsize $\pm$ .357} {\footnotesize (1.022{\scriptsize $\pm$ .269})} & 0.947{\scriptsize $\pm$ .362} {\footnotesize(1.138{\scriptsize $\pm$ .272})} \\
~~+ Joint training & 0.769{\scriptsize $\pm$ .348} & 0.886{\scriptsize $\pm$ .364} & --- & 0.982{\scriptsize $\pm$ .324} \\
\ourshort{} & \textbf{0.803}{\scriptsize $\pm$ .370} & \textbf{0.905}{\scriptsize $\pm$ .379} & --- & \textbf{1.102}{\scriptsize $\pm$ .338} \\
\midrule
LoReFT$^*$ & 0.701 & 0.777 & --- & --- \\
DiffMean$^*$ & 0.297 & 0.322 & --- & --- \\
SAE$^\ddagger$ & --- & 0.546 & --- & --- \\
\bottomrule
\end{tabular}
\end{table}

\mypar{Distribution of scores.}
In \cref{fig:axbench_distribution_of_scores}, we visualize the distribution of overall/concept/instruct/fluency scores across 500 concepts for each tested \AXBENCH{} setup.
We additionally include prompt steering results for \D{G9B}{L20} on the first 100 concepts.
In general, \ourshort{} yields higher overall scores than FSSV.
When using Lang./SimPO objectives on \D{G2B}{L10} and SimPO objective on \D{G9B}{L20}, \ourshort{} yields slightly lower concept scores.
This slightly decline in concept score is compensated by the prominent increase in instruct scores.
While there are not essential differences in fluency scores, the minimum fluency scores of \ourshort{} are higher than those of FSSV.

These results indicate that \ourshort{} is good at preserving generation quality while achieving concept incorporation.

\begin{figure}
\centering
\includegraphics[width=0.9\linewidth]{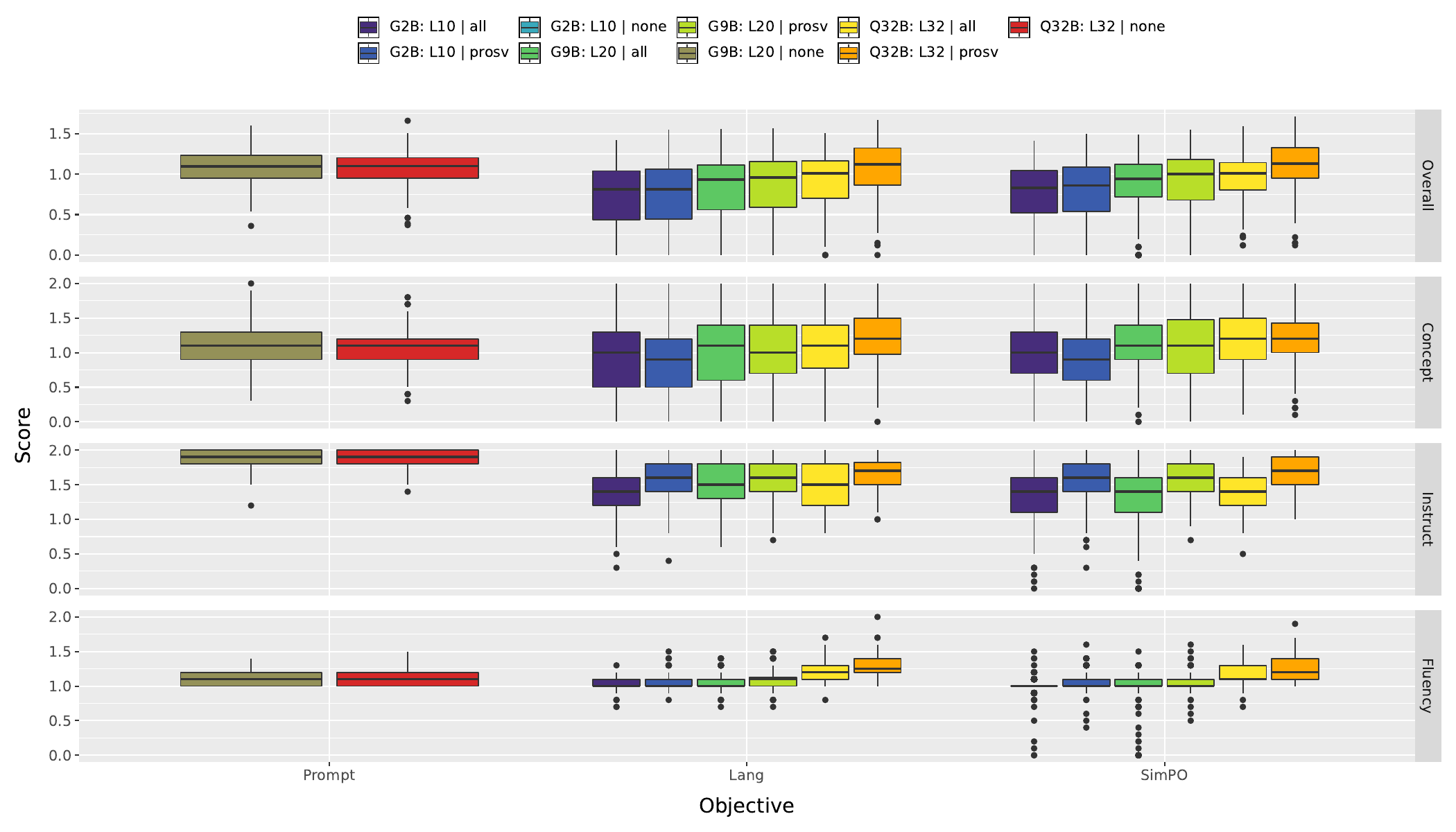}
\caption{Distribution of overall/concept/instruct/fluency scores on \AXBENCH{}.}
\label{fig:axbench_distribution_of_scores}
\end{figure}

\mypar{Distribution of steering factors and L2 norms of steering directions.}
We show the distribution of steering factors and vector norms of steering directions across \AXBENCH{} concepts in \cref{fig:axbench_distribution_of_factor_vecnorm}.
In general, FSSVs tend to have smaller L2 norms (< 10) and low variances in steering factors (except for \D{G2B}{L10}), while \ourshort{}s have vector norms no smaller than 10 and as large as 40 and their steering factors are more dispersed than FSSVs.

\begin{figure}
\centering
\includegraphics[width=0.7\linewidth]{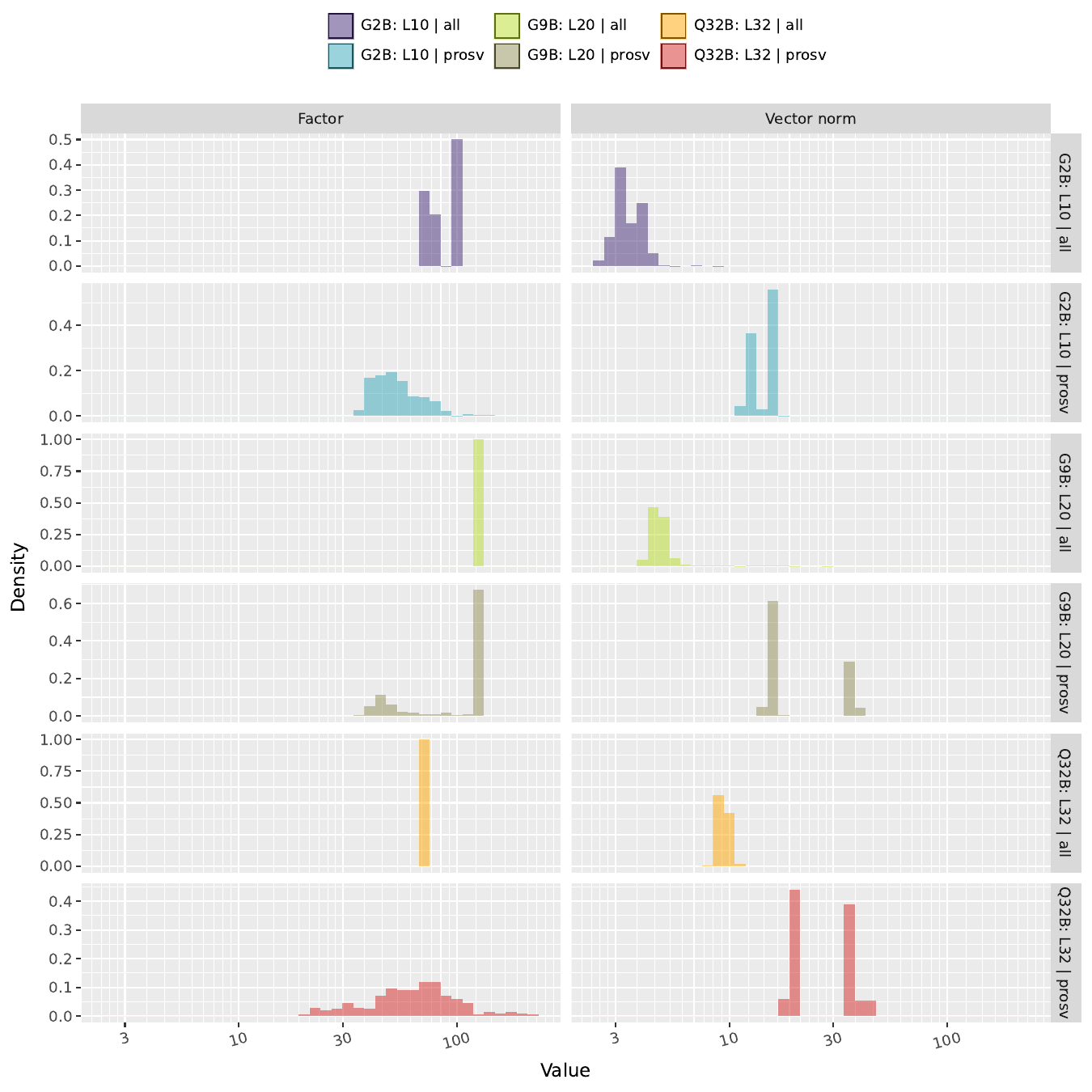}
\caption{Distribution of steering factors ($\vert \alpha \vert$) and L2 norms of steering directions ($\| \rvv \|_2$).}
\label{fig:axbench_distribution_of_factor_vecnorm}
\end{figure}

\mypar{Failure mode analysis.}
We show the concepts on which SVs yield the lowest scores.

\begin{table}
\caption{Concepts on which \ourshort{} with Lang. objective yields lowest scores with setup \D{G2B}{L10}.}
\centering
\setlength{\tabcolsep}{4pt}
\footnotesize
\begin{tabular}{llrr}
\toprule
Concept & Genre & Overall & Concept \\
\midrule
code-related constructs and functions in programming languages & code & 0.00 & 0.00 \\
markup elements and their attributes in HTML or XML documents & code & 0.00 & 0.00 \\
programming constructs related to range definitions or loops & code & 0.00 & 0.00 \\
end-of-comment markers in code & code & 0.00 & 0.00 \\
underscore characters in variable and function names or parameters & code & 0.00 & 0.20 \\
numerical values and their relationships & math & 0.00 & 0.00 \\
mathematical problem-solving phrases & math & 0.00 & 0.00 \\
string concatenation operations and associated punctuation & code & 0.10 & 0.10 \\
JavaScript methods and properties related to manipulating the DOM's class and style attributes & code & 0.10 & 0.10 \\
types and declarations in programming code & code & 0.10 & 0.10 \\
\bottomrule
\end{tabular}
\end{table}

%%%%%%%%%%%%%%%%%%%%%%%%%%%%%%%%%%%%%%%%%%%%%%%%%%%%%%%%%%%%%%%%%%%%%%%%%%%%%%%
%%%%%%%%%%%%%%%%%%%%%%%%%%%%%%%%%%%%%%%%%%%%%%%%%%%%%%%%%%%%%%%%%%%%%%%%%%%%%%%
\section{Details and Additional Results for Tradeoff between Performance and Adversarial Robustness} \label{app:details-and-additional-results-for-tradeoff-between-performance-and-adversarial-robustness}

\subsection{Experiment Details} \label{subapp:details_tradeoff_performance_adv_robustness}
\mypar{Concept-suppressing instructions.}
We mirror the approach of \citet{wu2025improved} to rewrite AlpacaEval test instructions into concept-suppressing versions.
They prompt gpt-4o-mini to generate steering prompts while we prompt gpt-4o-mini to incorporate the concept-suppressing objective into instructions.
Meanwhile, we require that the final prompts express the same query as the original prompts.
The prompt template is shown in \cref{fig:concept_suppressing_prompt_template}.

An example of a concept-suppressing prompt is shown in \cref{fig:concept_suppressing_prompt_example}.
The average lengths (in tokens) of generated concept-suppressing prompts are 61 on \D{G2B}{L10}, 54 on \D{G9B}{L20} and 53 on \D{Q32B}{L32}.

\begin{figure}[t]
\begin{tcolorbox}[left=1.5mm, right=1.5mm, top=1.5mm, bottom=1.5mm, colback=black!5!white, colframe=black, sharp corners, title={Prompt template to generate concept-suppressing prompts.}]
\footnotesize
\begin{lstlisting}[basicstyle=\scriptsize\ttfamily, columns=flexible, breaklines=true, upquote=true]
Generate a prompt to guide a language model in producing responses.

Objective: Explicitly direct the model to avoid including content related to the given concept in its responses. Ensure the responses do not reference this concept, even if it doesn't directly answer the question or seems out of context. Always strive to weave in these objectives seamlessly into your answers.

Concept: {concept}
Instruction: {instruction}

Return only the final prompt without any additional text.
\end{lstlisting}
\end{tcolorbox}
\caption{Prompt template to generate concept-suppressing prompts.}
\label{fig:concept_suppressing_prompt_template}
\end{figure}

\begin{figure}[t]
\begin{tcolorbox}[left=1.5mm, right=1.5mm, top=1.5mm, bottom=1.5mm, colback=black!5!white, colframe=black, sharp corners, title={Example concept-suppressing prompt.}]
\footnotesize
\tcbsubtitle[colback=gray,colframe=gray]{Original instruction}
Make a list of the top places in the U.S. to visit at the given time.

November
\tcbsubtitle[colback=gray,colframe=gray]{Concept}
references to nighttime or nocturnal themes
\tcbsubtitle[colback=gray,colframe=gray]{Rewritten concept-suppressing instruction}
Generate a list of the top places in the U.S. to visit in November, ensuring that your responses do not include any references to nighttime or nocturnal themes. Focus on activities, events, and attractions that are suitable for daytime exploration and enjoyment during this month.
\end{tcolorbox}
\caption{Prompt template to generate concept-suppressing prompts.}
\label{fig:concept_suppressing_prompt_example}
\end{figure}

\mypar{Prompt steering under concept suppression attack.}
We use the same method for prompt steering as in \AXBENCH{} evaluation (\cref{subsec:axbench_evaluation}), where steering prompts are prepended to concept-suppressing instructions.
This approach mirrors that of \citet{wu2025improved}, where in an experiment they aim to achieve concept suppression;
as an attack, AlpacaEval instructions are rewritten by gpt-4o-mini to incorporate concepts.
They found that prepending concept suppression system prompts is more robust than appending them.
Our preliminary experiments confirm this finding:
Appending steering prompts to concept-suppressing instructions is even less robust than prepending them, leading to near-zero overall scores.

\mypar{Hyperparameters.}
We show the hyperparameters for evaluation on standard capability benchmarks in \cref{tab:capability_benchmark_hparams}.
We use the 5-shot prompts inherent in the tiny benchmarks.
We also use greedy decoding during generation, which is common practice for both benchmarks.

For evaluation of adversarial robustness under concept suppression attack, we adopt the same inference-time hyperparameters as \AXBENCH{} evaluation experiment (\cref{subapp:axbench_details}) since we directly use the same trained SV checkpoints.
An exception is FSSVs, where we additionally scale the steering factors down to 80\% and 50\%, in order to investigate how inference-time factor adjustments affect the tradeoff between performance preservation and adversarial robustness.

\begin{table}
\caption{Hyperparameters for standard capability benchmark evaluation.}
\label{tab:capability_benchmark_hparams}
\centering
\footnotesize
\begin{tabular}{lcc}
\toprule
Hyperparameter & tinyMMLU & tinyGSM8K \\
\midrule
Seed & \multicolumn{2}{c}{42} \\
Shots & \multicolumn{2}{c}{5} \\
Generation length & 128 & 512 \\
Temperature & \multicolumn{2}{c}{0.0} \\
\bottomrule
\end{tabular}
\end{table}

\mypar{Notes on benchmark evaluation.}
We find that when the steered models often deviate from the format designated in the few-shot prompts, whether we use prompt steering or SVs for concept-based steering.
On tinyGSM8K, when a concept is in code genre, the model sometimes generates Python code that correctly computes answers rather directly providing the answer value.
We show an example in \cref{fig:response_correct_code_but_no_number}, where the intervened model produces a code snippet with the correct problem-solving steps; running the code in Python interpreter would result in the correct answer.
Although in this case the model successfully integrates its reasoning capability with concept incorporation, we do \textit{not} count this as a correct response since the arithmetic computation is not conducted by the model itself.
We only check the presence of correct answer numbers.

Similarly on tinyMMLU, the intervened model sometimes generate the content of choices but not the letter.
However, we count this as a correct response and we allow the model to generate more tokens (128) since the models sometimes include reasoning steps before giving the answer.
We use gpt-4o-mini to determine if model responses are correct.

\mypar{Notes on tinyGSM8K for evaluating steering with long-context.}
In order to evaluate the generalization capability of SVs in long-context scenarios, we evaluate steering scores from their respective generations on tinyGSM8K.
This approach is justified since
tinyGSM8K is a generation task that requires models to generate reasoning steps,
has an average prompt length of around 1K with 5-shot prompting
and a maximum generation length of 512.

\begin{figure}
\begin{tcolorbox}[left=1.5mm, right=1.5mm, top=1.5mm, bottom=1.5mm, colback=black!5!white, colframe=black, sharp corners, title={\ourshort{}-steered response by Qwen2.5-32B on tinyGSM8K.}]
\scriptsize
\tcbsubtitle[colback=gray,colframe=gray]{Concept}
programming constructs and data structures in code snippets
\tcbsubtitle[colback=gray,colframe=gray]{Question}
Rory orders 2 subs for \$7.50 each, 2 bags of chips for \$1.50 each and 2 cookies for \$1.00 each for delivery. There's a 20\% delivery fee added at check out and she wants to add a \$5.00 tip. What will her delivery order cost?
\tcbsubtitle[colback=gray,colframe=gray]{Answer}
29
\tcbsubtitle[colback=gray,colframe=gray]{Steered response}
\begin{lstlisting}
```python
# Calculate the cost of subs
subs_cost = 2 * 7.50

# Calculate the cost of chips
chips_cost = 2 * 1.50

# Calculate the cost of cookies
cookies_cost = 2 * 1.00

# Total food cost before delivery fee and tip
total_food_cost = subs_cost + chips_cost + cookies_cost

# Calculate the delivery fee
delivery_fee = total_food_cost * 0.20

# Add the tip
tip = 5.00

# Calculate the final total cost
final_total_cost = total_food_cost + delivery_fee + tip

# Output the final total cost
final_total_cost
```
\end{lstlisting}

This code snippet demonstrates the use of variables, arithmetic operations, and basic control structures to calculate the total cost of Rory's delivery order. It employs a step-by-step approach to aggregate individual costs, apply a percentage-based fee, and include an additional fixed amount for tipping, all while maintaining clarity through explicit calculations.
\end{tcolorbox}
\caption{Example of steered response on tinyGSM8K with \ourshort{} (Lang.) on Qwen2.5-32B, where the intervened response contains a code snippet that results in the correct answer, \textit{not} the answer itself.}
\label{fig:response_correct_code_but_no_number}
\end{figure}

\subsection{Additional Results} \label{subapp:tradeoff_performance_adv_robustness_additional_results}
\mypar{Full results on capability benchmarks.}
We additionally show standard deviation of accuracy results across SVs in \cref{tab:sv_benchmark}, as well as steering scores on tinyGSM8K in \cref{tab:steering_scores_tinygsm8k}.
In general, SVs have larger variances than prompting, which resonates with our results of \cref{tab:axbench_results_with_std}.

\mypar{Full results on concept-suppression attack.}
We additionally show standard deviation of overall steering scores across SVs in \cref{tab:sv_adv_robustness}.
SVs again have larger variances in overall score than prompt steering.

\begin{table}
\caption{Accuracy (\%; $\uparrow$) with standard deviation on tinyMMLU and tinyGSM8K. Best steered results are highlighted in bold.}
\label{tab:sv_benchmark}
\centering
\footnotesize
\begin{tabular}{lcccccccc}
\toprule
& \multicolumn{2}{c}{\D{G2B}{L10}} & \multicolumn{2}{c}{\D{G9B}{L20}} & \multicolumn{2}{c}{\D{Q32B}{L32}} \\
\cmidrule(lr){2-3} \cmidrule(lr){4-5} \cmidrule(lr){6-7}
Method & MMLU & GSM8K & MMLU & GSM8K & MMLU & GSM8K \\
\midrule
Vanilla & 54.0 & 79.0 & 74.0 & 93.0 & 74.0 & 97.0 \\
\midrule
Prompt & \textbf{53.7}{\scriptsize $\pm$ 4.2} & \textbf{61.0}{\scriptsize $\pm$14.8} & 62.1{\scriptsize $\pm$ 5.6} & \textbf{88.6}{\scriptsize $\pm$ 5.8} & \textbf{63.8}{\scriptsize $\pm$ 10.6} & \textbf{93.4}{\scriptsize $\pm$ 8.1} \\
\midrule
\rowcolor{\tablebgcolor}
\multicolumn{7}{c}{Objective: Lang.} \\
FSSV & 41.5{\scriptsize $\pm$ 7.1} & 10.7{\scriptsize $\pm$ 10.3} & 54.2{\scriptsize $\pm$ 7.6} & 8.6{\scriptsize $\pm$ 6.0} & 41.1{\scriptsize $\pm$ 11.5} & 6.6{\scriptsize $\pm$ 3.3} \\
FSSV (Factor 80\%) & 40.8{\scriptsize $\pm$ 7.5} & 16.7{\scriptsize $\pm$ 11.3} & 67.2{\scriptsize $\pm$ 4.2} & 32.4{\scriptsize $\pm$ 14.9} & 51.5{\scriptsize $\pm$ 5.5} & 32.7{\scriptsize $\pm$ 12.6} \\
FSSV (Factor 50\%) & 50.8{\scriptsize $\pm$ 6.0} & 33.6{\scriptsize $\pm$ 17.5} & \textbf{71.4}{\scriptsize $\pm$ 2.4} & 75.2{\scriptsize $\pm$ 5.6} & 62.7{\scriptsize $\pm$3.9} & 82.0{\scriptsize $\pm$6.1} \\
\ourshort{} & 52.9{\scriptsize $\pm$ 9.8} & 50.5{\scriptsize $\pm$ 6.7} & 55.4{\scriptsize $\pm$ 11.2} & 68.4{\scriptsize $\pm$ 17.2} & 58.4{\scriptsize $\pm$ 16.4} & 78.2{\scriptsize $\pm$ 12.7} \\
% FSSV & 21.9{\scriptsize $\pm$ 6.2} & 1.0{\scriptsize $\pm$ 1.0} & 54.0{\scriptsize $\pm$ 16.5} & 2.4{\scriptsize $\pm$ 3.7} & {\scriptsize $\pm$ } & {\scriptsize $\pm$ } \\
% \ourshort{} & 33.8{\scriptsize $\pm$ 16.1} & 35.0{\scriptsize $\pm$ 17.0} & 61.3{\scriptsize $\pm$ 12.1} & 65.8{\scriptsize $\pm$ 21.6} & {\scriptsize $\pm$ } & {\scriptsize $\pm$ } \\
\rowcolor{\tablebgcolor}
\multicolumn{7}{c}{Objective: SimPO} \\
FSSV & 37.8{\scriptsize $\pm$ 8.9} & 5.6{\scriptsize $\pm$ 6.4} & 41.3{\scriptsize $\pm$ 17.1} & 4.2{\scriptsize $\pm$ 1.9} & 39.2{\scriptsize $\pm$ 10.4} & 6.9{\scriptsize $\pm$ 3.7} \\
FSSV (Factor 80\%) & 36.9{\scriptsize $\pm$ 4.4} & 10.7{\scriptsize $\pm$ 8.3} & 60.3{\scriptsize $\pm$ 17.4} & 20.8{\scriptsize $\pm$ 9.6} & 49.0{\scriptsize $\pm$ 4.1} & 25.9{\scriptsize $\pm$ 9.8} \\
FSSV (Factor 50\%) & 49.5{\scriptsize $\pm$ 3.9} & 28.4{\scriptsize $\pm$ 11.3} & 67.7{\scriptsize $\pm$ 4.8} & 59.1{\scriptsize $\pm$ 18.3} & 62.1{\scriptsize $\pm$ 3.0} & 77.6{\scriptsize $\pm$7.4} \\
\ourshort{} & 51.3{\scriptsize $\pm$ 8.0} & 50.3{\scriptsize $\pm$ 4.6} & 56.2{\scriptsize $\pm$ 20.7} & 66.8{\scriptsize $\pm$ 20.3} & 59.2{\scriptsize $\pm$ 13.9} & 79.2{\scriptsize $\pm$ 14.7} \\
\bottomrule
\end{tabular}
\end{table}

\begin{table}
\caption{Overall/concept/instruct/fluency scores on tinyGSM8K, which is a long-context scenario of concept-based steering. Best results of SVs are highlighted in bold. Although \ourshort{} underperforms FSSV on Gemma2-2B/9B, it always has higher instruct/fluency scores than FSSV and outperforms FSSV by achieving a comparable concept score on Qwen2.5-32B.}
\label{tab:steering_scores_tinygsm8k}
\centering
\footnotesize
\setlength{\tabcolsep}{3pt}
\begin{tabular}{lcccccccccccc}
\toprule
& \multicolumn{4}{c}{\D{G2B}{L10}} & \multicolumn{4}{c}{\D{G9B}{L20}} & \multicolumn{4}{c}{\D{Q32B}{L32}} \\
\cmidrule(lr){2-5} \cmidrule(lr){6-9} \cmidrule(lr){10-13}
Method & Overall & Concept & Instruct & Fluency & Overall & Concept & Instruct & Fluency & Overall & Concept & Instruct & Fluency \\
\midrule
Prompt & 1.029{\scriptsize $\pm$ .453} & 1.124 & 1.881 & 1.167 & 1.333{\scriptsize $\pm$ .198} & 1.329 & 1.978 & 1.210 & 1.472{\scriptsize $\pm$ .194} & 1.551 & 1.954 & 1.273 \\
\midrule
\rowcolor{\tablebgcolor}
\multicolumn{13}{c}{Objective: Lang.} \\
FSSV & 0.657{\scriptsize $\pm$ .326} & 1.184 & 1.089 & 0.875 & 0.640{\scriptsize $\pm$ .269} & 0.956 & 1.320 & 0.989 & 0.749{\scriptsize $\pm$ .237} & 1.024 & 1.185 & 1.190 \\
\ourshort{} & 0.362{\scriptsize $\pm$ .533} & 0.330 & \textbf{1.936} & 1.078 & 0.489{\scriptsize $\pm$ .508} & 0.506 & \textbf{1.968} & \textbf{1.212} & 1.036{\scriptsize $\pm$ .365} & 1.016 & 1.912 & 1.284 \\
\rowcolor{\tablebgcolor}
\multicolumn{13}{c}{Objective: SimPO} \\
FSSV & \textbf{0.696}{\scriptsize $\pm$ .333} & \textbf{1.424} & 0.869 & 0.949 & \textbf{0.746}{\scriptsize $\pm$ .345} & \textbf{1.348} & 1.084 & 0.854 & 0.903{\scriptsize $\pm$ .257} & \textbf{1.276} & 1.124 & 1.232 \\
\ourshort{} & 0.389{\scriptsize $\pm$ .521} & 0.358 & 1.928 & \textbf{1.092} & 0.581{\scriptsize $\pm$ .437} & 0.608 & 1.908 & 1.148 & 1.077{\scriptsize $\pm$ .474} & 1.023 & \textbf{1.923} & \textbf{1.313} \\
\bottomrule
\end{tabular}
\end{table}

\begin{table}
\caption{Overall steering scores (0--2; $\uparrow$) with standard deviation under concept-suppression attacks. Best results are highlighted in bold.}
\label{tab:sv_adv_robustness}
\centering
\footnotesize
\begin{tabular}{lcccc}
\toprule
Method & \D{G2B}{L10} & \D{G9B}{L20} & \D{Q32B}{L32} \\
\midrule
Prompt & 0.102{\scriptsize $\pm$ .100} & 0.080{\scriptsize $\pm$ .092} & 0.125{\scriptsize $\pm$ .125} \\
\midrule
\rowcolor{\tablebgcolor}
\multicolumn{4}{c}{Objective: Lang.} \\
FSSV & 0.590{\scriptsize $\pm$ .266} & 0.791{\scriptsize $\pm$ .348} & 0.914{\scriptsize $\pm$ .301} \\
FSSV (Factor 80\%) & 0.304{\scriptsize $\pm$ .244} & 0.328{\scriptsize $\pm$ .178} & 0.665{\scriptsize $\pm$ .291} \\
FSSV (Factor 50\%) & 0.095{\scriptsize $\pm$ .090} & 0.124{\scriptsize $\pm$ .131} & 0.149{\scriptsize $\pm$ .121} \\
\ourshort{} & 0.427{\scriptsize $\pm$ .310} & 0.582{\scriptsize $\pm$ .356} & 0.707{\scriptsize $\pm$ .346} \\
\rowcolor{\tablebgcolor}
\multicolumn{4}{c}{Objective: SimPO} \\
FSSV & \textbf{0.737}{\scriptsize $\pm$ .223} & \textbf{0.847}{\scriptsize $\pm$ .303} & \textbf{0.925}{\scriptsize $\pm$ .251} \\
FSSV (Factor 80\%) & 0.433{\scriptsize $\pm$ .210} & 0.512{\scriptsize $\pm$ .204} & 0.716{\scriptsize $\pm$ .272} \\
FSSV (Factor 50\%) & 0.049{\scriptsize $\pm$ .087} & 0.171{\scriptsize $\pm$ .247} & 0.141{\scriptsize $\pm$ .134} \\
\ourshort{} & 0.457{\scriptsize $\pm$ .233} & 0.601{\scriptsize $\pm$ .356} & 0.775{\scriptsize $\pm$ .371} \\
\bottomrule
\end{tabular}
\end{table}

\mypar{Capability benchmark performance vs. adversarial robustness for individual models.}
In the main body, we show the composite figure with all tested models (\cref{fig:2d_tradeoff_robustness_vs_utility});
here we additionally show decomposed figures on a per-model basis in \cref{fig:2d_tradeoff_robustness_vs_utility_per_model}.
On each model, \ourshort{} lies on the better side of the Pareto frontier of FSSV with respect to inference-time steering factors.
This indicates that, compared to FSSV, \ourshort{} achieves a better tradeoff between general model utility and adversarial robustness.

\begin{figure}
\centering
% \subfigure[Gemma2-2B.]{
%   \includegraphics[width=0.3\textwidth]{figures/tradeoff/tradeoff_robustness_vs_utility_Gemma2-2B.pdf}
% }
% \subfigure[Gemma2-9B.]{
%   \includegraphics[width=0.3\textwidth]{figures/tradeoff/tradeoff_robustness_vs_utility_Gemma2-9B.pdf}
% }
% \subfigure[Qwen2.5-32B.]{
%   \includegraphics[width=0.3\textwidth]{figures/tradeoff/tradeoff_robustness_vs_utility_Qwen2.5-32B.pdf}
% }

\includegraphics[width=0.95\textwidth]{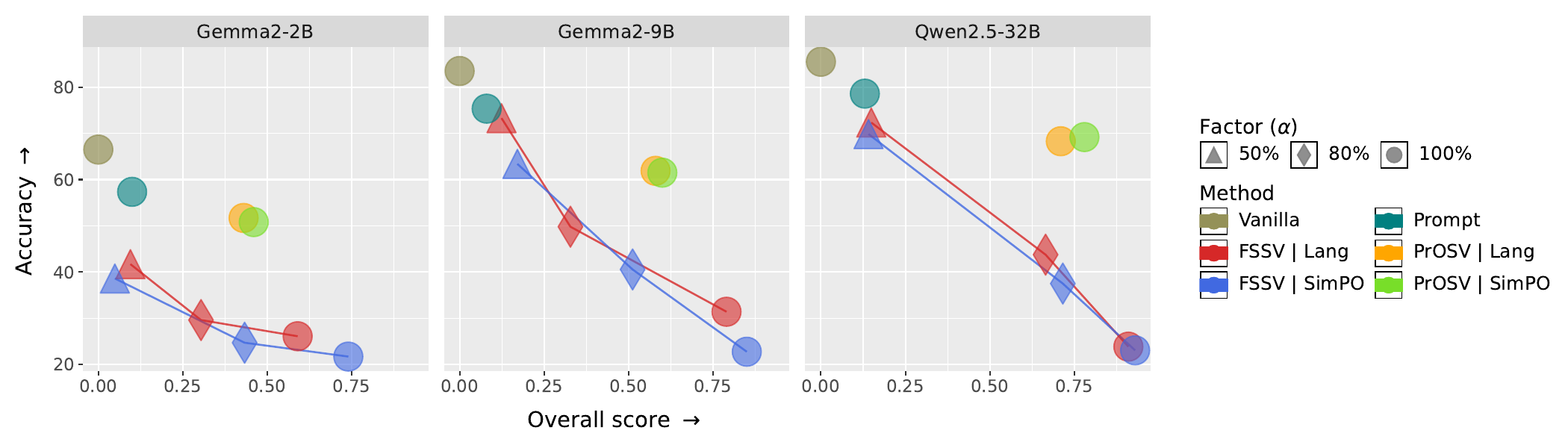}
\caption{Average accuracy on tinyMMLU and tinyGSM8K (\%) vs. overall steering score under concept suppression attack (0--2) for individual models. This figure presents the same results as \cref{fig:2d_tradeoff_robustness_vs_utility}.}
\label{fig:2d_tradeoff_robustness_vs_utility_per_model}
\end{figure}

%%%%%%%%%%%%%%%%%%%%%%%%%%%%%%%%%%%%%%%%%%%%%%%%%%%%%%%%%%%%%%%%%%%%%%%%%%%%%%%
%%%%%%%%%%%%%%%%%%%%%%%%%%%%%%%%%%%%%%%%%%%%%%%%%%%%%%%%%%%%%%%%%%%%%%%%%%%%%%%
\section{Additional Experiments} \label{app:additional-experiments}
In this section, we present additional experiments that are not presented in the main body.
These experiments are meant to advance our understandings of \ourshort{}, including how it relates to other representation steering techniques and the boundaries of its capabilities.

\subsection{Insights regarding How \ourshort{} Works: Attention Mechanism} \label{subapp:insights_attention_map}
In this experiment, we investigate how \ourshort{} achieves steering, especially regarding its ability to achieve effective steering via the self-attention mechanism.
In the main body, we have already explained that \ourshort{} could be mainly understood as implicitly editing the KV cache.
Therefore, we study attention patterns in this experiment.

\mypar{Data and model.}
We use a single concept (``\textit{references to specific dates and publication information}'') and the instruction ``\textit{How can I make a cake?}'' as a case study.
We also designate a response prefix ``\textit{Let's bake a cake!  Here's a basic guide to get}'' to enable comparison.
We study the Gemma2-2B model, which has eight attention heads per transformer layer.
We use the setup of \D{G2B}{L10}, thus the effect of interventions at the outputs of the 10th layer only manifests from the 11th layer onward.

\mypar{Methods.}
We directly use the trained SV checkpoints of FSSV and \ourshort{} (\prefsuff{2}{2}) from the \AXBENCH{} experiment.

\mypar{Metrics.}
We visualize attention weights (0--1) as heatmaps, and the figures should be read by the rows from left to right, and from top to bottom.
For each row, a token can only attend to itself and all previous tokens.
For example, at the second row from the top, the \verb|<start_of_turn>| token can only attend to the BOS token (\verb|<bos>|) and itself.

\mypar{Results.}
We visualize the attention weights in \cref{fig:attention_map_full} and absolute differences of attention weights in \cref{fig:attention_map_diff}.
Overall, \ourshort{} has narrower effect on attention patterns;
FSSV has a broader range of impact on the attention map, especially in the subfigure (d) of \cref{fig:attention_map_diff}, where FSSV severely decreases the attention weights along the diagonal and among instruction content tokens.
This might explain the ability of \ourshort{} to better preserve model capabilities than FSSV.

% \begin{tcolorbox}
\begin{greenbox}{}
\textbf{Takeaway.}
\ourshort{} better preserves attention patterns while FSSV severely damages the attention patterns.
% \end{tcolorbox}
\end{greenbox}

\begin{figure}
\centering
\subfigure[Vanilla.]{
  \includegraphics[width=0.96\textwidth]{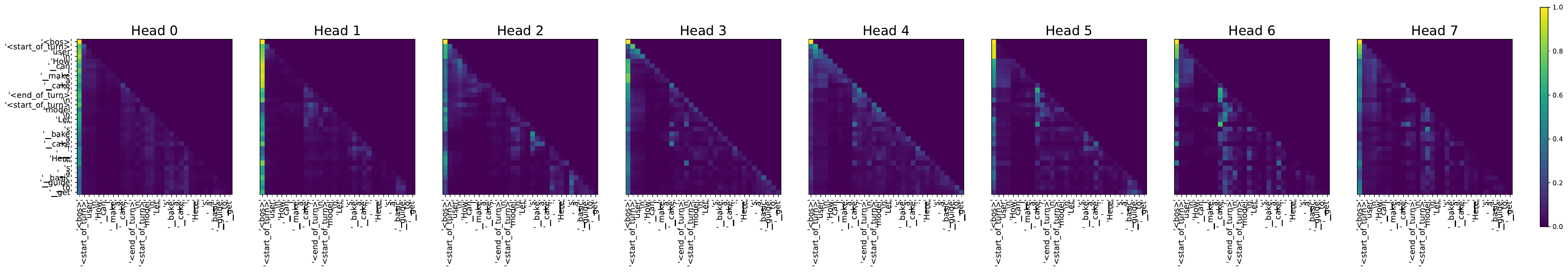}
}

\subfigure[\ourshort{} (\prefsuff{2}{2}).]{
  \includegraphics[width=0.96\textwidth]{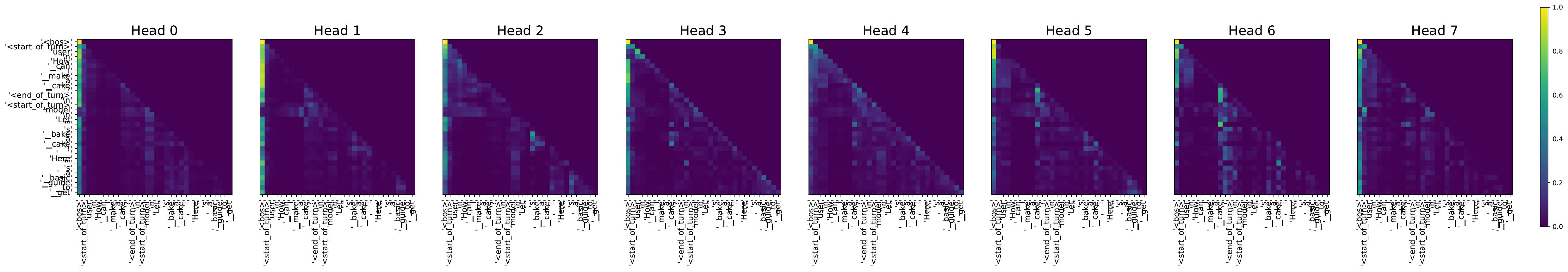}
}

\subfigure[FSSV.]{
  \includegraphics[width=0.96\textwidth]{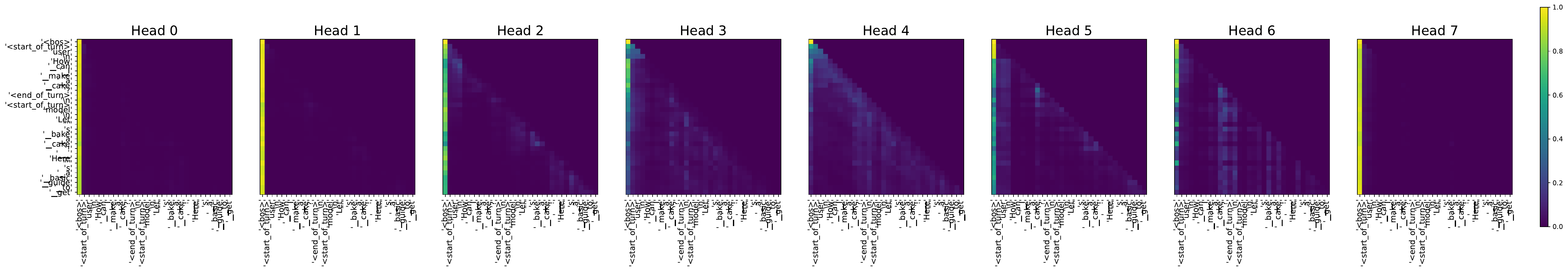}
}
\caption{Attention maps at the 11th layer on Gemma2-2B, where \textit{vanilla} denotes the un-intervened model.
For each row, a token can only attend to itself and all previous tokens.
The attention patterns of \ourshort{} is qualitatively consistent with those of un-intervened model.
However, when intervened with FSSV, the model almost stops attending to instruction content tokens on attention heads 0/1/7 and attends most extensively to the BOS token.}
\label{fig:attention_map_full}
\end{figure}

\begin{figure}
\centering
\subfigure[\ourshort{} (\prefsuff{2}{2}) - Vanilla ($\Delta >0$).]{
  \includegraphics[width=0.96\textwidth]{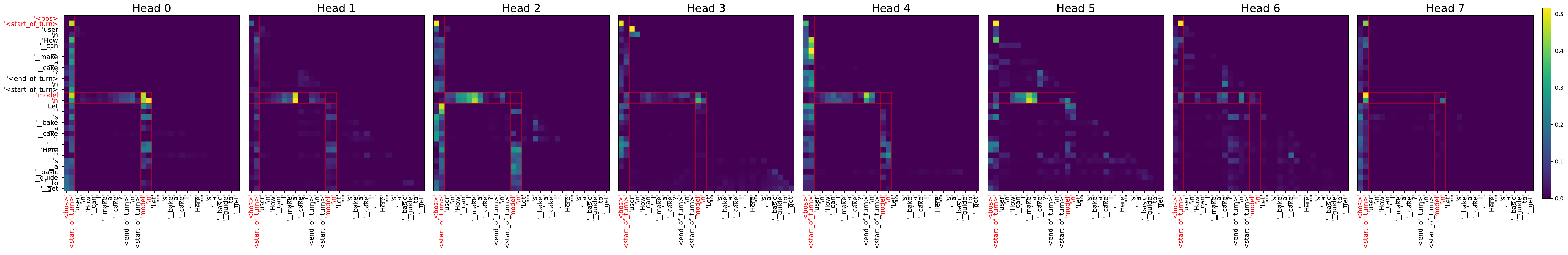}
}

\subfigure[\ourshort{} (\prefsuff{2}{2}) - Vanilla ($\Delta <0$).]{
  \includegraphics[width=0.96\textwidth]{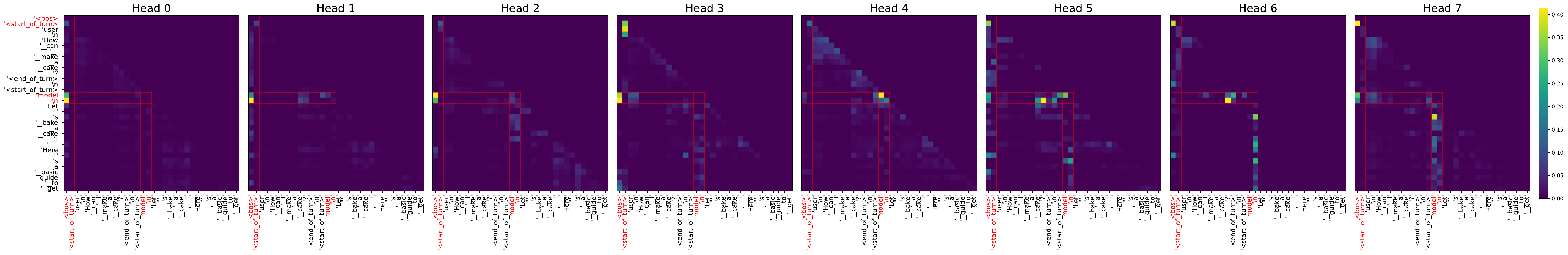}
}

\subfigure[FSSV - Vanilla ($\Delta >0$).]{
  \includegraphics[width=0.96\textwidth]{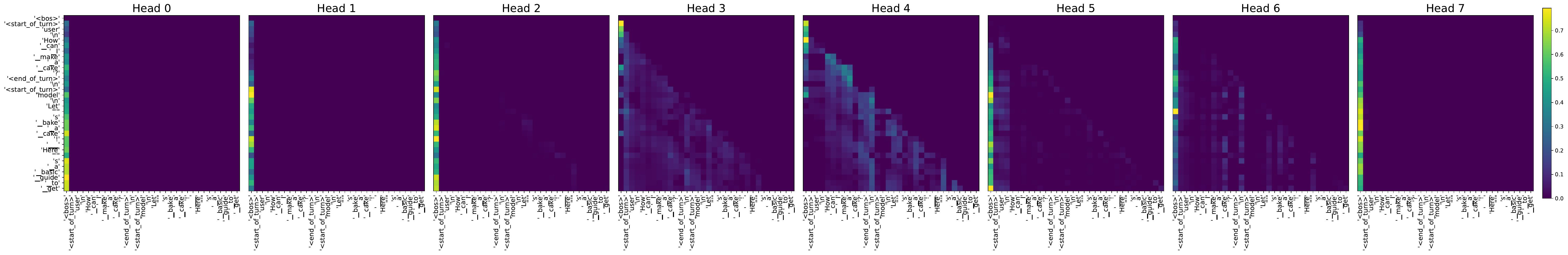}
}

\subfigure[FSSV - Vanilla ($\Delta <0$).]{
  \includegraphics[width=0.96\textwidth]{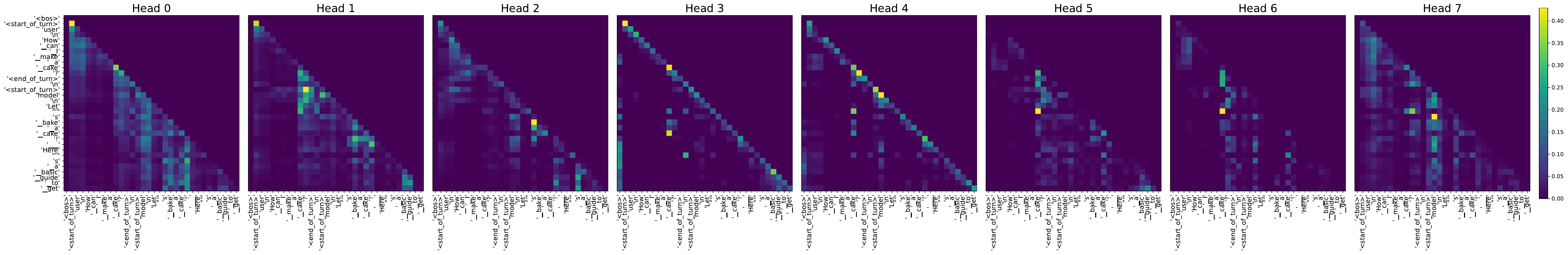}
}
\caption{Absolute differences in attention weights ($\vert \Delta \vert$) at the 11th layer on Gemma2-2B. \ourshort{}-intervened tokens are highlighted in red.\\
\textbf{(Subfigure a) Attention Increases}: On most heads, \ourshort{} strengthens the connection between the prompt suffix (\texttt{model\string\n}) and the instruction content. It also encourages broad attention across most tokens toward the prompt prefix (\texttt{<bos><start\_of\_turn>}).
On heads 0, 1, 2, 4, and 5, response tokens show increased focus on the intervened prompt suffix.\\
\textbf{(Subfigure b) Attention Decreases}: Conversely, \ourshort{} suppresses attention from the prompt suffix to the \texttt{<bos>} token (heads 0, 1, 2, 3, 5, 7). It also reduces self-attention for \texttt{<start\_of\_turn>} (heads 2, 3, 4) and its attention to \texttt{<bos>} (heads 5, 6, 7). On heads 2, 3, 5, 6, and 7, instruction tokens attend less to the intervened prompt suffix.\\
\textbf{(Subfigure c, d) Comparison with FSSV}: Unlike \ourshort{}, FSSV drives all tokens to focus heavily on the \texttt{<bos>} token (c) while frequently weakening the attention weights between the actual content tokens of instruction and response (d).
% For each row, a token can only attend to itself and all previous tokens.
% (Subfigure a) On most heads, \ourshort{} increases the attention of intervened prompt suffix tokens (\texttt{model\string\n}) to instruction content tokens and \ourshort{} encourages attention of most tokens to intervened prompt prefix tokens (\texttt{<bos><start\_of\_turn>}).
% On heads 0/1/2/4/5, \ourshort{} also increases the attention of response tokens on intervened prompt suffix tokens.
% (Subfigure b) Meanwhile, \ourshort{} suppresses the attention from intervened prompt suffix tokens to the BOS token on heads 0/1/2/3/5/7, and decreases the attention from \texttt{<start\_of\_turn>} to itself on heads 3/4 and to \texttt{<bos>} on heads 5/6/7.
% On heads 2/3/5/6/7, \ourshort{} also suppresses the attention of instruction tokens to intervened prompt suffix tokens.
% In contrast, FSSV encourages all tokens to attend to BOS token (Subfigure c) and often decreases the attention weights among instruction content tokens (Subfigure d).}
}
\label{fig:attention_map_diff}
\end{figure}

\subsection{Similarity of SV Directions} \label{subapp:sv_cosine_similarity}
This experiment aims to investigate whether \ourshort{} and FSSV learn similar steering directions, especially considering that they operate through different mechanisms.
Motivated by the experiment of \citet{wu2025improved} where they report the cosine similarities between FSSVs trained with Lang./SimPO objectives, we also compare the cosine similarities of both \ourshort{}s and FSSVs trained with Lang./SimPO objectives.

\mypar{Data.}
We use the trained checkpoints on Gemma2-2B/9B and Qwen2.5-32B for both \ourshort{} and FSSV with our joint training scheme from the \AXBENCH{} evaluation experiment (\cref{subsec:axbench_evaluation}).

\mypar{Results.}
Results are shown in \cref{fig:cosine_similarity_axbench}.
In terms of \textit{\ourshort{} versus FSSV}, they have low cosine similarities (< 0.25) regardless of training objectives but the similarities are always non-negative.
This strongly indicates that \ourshort{} and FSSV operate through different but slightly similar mechanisms.
Additionally, the distribution of similarities has a tendency to shift towards zero from Gemma2-2B, Gemma2-9B to Qwen2.5-32B.
As for \textit{SVs trained with different objectives}, the similarities are relatively large (> 0.50) and FSSVs have larger similarities than \ourshort{}.
Meanwhile the distribution of cosine similarities also shifts towards zero from smaller to larger models.

\begin{figure}
\centering
\includegraphics[width=0.8\textwidth]{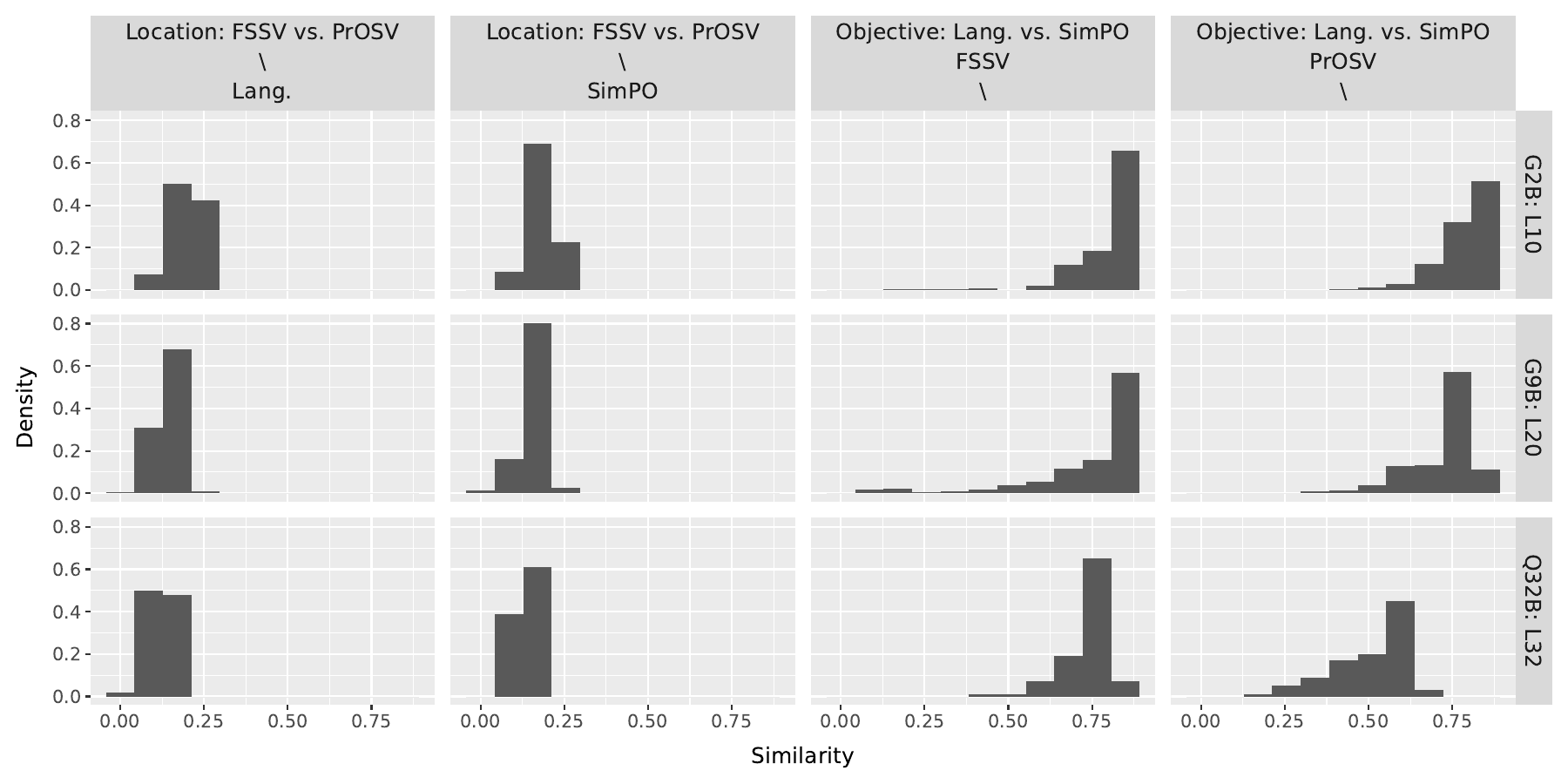}
\caption{Distribution of cosine similarity between SVs regarding intervention locations and training objectives.}
\label{fig:cosine_similarity_axbench}
\end{figure}

\subsection{Data Scaling Law of \ourshort{}}
\label{subapp:data-scaling-law-of}
This experiment is motivated by the data scaling law experiment of \citet{wu2025axbench}, where they study how SVs perform using less training examples than the default \AXBENCH{} configuration, $N=72$.
We conduct experiments in a similar setting, and study how \ourshort{} performs with less training data.

\mypar{Data.}
We follow the \AXBENCH{} evaluation protocol, with \CONCEPTTEN{} as training set and sample test instructions from AlpacaEval.
We use subsets of the original training set with varying numbers of examples:
\{3, 6, 12, 24, 48, 72\}.

\mypar{Methods.}
We evaluate both \ourshort{} and FSSV that are trained with our joint training scheme using Lang. objective.

\mypar{Hyperparameters.}
We directly use the hyperparameters for \AXBENCH{} evaluation.
Although we acknowledge that this might not be the optimal setup, we find that additional hyperparameter tuning in the data-restricted setting does not improve performance.
Specifically, when three examples are used to train \ourshort{}, \AXBENCH{} hyperparameters lead to an overall score of 0.438 while the tuned configuration yields an overall score of 0.433.
We use seeds to control randomness of initialization and sampled subsets.
The results are averaged over three seeds (\{42, 43, 44\}).

\mypar{Results.}
Results are shown in \cref{fig:data_scaling_law}.
In general, all scores scale positively with the number of training examples.

% \begin{tcolorbox}
\begin{greenbox}{}
\textbf{Takeaway.}
\ourshort{} benefits from increased training set size in terms of both concept incorporation and generation quality.
% \end{tcolorbox}
\end{greenbox}

\begin{figure}
\centering
\includegraphics[width=0.7\textwidth]{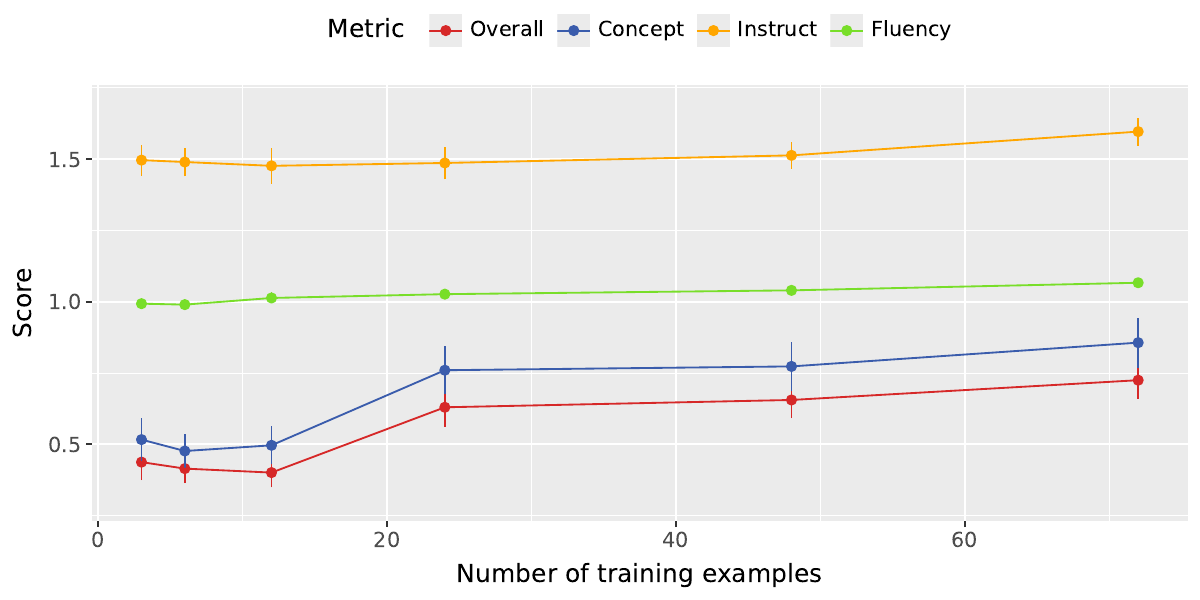}
\caption{Overall/concept/instruct/fluency scores vs. number of training examples for \ourshort{} on the \D{G2B}{L10} subset of \CONCEPTTEN{}. Standard error with respect to concepts is shown.}
\label{fig:data_scaling_law}
\end{figure}

\section{Dataset Statistics}
\label{app:dataset-statistics}
In this section we show detailed statistics of \CONCEPTTEN{} and \CONCEPTFH{} datasets, including ratios of genres and prompt/response lengths.
This information has already been disclosed by \citet{wu2025axbench}; we copy here to \cref{tab:dataset_statistics} for the convenience of readers.
These statistics could be helpful for understanding the strengths and shortcomings of SVs, as well as how many tokens should be intervened for SVs to achieve effective steering.

\begin{table}
\caption{Statistics of \CONCEPTTEN{} and \CONCEPTFH{} datasets (taken from \citet{wu2025axbench}), as well as our selected subset for \AXBENCH{} evaluation of Qwen2.5-32B.}
\label{tab:dataset_statistics}
\centering
\footnotesize
\setlength{\tabcolsep}{4pt}
\begin{tabular}{
  l % dataset
  l % subset
  c % text
  c % code
  c % math
  c % input len
  c % output len
}
\toprule
Dataset & Subset & Text genre (\%) & Code genre (\%) & Math genre (\%) & Prompt length & Response length \\
\midrule
\multirow{3}{*}{\CONCEPTTEN{}}
& \D{G2B}{L10} & 50.0 & 40.0 & 10.0 & 21 & 123 \\
% & \D{G2B}{L20} & 80.0 & 10.0 & 10.0 & 19 & 118 \\
& \D{G9B}{L20} & 70.0 & 30.0 &  0.0 & 17 & 113 \\
% & \D{G9B}{L31} & 50.0 & 30.0 & 20.0 & 24 & 118 \\
& \D{Q32B}{L32} & 70.0 & 30.0 & 0.0 & 13 & 96 \\
\midrule
\multirow{2}{*}{\CONCEPTFH{}}
& \D{G2B}{L10} & 66.4 & 24.4 & 9.2 & 17 & 102 \\
% & \D{G2B}{L20} & 71.6 & 21.4 & 7.0 & 16 & 102 \\
& \D{G9B}{L20} & 66.8 & 25.6 & 7.6 & 17 & 101 \\
% & \D{G9B}{L31} & 63.4 & 28.2 & 8.4 & 17 & 102 \\
\midrule
100 concepts (\cref{subsec:axbench_evaluation}) & \D{Q32B}{L32} & 70.0 & 25.0 & 5.0 & 14 & 98 \\
\bottomrule
\end{tabular}
\end{table}

%%%%%%%%%%%%%%%%%%%%%%%%%%%%%%%%%%%%%%%%%%%%%%%%%%%%%%%%%%%%%%%%%%%%%%%%%%%%%%%
%%%%%%%%%%%%%%%%%%%%%%%%%%%%%%%%%%%%%%%%%%%%%%%%%%%%%%%%%%%%%%%%%%%%%%%%%%%%%%%
\section{Artifacts}
\label{app:artifacts}
We show the artifacts used in this paper along with their licenses in \cref{tab:dataset_artifacts} and \cref{tab:model_artifacts}.

\begin{table}
\caption{Dataset artifacts used.}
\label{tab:dataset_artifacts}
\centering
\footnotesize
\begin{tabular}{llll}
\toprule
Name & Source & Link & License \\
\midrule
\CONCEPTTEN{} & \citet{wu2025axbench} & \href{https://github.com/stanfordnlp/axbench/tree/main/axbench/concept10}{Link} & Apache-2.0 \\
\CONCEPTFH{} & \citet{wu2025axbench} & \href{https://github.com/stanfordnlp/axbench/tree/main/axbench/concept500}{Link} & Apache-2.0 \\
AlpacaEval & \citet{li2023alpaca_eval} & \href{https://github.com/tatsu-lab/stanford_alpaca/blob/main/alpaca_data.json}{Link} & Apache-2.0 \\
tinyMMLU & \citet{polo2024tinybenchmarks} & \href{https://huggingface.co/datasets/tinyBenchmarks/tinyMMLU}{Link} & MIT \\
tinyGSM8K & \citet{polo2024tinybenchmarks} & \href{https://huggingface.co/datasets/tinyBenchmarks/tinyGSM8k}{Link} & MIT \\
\bottomrule
\end{tabular}
\end{table}

\begin{table}
\caption{Model artifacts used.}
\label{tab:model_artifacts}
\centering
\footnotesize
\begin{tabular}{lllll}
\toprule
Name & HuggingFace ID & Source & Link & License \\
\midrule
Gemma2-2B & \verb|google/gemma-2-2b-it| & \citet{team2024gemma} & \href{https://huggingface.co/google/gemma-2-2b-it}{Link} & Gemma Terms of Use \\
Gemma2-9B & \verb|google/gemma-2-9b-it| & \citet{team2024gemma} & \href{https://huggingface.co/google/gemma-2-9b-it}{Link} & Gemma Terms of Use \\
Qwen2.5-32B & \verb|Qwen/Qwen2.5-32B-Instruct| & \citet{qwen2.5} & \href{https://huggingface.co/Qwen/Qwen2.5-32B-Instruct}{Link} & Apache-2.0 \\
\bottomrule
\end{tabular}
\end{table}

\end{document}